\documentclass[11pt]{article}
\usepackage[left=1.05in,right=1.05in,top=1.15in,bottom=1.15in]{geometry}
\usepackage{mathrsfs}
\usepackage{graphicx}
\usepackage{grffile}
\usepackage{comment}
\usepackage{authblk}

\usepackage[round]{natbib}
\usepackage{hyperref}
\hypersetup{hidelinks}
\usepackage{url}
\usepackage{multirow}
\usepackage{xr}

\usepackage{bbm}
\usepackage{todonotes}
\usepackage{listings}
\usepackage{textgreek} 
\usepackage{amsmath} 
\usepackage{amssymb,amsthm,mathtools}

\usepackage[capitalise]{cleveref}
\usepackage{stfloats} 
\usepackage{float}
\usepackage{placeins} 
\usepackage{xcolor}

\usepackage{tablefootnote} 
\usepackage[roman]{parnotes} 
\usepackage{tabulary}
\usepackage{booktabs}
\usepackage{tabularx}

\usepackage{tikz}

\usetikzlibrary{shapes,decorations,arrows,calc,arrows.meta,fit,positioning}
\tikzset{
    -Latex,auto,node distance =1 cm and 1 cm,semithick,
    state/.style ={ellipse, draw, minimum width = 0.7 cm},
    point/.style = {circle, draw, inner sep=0.04cm,fill,node contents={}},
    bidirected/.style={Latex-Latex,dashed},
    el/.style = {inner sep=2pt, align=left, sloped}
}

\usepackage{changepage}

\usepackage[noend]{algpseudocode}
\definecolor{mygray}{gray}{0.5}
\definecolor{cblue}{RGB}{8, 85, 153}
\definecolor{darkblue}{RGB}{31, 64, 96}

\definecolor{cgreen}{RGB}{8, 153, 83}
\definecolor{green}{RGB}{8, 200, 50}
\definecolor{cmaroon}{RGB}{128, 0, 0}
\algdef{SE}[DOWHILE]{Do}{doWhile}{\algorithmicdo}[1]{\algorithmicwhile\ #1}%

\renewcommand\algorithmicdo{}

\Crefname{section}{Sec}{Secs.}
\Crefname{figure}{Fig}{Figs.}
\Crefname{theorem}{Thm}{Thms.}

\usepackage{amsfonts}

\usepackage{tikzsymbols}
\usepackage[definitionLists,hashEnumerators,smartEllipses,hybrid]{markdown}

\theoremstyle{plain}
\newtheorem{theorem}{Theorem}[section]

\newtheorem{lemma}[theorem]{Lemma}

\newtheorem{proposition}[theorem]{Proposition}

\theoremstyle{definition}
\newtheorem{definition}[theorem]{Definition}
\newtheorem{assumption}[theorem]{Assumption}

\theoremstyle{remark}
\newtheorem{remark}[theorem]{Remark}
\newtheorem{example}[theorem]{Example}

\theoremstyle{plain}

\newcommand{\bx}{\mathbf{x}}
\newcommand{\by}{\mathbf{y}}
\newcommand{\bz}{\mathbf{z}}

\newcommand{\bzero}{\mathbf{0}}

\newcommand{\bX}{\mathbf{X}}

\DeclareMathOperator*{\argmax}{arg\,max}

\DeclarePairedDelimiter{\braces}{\lbrace}{\rbrace}
\DeclarePairedDelimiter{\bracks}{[}{]}
\DeclarePairedDelimiter{\paren}{(}{)}

\DeclarePairedDelimiter{\abs}{|}{|}
\DeclarePairedDelimiter{\sgnorm}{\|}{\|_{\psi_2}}

\DeclarePairedDelimiter{\norm}{\|}{\|}
\DeclarePairedDelimiter{\inprod}{\langle}{\rangle}

\newcommand{\R}{\mathbb{R}}
\renewcommand{\P}{\mathbb{P}}
\newcommand{\E}{\mathbb{E}}

\newcommand{\Var}{\textnormal{Var}}
\newcommand{\Cov}{\textnormal{Cov}}
\DeclareMathOperator{\Vol}{Vol}
\DeclareMathOperator{\len}{len}

\newcommand{\tree}{\mathfrak{T}}

\newcommand{\node}{\mathfrak{t}}

\newcommand{\partfunc}{\mathcal{V}}

\newcommand{\data}[1][n]{\mathcal{D}_{#1}}

\newcommand{\funcclass}{\mathcal{F}}

\newcommand{\1}{\mathbbm{1}}

\newcommand{\sample}{\mathcal{X}}

\newcommand{\Rad}[1]{\mathcal{R}_n(#1)}

\newcommand{\Osc}{\operatorname{Osc}}

\usepackage[ruled,vlined]{algorithm2e}
\usepackage{subcaption}

\usepackage{microtype}
\usepackage[T1]{fontenc}
\usepackage{newtxtext}
\usepackage{newtxmath}

\title{On Stopping Rules and Spatial Adaptation for CART}

\author[1,*]{Zineng Xu}
\author[1,*]{Yuchao Cai}
\author[1]{Yan Shuo Tan}
\affil[1]{Department of Statistics and Data Science, National University of Singapore}
\affil[ ]{\texttt{xuzineng@u.nus.edu, yccai1123@gmail.com, yanshuo@nus.edu.sg}}

\begin{document}
\date{}

\maketitle
\begingroup
\renewcommand{\thefootnote}{\fnsymbol{footnote}}
\footnotetext[1]{Equal contribution.}
\endgroup

\begin{abstract}
    The popular CART algorithm for regression trees combines a greedy splitting rule with a stopping rule, but while the splitting rule has been well studied, the statistical role of stopping rules is less well understood.
    Meanwhile, although regression trees fit using Bayesian methods or via empirical risk minimization (ERM) have been shown to be spatially adaptive to local smoothness and anisotropy, it is unknown whether CART can achieve the same adaptation.
    We address these gaps by proving that, under spatially heterogeneous and anisotropic smoothness and appropriate structural assumptions on the regression function and covariate distribution, CART with the minimum impurity decrease (MID) stopping rule and a suitable threshold achieves pointwise rates that are minimax up to logarithmic factors.
    These rates hold simultaneously over all points in the domain.
    Moreover, we prove that spatial adaptation cannot be achieved under the widely used minimum leaf size stopping rule.
    Together, these results establish a precise statistical role for the MID stopping rule and provide a theoretical basis for the empirical success of CART.
\end{abstract}

\section{Introduction}

Regression trees are piecewise constant models obtained by recursively partitioning the covariate space into rectangular regions, or cells. 
They are among the most popular machine learning models because they combine an interpretable structure with competitive predictive performance \citep{hastie2009elements,rudin2019stop,rudin2022interpretable}.
Since the empirical risk minimization problem over tree partitions is computationally difficult \citep{laurent1976constructing}, most practical tree methods grow trees greedily, splitting each cell by optimizing a local split criterion. 
The most widely used criterion is the \emph{impurity decrease}, which measures the reduction in empirical variance from splitting a cell and gives rise to the CART algorithm \citep{breiman1984classification}.
Equally important to the splitting rule is the stopping or pruning rule, which regularizes the size of the fitted tree.
However, whereas the impurity decrease split rule is nearly universal and has been the subject of extensive theoretical analysis \citep{scornet2016random,klusowski2020sparse,syrgkanis2020estimation,chi2022asymptotic,klusowski2024large,Mazumder2024,cattaneo2025honest,tan2024statistical}, the stopping rule is more variable across implementations and its statistical role is less well understood.

Beyond this gap concerning stopping rules, we emphasize two further limitations of the existing theory.
First, most theoretical work on CART and random forests has focused on global $L^2$ risk rather than pointwise risk at a given query point. 
Pointwise risk is relevant when trees are applied to personalized decision-making problems, especially in high-stakes settings.
\citet{cattaneo2025honest} recently showed that CART can be pointwise inconsistent even when the regression function is constant, but did not address how this behavior interacts with stopping rules or whether it can be avoided under structural assumptions.
Second, much of the CART literature has emphasized global smoothness conditions on the regression function.
By contrast, ERM and Bayesian decision trees have been shown to be spatially adaptive, that is, adaptive to heterogeneous smoothness and anisotropy \citep{donoho1997cart,xu2026statistical,rovckova2024ideal,jeong2023art}.
These works suggest that spatial adaptation may help explain the empirical success of tree-based methods, but they do not address whether greedy CART can achieve the same adaptation, or what role stopping rules play in making it possible.
This leads to the following intertwined questions:

\begin{quote}
    \emph{Can CART attain pointwise rates governed by the local smoothness near the query point? What role does early stopping play in the answer?} 
\end{quote}

In this work, we answer the first question affirmatively and offer a nuanced answer to the second, which we summarize here and make precise in the rest of the paper.
To begin, we focus on a single query point $\bx$ and consider its root-to-leaf path as constructed by the impurity decrease splitting rule.
In choosing where to terminate this path, the stopping rule must achieve two objectives in order to maximize statistical performance at $\bx$:
\begin{enumerate}
    \item[(a)] (Preventing noise-driven splits.) The path must be stopped before the split criterion is dominated by noise, otherwise splits will be unreliable and unhelpful.
    \item[(b)] (Bandwidth selection.) The fitted CART model is a local averaging estimator whose neighborhood around $\bx$ is determined by the terminal cell selected by the stopping rule.
    Hence, the path must stop at a cell that gives the right bias-variance tradeoff for pointwise estimation at each query point.
\end{enumerate}

Since the stochastic fluctuations of the empirical impurity decrease are of essentially uniform order over the cell classes considered (Lemma~\ref{lemma:unif_conc_id}), the minimum impurity decrease (MID) stopping rule, which stops when the best available empirical impurity decrease falls below a threshold, can stop the tree before empirical split selection becomes noise-dominated. This addresses split reliability. 
There is no a priori reason, however, for the scale at which splitting ceases to be reliable to coincide with the scale that optimizes local averaging. The latter is governed by the local smoothness of the regression function and may vary in depth, sample size, and shape across query points. Surprisingly, under the sufficient impurity decrease (SID) condition of \citet{chi2022asymptotic}, these two scales are linked: stopping at the impurity noise floor also selects a terminal cell with the correct local bias-variance tradeoff.



We make this claim rigorous by proving pointwise rates for CART-MID for anisotropic locally H\"older continuous functions in both one-dimensional and sparse high-dimensional settings. 
These rates are minimax optimal up to log factors, hold simultaneously over all points in the domain on the same high-probability event, and do not require knowledge of the local smoothness at each point.
Other stopping rules need not provide the same local adaptation.
To illustrate this, we prove in one dimension that the minimum leaf size (MLS) stopping rule cannot simultaneously select the correct local bandwidth at two points with different smoothness. 
Thus minimum leaf size is statistically mismatched to the local tasks that stopping must perform for pointwise adaptation.

Apart from the SID condition, the assumptions required by our high-dimensional upper bound echo structural conditions provable in the one-dimensional case, but they are not automatically satisfied by multivariate trees because of their increased flexibility.
We therefore view them as relatively mild, albeit not fully intuitive.
We also provide sufficient conditions that are more interpretable and easier to verify, and we give examples of multivariate regression functions satisfying these conditions.

Taken together, our results give a pointwise account of early stopping in CART.
They show that greedy CART can attain locally adaptive pointwise rates, but that the stopping rule is not a secondary implementation detail.
Minimum impurity decrease succeeds because it regularizes the tree in the same empirical scale used for splitting; under our structural assumptions, that detected scale both preserves split reliability and selects the bandwidth needed for local averaging.

\section{Setup and preliminaries}

\subsection{Regression model}
\label{sec:regression-model}

We observe a training dataset $\mathcal D_n=\{(\bX_i,Y_i):1\le i\le n\}$ consisting of independent copies of a random variable $(\bX,Y) \in [0,1]^d\times\mathbb R$. 
We assume the standard regression model
\begin{equation} \label{eq:regression}
Y=f^*(\bX)+\xi,
\end{equation}
where $\E[\xi\mid \bX]=0$. 
Unless stated otherwise, the noise $\xi$ may be heteroskedastic.
Our goal is to study the pointwise behavior of estimators of $f^*$.
To this end, we impose the following standard regularity assumptions on the regression model. To simplify our notation, we will assume for the rest of this paper that $n\geq 2$.



\begin{assumption}[Bounded density] \label{assum:bounded-density}
    The distribution $P$ admits a density $p_X$ on $[0,1]^d$ satisfying
    \begin{equation}
    \label{eq:design-regularity}
    0<p_{\min}\le p_X(\bx)\le p_{\max}<\infty
    \qquad\text{for all }\bx\in[0,1]^d.
\end{equation}
\end{assumption}

\begin{assumption}[Sub-Gaussian response]
    \label{assum:subgaussian-response}
There exists a constant $K<\infty$ such that
\[
\sup_{\bx\in[0,1]^d}\sgnorm{Y\mid \bX=\bx}\le K.
\]
\end{assumption}

\begin{remark}
    If $f^*$ is bounded and the noise $\xi$ is conditionally sub-Gaussian, then Assumption~\ref{assum:subgaussian-response} holds with $K = \|f^*\|_\infty + \sup_{\bx}\sgnorm{\xi\mid \bX=\bx}$.
\end{remark}


\subsection{Notation}
\label{sec:further_notation}

\paragraph{Vectors, random variables, and indexing.}
We use uppercase letters for random variables and lowercase letters for fixed values. Vectors are denoted in boldface and scalars in regular font; for example, $\bx=(x_1,\ldots,x_d)$ and $\bX=(X_1,\ldots,X_d)$. 
For an indexed vector $\bX_i$, write $X_{ij}$ for its $j$th coordinate. 
For an integer $k\ge 1$, write $[k]=\{1,\ldots,k\}$. 
For a set of coordinates $S\subseteq[d]$, write $S^c=[d]\setminus S$ and $\bx_S=(x_j)_{j\in S}$.

\paragraph{Probability and expectation.}
We write $\P$ and $\E$ for probability and expectation under the joint distribution of all random variables in context. 
We use parentheses for function or operator arguments, square brackets for moments, and braces for events and sets. Thus we write $\P(A)$ for the mass of a deterministic cell, $\P\{E\}$ for the probability of an event, and $\E[Z]$, $\Var[Z]$, and $\Cov[Z,W]$ for moments, with conditioning indicated by $\mid$, for example $\E[Z\mid \bX\in A]$. 
For a measurable function $g$, $\|g\|_\infty$ denotes its essential supremum.

\paragraph{Constants and comparison notation.}
We will use $C$ to denote a universal constant (not depending on any parameters) whose value will be allowed to vary from line to line.
Given any two functions of a vector of input parameters ($n$, $d$, etc.), $F$ and $G$, we say that $F \lesssim G$ (equivalently $G \gtrsim F$) if there is a universal constant $C > 0$ such that we have the functional inequality $F \leq CG$.
If $C$ depends on a specific parameter (e.g. $\rho$), we decorate it (or the asymptotic notation) with the parameter as the subscript (e.g. $C_\rho$ or $F \lesssim_\rho G$).
If $F \lesssim G$ and $F \gtrsim G$, we say that $F \asymp G$.

\subsection{Cells}
\label{sec:impurity-decrease}

Let $\mathcal R_d$ denote the collection of axis-aligned rectangles in $[0,1]^d$. We call elements of $\mathcal R_d$ \emph{cells}. For $A=\prod_{j=1}^d A_j\in\mathcal R_d$, write $\len_j(A)$ for the side length of $A$ in coordinate $j$ and $\Vol(A)=\prod_{j=1}^d \len_j(A)$ for its Lebesgue volume. For a coordinate set $S\subseteq[d]$, write $A_S=\prod_{j\in S}A_j$.
For $r\in[d]$, define
\begin{equation}\label{eq:Apd}
    \mathcal{A}_{d,r}
    :=
    \Bigl\{
    A=\prod_{j=1}^d [\ell_j,u_j]\subseteq [0,1]^d
    \;\Big|\;
    \#\bigl\{j\in[d]:[\ell_j,u_j]\neq[0,1]\bigr\}\le r
    \Bigr\}.
\end{equation}
Thus $\mathcal{A}_{d,r}$ consists of cells whose side lengths differ from $[0,1]$ in at most $r$ coordinates. Given a relevant set $S\subseteq[d]$, define the associated relevant cell class
\[
\mathcal A_S
:=
\{A\in\mathcal R_d:A_j=[0,1]\ \text{for all }j\notin S\}.
\]
Then $\mathcal A_S\subseteq\mathcal A_{d,s}$ when $s=|S|$.
For a cell $A\in\mathcal R_d$, let $N(A)=\sum_{i=1}^n\1\{\bX_i\in A\}$. For any measurable function $g$, set $Z_i=g(\bX_i,Y_i)$ and let $\overline Z_A
=
\frac{1}{N(A)}
\sum_{i:\bX_i\in A} Z_i$
when $N(A)>0$, and set $\overline Z_A=0$ when $N(A)=0$.

\subsection{Splits and impurity decrease}

For a coordinate $j\in[d]$ and a threshold $b\in[0,1]$, the \emph{split} $(A,j,b)$ partitions $A$ into two children cells and
\[
A_L=\{\bx\in A:x_j\le b\},
\qquad
A_R=\{\bx\in A:x_j>b\}.
\]
We call the split \emph{empirically admissible} if both children contain
at least one observation.
For an empirically admissible split, the \emph{empirical impurity decrease}
is the weighted reduction in variance of the responses from making the split:
\begin{align}
\label{eq:empirical-impurity-decrease}
\widehat\Delta(A,j,b)
&=
\frac{1}{n}\biggl[
\sum_{i:\bX_i\in A}(Y_i-\overline Y_A)^2
-
\sum_{i:\bX_i\in A_L}(Y_i-\overline Y_{A_L})^2
-
\sum_{i:\bX_i\in A_R}(Y_i-\overline Y_{A_R})^2
\biggr].
\end{align}
Equivalently,
\[
\widehat\Delta(A,j,b)
=
\frac{N(A_L)N(A_R)}{N(A)n}
\bigl(\overline Y_{A_L}-\overline Y_{A_R}\bigr)^2.
\]
For a split that is not empirically admissible, we set
$\widehat\Delta(A,j,b)=0$.

The \emph{population impurity decrease} of the split is the corresponding weighted reduction in conditional variance of the regression function:
\begin{equation}
\label{eq:population-impurity-decrease}
\Delta(A,j,b)
=
V(A) - V(A_L) - V(A_R),
\end{equation}
where we call
\begin{equation} \label{eq:residual-variation}
    V(A)=\P(A)\Var[f^*(\bX)\mid\bX\in A]
\end{equation}
the \emph{residual variation} in cell $A$, with $V(A)=0$ when
$\P(A)=0$.
Empirical and population impurity decreases can also be interpreted as the reduction in training error and expected test error respectively from making the split.
Note that while both quantities depend on $f^*$ and $P$, this is suppressed in the notation. 
We also define the following split related quantities:
\[
\widehat\Delta_{\max}(A,j)
:=
\sup_{b\in[0,1]}\widehat\Delta(A,j,b),
\qquad
\widehat\Delta_{\max}(A)
:=
\max_{j\in[d]}\widehat\Delta_{\max}(A,j),
\]
so $\widehat\Delta_{\max}(A,j)=0$ if no empirically admissible split is
available along coordinate $j$. Their
population counterparts are
\[
\Delta_{\max}(A,j)
:=
\sup_{b\in[0,1]}\Delta(A,j,b),
\qquad
\Delta_{\max}(A)
:=
\max_{j\in[d]}\Delta_{\max}(A,j).
\]
We refer to $\Delta_{\max}(A)$ as the \emph{split signal} in cell $A$, and to $\widehat\Delta_{\max}(A)$ as the \emph{empirical split signal}.
We say that a split $(A,j,b)$ is \emph{competitive} if\footnote{The choice of $1/2$ in the definition is arbitrary; any choice of constant in $(0,1)$ will lead to the same statements for the results in our paper, just with different constants.}
\begin{equation}
\label{eq:competitive-split}
\Delta(A,j,b)\ge \frac{1}{2}\,\Delta_{\max}(A).
\end{equation}
Lastly, for a coordinate set $S\subseteq[d]$, write
\[
\Delta_{\max}(A,S):=\max_{j\in S}\Delta_{\max}(A,j),
\qquad
\Delta_{\max}(A,S^c):=\max_{j\in S^c}\Delta_{\max}(A,j).
\]
We use the convention $\Delta_{\max}(A,\varnothing)=0$.

\subsection{CART and stopping rules}
\label{section:early-CART}

CART is a two-stage algorithm for fitting a regression tree model. 
In the first stage, a binary tree partition $\tree = \lbrace A_1,A_2,\ldots, A_M \rbrace$ of $[0,1]^d$ is constructed via recursive partitioning:
Starting with the whole covariate space as the root node of the tree, each cell/node $A$ is split by maximizing \eqref{eq:empirical-impurity-decrease}, i.e. via selecting
\begin{equation}
\label{equ::jstar}
(\hat j,\hat b)
\in
\argmax_{\substack{j\in[d],\,b\in[0,1]\\
N(A_L)>0,\,N(A_R)>0}}
\widehat\Delta(A,j,b),
\end{equation}
unless a stopping rule declares $A$ terminal or no empirically admissible
split exists. In the second stage, the estimator predicts at a query point $\bx$ by averaging all responses in the terminal cell $A(\bx)$ (called a \emph{leaf}) containing $\bx$:
\[
\hat f(\bx) = \overline Y_{A(\bx)}.
\]
The terminal cell thus acts as a data-adaptive local bandwidth for pointwise estimation.

We compare two stopping rules.

\paragraph{CART-MID$(\delta)$.}
Given $\delta>0$, CART with minimum impurity decrease (MID) stopping rule declares $A$ terminal when
\[
\widehat\Delta_{\max}(A)<\delta.
\]
The resulting estimator is denoted by $\hat f_\delta$.

\paragraph{CART-MLS$(N)$.}
Given $N\in\mathbb N$, CART with minimum leaf size (MLS) stopping rule permits only splits for which both children contain at least $N$ observations. If no such admissible split is available, the cell is terminal. The resulting estimator is denoted by $\hat f_N$.\footnote{This corresponds to setting \texttt{min\_samples\_leaf=N} in the CART implementation in the \texttt{scikit-learn} package \citep{pedregosa2011scikit}.}


\begin{remark}[Honesty]
    While in practice the tree construction and leaf-based averaging use the same training sample $\mathcal D_n$, some results in the literature, as well as Theorem~\ref{thm::lowerright} in our work, use an \emph{honest} version of CART, in which one sample constructs the tree partition and an independent sample of the same size estimates the leaf averages \citep{athey2016recursive}. Honesty decouples the randomness of the two stages and is especially useful for lower-bound arguments \citep{tan2022cautionary,cattaneo2025honest}.
    The results in this paper will focus mostly on the regular, non-honest version of CART.
    Where we use honesty, we will explicitly state it, and assume that independent copies of $\data$ are used for each of the two stages.
    If an honest averaging leaf is empty, we set its fitted value to zero. This convention only makes the estimator globally well-defined and does not affect our results, whose proofs work on events where the relevant leaves are nonempty.
\end{remark}

\subsection{Concentration for cell averages and impurity decrease}
\label{section:unif-conc}

The analysis uses two main concentration inequalities, which are proved in Appendix~\ref{app:concentration-tools}. The first controls the empirical impurity decrease values used in the greedy splitting rule, while the second controls the leaf averages that appear in the averaging stage of CART.

\begin{lemma}[Uniform concentration of impurity decrease]
\label{lemma:unif_conc_id}
Assume sub-Gaussian response (Assumption~\ref{assum:subgaussian-response}).
Then there exists a universal constant $C>0$ such that, for any $u>0$, with probability at least $1-e^{-u}$, the following inequality holds uniformly for every $r\in [d]$, $A\in\mathcal{A}_{d,r}$, $j\in[d]$, and $b\in[0,1]$:
\begin{align}
\bigl|\Delta(A,j,b)^{1/2}
-\widehat\Delta(A,j,b)^{1/2}\bigr|\lesssim K
\paren*{\frac{r\log(nd)+u}{n}}^{1/2}. \label{eq:id-conc-conclude-1-1}
\end{align}
\end{lemma}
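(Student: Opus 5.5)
The plan is to write both impurity decreases as squared norms of the same linear functional and then apply a triangle inequality. For a split $(A,j,b)$ with children $A_L,A_R$, define the vector-valued quantity
\[
\widehat T(A,j,b)=\Bigl(\sqrt{\widehat p_L\widehat p_R/\widehat p_A}\Bigr)\bigl(\overline Y_{A_L}-\overline Y_{A_R}\bigr),\qquad \widehat p_B=N(B)/n .
\]
Then $\widehat\Delta=\widehat T^2$. Analogously, $\Delta(A,j,b)=T^2$ with $T=\sqrt{p_Lp_R/p_A}\,(\mu_L-\mu_R)$, where $p_B=\P(B)$ and $\mu_B=\E[f^*(\bX)\mid\bX\in B]$. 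This identity for $\Delta$ is the standard between-group variance decomposition. It follows that $|\Delta^{1/2}-\widehat\Delta^{1/2}|\le|\widehat T-T|$.

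A more robust route avoids ratios of small counts. Note that $\widehat\Delta^{1/2}$ is the empirical $L^2(P_n)$ norm of the projection of $Y\1_A$ onto the span of $\{\1_{A_L},\1_{A_R}\}$, after removing the projection onto $\1_A$. Equivalently,
\[
\widehat\Delta^{1/2}=\sup_{g}\frac{1}{n}\sum_i Y_i g(\bX_i)\Big/\|g\|_{n},
\]
where $g=a\1_{A_L}+c\1_{A_R}$ ranges over functions with $\sum_i g(\bX_i)=0$. The population quantity $\Delta^{1/2}$ has the same form with $P$ and $f^*$ in place of $P_n$ and $Y$.

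The key steps are as follows.

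\textbf{Step 1.} Control empirical cell masses uniformly. For all cells $B\in\mathcal A_{d,r+1}$, relative Bernstein/VC bounds give
\[
|\widehat p_B-p_B|\lesssim\sqrt{p_B\,\epsilon}+\epsilon,\qquad \epsilon=\frac{r\log(nd)+u}{n}.
\]
The children of cells in $\mathcal A_{d,r}$ lie in $\mathcal A_{d,r+1}$. This class has VC dimension $O(r)$. Its number of distinct restrictions to the sample is at most $\binom{d}{r}(n+1)^{2r}$, which yields the $r\log(nd)$ term. A union bound over $r\in[d]$ only adds $\log d$.

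\textbf{Step 2.} Control weighted response sums uniformly. For every $B$,
\[
\Bigl|\tfrac1n\sum_{i}(Y_i\1_B(\bX_i))-\E[Y\1_B]\Bigr|\lesssim K\bigl(\sqrt{p_B\epsilon}+\epsilon\bigr).
\]
This follows by conditioning on the design: given the $\bX_i$, the sum is sub-Gaussian with variance proxy $K^2N(B)/n^2$. The centering at $\E[Y\1_B]$ is then handled by a Bernstein bound on $f^*$, which is bounded by $K$ because $\E[Y\mid\bX]=f^*$. A union bound over the $(n+1)^{2r}\binom dr$ distinct sample patterns finishes the step.

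\textbf{Step 3.} Combine. Write $\widehat T$ as a smooth function of the four quantities $\widehat p_L,\widehat p_R,\widehat S_L,\widehat S_R$, where $\widehat S_B=\frac1n\sum_i Y_i\1_B(\bX_i)$, and likewise for $T$. For instance,
\[
\widehat T=\frac{\widehat S_L\widehat p_R-\widehat S_R\widehat p_L}{\sqrt{\widehat p_L\widehat p_R\widehat p_A}}.
\]
I would show $|\widehat T-T|\lesssim K\sqrt\epsilon$ by case analysis on $\min(p_L,p_R)$.

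In the case $\min(p_L,p_R)\gtrsim\epsilon$, Steps 1–2 give multiplicative control $\widehat p_B\asymp p_B$. In that regime each mean satisfies $|\overline Y_B-\mu_B|\lesssim K\sqrt{\epsilon/p_B}$. Multiplying by the weight $\sqrt{p_Lp_R/p_A}\le\sqrt{\min(p_L,p_R)}$ yields an error of order $K\sqrt\epsilon$. One must also check that the weight changes by a relative factor $O(\sqrt{\epsilon/\min p})$, while $|\mu_L-\mu_R|\le 2K$.

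In the case where one child, say $A_L$, is small ($p_L\lesssim\epsilon$), bound both terms directly. We have $T\le\sqrt{p_L}\,2K\lesssim K\sqrt\epsilon$. For the empirical side,
\[
\widehat T^2\le\frac{\widehat p_L\widehat p_R}{\widehat p_A}\bigl(\overline Y_{A_L}-\overline Y_{A_R}\bigr)^2\lesssim \widehat p_L\,\overline{(Y-\overline Y_{A_R})^2}_{A_L}.
\]
This is at most $\frac1n\sum_{A_L}(Y_i-\overline Y_{A_R})^2$, which requires bounding sums of squares of sub-Gaussians over small cells by $K^2\epsilon$. That bound is a uniform chi-square-type tail over the sample patterns and is obtained the same way as Step 2.

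The main obstacle is this small-cell regime. There, ratios such as $\overline Y_{A_L}$ are uncontrolled, so the argument must work with unnormalized sums. It also needs a uniform bound on $\frac1n\sum_{i\in B}Y_i^2$ of order $K^2(p_B+\epsilon)$, which comes from sub-exponential Bernstein plus the union bound over sample patterns. Finally, an empirically inadmissible split has $\widehat\Delta=0$, and in that case the small-cell bound on $T$ covers it.
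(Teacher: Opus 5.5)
Your argument is correct in outline, but it takes a different route from the paper.

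\textbf{How the paper argues.} The paper never manipulates the signed contrast $\sqrt{p_Lp_R/p_A}\,(\mu_L-\mu_R)$ or ratios of counts. It first proves a single per-cell statement, Lemma~\ref{lem:uniform-weighted-cell-means}:
\[
\bigl|\hat p_B^{1/2}(\overline Y_B-c)-p_B^{1/2}(\mu_B-c)\bigr|\lesssim K\sqrt{\epsilon},
\]
uniformly over cells and over $|c|\le\|f^*\|_\infty$. Small cells cost nothing there, because $\hat p_B^{1/2}\overline\xi_B=(nN(B))^{-1/2}\sum_{i:\bX_i\in B}\xi_i$ has sub-Gaussian norm $\lesssim K/\sqrt n$ whatever $N(B)$ is. The paper then centers at the population parent mean, setting $a_B=p_B^{1/2}(\mu_B-\mu_A)$ and $\hat a_B=\hat p_B^{1/2}(\overline Y_B-\mu_A)$. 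It uses the exact identities
\[
\Delta=a_{A_L}^2+a_{A_R}^2,\qquad \widehat\Delta=\hat a_{A_L}^2+\hat a_{A_R}^2-\hat a_A^2 .
\]
The reverse triangle inequality in $\R^2$, together with $|\hat a_A|\le\rho$ (since $a_A=0$), then finishes the proof with no case analysis at the level of splits.

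\textbf{How yours differs.} In the balanced regime you do a ratio-perturbation analysis of $\widehat T$: mean errors $K\sqrt{\epsilon/p_B}$ times a weight at most $\sqrt{\min(p_L,p_R)}$, plus a relative weight error $O(\sqrt{\epsilon/\min p})$ against $|\mu_L-\mu_R|\le 2K$. In the unbalanced regime you bound $|T|$ and $|\widehat T|$ separately. This is more bookkeeping, and it needs an extra sub-exponential bound on $n^{-1}\sum_{i:\bX_i\in B}Y_i^2$. In exchange, it makes the balanced-versus-unbalanced mechanism explicit. The paper's normalization buys a one-line combination step and a per-cell lemma that is reused elsewhere.

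\textbf{A detail to fix in your unbalanced case.} Suppose both children have population mass $\lesssim\epsilon$. Then $\overline Y_{A_R}$ need not be $O(K)$; with $N(A_R)=1$ it can be of order $K\sqrt{r\log(nd)+u}$. So $\hat p_L\overline Y_{A_R}^2$ is not automatically $\lesssim K^2\epsilon$, even though $\hat p_L\lesssim\epsilon$. In that subcase, bound
\[
\widehat T^2\le\min(\hat p_L,\hat p_R)\bigl(\overline Y_{A_L}-\overline Y_{A_R}\bigr)^2\le 2\hat p_L\overline Y_{A_L}^2+2\hat p_R\overline Y_{A_R}^2\le \frac{2}{n}\sum_{i:\bX_i\in A}Y_i^2\lesssim K^2(p_A+\epsilon).
\]
This is $\lesssim K^2\epsilon$ because $p_A=p_L+p_R\lesssim\epsilon$. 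Use $|\overline Y_{A_R}|\lesssim K$ only when $p_R\gtrsim\epsilon$.
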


\begin{lemma}[Uniform concentration of cell averages]
\label{lemma:unif_conc_cell_ave}
Assume sub-Gaussian response (Assumption~\ref{assum:subgaussian-response}). Then there exist universal constants $C,c>0$ such that, for any $u>0$, with probability at least $1-e^{-u}$, the following inequality holds uniformly for every $r\in[d]$ and every $A\in\mathcal{A}_{d,r}$ satisfying either $A=[0,1]^d$ or $\P(A)\ge c(r\log(nd)+u)/n$:
\begin{equation}
\label{eq:conclude-cell-mean}
\P(A)^{1/2}
\bigl|\overline Y_A-\E[f^*(\bX)\mid \bX\in A]\bigr|
\lesssim
K
\left(\frac{r\log(nd)+u}{n}\right)^{1/2},
\end{equation}
where $\overline Y_A=N(A)^{-1}\sum_{i:\bX_i\in A} Y_i$.
\end{lemma}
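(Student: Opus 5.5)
We prove the bound via a ratio-type (self-normalized) empirical process bound over a VC class of cells. Fix $r$ and write $\mathcal{A}_{d,r}$ as a union over $\binom{d}{r}\le d^r$ choices of active coordinate sets $S$ with $|S|= r$. For each such $S$, the cells in $\mathcal A_S$ restricted to the sample can only realize at most $(n+1)^{2r}$ distinct subsets of $\{\bX_1,\dots,\bX_n\}$, since each active interval is determined, as far as the sample is concerned, by two order statistics in that coordinate. So, conditional on the design, the number of distinct sample patterns of $A\in\mathcal A_{d,r}$ is at most $(nd)^{Cr}$. A union bound over $r\in[d]$ costs only a further $\log d$, which is absorbed.

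\textbf{Step 1 (noise part).} Decompose
\[
\overline Y_A-\E[f^*(\bX)\mid \bX\in A]=\frac{1}{N(A)}\sum_{i:\bX_i\in A}(Y_i-f^*(\bX_i)) + \Bigl(\frac{1}{N(A)}\sum_{i:\bX_i\in A}f^*(\bX_i)-\E[f^*\mid A]\Bigr).
\]
Conditional on $\bX_{1:n}$, the first term is an average of $N(A)$ independent centered sub-Gaussian variables with $\psi_2$-norm $\lesssim K$, since centering costs at most a constant factor. It is therefore sub-Gaussian with parameter $K/\sqrt{N(A)}$. A union bound over the $(nd)^{Cr}$ patterns then gives, uniformly,
\[
N(A)^{1/2}\,|\text{noise}|\lesssim K\bigl(r\log(nd)+u\bigr)^{1/2}.
\]

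\textbf{Step 2 (design part).} The second term is the ratio $(P_n g\1_A)/(P_n\1_A)-(Pg\1_A)/(P\1_A)$ with $g=f^*$. Here $\|f^*\|_\infty\lesssim K$, since $|f^*(\bx)|=|\E[Y\mid\bX=\bx]|\lesssim K$. We apply a relative (Bernstein/Talagrand-type, or normalized VC) deviation inequality to the classes $\{\1_A\}$ and $\{f^*\1_A\}$, whose VC/pseudo-dimension is $\lesssim r\log d$. This yields, uniformly,
\[
|P_n\1_A-P(A)|\lesssim \sqrt{P(A)\,t/n}+t/n,
\qquad
|P_n f^*\1_A-P f^*\1_A|\lesssim K\bigl(\sqrt{P(A)\,t/n}+t/n\bigr),
\]
where $t=r\log(nd)+u$.

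\textbf{Step 3 (combine).} Under $\P(A)\ge ct/n$ with $c$ large, Step 2 gives $N(A)/n\ge \P(A)/2$. Dividing the ratio difference through, the design term is bounded by
\[
\lesssim K\sqrt{t/(n\P(A))}.
\]
In Step 1, replacing $N(A)$ by $n\P(A)/2$ turns the noise bound into the same form. Multiplying both by $\P(A)^{1/2}$ gives the claim. For $A=[0,1]^d$ there is no ratio issue: $N=n$, and the design term is a plain Hoeffding bound.

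The main obstacle is Step 2: it needs a uniform relative-deviation bound whose variance term scales with $\P(A)$ and not with a constant. This is exactly what makes the rate $\P(A)^{-1/2}$ rather than $\P(A)^{-1}$. I would first try the classical normalized inequality of Vapnik--Chervonenkis/Anthony--Shawe-Taylor for $\{\1_A\}$, then handle $f^*\1_A$ by Bousquet's version of Talagrand's inequality plus peeling over dyadic shells of $\P(A)$.
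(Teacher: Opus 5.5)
Your proposal is correct and essentially matches the paper's proof: on cells with $\P(A)\gtrsim (r\log(nd)+u)/n$, both arguments use the ratio decomposition of $\overline Y_A-\E[f^*(\bX)\mid\bX\in A]$. In both, the count deviation and the $f^*\1_A$ numerator are controlled by Bousquet-plus-peeling relative-deviation bounds over sample traces, which are the paper's cell-count and multiplier lemmas, and the root cell is handled separately by a plain concentration bound. The only difference is that you bound the noise directly by a conditional sub-Gaussian union bound over traces normalized by $N(A)$, whereas the paper folds the noise into a single multiplier bound for $\langle \1_A, Y\rangle_n$; the trace-counting argument behind both is the same.
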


Because of the mass weighting in the definition of impurity decrease (cf. \eqref{eq:empirical-impurity-decrease} and \eqref{eq:population-impurity-decrease}), the bound on the right-hand side of \eqref{eq:id-conc-conclude-1-1} is independent of the cell size, depth, or any other cell characteristics. 
In other words, the stochastic fluctuation level for $\widehat\Delta(A,j,b)$ is of order
\begin{equation} \label{eq:noise-floor}
    K^2\frac{r\log(nd)+u}{n}
\end{equation}
uniformly over cells in $\mathcal{A}_{d,r}$ and candidate splits. 
Generally, CART may select irrelevant features or inappropriate thresholds if its impurity decrease values are dominated by noise.
We may avoid unreliable splits by stopping once the empirical split signal $\widehat\Delta_{\max}(A)$ drops below this noise floor, such as via CART-MID.
As shown by the following lemma, this also ensures that the children of the split have enough mass for the cell-average concentration bound in Lemma~\ref{lemma:unif_conc_cell_ave} to apply.

\begin{lemma}[Impurity-certified child mass]
\label{lem:mid-child-mass}
Assume sub-Gaussian response (Assumption~\ref{assum:subgaussian-response}).
For every fixed $c>0$, there is a constant $C>0$ depending only on $c$ such
that, for any $u>0$, with probability at least $1-e^{-u}$, the following
holds simultaneously for every $r\in[d]$ and every split $(A,j,b)$ with
$A\in\mathcal A_{d,r}$. If
\begin{equation}\label{eq:mid-child-mass-threshold}
\widehat\Delta(A,j,b)
\ge CK^2\frac{r\log(nd)+u}{n},
\end{equation}
then both children $B\in\{A_L,A_R\}$ satisfy
\begin{equation}\label{eq:mid-child-mass-conclusion}
\P(B)\ge c\frac{r\log(nd)+u}{n}.
\end{equation}
\end{lemma}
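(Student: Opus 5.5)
We bound the empirical impurity decrease by the empirical child mass, and then pass from empirical mass to population mass with a uniform relative deviation bound over $\mathcal A_{d,r}$. Write $\varepsilon:=(r\log(nd)+u)/n$.

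\textbf{Step 1 (deterministic bound).} For an admissible split, let $B$ be the child with fewer points, say $B=A_L$. Then
\[
\widehat\Delta(A,j,b)=\frac{N(A_L)N(A_R)}{N(A)n}(\overline Y_{A_L}-\overline Y_{A_R})^2\le \frac{N(B)}{n}(\overline Y_{A_L}-\overline Y_{A_R})^2 .
\]
The same bound with $N(B)$ replaced by $\min\{N(A_L),N(A_R)\}$ holds for either child. The point is to bound $(\overline Y_{A_L}-\overline Y_{A_R})^2$ without already knowing the child masses are large.

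\textbf{Step 2 (sub-Gaussian sums instead of averages).} Use
\[
\frac{N(B)}{n}\bigl(\overline Y_{A_L}-\overline Y_{A_R}\bigr)^2\le \frac{2}{n}\Bigl(\frac{S_{A_L}^2}{N(A_L)}+\frac{S_{A_R}^2}{N(A_R)}\Bigr)\cdot\frac{N(B)}{\min\{N(A_L),N(A_R)\}},
\qquad S_A=\sum_{i:\bX_i\in A}Y_i .
\]
The last factor equals $1$ for the smaller child. It remains to control $S_B^2/N(B)$ uniformly over cells $B\in\mathcal A_{d,r+1}$; children of $A\in\mathcal A_{d,r}$ lie in $\mathcal A_{d,r+1}$, and $r+1\le 2r$.

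Split $Y_i=f^*(\bX_i)+\xi_i$ in spirit, but handle the response directly. Conditionally on the $\bX_i$, $S_B$ is a sum of $N(B)$ independent terms with $\sgnorm{\cdot}\le K$, though they are not centered. Bound $|\E[Y\mid \bX]|\lesssim K$, so $|S_B|\le CKN(B)+|\text{centered part}|$. The centered part is at most $K\sqrt{N(B)(r\log(nd)+u)}$ uniformly over $B$. This follows from a union bound over the at most $(n+1)^{2(r+1)}\binom{d}{r+1}$ distinct data subsets cut out by cells of $\mathcal A_{d,r+1}$, conditionally on the design. This is the same device presumably used for Lemma~\ref{lemma:unif_conc_id}.

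\textbf{Step 3 (conclude for the smaller child).} Combining the bounds gives
\[
\widehat\Delta(A,j,b)\lesssim K^2\frac{N(B)}{n}+K^2\varepsilon .
\]
Hence, if $\widehat\Delta\ge CK^2\varepsilon$ with $C$ large, then $N(B)/n\gtrsim C\varepsilon$ for the smaller child $B$, and therefore for both children.

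\textbf{Step 4 (empirical to population mass).} Apply a Bernstein/VC relative deviation bound uniformly over $\mathcal A_{d,r+1}$. With probability $1-e^{-u}$ (after absorbing the union over $r$ into $\log(nd)$), every $B$ satisfies
\[
\frac{N(B)}{n}\le 2\P(B)+C'\varepsilon .
\]
With $C$ chosen large relative to $C'$ and $c$, this yields $\P(B)\ge c\varepsilon$. Intersect the events from Steps 2 and 4 and adjust $u$ by a constant to obtain overall probability $1-e^{-u}$.

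\textbf{Main obstacle.} The delicate part is Step 2. The noncentered mean $f^*$ contributes a term $K^2N(B)/n$ rather than a pure noise term. This is exactly what makes the bound scale with child mass, so it must be tracked carefully rather than absorbed. One must also ensure that the union bound over cells together with sub-Gaussian tails gives the $r\log(nd)+u$ complexity uniformly in $r$.
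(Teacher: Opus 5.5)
Your route differs from the paper's and is sound in outline, but the Step~2 display does not give Step~3 as written.

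\textbf{The gap in Step~2.} Once you set $N(B)/\min\{N(A_L),N(A_R)\}=1$, the right-hand side is $\frac{2}{n}\bigl(S_{A_L}^2/N(A_L)+S_{A_R}^2/N(A_R)\bigr)$. Your own bound $S_{B'}^2/N(B')\lesssim K^2N(B')+K^2(r\log(nd)+u)$, applied to the larger child $A_R$, then contributes $K^2N(A_R)/n\ge K^2N(A)/(2n)$. That term does not scale with the smaller child. So ``combining the bounds'' only yields $\widehat\Delta(A,j,b)\lesssim K^2N(A)/n+K^2\varepsilon$, which is useless.

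\textbf{The fix.} Do not replace the per-child factor $N(B)/N(A_R)$ by $1$. Write
\[
\frac{N(B)}{n}\,\overline Y_{A_R}^2=\frac{N(B)}{N(A_R)}\cdot\frac{S_{A_R}^2}{nN(A_R)}\lesssim\frac{N(B)}{N(A_R)}\Bigl(\frac{K^2N(A_R)}{n}+K^2\varepsilon\Bigr)\le\frac{K^2N(B)}{n}+K^2\varepsilon .
\]
Equivalently, bound $\overline Y_{B'}^2\lesssim K^2+K^2(r\log(nd)+u)/N(B)$ for both children $B'$ and multiply by $N(B)/n$. This is exactly the ``track the $f^*$ drift against the child mass'' point you flag as the main obstacle; the display just did not implement it for the larger child. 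With this correction, Steps~3 and~4 go through.

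\textbf{Comparison with the paper.} The paper argues at the population level. It uses the square-root impurity concentration of Lemma~\ref{lemma:unif_conc_id} to turn $\widehat\Delta(A,j,b)\ge CK^2\varepsilon$ into $\Delta(A,j,b)\gtrsim K^2\varepsilon$. It then notes that
\[
\Delta(A,j,b)=\frac{\P(A_L)\P(A_R)}{\P(A)}(\mu_L-\mu_R)^2\lesssim K^2\min\{\P(A_L),\P(A_R)\},
\]
where $\mu_L,\mu_R$ are the child means of $f^*$. This needs only $\|f^*\|_\infty\lesssim K$. It gives the population-mass conclusion directly, with no noise term and no count-to-mass conversion.

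Your argument stays empirical. You therefore must handle unbounded child averages, which is why you need the trace union bound and get the additive $K^2\varepsilon$, and then convert counts to masses. What this buys you: it uses only the primitive ingredients underlying Lemma~\ref{lemma:unif_conc_id} (the noise union bound over sample traces and cell-count concentration), so it is more self-contained. It also yields the empirical certificate $N(B)\gtrsim r\log(nd)+u$ along the way. The paper's proof, by contrast, is a two-line corollary once the heavier impurity-concentration lemma is in hand.
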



\begin{remark}[Related work]
The uniform concentration inequalities are based on technical results in \citet{xu2026statistical}.
They do not require bounded-density assumptions and hold uniformly over all cells in $[0,1]^d$.
In these and other respects, they sharpen related concentration results in \citet{Mazumder2024} for cell means, impurity decreases, and cell counts; see, in particular, Lemma~B.5 of that paper.

\end{remark}
\subsection{Sufficient impurity decrease}
\label{sec:shared-assumptions}

The sufficient impurity decrease (SID) condition was introduced by
\citet{chi2022asymptotic} in their consistency analysis of high-dimensional CART
and random forests. In our notation, it can be stated as follows.

\begin{assumption}[Sufficient impurity decrease]
\label{assum:sid}
Given a collection of cells $\mathcal A\subseteq\mathcal R_d$, there exists
$\lambda>0$ such that, for every $A \in\mathcal A$,
\begin{align}
\label{eq:sid}
\Delta_{\max}(A)
\ge
\lambda\,\P(A)\,
\Var[f^*(\bX)\mid\bX\in A].
\end{align}
\end{assumption}

SID says that every cell has an axis-aligned split that can remove a non-negligible fraction of that variation in that cell.
Thus SID is a compatibility condition between the regression
function and the greedy CART splitting rule, ruling out adversarial settings such as pure interactions under which CART is known to fail \citep{tan2024statistical}.

\citet{chi2022asymptotic} apply SID in a depth-wise contraction
argument. 
At the population level, if every cell $A$ is split by the maximizer of $\Delta_{\max}(A)$, then Assumption~\ref{assum:sid} implies that the
tree approximation error
\[
\sum_{A\in \tree}\P(A)\Var[f^*(\bX)\mid \bX\in A]
\]
contracts geometrically with tree depth with contraction factor $1 - \lambda$. 
Uniform concentration of empirical
impurity decreases then shows that empirical CART approximately inherits this
population contraction and is furthermore used to bound the variance term. 
\citet{Mazumder2024} later refined this analysis via tighter control on the uniform concentration.

While this contraction perspective is powerful for proving global consistency, the resulting $L^2$ estimation error rates depend directly on the SID constant: when $\lambda$ is small, the provable geometric contraction is slow and
the rate exponent becomes pessimistic. 
Our use of SID is different. Rather than using it to prove a global depth-wise contraction, we use it locally along the root-to-leaf path of a query point. Combined with MID stopping, SID converts a bound on the empirical split signal into a bound on the residual variation inside the terminal cell.

\begin{remark}
    While \citet{tan2024statistical} show that some conditions are necessary in order for CART to be consistent, SID is not the only route for obtaining consistency results.
    \citet{scornet2016random,klusowski2020sparse,klusowski2024large} do so by assuming additive functional structure, while \citet{syrgkanis2020estimation} show consistency over binary features via a modularity-type assumption.
\end{remark}

\subsection{Spatial adaptation}

There are two related, but distinct, ways to formalize spatial adaptation. 
One is through global losses over spatially inhomogeneous function classes, such as Besov, piecewise H\"older, or anisotropic classes.
Wavelet shrinkage \citep{donoho1994ideal,donoho1995wavelet} and, more recently, Bayesian tree posteriors \citep{jeong2023art} and ERM decision trees \citep{xu2026statistical} have been shown to achieve optimal rates in this global sense.
The second formulation is via pointwise estimation rates that vary with the local regularity at the query point. 
This pointwise perspective is classical for kernel and local-polynomial estimators, which attain pointwise rates by choosing a local bandwidth, with adaptive procedures selecting this bandwidth without knowing the smoothness in advance \citep{hoffman2002random,lepski2015adaptive}. 
\citet{rovckova2024ideal} showed that the Bayesian CART posterior could achieve similarly optimal pointwise performance.

The pointwise perspective is especially natural for CART because of its nature as an adaptive local averaging estimator.
On the other hand, our setting is different from classical bandwidth selection in kernel methods because CART does not choose a bandwidth directly.
Instead, the local neighborhood is the terminal cell produced by greedy splitting and potentially myopic early stopping. 
In our work, we formalize local spatial heterogeneity and anisotropy as follows.

\begin{definition}[Local anisotropic H\"older regularity]
\label{def::localholder}
Let $\alpha_j:[0,1]\to(0,1]$, $j\in[d]$, be measurable functions satisfying
$0<\alpha_{\min}\le\alpha_j(t)\le\alpha_{\max}\le1$ for all $j$ and $t$.
Let $L\ge1$ be a H\"older radius.
We say that a bounded regression function $f^*$ is locally anisotropic H\"older with exponents $\{\alpha_j\}_{j=1}^d$ and radius $L$ if, for all $\bx,\bx'\in[0,1]^d$,
\[
|f^*(\bx)-f^*(\bx')|
\le
L\sum_{j=1}^d |x_j-x_j'|^{\alpha_j(x_j)}.
\]
\end{definition}

\begin{remark}[Locality radii]\label{rmk:local-holder}
More generally, one may require the bound in Definition~\ref{def::localholder}
only for pairs $\bx,\bx'$ satisfying $|x_j-x_j'|<\gamma_j(x_j)$ for all
$j\in[d]$, where $\gamma_j:[0,1]\to(0,1]$ are locality radius functions
(Definition~\ref{def:local-holder-general}). Under mild conditions, however,
the radii can be taken to be one at the price of a constant factor in $L$
(Lemma~\ref{lem:radius-one} in Appendix~\ref{appendix:proof-of-holder}), so we
use Definition~\ref{def::localholder} throughout.
\end{remark}

\begin{remark}[Further extensions]
The proofs in our work allow coordinate-wise constants $L_j$, and furthermore for the constants and not just the exponents to be locally varying. We use a single constant $L$ in the main statements to keep the rates and structural assumptions readable.
\end{remark}


\section{Warm-up: 1D pointwise spatial adaptation for CART-MID}
\label{sec:one-dimensional-upper}



In this section, we give a detailed account of how CART-MID achieves pointwise adaptation in one dimension.
In one dimension, the terminal cell containing a query point $x$ is an interval whose length $h$ acts as the local bandwidth for leaf averaging.
By standard calculations \citep{tsybakov2008introduction}, for a function $f^*$ with local H\"older smoothness $\alpha(x)$, the averaging bias is of order $h^{\alpha(x)}$, while the stochastic error is of order $(nh)^{-1/2}$.\footnote{For notational simplicity, we ignore all dependence on the parameters $L$, $p_{\min}$, and $p_{\max}$ in this informal discussion. We also slightly abuse notation and denote $x=x_1$ and $\alpha = \alpha_1$.} Balancing the two terms gives the optimal bandwidth choice
\[
h_x\asymp n^{-1/(2\alpha(x)+1)}
\]
and the pointwise rate
\[
n^{-\alpha(x)/(2\alpha(x)+1)}.
\]
Unlike regressogram or local kernel estimators, however, CART does not choose $h$ directly. It constructs the interval through greedy empirical splits and its choice of stopping condition, and therefore it cannot be taken for granted that CART can find the correct bandwidth at a given query point, let alone uniformly over the entire domain.

We now argue heuristically how CART-MID can indeed find the correct bandwidth without knowing $\alpha(x)$.
For an interval $A$ of length $h$ near $x$, bounded density gives $\P(A)\asymp h$, and local H\"older smoothness upper bounds the conditional variance of $f^*$ on $A$. 
Together, this gives the residual variation bound (cf. \eqref{eq:residual-variation}):
\[
V(A)\lesssim h^{2\alpha(x)+1}.
\]
Meanwhile, one can show that this bound is sharp over the local H\"older class.
In other words, the bandwidth is in one-to-one correspondence with the worst case residual variation scale, with the optimal bandwidth $h_x$ corresponding to the residual variation value
\[
h_x^{2\alpha(x)+1} \asymp \frac{1}{n}.
\]
Notably, this value is independent of $x$.

While CART-MID uses $\widehat\Delta_{\max}(A)$ rather than $V(A)$ to decide when to stop, the SID condition links the two quantities up to constant factors whenever they are larger than the noise floor \eqref{eq:noise-floor}.
As such, if we set the MID threshold $\delta$ to be roughly the noise floor value, CART-MID is able to achieve the correct bandwidth scale at every query point, up to log factors.
The precise statement of this claim is the main theorem of this section:




\begin{theorem}[One-dimensional pointwise upper bound for CART-MID]
\label{thm:cart-mid-one-dimensional}
Assume the regression model in Section~\ref{sec:regression-model}. Assume bounded density (Assumption~\ref{assum:bounded-density}) and sub-Gaussian response (Assumption~\ref{assum:subgaussian-response}). Suppose $f^*$ is locally H\"older as in Definition~\ref{def::localholder} with $d=1$, and suppose sufficient impurity decrease (Assumption~\ref{assum:sid}) holds on intervals. 
Let $u\ge0$ and choose
\[
\delta
=
C
K^2
\frac{\log n+u}{n},
\]
where $C>0$ is a sufficiently large universal constant. Assume that this threshold satisfies $\delta\le1$.
Then, with probability at least $1-e^{-u}$, uniformly for all $x\in[0,1]$, CART-MID$(\delta)$ satisfies
\begin{equation}\label{eq:main-thm-point-eid-cart-1}
    |\hat f_\delta(x)-f^*(x)|
\lesssim_{p_{\min},p_{\max}} 
\left(
L^{1/\alpha(x)}
K^2
\frac{\log n+u}{\lambda n}
\right)^{
\alpha(x)/(2\alpha(x)+1)
}.
\end{equation}
\end{theorem}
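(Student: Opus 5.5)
We would work throughout on the intersection of the high-probability events of Lemma~\ref{lemma:unif_conc_id}, Lemma~\ref{lemma:unif_conc_cell_ave} and Lemma~\ref{lem:mid-child-mass}, each applied with $u$ replaced by $u+\log 3$. With $d=r=1$ the noise floor is $\epsilon_n:=K^2(\log n+u)/n$. All subsequent statements are deterministic on this event, which is what makes the bound hold uniformly in $x$.

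Fix $x$ and let $A=A(x)$ be its terminal cell, an interval of length $h$. There are two structural facts to establish.

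\textbf{(i) $A$ has enough mass.} Either $A=[0,1]$, or $A$ is a child of a split with $\widehat\Delta\ge\delta$. Since $C$ is large, Lemma~\ref{lem:mid-child-mass} gives $\P(A)\ge c\,\epsilon_n/K^2$. Hence Lemma~\ref{lemma:unif_conc_cell_ave} applies to $A$, and
\[
|\overline Y_A-\E[f^*\mid A]|\lesssim \sqrt{\epsilon_n/\P(A)}.
\]

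\textbf{(ii) $A$ has small residual variation.} Since $A$ is terminal, $\widehat\Delta_{\max}(A)<\delta$, or no admissible split exists. In the latter case $N(A)\le1$, which we expect to rule out using the mass lower bound together with concentration of counts (or by absorbing it). By Lemma~\ref{lemma:unif_conc_id}, taking the supremum over $b$,
\[
\Delta_{\max}(A)^{1/2}\le \delta^{1/2}+C'\epsilon_n^{1/2}\lesssim \delta^{1/2},
\]
so $\Delta_{\max}(A)\lesssim\delta$. One technical point is that population maximizers $b$ may give empirically inadmissible splits. We would handle this by noting that such thresholds produce children with almost no mass, and by continuity of $b\mapsto\Delta$ (bounded density) choosing a nearby admissible $b$. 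SID then yields
\[
V(A)=\P(A)\Var[f^*\mid A]\le \delta/\lambda.
\]

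Next we convert (ii) into bias control at $x$; we expect this step to be the main obstacle. The bias $|\E[f^*\mid A]-f^*(x)|$ must be bounded by roughly $\sqrt{\Var[f^*\mid A]}$ plus a smoothness term. A variance bound alone does not control the deviation at a specific point, so we use local H\"older regularity at $x$. Let $m=\E[f^*\mid A]$ and $D=|f^*(x)-m|$. For $x'\in A$ with $|x'-x|\le t$, we have $|f^*(x')-f^*(x)|\le L t^{\alpha(x)}$. The exponent depends only on $x_j=x$ in Definition~\ref{def::localholder}, which is exactly what is needed. Choose $t$ with $Lt^{\alpha(x)}=D/2$, capped at $h$. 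Then on a subinterval of $A$ of length $\gtrsim\min(t,h)$ containing $x$, we have $|f^*-m|\ge D/2$, so
\[
V(A)\gtrsim p_{\min}\min(t,h)\,D^2.
\]
There are two cases.

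\emph{Case $t\le h$.} Then $D^{2+1/\alpha}L^{-1/\alpha}\lesssim\delta/\lambda$, giving
\[
D\lesssim\bigl(L^{1/\alpha}\delta/\lambda\bigr)^{\alpha/(2\alpha+1)}.
\]

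\emph{Case $t>h$.} Then $D\le 2Lh^{\alpha}$ and $D^2h\lesssim\delta/\lambda$. Combining these two inequalities gives the same bound.

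Finally we bound the variance term using the same dichotomy. If $h$ is at least the target bandwidth $h^*:=(\delta/(\lambda L^2))^{1/(2\alpha+1)}$, then $\sqrt{\epsilon_n/h}\lesssim\sqrt{\epsilon_n/h^*}$, which matches the rate after algebra (using $\lambda\le1$, which follows since $\Delta_{\max}\le V$). If $h<h^*$, we use the parent: $A$ was obtained from a parent $A'$ with $\widehat\Delta\ge\delta$, so $V(A')\ge\Delta(A')\gtrsim\delta$. We then need $A'$ to be not much larger than $A$; this is the second delicate point. We would first try exploiting $\widehat\Delta(A')=\frac{N_LN_R}{nN}(\bar Y_L-\bar Y_R)^2\ge\delta$, which bounds the smaller child's mass from below relative to $\delta/(\bar Y_L-\bar Y_R)^2$, and the difference $\bar Y_L-\bar Y_R$ is in turn controlled by H\"older smoothness across $A'$. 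This should give $\P(A)\gtrsim \delta\, h'^{-2\alpha}/L^2$ for $h'=\len(A')\ge h$. Optimizing over $h'$ yields $\P(A)\gtrsim h^*$ up to constants, which closes the variance bound and gives \eqref{eq:main-thm-point-eid-cart-1}.
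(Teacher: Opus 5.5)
Your concentration setup, the mass certificate (i), the terminal bound $V(A)\lesssim\delta/\lambda$, and your bias argument are correct, and together they essentially reproduce Steps 1 and 3--5 of the paper's proof and Lemma~\ref{lemma:LH-upper-1d}. Two small remarks: inadmissible splits need no continuity argument, because Lemma~\ref{lemma:unif_conc_id} holds for every $b\in[0,1]$ under the convention $\widehat\Delta=0$; and your case $t>h$ is vacuous, since H\"older continuity at $x$ already gives $D\le Lh^{\alpha(x)}$.

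\textbf{The gap is the lower bound on the leaf mass.} From $\Delta(A',1,\hat b)\gtrsim\delta$ and $|\mu_L-\mu_R|\le\Osc(A')\le 2L(h')^{\alpha(x)}$ you correctly get $\P(A)\gtrsim\delta(h')^{-2\alpha(x)}/L^2$. But this bound decreases in $h'$, and $h'$ is not something you can optimize over. It is the length of the realized parent, and no upper bound on it in terms of $h$ is available, because competitive splits can be very unbalanced (Example~\ref{ex:relu}, Remark~\ref{remark:unbalanced-splits}). With $h'\asymp1$ your bound gives only $\P(A)\gtrsim\delta/L^2$. Then the leaf-average bound $K\bigl((\log n+u)/(n\P(A))\bigr)^{1/2}$ is of order $L/\sqrt C$, with no decay in $n$.

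The information you use cannot give more. Take $f^*(y)=\sqrt y$, uniform design, $A'=[0,1]$ and $x=0$. The split at $b=3\delta$ has $\Delta(A',1,b)=\frac{4b}{9(1-b)}(1-\sqrt b)^2\ge\delta$ for small $\delta$. Yet it yields a leaf of length $3\delta$ instead of the optimal $\asymp\delta^{1/2}$.

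\textbf{What is missing is that CART selects a near-optimal split,} which your argument never uses. Since $\delta$ is a large multiple of the noise floor, apply Lemma~\ref{lemma:unif_conc_id} at $\hat b$ and at a population maximizer. This gives $\Delta(A',1,\hat b)\ge\frac12\Delta_{\max}(A')$ and $\Delta(A',1,\hat b)\gtrsim\delta$. Now apply SID to the parent, not only to the leaf, to get $V(A')\le 2\Delta(A',1,\hat b)/\lambda$.

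Suppose $x$ lies in the smaller child $A_{\mathrm{sm}}$ and $p=\P(A_{\mathrm{sm}})/\P(A')<1/3$. Write $\Delta=\P(A')\frac{p}{1-p}(\mu'-\mu_{\mathrm{sm}})^2$. Bound $|\mu'-f^*(x)|$ and $|f^*(x)-\mu_{\mathrm{sm}}|$ by your own bias-visibility estimate applied to $A'$ and $A_{\mathrm{sm}}$ (using $V(A_{\mathrm{sm}})\le V(A')$), instead of by $\Osc(A')$. With $\alpha=\alpha(x)$, this gives $(\mu'-\mu_{\mathrm{sm}})^2\lesssim L^{2/(1+2\alpha)}(\Delta/\lambda)^{2\alpha/(1+2\alpha)}$, independent of $h'$. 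Hence $\Vol(A)\gtrsim\lambda^{2\alpha/(1+2\alpha)}(L^{-2}\delta)^{1/(1+2\alpha)}$.

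In the remaining cases ($x$ in the larger child, or $p\ge1/3$), $\Vol(A)\gtrsim\Vol(A')$ and $\Delta\le V(A')\lesssim L^2\Vol(A')^{1+2\alpha}$ suffice. This is exactly Lemma~\ref{lemma:population-good-split-balance-1d}. Plugging it into the cell-average bound gives the variance term at the claimed rate directly, without your dichotomy between $h$ and $h^*$.
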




\subsection{Proof of Theorem~\ref{thm:cart-mid-one-dimensional}}
\label{sec:one-dimensional-upper-proof-strategy}

We first state two lemmas that address the technical difficulties in making the earlier heuristic argument rigorous. 
Their proofs are given in Appendix~\ref{sec:one-dimensional-technical-lemmas}. 

\begin{lemma}[Competitive split-scale regularity]
\label{lemma:population-good-split-balance-1d}
Assume the conditions of Theorem~\ref{thm:cart-mid-one-dimensional}. Let $A'\subseteq[0,1]$ be an interval, and consider a split of $A'$ into two children at a threshold $b$. 
Denote either child by $A$ and fix $x \in A$.
Suppose the split $(A',1,b)$ is competitive in the sense of \eqref{eq:competitive-split}. Then
\[
\Vol(A)
\gtrsim_{p_{\min},p_{\max}}
\lambda^{2\alpha(x)/(1+2\alpha(x))}
\bigl(L^{-2}\Delta(A',1,b)\bigr)^{1/(1+2\alpha(x))}.
\]
\end{lemma}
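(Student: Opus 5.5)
The plan is to combine three ingredients. The first is the closed form of the population impurity decrease as a between-children term. The second is the competitiveness–SID sandwich $\Delta(A',1,b)\ge \tfrac12\Delta_{\max}(A')\ge\tfrac{\lambda}{2}V(A')$. The third is local H\"older control around the point $x\in A$.

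Write $h=\Vol(A)$, let $B$ be the sibling of $A$, and put $\mu_A=\E[f^*(X)\mid X\in A]$, with $\mu_B,\mu'$ defined analogously for $B$ and $A'$. Since $\P(A)\mu_A+\P(B)\mu_B=\P(A')\mu'$, we have
\[
\Delta:=\Delta(A',1,b)=\frac{\P(A)\P(B)}{\P(A')}(\mu_A-\mu_B)^2 .
\]

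\emph{Case 1: $\P(A)\ge\P(B)$.} Then $\Vol(A)\gtrsim_{p_{\min},p_{\max}}\Vol(A')$. Also $\Delta\le V(A')\le \P(A')\sup_{y\in A'}|f^*(y)-f^*(x)|^2\le p_{\max}L^2\Vol(A')^{1+2\alpha(x)}$, using Definition~\ref{def::localholder} at $x$ with $|y-x|\le\Vol(A')$. Together these give $h\gtrsim (L^{-2}\Delta)^{1/(1+2\alpha(x))}$. This is stronger than the claim because $\lambda\le1$, which holds since $\Delta_{\max}(A')\le V(A')$.

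\emph{Case 2: $\P(A)<\P(B)$.} Here $\mu_B-\mu'=-\tfrac{\P(A)}{\P(B)}(\mu_A-\mu')$, so $|\mu_A-\mu_B|\le2|\mu_A-\mu'|$. This gives
\[
\Delta\le 4\P(A)(\mu_A-\mu')^2\lesssim_{p_{\max}} h\bigl(L^2h^{2\alpha(x)}+D^2\bigr),\qquad D:=|f^*(x)-\mu'|,
\]
where we used $|\mu_A-f^*(x)|\le Lh^{\alpha(x)}$.

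The key step, and the main obstacle, is to bound $D$ by the residual variation of the parent. Choose $s$ with $Ls^{\alpha(x)}=D/2$. First, $s\le\Vol(A')$: otherwise every $y\in A'$ has $|f^*(y)-f^*(x)|\le L\Vol(A')^{\alpha(x)}<D/2$, and averaging over $A'$ would force $|\mu'-f^*(x)|<D/2$, a contradiction. Hence there is a subinterval $I\subseteq A'$ of length $s$ containing $x$. On $I$ we have $|f^*-\mu'|\ge D-Ls^{\alpha(x)}=D/2$, so
\[
V(A')\ge \int_I (f^*-\mu')^2p_X\gtrsim_{p_{\min}} s D^2\asymp (D/L)^{1/\alpha(x)}D^2 .
\]
Combining this with competitiveness and SID, $V(A')\le\Delta_{\max}(A')/\lambda\le2\Delta/\lambda$, we get
\[
D^{2+1/\alpha(x)}\lesssim L^{1/\alpha(x)}\Delta/\lambda.
\]

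To conclude, consider which term dominates the bound on $\Delta$ in Case 2. If $L^2h^{2\alpha(x)}\ge D^2$, then $h\gtrsim(L^{-2}\Delta)^{1/(1+2\alpha(x))}$, which again suffices since $\lambda\le1$. Otherwise $h\gtrsim\Delta/D^2$. Substituting $D^2\lesssim (L^{1/\alpha(x)}\Delta/\lambda)^{2\alpha(x)/(2\alpha(x)+1)}$ gives
\[
h\gtrsim \lambda^{2\alpha(x)/(2\alpha(x)+1)}\,\Delta^{1/(2\alpha(x)+1)}\,L^{-2/(2\alpha(x)+1)},
\]
which is exactly the claimed bound. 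All constants depend only on $p_{\min}$ and $p_{\max}$, and the possibility $D=0$ is trivial since then the second regime cannot occur.
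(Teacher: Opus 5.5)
Your proof is correct and follows the paper's argument: the same sandwich $V(A')\le 2\Delta/\lambda$, the same small-child expression for $\Delta$ with a triangle inequality through $f^*(x)$, and the same H\"older-neighborhood lower bound on the parent's residual variation that drives Lemma~\ref{lemma:LH-upper-1d}. The differences are cosmetic. You split cases by mass rather than by length, and you control the child's bias by plain H\"older plus a two-regime comparison, whereas the paper applies Lemma~\ref{lemma:LH-upper-1d} to the child via $V(A)\le V(A')$. One small point: your intermediate bound on $D^{2+1/\alpha(x)}$ hides a non-universal factor $2^{1/\alpha(x)}$, which becomes harmless ($\le 4$) only after you raise it to the power $2\alpha(x)/(2\alpha(x)+1)$.
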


Although we can easily translate a lower bound on the residual variation in $A$ into a lower bound on its length, the problem is that MID gives a lower bound on the parent split signal. The first lemma shows that if the parent split is competitive, then the child interval containing $x$ cannot be too small relative to that split signal.
In other words, it is used to guarantee that the terminal split for CART-MID is sufficiently regular.

\begin{lemma}[Bias visibility]
\label{lemma:LH-upper-1d}
Assume the regression model in Section~\ref{sec:regression-model}, including bounded density (Assumption~\ref{assum:bounded-density}). Suppose $f^*$ is locally H\"older as in Definition~\ref{def::localholder} with $d=1$.
Then, for every interval $A$ containing $x$,
\begin{equation}
\label{eq:LH-upper-1d}
\bigl|f^*(x)-\E[f^*(X)\mid X\in A]\bigr|
\lesssim_{p_{\min},p_{\max}}
\left(L^{1/\alpha(x)}V(A)\right)^{\alpha(x)/(1+2\alpha(x))}.
\end{equation}
\end{lemma}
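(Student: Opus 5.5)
Write $A=[a,c]\ni x$, $h=c-a$, $\alpha=\alpha(x)$, $m=\E[f^*(X)\mid X\in A]$, and $D=|f^*(x)-m|$. The plan is to show that a large bias $D$ forces $f^*$ to be far from $m$ on a sub-interval of $A$ near $x$ of non-negligible length. This yields a lower bound on $V(A)$ in terms of $D$, which we then invert.

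\textbf{Step 1 (local closeness to $f^*(x)$).} By Definition~\ref{def::localholder} with $d=1$, for every $x'\in[0,1]$ we have $|f^*(x')-f^*(x)|\le L|x'-x|^{\alpha}$; here the exponent is evaluated at $x$, which is the coordinate of the first argument. Set $r=(D/(2L))^{1/\alpha}$. Every $x'$ with $|x'-x|\le r$ satisfies $|f^*(x')-f^*(x)|\le D/2$, and therefore $|f^*(x')-m|\ge D/2$.

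\textbf{Step 2 (mass of the good set).} Let $B=A\cap[x-r,x+r]$. Because $x\in A$, we get $\Vol(B)\ge\min(r,h)$, where we may keep only the longer side of $x$ within $A$ (that side has length at least $h/2$). By Assumption~\ref{assum:bounded-density}, $\P(B)\ge p_{\min}\min(r,h/2)$. Since $m$ is the conditional mean,
\[
V(A)=\E\bigl[(f^*(X)-m)^2\1\{X\in A\}\bigr]\ge \frac{D^2}{4}\P(B)\gtrsim_{p_{\min}} D^2\min(r,h).
\]

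\textbf{Step 3 (case split).} There are two cases.

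\emph{Case $r\le h$.} Here $V(A)\gtrsim D^2 (D/L)^{1/\alpha}=D^{(2\alpha+1)/\alpha}L^{-1/\alpha}$. This rearranges to $D\lesssim (L^{1/\alpha}V(A))^{\alpha/(2\alpha+1)}$, which is exactly \eqref{eq:LH-upper-1d}.

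\emph{Case $r>h$.} Here $D>2Lh^{\alpha}$, but Step 1 applied pointwise on all of $A$ gives $|f^*(x')-f^*(x)|\le Lh^\alpha$ for every $x'\in A$. Averaging over $A$ then yields $D\le Lh^\alpha$, a contradiction. So this case cannot occur.

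The constants depend only on $p_{\min}$. No constant is lost when converting mass to length, since only a lower bound on mass is used; the $p_{\max}$ in the statement is unnecessary here but harmless.

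\textbf{Main obstacle.} The subtle point is the asymmetry of the local Hölder condition, since its exponent depends on the first argument. Step 1 must be applied with the anchor at $x$, so that $\alpha(x)$ appears rather than $\alpha(x')$; this is legitimate because Definition~\ref{def::localholder} holds for all ordered pairs. The second point needing care is that $x$ may sit at an endpoint of $A$. This is why Step 2 uses one-sided neighbourhoods, and why the contradiction in the case $r>h$ is needed to rule out degenerate configurations.
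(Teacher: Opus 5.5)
Your proof is correct and follows essentially the same route as the paper's. You anchor the H\"older bound at $x$ to get a neighbourhood of radius $r=(D/(2L))^{1/\alpha(x)}$ on which $|f^*-m|\ge D/2$, lower-bound $V(A)$ by the mass of that neighbourhood inside $A$, split into cases, and invert.

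The differences are minor:
\begin{itemize}
\item You split at $r$ versus $h$ and dispose of $r>h$ with the averaging bound $D\le Lh^{\alpha(x)}$. That bound in fact forces $r\le h/2$. The paper instead splits at $2r$ versus $\Vol(A)$ and handles the second case via $\Var[f^*(X)\mid X\in A]\le L^2\Vol(A)^{2\alpha(x)}$.
\item You work with $V(A)$ directly rather than with $\Vol(A)\Var[f^*(X)\mid X\in A]$, so your constant depends only on $p_{\min}$, whereas the paper's also depends on $p_{\max}$.
\end{itemize}
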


The second lemma shows how the pointwise bias can be controlled by the residual variation on an interval, which we call the bias visibility property.
This is a more precise form of the bound $h \lesssim V(A)^{1/(1+2\alpha(x))}$ described in the heuristic argument and is used to turn the upper bound on $\Delta_{\max}(A)$ from the MID stopping criterion into a bound on the bias of the terminal interval.

\begin{proof}[Proof of Theorem~\ref{thm:cart-mid-one-dimensional}]
Let $u_0:=u+\log 3$.
Define
\[
\varepsilon_n
:=
K
\paren*{\frac{\log n+u}{n}}^{1/2},
\qquad
\delta=C\varepsilon_n^2,
\]
where $C$ is the universal constant in the theorem. Fix $x\in[0,1]$,
and let $A=A(x)$ denote the terminal interval containing $x$. We begin
with the decomposition
\[
|\hat f_\delta(x)-f^*(x)|
\le
\bigl|\overline Y_A-\E[f^*(X)\mid X\in A]\bigr|
+
\bigl|\E[f^*(X)\mid X\in A]-f^*(x)\bigr|.
\]

\paragraph{Step 1: Concentration.}
Apply Lemmas~\ref{lemma:unif_conc_id},
\ref{lemma:unif_conc_cell_ave}
and~\ref{lem:mid-child-mass} with confidence parameter $u_0$, and
consider the event on which all three conclusions hold. In
Lemma~\ref{lem:mid-child-mass}, choose $c$ to be the constant required
by Lemma~\ref{lemma:unif_conc_cell_ave}; the constant $C$ in $\delta$
can be chosen large enough for the resulting impurity threshold.
A union bound shows that this common event has probability at least
$1-e^{-u}$. Since $u_0$ differs from $u$ only by a constant, its
contribution can be absorbed into the universal constants below. On
this event,
\begin{equation} \label{eq:1d-imp_dec_conc}
\sup_{A,b}
\bigl|\widehat\Delta(A,1,b)^{1/2}-\Delta(A,1,b)^{1/2}\bigr|
\lesssim \varepsilon_n
\end{equation}
and, for every interval $A$ with $\P(A)$ above a sufficiently large multiple of $(\log n+u_0)/n$,
\begin{equation} \label{eq:1d-cell-ave-conc}
    \bigl|\overline Y_A-\E[f^*(X)\mid X\in A]\bigr|
\lesssim
\frac{\varepsilon_n}{\P(A)^{1/2}}.
\end{equation}

\paragraph{Step 2: The parent split signal controls the leaf length.}
If $A$ is the root cell, the desired lower bound on its length is
immediate. Otherwise, let $A'$ be the parent of $A$. Let $b^*$ maximize
the population impurity decrease on $A'$, and let $\hat b$ maximize its
empirical counterpart. Since the algorithm split $A'$, the MID
criterion gives
\[
\widehat\Delta(A',1,\hat b)\ge\delta.
\]
Recall that
\[
\Delta_{\max}(A)=\sup_{b\in[0,1]}\Delta(A,1,b)
\quad\text{and}\quad
\widehat\Delta_{\max}(A)=
\sup_{b\in[0,1]}\widehat\Delta(A,1,b).
\]
Because \eqref{eq:1d-imp_dec_conc} holds uniformly over $b$, it also
holds for the maxima:
\begin{align}\label{eq::conc-max-id}
\left|
\Delta_{\max}(A)^{1/2}
-
\widehat\Delta_{\max}(A)^{1/2}
\right|
\lesssim \varepsilon_n .
\end{align}
Uniform concentration and the empirical optimality of $\hat b$ show
that passing from the population maximizer to $\hat b$ loses at most a
universal multiple of $\varepsilon_n$ on the square-root scale. The MID
criterion likewise gives
$\Delta(A',1,\hat b)^{1/2}\ge\delta^{1/2}-O(\varepsilon_n)$.
Since $\delta=C\varepsilon_n^2$, choosing $C$ sufficiently large therefore
gives
\[
\Delta(A',1,\hat b)
\ge \frac{1}{2}\Delta_{\max}(A'),
\qquad
\Delta(A',1,\hat b)\gtrsim\delta.
\]
Thus the split at $\hat b$ is competitive, and
Lemma~\ref{lemma:population-good-split-balance-1d} yields
\begin{equation} \label{eq:1d_step2}
\Vol(A)
\gtrsim_{p_{\min},p_{\max}}
\lambda^{2\alpha(x)/(1+2\alpha(x))}
L^{-2/(1+2\alpha(x))}
\delta^{1/(1+2\alpha(x))}.
\end{equation}

\paragraph{Step 3: Control the leaf-average error.}
If $A$ is the root cell, Lemma~\ref{lemma:unif_conc_cell_ave} gives
\eqref{eq:1d-cell-ave-conc} directly. Otherwise, $A$ was created by a split
whose empirical impurity decrease is at least $\delta$. Hence
Lemma~\ref{lem:mid-child-mass} gives
\[
\P(A)\ge c\frac{\log n+u_0}{n}
\]
with $c$ chosen as in Step 1. Consequently,
Lemma~\ref{lemma:unif_conc_cell_ave}, and thus
\eqref{eq:1d-cell-ave-conc}, applies to $A$. Moreover,
Assumption~\ref{assum:bounded-density} implies
$\P(A) \geq p_{\min}\Vol(A)$.
Combining this inequality with \eqref{eq:1d-cell-ave-conc} and
\eqref{eq:1d_step2} yields
\[
\bigl|\overline Y_A-\E[f^*(X)\mid X\in A]\bigr|
\lesssim_{p_{\min},p_{\max}}
\lambda^{-\alpha(x)/(1+2\alpha(x))}
L^{1/(1+2\alpha(x))}
\delta^{\alpha(x)/(1+2\alpha(x))},
\]
where we also used $\delta=C\varepsilon_n^2$.

\paragraph{Step 4: Control the residual variation.}
Because $A$ is terminal, $\widehat\Delta_{\max}(A)<\delta$.
Equation~\eqref{eq::conc-max-id} therefore gives
$\Delta_{\max}(A)\lesssim\delta$. By SID,
\[
V(A)\leq \frac{\Delta_{\max}(A)}{\lambda}
\lesssim \frac{\delta}{\lambda}.
\]

\paragraph{Step 5: Control the bias.}
Applying Lemma~\ref{lemma:LH-upper-1d} to $A$ and using the bound from
Step 4 gives
\[
\bigl|\E[f^*(X)\mid X\in A]-f^*(x)\bigr|
\lesssim_{p_{\min},p_{\max}}
\lambda^{-\alpha(x)/(1+2\alpha(x))}
L^{1/(1+2\alpha(x))}
\delta^{\alpha(x)/(1+2\alpha(x))}.
\]
Combining the leaf-average and bias bounds gives
\[
|\hat f_\delta(x)-f^*(x)|
\lesssim_{p_{\min},p_{\max}}
\lambda^{-\alpha(x)/(1+2\alpha(x))}
L^{1/(1+2\alpha(x))}
\delta^{\alpha(x)/(1+2\alpha(x))}.
\]
Substituting the chosen value of $\delta$ proves
\eqref{eq:main-thm-point-eid-cart-1}. Since the concentration event is
uniform over all intervals and splits, the conclusion holds
simultaneously for every $x\in[0,1]$.
\end{proof}

\begin{remark}[Local proof mechanism]
\label{remark:local-proof-mechanism}
The proof uses MID and SID only along the root-to-leaf path containing $x$. MID supplies two local certificates: the terminal cell has small empirical split signal, whereas its parent has large empirical split signal. After the uniform empirical-population comparison, the first certificate and SID give a bound on the residual variation in the terminal cell. The second certificate, together with competitiveness and Lemma~\ref{lemma:population-good-split-balance-1d}, gives a lower bound on the mass of that cell. These are precisely the local bias and leaf-average variance controls needed for pointwise estimation. Thus, the rate is determined by a local bias-variance balance rather than by iterating a contraction over the entire tree (see also Remark~\ref{rem:comparison-with-previous-work}).
\end{remark}

\begin{remark}[Unbalanced splits] \label{remark:unbalanced-splits}
Unlike previous analyses of CART, the proof does not enforce CART to make balanced splits. Lemma~\ref{lemma:population-good-split-balance-1d} shows only that, when a split is competitive, the child containing the query point cannot be too small relative to the residual variation in its parent. This local lower bound is the only balance property used in the proof.
\end{remark}

\section{Pointwise spatial adaptation for CART-MID}
\label{section:multidim-results}
\label{section:upper-bound}




We now state the general version of our pointwise upper bound, which applies to CART-MID in sparse high-dimensional settings.
To formalize this setting, we assume there exists a nonempty set $S\subseteq[d]$ with $|S|=s$ (so $1\le s\le d$) and a function $f_S^*:[0,1]^s\to\mathbb R$ such that
\begin{equation} \label{eq:sparsity-set}
f^*(\bx)=f_S^*(\bx_S).
\end{equation}
The relevant aggregate smoothness at $\bx$ is the harmonic mean over the relevant coordinates:
\[
\bar\alpha_S(\bx)
:=
\left(
\frac{1}{s}\sum_{j\in S}\frac{1}{\alpha_j(x_j)}
\right)^{-1}.
\]

Even without high-dimensionality, achieving an optimal rate in multiple dimensions is more demanding than in one dimension because the terminal cell must have not only the right mass but also the right shape: under anisotropic smoothness, rougher coordinates should be refined more than smoother coordinates.
It turns out that the impurity decrease splitting rule is indeed able to perform this adaptive refinement at a local scale.
On the other hand, the bias visibility and competitive split-scale regularity conditions that were provable in one dimension no longer hold automatically in multiple dimensions.
We therefore first state multidimensional analogues of these two lemmas as structural assumptions.


\begin{assumption}[Pathwise competitive split-scale regularity]
\label{assum:reliable-split-balance}
There exists a constant $\omega\in(0,1]$ such that the following holds for every nested path of relevant cells
\[
A^0\supset A^1\supset\cdots\supset A^D=A
\]
with $D\ge1$ and for every point $\bx\in A$.
Suppose that each nonterminal transition $A^i\to A^{i+1}$ is generated by a competitive split $(A^i,j_i,b_i)$ along a relevant coordinate. Write
\[
\Delta_{\mathrm{pa}}
:=
\Delta(A^{D-1},j_{D-1},b_{D-1})
\]
for the split signal at the terminal parent. Then, for every coordinate $j$ split somewhere along the path,
\begin{align}
\label{eq:reliable-split-scale}
\len_j(A)
\ge
\omega\,
L^{-1/\alpha_j(x_j)}
\left(
\Vol(A)^{-1}\Delta_{\mathrm{pa}}
\right)^{1/(2\alpha_j(x_j))},
\end{align}
where $\omega$ does not depend on the path or $\bx$.
\end{assumption}

When $d=1$, the last split in the only coordinate is the terminal-parent split, $\len_1(A)=\Vol(A)$, and \eqref{eq:reliable-split-scale} has the same form as Lemma~\ref{lemma:population-good-split-balance-1d}.
For $d>1$, the last split in a coordinate may occur strictly above the terminal leaf, and later splits in other coordinates shrink the leaf volume. The assumption therefore must be stated pathwise.
To see how this implies a lower bound on the terminal cell volume, apply the assumption to each coordinate split along the path and multiply the resulting inequalities together.
In the regime used below, after simplifying exponents, this gives the bound
\[
\Vol(A)
\geq
\omega^{2s/(s+2)}
\paren*{L^{-2}\Delta_{\mathrm{pa}}}^{
s/(2\bar\alpha_S(\bx)+s)}.
\]

\begin{assumption}[Bias visibility on large cells]
\label{assum:variance-oscillation}
There exist constants $\zeta\ge1$ and $\chi\in(0,1]$ such that, for every relevant cell $A\in\mathcal A_S$ and every $\bx\in A$, the following implication holds.
If
\[
\P(A)\ge \chi\paren*{L^{-2}V(A)}^{
s/(2\bar\alpha_S(\bx)+s)
}
\]
then
\begin{align}
\label{eq:bias-visibility}
\bigl|f^*(\bx)-\E[f^*(\bX)\mid \bX\in A]\bigr|
\le
\zeta\,
L^{s/(2\bar\alpha_S(\bx)+s)}
V(A)^{
\bar\alpha_S(\bx)/(2\bar\alpha_S(\bx)+s)}.
\end{align}
\end{assumption}

By applying bias visibility only to cells with mass at least the target bias-variance scale, this assumption is somewhat weaker than the one-dimensional analogue, Lemma~\ref{lemma:LH-upper-1d}.
However, it is sufficient for generalizing the bias step of the one-dimensional proof to multiple dimensions while remaining more plausible.
Indeed, we will show in Section~\ref{section:special} that both Assumptions~\ref{assum:reliable-split-balance} and \ref{assum:variance-oscillation} hold under more intuitive sufficient conditions on the regression function and design distribution.

With these two assumptions replacing Lemma~\ref{lemma:population-good-split-balance-1d} and Lemma~\ref{lemma:LH-upper-1d}, the one-dimensional argument extends to an optimal pointwise upper bound for CART-MID in multiple dimensions when all coordinates are relevant.
To attain an optimal rate in the sparse high-dimensional setting, we add one further assumption to ensure that the greedy split rule only selects relevant coordinates.


\begin{assumption}[Sparsity separation]
\label{assum:sparse-routing}
There exists $\vartheta\in(0,1]$ such that, for every cell $A\in\mathcal A_S$,
\begin{align}
\label{eq:sparse-routing}
\Delta_{\max}(A,S^c)
\le
(1-\vartheta)^2\,\Delta_{\max}(A,S).
\end{align}
\end{assumption}

This assumption is vacuous when there are no irrelevant coordinates and holds trivially when the irrelevant coordinates are independent of the relevant coordinates. In general, it is a population separation condition requiring relevant split signals to dominate irrelevant ones.
We now state the main result of this section.




\begin{theorem}[Pointwise upper bound for CART-MID]
\label{thm:main-thm-point-id-cart}
Assume the regression model in Section~\ref{sec:regression-model}. Assume bounded density (Assumption~\ref{assum:bounded-density}) and sub-Gaussian response (Assumption~\ref{assum:subgaussian-response}). Suppose $f^*$ is locally anisotropic H\"older as in Definition~\ref{def::localholder} and has a nonempty sparse support set $S\subseteq[d]$ with $|S|=s$.
Suppose sufficient impurity decrease (Assumption~\ref{assum:sid}), pathwise competitive split-scale regularity (Assumption~\ref{assum:reliable-split-balance}), bias visibility on large cells (Assumption~\ref{assum:variance-oscillation}), and sparsity separation (Assumption~\ref{assum:sparse-routing}) hold on the relevant cell class $\mathcal A_S$, with parameters satisfying $\chi
\lesssim_{p_{\min},\alpha_{\min}}
\omega^{2s/(s+2)}
\lambda^{s/(2\alpha_{\min}+s)}$. 
Let $u\ge0$ and choose
\begin{align}
\label{equ::deltathm}
\delta
=
C
\vartheta^{-2}
K^2
\frac{s\log(nd)+u}{n},
\end{align}
where $C>0$ is a sufficiently large universal constant. Assume that this threshold satisfies $\delta\le1$.
Then, with probability at least $1-e^{-u}$, uniformly for all $\bx\in[0,1]^d$,
\begin{align}
\label{eq:main-thm-point-eid-cart}
\bigl|\hat f_\delta(\bx)-f^*(\bx)\bigr|
\lesssim_{p_{\min},\alpha_{\min}}
\zeta\,
\omega^{-1}
L^{s/(2\bar\alpha_S(\bx)+s)}
\left(
K^2
\frac{s\log(nd)+u}{\lambda\vartheta^2 n}
\right)^{
\bar\alpha_S(\bx)/(2\bar\alpha_S(\bx)+s)
}.
\end{align}
\end{theorem}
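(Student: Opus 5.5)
I will follow the five-step template of the proof of Theorem~\ref{thm:cart-mid-one-dimensional}. The one genuinely new ingredient is a routing step showing that every split on the path to a query point is made along a relevant coordinate. Set $\varepsilon_n:=K((s\log(nd)+u)/n)^{1/2}$, so that $\delta=C\vartheta^{-2}\varepsilon_n^2$. I will work on the intersection of the events of Lemmas~\ref{lemma:unif_conc_id}, \ref{lemma:unif_conc_cell_ave} and~\ref{lem:mid-child-mass}, applied with $r=s$ and confidence parameter $u+\log 3$. This event has probability at least $1-e^{-u}$, and on it all subsequent arguments are deterministic. That is what makes the conclusion hold uniformly in $\bx$.

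\textbf{Routing step (induction on depth).} I will show that every cell on the root-to-leaf path of $\bx$ lies in $\mathcal A_S$. The root is trivially in $\mathcal A_S$. Suppose $A'\in\mathcal A_S\subseteq\mathcal A_{d,s}$ is split. The MID rule gives $\widehat\Delta_{\max}(A')\ge\delta$. By uniform concentration, $\Delta_{\max}(A')^{1/2}\ge \delta^{1/2}-O(\varepsilon_n)$. Sparsity separation (Assumption~\ref{assum:sparse-routing}) then gives, for every $j\in S^c$,
\[
\widehat\Delta_{\max}(A',j)^{1/2}\le(1-\vartheta)\Delta_{\max}(A',S)^{1/2}+O(\varepsilon_n).
\]
For comparison, the best relevant coordinate satisfies $\widehat\Delta_{\max}(A',S)^{1/2}\ge\Delta_{\max}(A',S)^{1/2}-O(\varepsilon_n)$. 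The relevant coordinate therefore wins whenever $\vartheta\Delta_{\max}(A',S)^{1/2}\gtrsim\varepsilon_n$. This holds because $\Delta_{\max}(A',S)=\Delta_{\max}(A')\gtrsim\delta=C\vartheta^{-2}\varepsilon_n^2$, and the factor $\vartheta^{-2}$ in \eqref{equ::deltathm} is chosen exactly for this purpose. Both children are then again in $\mathcal A_S$. The same comparison shows that the chosen split is competitive: $\Delta(A',\hat j,\hat b)\ge\tfrac12\Delta_{\max}(A')$ and $\Delta(A',\hat j,\hat b)\gtrsim\delta$. This mirrors Step~2 of the 1D proof. Consequently, every transition on the path is a competitive relevant split, and the hypotheses of Assumption~\ref{assum:reliable-split-balance} are met.

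\textbf{Volume and variance.} Let $A$ be the terminal cell containing $\bx$. If $A$ is not the root, Assumption~\ref{assum:reliable-split-balance} applied to the path, multiplied over coordinates, gives
\[
\Vol(A)\ge\omega^{2s/(s+2)}(L^{-2}\Delta_{\mathrm{pa}})^{s/(2\bar\alpha_S(\bx)+s)}, \qquad \Delta_{\mathrm{pa}}\gtrsim\delta.
\]
Two details need checking in this product. First, I must justify it for coordinates in $S$ that are never split; there $\len_j=1$, and the bound should still hold since $\Delta_{\mathrm{pa}}\le 1$ and $L\ge1$. Second, I must track the exponents carefully so that the harmonic mean $\bar\alpha_S$ appears; I expect this exponent bookkeeping to be routine but fiddly. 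Next, Lemma~\ref{lem:mid-child-mass} ensures that Lemma~\ref{lemma:unif_conc_cell_ave} applies to $A$. Using $\P(A)\ge p_{\min}\Vol(A)$, the leaf-average error is at most
\[
\varepsilon_n\P(A)^{-1/2}\lesssim \omega^{-s/(s+2)}L^{s/(2\bar\alpha_S+s)}\delta^{1/2-s/(2(2\bar\alpha_S+s))}.
\]
The exponent here equals $\bar\alpha_S/(2\bar\alpha_S+s)$, and $\omega^{-s/(s+2)}\le\omega^{-1}$.

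\textbf{Bias.} Terminality gives $\widehat\Delta_{\max}(A)<\delta$, so $\Delta_{\max}(A)\lesssim\delta$, and SID (applicable since $A\in\mathcal A_S$) gives $V(A)\lesssim\delta/\lambda$. To invoke Assumption~\ref{assum:variance-oscillation} I need
\[
\P(A)\ge\chi(L^{-2}V(A))^{s/(2\bar\alpha_S+s)}.
\]
This follows from the volume lower bound together with $V(A)\lesssim\delta/\lambda$, provided
\[
\chi\lesssim\omega^{2s/(s+2)}\lambda^{s/(2\bar\alpha_S+s)}.
\]
The stated hypothesis $\chi\lesssim\omega^{2s/(s+2)}\lambda^{s/(2\alpha_{\min}+s)}$ is sufficient, since $\lambda\le1$ and $\bar\alpha_S\ge\alpha_{\min}$. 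The root case is handled directly, since $\P([0,1]^d)=1\ge\chi$. The assumption then bounds the bias by $\zeta L^{s/(2\bar\alpha_S+s)}(\delta/\lambda)^{\bar\alpha_S/(2\bar\alpha_S+s)}$. Adding this to the leaf-average error and substituting $\delta$ yields \eqref{eq:main-thm-point-eid-cart}.

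\textbf{Main obstacle.} I expect the hardest part to be the routing induction. Its margin must be established uniformly for all cells of $\mathcal A_S$ simultaneously, so that noise can never push the path off $\mathcal A_S$, where SID and the other structural assumptions no longer apply. The volume product is the second delicate point.
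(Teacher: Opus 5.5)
Your outline is the paper's proof: the same intersection of concentration events, the routing induction with margin $\vartheta\Delta_{\max}(A',S)^{1/2}\gtrsim\varepsilon_n$ (which also certifies competitiveness and $\Delta_{\mathrm{pa}}\gtrsim\delta$), the terminal-volume lower bound from Assumption~\ref{assum:reliable-split-balance}, SID plus bias visibility with the $\chi$-compatibility check, and child mass plus cell-average concentration for the leaf error. The one step that does not work as you justify it is your treatment of relevant coordinates that are never split. For such $j$ you would need
\[
1\ge\omega L^{-1/\alpha_j(x_j)}\bigl(\Delta_{\mathrm{pa}}/\Vol(A)\bigr)^{1/(2\alpha_j(x_j))}.
\]
The quantity that matters is the ratio $\Delta_{\mathrm{pa}}/\Vol(A)$, not $\Delta_{\mathrm{pa}}$. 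The facts $\Delta_{\mathrm{pa}}\le1$ and $L\ge1$ do not control this ratio, which is bounded only by $p_{\max}$ times the squared difference of the two child means. Moreover, $\Delta_{\mathrm{pa}}\le1$ is not itself assumed; only $\delta\le1$ is. Assumption~\ref{assum:reliable-split-balance} gives no inequality for coordinates that are never split, so you cannot take the product over all of $S$.

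Instead, as in Lemma~\ref{lem::convarlowerbound}, take the product only over the set $\mathcal I\subseteq S$ of coordinates split on the path. This suffices because $\len_j(A)=1$ off $\mathcal I$, so $\Vol(A)=\prod_{j\in\mathcal I}\len_j(A)$. Substitute $\Delta_{\mathrm{pa}}\ge c\delta$ with $c\le1$ \emph{before} solving for the volume. Writing $\alpha_{\mathcal I}$ for the harmonic mean of $\{\alpha_j(x_j)\}_{j\in\mathcal I}$, this gives $\Vol(A)\ge\omega^{2\alpha_{\mathcal I}|\mathcal I|/(2\alpha_{\mathcal I}+|\mathcal I|)}(cL^{-2}\delta)^{|\mathcal I|/(2\alpha_{\mathcal I}+|\mathcal I|)}$. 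Since $\sum_{j\in\mathcal I}1/\alpha_j(x_j)\le\sum_{j\in S}1/\alpha_j(x_j)$, the exponent is at most $s/(2\bar\alpha_S(\bx)+s)$. Because $cL^{-2}\delta\le1$, raising the exponent only lowers the bound; this is where $\delta\le1$ and $L\ge1$ are actually used. Likewise the $\omega$-exponent is at most $2s/(s+2)$, and $\omega\le1$.

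In particular, when $\mathcal I\neq S$ this argument does not deliver your displayed bound with base $L^{-2}\Delta_{\mathrm{pa}}$, since that base need not be at most $1$. Keep $\delta$ in the base, which is all the later steps use.

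A smaller slip of the same kind occurs in the root case. There the mass condition is $1\ge\chi\,(L^{-2}V([0,1]^d))^{s/(2\bar\alpha_S(\bx)+s)}$, which does not follow from $\chi\le1$ alone. It does follow from $V([0,1]^d)\lesssim\delta/\lambda$ and the $\chi$-compatibility condition, exactly as in the non-root case. With these repairs your argument goes through and coincides with the paper's.
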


The proof of Theorem~\ref{thm:main-thm-point-id-cart} is given in Appendix~\ref{app:cart-mid-upper-proof}.

\begin{remark}[Minimax optimality]
Up to logarithmic factors, Theorem~\ref{thm:main-thm-point-id-cart} attains the pointwise minimax rate over an $s$-dimensional anisotropic H\"older class with local harmonic smoothness $\bar\alpha_S(\bx)$ and radius $L$:
\[
L^{s/(2\bar\alpha_S(\bx)+s)}
n^{-\bar\alpha_S(\bx)/(2\bar\alpha_S(\bx)+s)}.
\]
This is the rate obtained by balancing local bias and leaf-average variance in the relevant coordinates. The result is adaptive in the sense that CART-MID does not know $\bar\alpha_S(\bx)$ or the relevant set $S$; the ambient dimension enters only through the split-search logarithm. In the one-dimensional case, this reduces to the classical pointwise rate $L^{1/(2\alpha(x)+1)}n^{-\alpha(x)/(2\alpha(x)+1)}$; see \citet{ibragimov1982bounds}, \citet{stone1982optimal}, and \citet{lepski2015adaptive}.
\end{remark}

\begin{remark}[Comparison with previous work] \label{rem:comparison-with-previous-work}
\citet{chi2022asymptotic} introduced SID to prove high-dimensional consistency rates for random forests using the original CART split criterion, and \citet{Mazumder2024} later gave a sharper single-tree CART analysis under the same condition.
Both analyses use SID in a global, depth-wise contraction argument: the tree depth is chosen to balance a geometrically decaying integrated squared bias term with a leaf-estimation term.
Consequently, the resulting $L^2(P)$ rates depend on the SID constant through the contraction exponent; for example, \citet{Mazumder2024} obtain a rate with exponent
\[
\phi(\lambda)
=
\frac{-\log_2(1-\lambda)}{1-\log_2(1-\lambda)}.
\]
By contrast, Theorem~\ref{thm:main-thm-point-id-cart} attains the usual pointwise smoothness exponent $\bar\alpha_S(\bx)/(2\bar\alpha_S(\bx)+s)$, while the SID constant enters polynomially through the scale factor. The local role of MID and SID underlying this distinction is described in Remark~\ref{remark:local-proof-mechanism}. In multiple dimensions, however, the argument additionally requires pathwise competitive split-scale regularity, bias visibility, and sparsity separation. These local geometric conditions are not required by the earlier global $L^2(P)$ contraction analyses.
\end{remark}

\begin{remark}[Choice of $\delta$]
    In practice, the threshold $\delta$ can be selected by cross-validation. For a finite candidate grid, a standard held-out validation argument shows that, if the grid contains a threshold of the order specified in \eqref{equ::deltathm}, a balanced sample split preserves the corresponding integrated $L^2(P)$ rate, up to the usual validation remainder. Obtaining an analogous guarantee for the uniform pointwise rate in Theorem~\ref{thm:main-thm-point-id-cart} is more difficult, because validation controls global prediction risk and may not detect errors localized to small terminal cells.\end{remark}

\section{Sufficient conditions for the structural assumptions}
\label{section:special}

The previous section stated a pointwise upper bound for CART-MID under assumptions deliberately framed as extensions of structural properties that are provable in one dimension from SID alone.
They can therefore be viewed as close to necessary conditions for our pointwise-adaptation argument.
The price to pay for this generality is that the assumptions are somewhat abstract and difficult to verify directly.
In this section, we give more interpretable sufficient conditions for these assumptions as well as examples satisfying them.


\subsection{Local norm equivalence and bias visibility}
\label{section:special:SID,VC}

For a cell $A$, write
\[
\Osc(A):=\sup_{\by,\bz\in A}|f^*(\by)-f^*(\bz)|
\]
for the oscillation of the regression function over $A$.

\begin{assumption}[Local norm equivalence]
\label{assum:local-norm-equivalence}
There exists a constant $\rho\ge1$ such that, for every relevant cell $A\in\mathcal A_S$,
\begin{align}
\label{eq:local-norm-osc}
\Osc(A)^2
&\le
\rho\,\Var[f^*(\bX)\mid\bX\in A],
\end{align}
\end{assumption}

Local norm equivalence (LNE) compares the conditional sup-norm variation of the regression function on each relevant cell with its conditional $L^2(P)$ norm.
While the opposite inequality holds trivially, the $L^2(P)$ norm can be much smaller than the sup-norm variation if $f^*$ is spiky.
Such spikiness can create a large bias in leaf averaging without creating sizeable residual variation, and hence LNE is a natural, albeit strong condition for bias visibility, as shown in the following proposition.

\begin{proposition}[Local norm equivalence implies bias visibility]
\label{prop:lne-implies-bias-visibility}
Assume the regression model in Section~\ref{sec:regression-model}, including bounded density (Assumption~\ref{assum:bounded-density}).
Suppose local norm equivalence (Assumption~\ref{assum:local-norm-equivalence}) holds on $\mathcal A_S$.
Then bias visibility on large cells (Assumption~\ref{assum:variance-oscillation}) holds on $\mathcal A_S$ for any $\zeta\ge1$ and $\chi\in(0,1]$ satisfying $\zeta^2\ge\rho\chi^{-1}$.
\end{proposition}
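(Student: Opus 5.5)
The plan is to bound the bias by the oscillation, convert the oscillation into residual variation via local norm equivalence, and then use the mass hypothesis of Assumption~\ref{assum:variance-oscillation} to trade the factor $\P(A)^{-1}$ for the target power of $V(A)$. Fix a relevant cell $A\in\mathcal A_S$ and $\bx\in A$, and write $\gamma:=s/(2\bar\alpha_S(\bx)+s)\in(0,1)$, so that $1-\gamma=2\bar\alpha_S(\bx)/(2\bar\alpha_S(\bx)+s)$. If $\P(A)=0$ the conditional expectation is not meaningful (or $V(A)=0$ by convention), so we may assume $\P(A)>0$. Since $\E[f^*(\bX)\mid\bX\in A]$ is an average of values of $f^*$ over $A$, and $\bx\in A$,
\[
\bigl|f^*(\bx)-\E[f^*(\bX)\mid\bX\in A]\bigr|\le \Osc(A).
\]
Here bounded density guarantees that the conditional law is supported on $A$ with the same null sets as Lebesgue measure, so there are no essential-supremum subtleties; one may also read $\Osc$ as an essential oscillation.

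Next we apply Assumption~\ref{assum:local-norm-equivalence}:
\[
\Osc(A)^2\le\rho\Var[f^*(\bX)\mid\bX\in A]=\rho\,\frac{V(A)}{\P(A)}.
\]
We then invoke the hypothesis of the implication, $\P(A)\ge\chi\,(L^{-2}V(A))^{\gamma}$. If $V(A)=0$, the previous display forces $\Osc(A)=0$, and the conclusion is trivial. Otherwise,
\[
\Osc(A)^2\le\rho\chi^{-1}V(A)\,L^{2\gamma}V(A)^{-\gamma}=\rho\chi^{-1}L^{2\gamma}V(A)^{1-\gamma}.
\]
Taking square roots gives
\[
\Osc(A)\le(\rho\chi^{-1})^{1/2}L^{\gamma}V(A)^{\bar\alpha_S(\bx)/(2\bar\alpha_S(\bx)+s)},
\]
and since $\zeta\ge(\rho\chi^{-1})^{1/2}$ by assumption, \eqref{eq:bias-visibility} follows.

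There is no serious obstacle. The only care points are the degenerate cases $\P(A)=0$ and $V(A)=0$, and checking that the exponent bookkeeping $(1-\gamma)/2=\bar\alpha_S(\bx)/(2\bar\alpha_S(\bx)+s)$ matches the target. Notably, neither the H\"older condition nor $p_{\min},p_{\max}$ enter in any quantitative way.
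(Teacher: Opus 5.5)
Your proposal is correct and follows the paper's proof essentially step for step. You bound the bias by $\Osc(A)$, apply local norm equivalence to get $\rho V(A)/\P(A)$, use the mass hypothesis to replace $\P(A)^{-1}$ by $\chi^{-1}L^{2\gamma}V(A)^{-\gamma}$, and conclude via $\zeta^2\ge\rho\chi^{-1}$, treating the degenerate case $V(A)=0$ separately just as the paper does.
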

The proof is given in Appendix~\ref{app:sufficient-condition-proofs}.


\subsection{Pathwise competitive split-scale regularity}
\label{sec::examplessplit}

In addition to implying bias visibility, LNE together with SID and smoothness of $f^*$ and $p_X$ also implies pathwise competitive split-scale regularity.
The next proposition makes this implication explicit.

\begin{proposition}[Verifying pathwise competitive split-scale regularity]
\label{prop:pathwise-split-scale-from-primitives}
Assume the regression model in Section~\ref{sec:regression-model}, including bounded density (Assumption~\ref{assum:bounded-density}).
Suppose both $f^*$ and $p_X$ are locally anisotropic H\"older as in Definition~\ref{def::localholder}. 
Suppose sufficient impurity decrease (Assumption~\ref{assum:sid}) and local norm equivalence (Assumption~\ref{assum:local-norm-equivalence}) hold on $\mathcal A_S$.
Then there exists a constant
$C_{p_{\min},p_{\max},\rho,\alpha_{\min},\|f^*\|_\infty}>0$ such that
pathwise competitive split-scale regularity (Assumption~\ref{assum:reliable-split-balance}) holds on $\mathcal A_S$ with
\[
\omega
=
\min\left\{1,\,
C_{p_{\min},p_{\max},\rho,\alpha_{\min},\|f^*\|_\infty}
\lambda^{1+1/(2\alpha_{\min})}
\right\}.
\]
\end{proposition}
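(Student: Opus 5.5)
Fix a nested path $A^0\supset\cdots\supset A^D=A$ of relevant cells generated by competitive splits, a point $\bx\in A$, and a coordinate $j$ split somewhere along the path. Let $k$ be the last index with $j_k=j$, so that $\len_j(A)=\len_j(A^{k+1})$. The plan has three steps:
\begin{enumerate}
\item lower bound $\len_j(A^{k+1})$ by the split signal at $A^k$;
\item show that split signals essentially do not increase, per unit volume, down the path, so that $\Delta_{\mathrm{pa}}/\Vol(A)$ is controlled by the signal per unit volume at $A^k$;
\item combine the two bounds.
\end{enumerate}
In each step we use LNE to move between variance and oscillation, and SID to move between variance and split signal.

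\emph{Step 1: the one-coordinate bound at the last $j$-split.} Write $\Delta_k=\Delta(A^k,j,b_k)$. Because the split is competitive, $\Delta_k\ge\tfrac12\Delta_{\max}(A^k)\ge\tfrac{\lambda}{2}V(A^k)$ by SID. Let $B=A^{k+1}$ be the child containing $\bx$ and $B'$ its sibling. We have
\[
\Delta_k=\frac{\P(B)\P(B')}{\P(A^k)}(m_B-m_{B'})^2\le \P(B)\,\Osc(A^k)^2\le \rho\,\P(B)\,\Var[f^*\mid A^k],
\]
using LNE, where $m_B$ denotes the conditional mean of $f^*$ on $B$. Hence $\P(B)\gtrsim\lambda\P(A^k)$. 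This gives a balance statement, but we need a bound on the $j$-length. To get one, we bound $\Delta_k$ more carefully by the variation of $f^*$ along coordinate $j$ across the thin child. The mean difference $m_B-m_{B'}$ comes from moving in coordinate $j$ only by about $\len_j(A^k)$, while the other coordinates are integrated against conditional densities that differ between $B$ and $B'$. Since $p_X$ is H\"older, the conditional density of the remaining coordinates changes by a controlled amount across $B$, so
\[
|m_B-m_{B'}|\lesssim L\,\len_j(B)^{\alpha_j(x_j)}+(\text{density term})\cdot\|f^*\|_\infty.
\]
If $B$ is the small child, the symmetric formula gives $\Delta_k\lesssim \P(B)\,\Osc_j^2$, where $\Osc_j$ is the oscillation in direction $j$ over a slab of width $\len_j(B)$. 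I expect this to give
\[
\Delta_k\lesssim_{p}\Vol(B)\,L^2\,\len_j(B)^{2\alpha_j(x_j)}\cdot\lambda^{-c}.
\]
I am least sure of the exact power of $\lambda$: the balance $\P(B)\gtrsim\lambda\P(A^k)$ is needed to compare $\Vol(B)$ with $\Vol(A^k)$, which costs a factor $\lambda$. This is the source of the $\lambda^{1+1/(2\alpha_{\min})}$ in $\omega$ after taking the $1/(2\alpha_j)$ root.

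\emph{Step 2: monotonicity of signal per volume.} For $i>k$, SID, the competitive-split property and LNE give
\[
\Delta_{\mathrm{pa}}\le \rho\,\P(A^{D-1})\Var[f^*\mid A^{D-1}]\le\rho\,\P(A^{D-1})\Osc(A^{D-1})^2.
\]
Since oscillation is monotone under inclusion, $\Osc(A^{D-1})\le\Osc(A^k)$, and $\Osc(A^k)^2\le\rho\Var[f^*\mid A^k]\le 2\rho\,\Delta_k/(\lambda\P(A^k))$. Also, $\P(A^{D-1})/\Vol(A)\lesssim \lambda^{-1}p_{\max}$ by the balance bound from Step 1 applied to the terminal split. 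Together,
\[
\frac{\Delta_{\mathrm{pa}}}{\Vol(A)}\lesssim_{p,\rho}\lambda^{-2}\,\frac{\Delta_k}{\Vol(A^{k})}\lesssim\lambda^{-c'}\frac{\Delta_k}{\Vol(A^{k+1})}.
\]

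\emph{Step 3: combine and identify the main obstacle.} Plugging the bound from Step 2 into Step 1 gives
\[
\len_j(A)\ge \omega\,L^{-1/\alpha_j(x_j)}\bigl(\Delta_{\mathrm{pa}}/\Vol(A)\bigr)^{1/(2\alpha_j(x_j))}.
\]
The $\lambda$-powers and constants depending on $\rho,p_{\min},p_{\max},\|f^*\|_\infty$ are collected into $\omega$, and we cap $\omega$ at $1$. The main obstacle is Step 1. Without LNE and density smoothness, a competitive split could isolate a thin slab where the density or $f^*$ jumps, so that the signal is large while $\len_j$ is tiny. The H\"older continuity of $p_X$ bounds the density-induced part of the mean difference, but this introduces the $\|f^*\|_\infty$ dependence. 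Care is also needed so that $\alpha_j$ is evaluated at $x_j$ rather than elsewhere in the cell. This is handled by the locality of Definition~\ref{def::localholder}, in which the exponent is attached to the first argument.
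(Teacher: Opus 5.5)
\textbf{Gap: the proposed bookkeeping does not reach the stated $\omega$.} Your ingredients are the paper's: the last $j$-split index $k$, monotonicity of $\Osc$ under inclusion plus LNE, SID and competitiveness at $A^k$, a H\"older bound on the split signal, and the balance bound $\P(A^{k+1})\ge\lambda\P(A^k)/(2\rho)$. Done as you propose, however, the chain only proves Assumption~\ref{assum:reliable-split-balance} with a strictly smaller $\omega$ than $\min\{1,C\lambda^{1+1/(2\alpha_{\min})}\}$. Tracking the powers of $\lambda$:
\begin{itemize}
\item Step~2 pays $\lambda^{-1}$ for SID at $A^k$, which is unavoidable.
\item Step~2 pays a second $\lambda^{-1}$ for the balance bound at the terminal split, which you use to compare $\P(A^{D-1})$ with $\Vol(A)$.
\item Step~1, phrased through $\Vol(B)$ and $\len_j(B)$, pays $\lambda^{-1}$ to replace $\P(A^k)$ by $\P(B)$.
\item Step~1 also pays $\lambda^{-2\alpha_j(x_j)}$ to replace $\len_j(A^k)$ by $\len_j(B)$.
\end{itemize}
After the $1/(2\alpha_j(x_j))$-th root this gives $\lambda^{1+3/(2\alpha_j(x_j))}$. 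Even the leanest version of your chain gives $\lambda^{1+1/\alpha_j(x_j)}$: keep $\Delta_k/\Vol(A^k)$ and use the parent's side length. So the Step~1 balance cost is not "the source of" the stated exponent, because Step~2 has already spent an extra $\lambda^{-1}$.

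\textbf{How to fix it.} The terminal-split balance is unnecessary. Write $\mu_A,\mu_{\mathrm{sib}}$ for the conditional means of $f^*$ on $A$ and its sibling, and use the impurity identity with the child's own mass:
\[
\Delta_{\mathrm{pa}}\le \P(A)(\mu_A-\mu_{\mathrm{sib}})^2\le \P(A)\Osc(A^{D-1})^2\le \P(A)\Osc(A^{k})^2\le \rho\,\P(A)\Var[f^*(\bX)\mid \bX\in A^k].
\]
Hence $\Delta_{\mathrm{pa}}/\Vol(A)\le \rho\,p_{\max}\Var[f^*(\bX)\mid\bX\in A^k]$, with no $\lambda$ yet. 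Next, SID, competitiveness and the smoothness bound stated with the \emph{parent's} side length (Lemma~\ref{lem:smooth-split-signal-upper}) give $\Var[f^*(\bX)\mid\bX\in A^k]\le 2\Delta_{\max}(A^k,j)/(\lambda\P(A^k))\lesssim \lambda^{-1}L^2\len_j(A^k)^{2\alpha_j(x_j)}$. Take the root, and only then apply balance once, as $\len_j(A)=\len_j(A^{k+1})\gtrsim\lambda\len_j(A^k)$. This is the paper's argument, and it yields exactly $\lambda^{1+1/(2\alpha_j(x_j))}\ge\lambda^{1+1/(2\alpha_{\min})}$.

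\textbf{Step~1 bound is also incorrect as written.} The display $|m_B-m_{B'}|\lesssim L\len_j(B)^{\alpha_j(x_j)}+\cdots$ does not hold. Both child means are weighted averages of $m(t)=\E[f^*(\bX)\mid X_j=t,\ \bX_{-j}\in A^k_{-j}]$ over subintervals of $A^k_j$. The contrast is therefore controlled by the oscillation of $m$ over all of $A^k_j$, which is of order $\len_j(A^k)^{\alpha_j(x_j)}$, not $\len_j(B)^{\alpha_j(x_j)}$. The density contribution must also be shown to be of the same order $L\len_j(A^k)^{\alpha_j(x_j)}$. Do this by writing $m=N/D$ with $D\ge p_{\min}\Vol(A^k_{-j})$ and $N,D$ both H\"older in $t$. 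An additive $O(\|f^*\|_\infty)$ density term would destroy the bound.
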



The proof of the proposition uses three lemmas that disentangle the respective roles played by the assumptions.
Lemma~\ref{lem:smooth-split-signal-upper} controls the signal of an ancestor split by the H\"older scale of the parent side length.
Lemma~\ref{lem:reliable-split-balance-from-lne} then prevents a competitive split from discarding too much mass in the split coordinate, while Lemma~\ref{lem:pathwise-variance-comparison-from-lne} lets us compare residual variation along the path.
Together, these three ingredients convert ancestor split signal into a lower bound on the terminal side length.
When Proposition~\ref{prop:lne-implies-bias-visibility} is used to verify Assumption~\ref{assum:variance-oscillation}, the compatibility condition on $\chi$ in Theorem~\ref{thm:main-thm-point-id-cart} can be met by choosing $\zeta$ large enough relative to the local norm-equivalence, SID, and path-regularity constants.
The proofs of the lemmas as well as the proposition are given in Appendix~\ref{app:sufficient-condition-proofs}.

\begin{lemma}[Smooth split-signal bound]
\label{lem:smooth-split-signal-upper}
Assume the regression model in Section~\ref{sec:regression-model}, including bounded density (Assumption~\ref{assum:bounded-density}). 
Suppose both $f^*$ and $p_X$ are locally anisotropic H\"older as in Definition~\ref{def::localholder}. Then, for every $A\in\mathcal A_S$, $\bx\in A$, $j\in S$, and every admissible $b\in A_j$,
\begin{equation}
\label{eq:directional-split-smoothness}
\frac{\Delta(A,j,b)}{\P(A)}
\lesssim_{p_{\min},p_{\max},\|f^*\|_\infty}
L^2\len_j(A)^{2\alpha_j(x_j)}.
\end{equation}
\end{lemma}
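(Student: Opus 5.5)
We need to bound $\Delta(A,j,b)/\P(A)$ by $L^2\len_j(A)^{2\alpha_j(x_j)}$. The plan is to write the impurity decrease in its between-child form and compare the two child means through a conditional-mean function of $x_j$ alone. Writing $A_L,A_R$ for the children and $m_A=\E[f^*(\bX)\mid\bX\in A]$, the population impurity decrease satisfies
\[
\Delta(A,j,b)=\frac{\P(A_L)\P(A_R)}{\P(A)}\bigl(\mu_L-\mu_R\bigr)^2,
\]
with $\mu_L,\mu_R$ the conditional means of $f^*$ on the children. This is the population analogue of the identity given for $\widehat\Delta$, and it follows from the law of total variance. Since $\P(A_L)\P(A_R)/\P(A)^2\le 1/4$, it suffices to show
\[
|\mu_L-\mu_R|\lesssim_{p_{\min},p_{\max},\|f^*\|_\infty} L\,\len_j(A)^{\alpha_j(x_j)}.
\]

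\textbf{Reduction to a one-dimensional profile.} Define $g(t)=\E[f^*(\bX)\mid \bX\in A, X_j=t]$ for $t\in A_j$. Then $\mu_L$ and $\mu_R$ are averages of $g$ against the conditional marginal density of $X_j$ on $A_j\cap[0,b]$ and $A_j\cap(b,1]$. Hence $|\mu_L-\mu_R|\le \sup_{t,t'\in A_j}|g(t)-g(t')|$, and it is enough to bound the oscillation of $g$ on $A_j$. Write $g(t)=\int f^*(\by_{-j},t)\,q_t(\by_{-j})\,d\by_{-j}$, where $q_t$ is the conditional density of $\bX_{-j}$ on $A_{-j}$ given $X_j=t$, namely $q_t=p_X(\cdot,t)/\int_{A_{-j}} p_X(\cdot,t)$. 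The difference $g(t)-g(t')$ then splits into two terms.

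\textbf{The $f^*$ term.} The first term is $\int [f^*(\by_{-j},t)-f^*(\by_{-j},t')]\,q_t$. By Definition~\ref{def::localholder}, anchored at a point of the form $(\by_{-j},x_j)$, it is at most $L|t-t'|^{\alpha_j(\cdot)}$. The catch is that the exponent in the definition is evaluated at the first argument's $j$th coordinate. I would therefore route the comparison through $x_j$, using $|f^*(\by_{-j},t)-f^*(\by_{-j},t')|\le |f^*(\by_{-j},t)-f^*(\by_{-j},x_j)|+|f^*(\by_{-j},x_j)-f^*(\by_{-j},t')|$ together with the symmetric form of the Hölder bound anchored at $x_j$. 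Each piece is at most $L\len_j(A)^{\alpha_j(x_j)}$.

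If the definition is only one-sided, meaning the exponent is tied to the first argument, the triangle inequality through $x_j$ still works: apply the inequality with $\bx=(\by_{-j},x_j)$ as the first argument in both pieces.

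\textbf{The density term.} The second term is $\int f^*(\by_{-j},t')\,[q_t-q_{t'}]$, which is at most $\|f^*\|_\infty\int|q_t-q_{t'}|$. Using $p_{\min}\le p_X\le p_{\max}$ and the Hölder property of $p_X$, again routed through $x_j$, gives $|p_X(\by_{-j},t)-p_X(\by_{-j},t')|\le 2L\len_j(A)^{\alpha_j(x_j)}$. The normalizers then differ by the same relative amount, up to a factor $p_{\min}^{-1}$. This yields
\[
\int|q_t-q_{t'}|\lesssim_{p_{\min}} L\len_j(A)^{\alpha_j(x_j)}.
\]

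Combining the two terms gives the oscillation bound on $g$, and squaring gives the lemma. I expect the main obstacle to be this density-normalization step, specifically tracking the ratio of normalizers cleanly. The $\|f^*\|_\infty$ and $p_{\min}$ dependence appears there.

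Sparsity is not actually needed for this argument: it works for every $j$, and for $j\in S$ it is exactly what is claimed.
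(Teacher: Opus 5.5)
Your proposal is correct and follows essentially the paper's route: reduce to the conditional-mean profile $m(t)=N(t)/D(t)$ in coordinate $j$, bound its oscillation on $A_j$ using the H\"older property of both $f^*$ and $p_X$, note that the child means are weighted averages of this profile, and finish with the impurity identity. The only real difference is where the H\"older comparisons are anchored. You anchor them at $x_j$ directly, whereas the paper anchors at a near-maximizer $t_\epsilon$ of $\alpha_j$ over $A_j$, obtains the exponent $\sup_{A_j}\alpha_j$, and then weakens it to $\alpha_j(x_j)$; your version is slightly more direct.
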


\begin{lemma}[Competitive splits are balanced]
\label{lem:reliable-split-balance-from-lne}
Assume the regression model in Section~\ref{sec:regression-model}, including bounded density (Assumption~\ref{assum:bounded-density}).
Suppose sufficient impurity decrease (Assumption~\ref{assum:sid}) and local norm equivalence (Assumption~\ref{assum:local-norm-equivalence}) hold on $\mathcal A_S$.
Then any competitive split of $A\in\mathcal A_S$ with $\Delta_{\max}(A)>0$ satisfies
\[
\min\{\P(A_L), \P(A_R)\}
\ge
\frac{\lambda}{2\rho}\P(A).
\]
\end{lemma}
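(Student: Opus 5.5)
We bound the impurity decrease of a competitive split from above in terms of the mass of its smaller child, and from below through SID. Writing $\mu=\E[f^*(\bX)\mid\bX\in A]$, $\mu_L,\mu_R$ for the child means, and $p_L=\P(A_L)$, $p_R=\P(A_R)$, the population impurity decrease has the between-group form
\[
\Delta(A,j,b)=\frac{p_Lp_R}{\P(A)}(\mu_L-\mu_R)^2 .
\]
This is the population analogue of the equivalent empirical formula, and it follows from the law of total variance applied to $V(A)=\P(A)\Var[f^*(\bX)\mid\bX\in A]$. When $\Delta_{\max}(A)>0$, both children of a competitive split have positive mass, so this formula applies.

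The key observation is that child means lie within the range of $f^*$ on $A$. Hence
\[
|\mu_L-\mu_R|\le \Osc(A),
\]
since both $\mu_L$ and $\mu_R$ lie in $[\inf_A f^*,\sup_A f^*]$. Then local norm equivalence (Assumption~\ref{assum:local-norm-equivalence}) gives
\[
(\mu_L-\mu_R)^2\le \rho\,\Var[f^*(\bX)\mid\bX\in A]=\rho\,V(A)/\P(A).
\]
Using $p_R\le \P(A)$, and assuming without loss of generality that $p_L$ is the smaller mass, we get
\[
\Delta(A,j,b)\le p_L\,\rho\,V(A)/\P(A).
\]

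For the lower bound, the split is competitive, so SID (Assumption~\ref{assum:sid}) gives
\[
\Delta(A,j,b)\ge \tfrac12\Delta_{\max}(A)\ge \tfrac{\lambda}{2}V(A).
\]
Combining the two bounds yields
\[
\tfrac{\lambda}{2}V(A)\le \rho\,p_L\,V(A)/\P(A).
\]
Since $\Delta_{\max}(A)>0$ forces $V(A)>0$, we can divide by $V(A)$ to conclude $p_L\ge \lambda\P(A)/(2\rho)$, which is the claimed bound.

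The main subtlety is bookkeeping. The oscillation is a supremum over $A$, whereas the child means are averages under $P$, so the bound $|\mu_L-\mu_R|\le\Osc(A)$ needs a sentence of justification. The step also uses the essential supremum convention implicitly: since $P$ has a density, averages under $P$ lie within the essential range, which is contained in $[\inf_A f^*,\sup_A f^*]$. Finally, the degenerate case $V(A)=0$ is excluded by $\Delta_{\max}(A)>0$, because $\Delta_{\max}(A)\le V(A)$.
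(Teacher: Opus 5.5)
Your proof is correct and follows essentially the same route as the paper: you lower-bound $\Delta(A,j,b)$ by $\tfrac{\lambda}{2}V(A)$ via competitiveness and SID, upper-bound it by $\rho\,\P(A_L)\P(A_R)\Var[f^*(\bX)\mid\bX\in A]/\P(A)$ via $|\mu_L-\mu_R|\le\Osc(A)$ and local norm equivalence, and then cancel the variance. The only differences are cosmetic. The paper uses $x(1-x)\le x\wedge(1-x)$ with $x=\P(A_L)/\P(A)$ where you use $\P(A_R)\le\P(A)$ directly. Your justification of $V(A)>0$ through $\Delta_{\max}(A)\le V(A)$ is the cleaner one.
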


\begin{lemma}[Pathwise variance comparison]
\label{lem:pathwise-variance-comparison-from-lne}
Assume the regression model in Section~\ref{sec:regression-model}, including bounded density (Assumption~\ref{assum:bounded-density}).
Suppose local norm equivalence (Assumption~\ref{assum:local-norm-equivalence}) holds on $\mathcal A_S$.
Then relevant cells $A'\subseteq A$ satisfy
\[
\Var[f^*(\bX)\mid\bX\in A']
\le
\rho\,\Var[f^*(\bX)\mid\bX\in A].
\]
\end{lemma}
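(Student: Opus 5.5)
The proof is a direct chain of two inequalities. Fix relevant cells $A'\subseteq A$ with $A'\in\mathcal A_S$ and $\P(A')>0$; if $\P(A')=0$ the conditional variance is taken to be zero and there is nothing to prove. The conditional variance of a function on a set is bounded by the squared half-oscillation on that set:
\[
\Var[f^*(\bX)\mid\bX\in A'] \le \tfrac14\Osc(A')^2 \le \Osc(A')^2 .
\]
To see this, let $m$ be the midpoint of the range of $f^*$ on $A'$. Then $|f^*-m|\le \Osc(A')/2$ on $A'$, and the variance is at most the mean squared deviation from any constant. The trivial bound with constant $1$ already suffices here.

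Next, monotonicity of the oscillation under inclusion gives $\Osc(A')\le\Osc(A)$, since the supremum in the definition of $\Osc(A')$ ranges over a subset of the pairs used for $\Osc(A)$. Finally, apply local norm equivalence (Assumption~\ref{assum:local-norm-equivalence}) to the larger relevant cell $A\in\mathcal A_S$, which gives $\Osc(A)^2\le\rho\Var[f^*(\bX)\mid\bX\in A]$. Chaining the three inequalities yields the claim. In fact it yields the slightly stronger constant $\rho/4$, which is not needed.

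The only step needing care is measure-theoretic. The oscillation is defined as a pointwise supremum while the variance is taken under $P$. Because $f^*$ is bounded, and locally H\"older hence continuous, the pointwise bound $|f^*(\by)-m|\le\Osc(A')/2$ holds everywhere on $A'$, so it holds $P$-a.s.\ on $\{\bX\in A'\}$. No essential-supremum subtleties therefore arise. Bounded density (Assumption~\ref{assum:bounded-density}) ensures that $\P(A)\ge\P(A')>0$ for nondegenerate cells, so both conditional variances are well defined. I expect no real obstacle beyond this bookkeeping.
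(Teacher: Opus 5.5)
Your argument is correct and is exactly the paper's proof: the chain $\Var[f^*(\bX)\mid\bX\in A']\le\Osc(A')^2\le\Osc(A)^2\le\rho\,\Var[f^*(\bX)\mid\bX\in A]$, using monotonicity of the oscillation under inclusion and local norm equivalence on the larger relevant cell $A$. One small aside: continuity of $f^*$ is not among this lemma's hypotheses, and it is not needed, because the bound $|f^*-m|\le\Osc(A')/2$ holds at every point of $A'$ directly from the definition of $\Osc$.
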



\subsection{Locally reverse Poincar\'e as a building block for SID and LNE}


To construct multivariate regression functions satisfying both SID and LNE, we use the locally reverse Poincar\'e (LRP) condition on univariate functions.

\begin{definition}[Locally reverse Poincar\'e] \label{def:LRP}
Given $R>0$, an absolutely continuous function
$g\colon[0,1]\to\mathbb R$ satisfies the locally reverse
Poincar\'e condition with constant $R$ if
\begin{equation}
\label{eq:assum-TV}
\left(\int_a^b |g'(t)|\,\mathrm{d}t\right)^2
\le
\frac{R}{b-a}
\inf_{w\in\mathbb R}
\int_a^b |g(t)-w|^2\,\mathrm{d}t
\qquad
\text{for every }0\leq a<b\leq 1.
\end{equation}
Here and below, $g'$ denotes the almost-everywhere derivative of the
absolutely continuous function $g$.
\end{definition}

For any interval $[a,b]$, absolute continuity implies that the left-hand side is the square of the total variation of $g$ on $[a,b]$.
The total variation of a continuous function is at least as large as its oscillation, and hence controls its conditional variance on $[a,b]$.
What LRP asserts is that the reverse inequality also holds, up to a constant factor $R$.

LRP was introduced by \citet{Mazumder2024} as a sufficient condition for verifying SID in one dimension.
They showed that LRP is a broad class of functions, containing all
strictly increasing functions whose derivatives are bounded above and bounded away from zero,
smooth strongly convex functions,
and polynomials of bounded degree \citep[Examples~3.1--3.3]{Mazumder2024}.
In addition, they showed that LRP functions can be used as building blocks for constructing examples of SID functions in higher dimensions: specifically, additive models with LRP components satisfy SID.

We complement this by proving LNE for additive models with LRP components, and then extend both SID and LNE to sums of products of LRP components.




\begin{proposition}[LRP-based multivariate models]\label{prop:sid-suffi}
    Assume bounded density (Assumption~\ref{assum:bounded-density}), and let $R>0$.
    Then the following statements hold.
    \begin{enumerate}
        \item[\textup{(i)}] \textup{(Additive models.)}
        Suppose $f(\bx)=f_1(x_1)+\cdots+f_s(x_s)$, where
        each $f_k$ satisfies the locally reverse Poincar\'e condition
        with constant $R$.
        Then $(f,P)$ satisfies sufficient impurity decrease
        (Assumption~\ref{assum:sid}) and local norm equivalence
        (Assumption~\ref{assum:local-norm-equivalence}), with constants $\lambda$ and $\rho$ depending only on
        $s,p_{\min},p_{\max},R$.

        \item[\textup{(ii)}] \textup{(Sums of products.)}
        Let $s,M\geq 1$ and let
        $\mathcal{J}_1,\ldots,\mathcal{J}_M$ be (not
        necessarily disjoint) nonempty subsets of $[s]$ with
        $\bigcup_{j=1}^{M}\mathcal{J}_j=[s]$. Suppose that
        $X_1,\ldots,X_s$ are mutually independent and
        \[
        f(\bx)=\sum_{j=1}^{M}\prod_{k\in\mathcal{J}_j}h_k(x_k),
        \]
        where each $h_k\colon[0,1]\to\mathbb{R}$ satisfies the locally
        reverse Poincar\'e condition with constant $R$ and takes values in
        $[h_{\min},h_{\max}]$, with
        $0<h_{\min}\le h_{\max}<\infty$.
        Then $(f,P)$ satisfies sufficient impurity decrease (Assumption~\ref{assum:sid}) and local norm equivalence (Assumption~\ref{assum:local-norm-equivalence}), with constants $\lambda$ and $\rho$ depending only on $s,M,p_{\min},p_{\max},R,h_{\min},h_{\max}$.
    \end{enumerate}
\end{proposition}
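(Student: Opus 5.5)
Both parts follow one plan. For a relevant cell $A=\prod_k A_k$ I sandwich three quantities between the same ``total variation budget''
\[
T(A):=\sum_k \Bigl(\textstyle\int_{A_k}|f_k'|\Bigr)^2 .
\]
I will show (a) $\Osc(A)^2\lesssim T(A)$; (b) $\Var[f\mid A]\gtrsim T(A)$; and (c) $\Delta_{\max}(A)/\P(A)\gtrsim T(A)$. Here (a)+(b) gives LNE, and (c)+$\Var\le\Osc^2\lesssim T$ gives SID. In the products case, $\int_{A_k}|f_k'|$ is replaced by a suitable weighted version.

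\textbf{Additive case.} For (a), $\Osc(A)\le\sum_k \Osc_{A_k}(f_k)\le\sum_k\int_{A_k}|f_k'|$. Cauchy--Schwarz then gives $\Osc(A)^2\le s\,T(A)$.

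For (b) and (c) I first reduce to coordinatewise quantities. Under bounded density, conditional distributions on $A$ are comparable, up to $p_{\max}/p_{\min}$ factors, to the uniform measure on $A$. Under the uniform (product) measure the components are independent, so $\Var_{\mathrm{unif}}[f\mid A]=\sum_k\Var_{\mathrm{unif}}[f_k\mid A_k]$. Variances under comparable densities are comparable via $\Var=\inf_w\E(f-w)^2$, so this transfers to the true $\Var[f\mid A]$ with constants in $p_{\min},p_{\max}$. LRP gives $\Var_{\mathrm{unif}}[f_k\mid A_k]\ge R^{-1}(\int_{A_k}|f_k'|)^2$, which proves (b).

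For (c), choose $k^*$ maximizing $\Var_{\mathrm{unif}}[f_k\mid A_k]$; then this term is at least $\Var/s$. The natural split along $k^*$ is at the median of $f_{k^*}$, or the best of a few thresholds. I would mimic Mazumder et al.'s one-dimensional SID argument from LRP: LRP implies that some threshold splits $A_{k^*}$ into two pieces whose means differ by $\gtrsim$ the TV. The population impurity decrease equals $\frac{\P(A_L)\P(A_R)}{\P(A)}(\mu_L-\mu_R)^2$. Because the split is along $k^*$, the other components contribute to $\mu_L-\mu_R$ only through density-induced dependence, and I must control this. Bounding it by $p_{\max}/p_{\min}$-distortion is not small in general, so the plan is to use instead $\Delta(A,j,b)\ge \P(A)\,c\,\Var[\E(f\mid \mathbbm{1}\{x_j\le b\},A)]$ and compare with a projection. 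Concretely, since $\Delta_{\max}(A)$ dominates the variance explained by any single split, I bound it below by the explained variance of the best split of $f$ along coordinate $k^*$ under the true law. I then argue that this is at least $c$ times the uniform-law analogue minus cross terms, and the cross terms vanish because the $f_k$, $k\neq k^*$, enter only through conditional means bounded by their own oscillations, which are comparable to their TVs. \textbf{This cross-term control under non-product density is the main obstacle.} If it fails I would fall back to the first-split/LRP mean-gap argument combined with a Cauchy--Schwarz bound on the remaining components.

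\textbf{Products case.} Independence of $X_1,\ldots,X_s$ makes the law on $A$ a product measure, which removes the obstacle above. On $A$, write $f=\sum_j\prod_{k\in\mathcal J_j}h_k$. For a coordinate $k$, the function $x_k\mapsto\E[f\mid x_k,A]=a_k+b_k h_k(x_k)$ with $b_k=\sum_{j\ni k}\prod_{l\in\mathcal J_j\setminus k}\E[h_l\mid A_l]\in[h_{\min}^{s},Mh_{\max}^{s}]$, so $b_k$ is bounded above and away from zero.

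Set $T(A)=\sum_k(\int_{A_k}|h_k'|)^2$. For (a), the product rule with $h\in[h_{\min},h_{\max}]$ gives $\Osc(A)\lesssim\sum_k\int_{A_k}|h_k'|$. For (b), $\Var[f\mid A]\ge\max_k\Var[\E(f\mid X_k)]=\max_k b_k^2\Var[h_k]\gtrsim \max_k R^{-1}(\int|h_k'|)^2\ge T/s$, by LRP and comparability of the density of $X_k$ to uniform.

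For (c), a split along $k$ produces children whose mean difference is $b_k(\E[h_k\mid L]-\E[h_k\mid R])$, exactly, by independence. The one-dimensional LRP$\Rightarrow$SID bound of Mazumder et al.\ for $h_k$ then gives $\Delta_{\max}(A,k)\gtrsim\P(A)\Var[h_k\mid A_k]\gtrsim\P(A)(\int|h_k'|)^2$. Taking the maximum over $k$ and using (a) completes SID, with all constants depending only on $s,M,p_{\min},p_{\max},R,h_{\min},h_{\max}$.
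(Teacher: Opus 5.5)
\textbf{Verdict: genuine gap in part (i).} Your LNE argument in (i) and your treatment of (ii) are sound. In (ii), the exact reduction $\mu_L-\mu_R=b_k(\E[h_k\mid L]-\E[h_k\mid R])$ under the product law, combined with the one-dimensional LRP$\Rightarrow$SID bound, is a valid variant of the paper's covariance argument. One slip there: the range of $b_k$ should read $[\min\{1,h_{\min}\}^{s-1},\,M\max\{1,h_{\max}\}^{s-1}]$.

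The gap is SID in part (i), where the coordinates need not be independent. You leave the cross terms unresolved, and your reason they should ``vanish'' is wrong. The difference $\E[f_k(X_k)\mid \bX\in A, X_{k^*}\le t]-\E[f_k(X_k)\mid \bX\in A, X_{k^*}>t]$ is bounded by $\Osc(f_k)$ but is not small, since $p_{\max}/p_{\min}$ is a fixed constant, not one. It can cancel the $f_{k^*}$ term.

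In fact, committing to $k^*$ can fail outright. Take $A=[0,1]^2$ and $f(\bx)=x_1+1.2\sqrt{x_2}$, so $\Var_{\mathrm{unif}}[x_1]=1/12>0.08=\Var_{\mathrm{unif}}[1.2\sqrt{x_2}]$ and $k^*=1$. Let $X_1$ be uniform. Let the conditional density of $X_2$ given $X_1=x_1$ be a mixture, with weight affine in $x_1$, of two fixed densities on $[0,1]$. Both are bounded above and away from zero, and they put most of their mass near $0$ and near $1$ respectively. For a large enough (but fixed) density ratio, the weight can be tuned so that $\E[\sqrt{X_2}\mid X_1=x_1]=(c-x_1)/1.2$. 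Then $\E[f(\bX)\mid X_1]\equiv c$, so every split of $A$ along $x_1$ has zero population impurity decrease, while $\Var_P[f]>0$. So no argument that insists on splitting along $k^*$, including your fallback, can work.

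\textbf{The missing idea.} Choose the coordinate by its covariance with $f$ under the true conditional law, not by marginal variance. Write $\mid A$ for conditioning on $\bX\in A$.
\begin{itemize}
\item For any law, $\sum_k\Cov(f(\bX),f_k(X_k)\mid A)=\Var(f(\bX)\mid A)$.
\item By your step (b), $\sum_k\Var(f_k(X_k)\mid X_k\in A_k)\lesssim\Var(f(\bX)\mid A)$.
\item Cauchy--Schwarz then gives some $k$ with $\Cov(f,f_k\mid A)^2\gtrsim\Var(f\mid A)\,\Var(f_k\mid X_k\in A_k)$.
\end{itemize}
This covariance is turned into a split signal along $k$ without ever separating $f_k$ from the other components.
\begin{itemize}
\item With $q(t)=\P\{X_k\le t\mid A\}$, one has $\sqrt{q(1-q)}\,\Delta(A,k,t)^{1/2}=\P(A)^{1/2}\,|\Cov(f,\1\{X_k\le t\}\mid A)|$.
\item Integrating against $|f_k'(t)|\,dt$ (integration by parts, as in Mazumder and Wang's Lemma~B.1) gives $\int\sqrt{q(1-q)}\,\Delta(A,k,t)^{1/2}\,|f_k'(t)|\,dt\ge \P(A)^{1/2}\,|\Cov(f,f_k\mid A)|$.
\item LRP bounds the normalizing weight: $\int\sqrt{q(1-q)}\,|f_k'|\le\frac12\int_{A_k}|f_k'|\lesssim \Var(f_k\mid X_k\in A_k)^{1/2}$.
\item A weighted-average argument then yields a threshold $t$ with $\Delta(A,k,t)\gtrsim \P(A)\,\Var(f\mid A)$.
\end{itemize}
This is the paper's cellwise-witness lemma, and it is what handles the non-product density.
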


The next proposition shows that SID and LNE are preserved when functions satisfying
them are added across independent coordinate blocks, which further expands the class of examples satisfying our assumptions.

\begin{proposition}[Closure under independent blocks]\label{prop:additive}
    Assume $f\colon[0,1]^s\to\mathbb{R}$ admits, for some $M\ge1$, the decomposition
    $f(\bx) = g_1(\bx_{\mathcal{J}_1}) + \cdots
              + g_M(\bx_{\mathcal{J}_M})$,
    where $\mathcal{J}_1,\ldots,\mathcal{J}_M$ form a disjoint
    partition of $[s]$, and let $P_{\mathcal{J}_j}$ denote
    the marginal distribution of $\bX_{\mathcal{J}_j}$.
    Suppose
    the subvectors
    $\bX_{\mathcal{J}_1},\ldots,\bX_{\mathcal{J}_M}$ are
    mutually independent.
    Suppose each pair $(g_j,P_{\mathcal{J}_j})$ satisfies sufficient impurity decrease (Assumption~\ref{assum:sid}) and local norm equivalence (Assumption~\ref{assum:local-norm-equivalence}) on its coordinate block.
    Then $(f,P)$ satisfies sufficient impurity decrease and local norm equivalence, with constants depending only on $s$ and on the corresponding constants of the individual pairs.
\end{proposition}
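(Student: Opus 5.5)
We need to show that SID and LNE transfer from the blocks to the sum. Throughout, fix a relevant cell $A=\prod_{j}A_{\mathcal J_j}$, which is a product across blocks because cells are rectangles. Independence of the blocks then gives two facts. First, conditional on $\bX\in A$, the subvectors $\bX_{\mathcal J_1},\ldots,\bX_{\mathcal J_M}$ remain independent, with $\bX_{\mathcal J_j}$ distributed as $P_{\mathcal J_j}$ conditioned on $A_{\mathcal J_j}$. Second, $\P(A)=\prod_j P_{\mathcal J_j}(A_{\mathcal J_j})$.

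Write $v_j:=\Var[g_j(\bX_{\mathcal J_j})\mid \bX_{\mathcal J_j}\in A_{\mathcal J_j}]$. The basic identity is
\[
\Var[f(\bX)\mid\bX\in A]=\sum_{j=1}^M v_j .
\]

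\textbf{LNE.} The oscillation is additive across blocks: since $A$ is a product and each $g_j$ depends only on its own block,
\[
\Osc_f(A)=\sum_j \Osc_{g_j}(A_{\mathcal J_j}).
\]
Cauchy--Schwarz and blockwise LNE then give
\[
\Osc_f(A)^2\le M\sum_j \Osc_{g_j}(A_{\mathcal J_j})^2\le M\max_j\rho_j\sum_j v_j .
\]
So LNE holds for $f$ with $\rho=M\max_j\rho_j\le s\max_j\rho_j$. Here we use $M\le s$, since the blocks are nonempty and disjoint.

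\textbf{SID.} I will first show that a split of $A$ along a coordinate $k\in\mathcal J_j$ has population impurity decrease for $f$ equal to
\[
\Bigl(\prod_{i\ne j}P_{\mathcal J_i}(A_{\mathcal J_i})\Bigr)\,\Delta^{(j)}(A_{\mathcal J_j},k,b),
\]
where $\Delta^{(j)}$ is the impurity decrease computed for $(g_j,P_{\mathcal J_j})$. The argument is as follows. The children $A_L,A_R$ differ from $A$ only in block $j$, so the conditional variance of every block $i\ne j$ is the same in $A$, $A_L$ and $A_R$. Its contribution to $V(A)-V(A_L)-V(A_R)$ is therefore
\[
v_i\bigl(\P(A)-\P(A_L)-\P(A_R)\bigr)=0.
\]
What remains is exactly block $j$'s impurity decrease, rescaled by the mass factor from the other blocks. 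This factorization is the step needing the most care: we need the children to remain product sets, and the mass factorization to hold for every child, both of which follow from independence.

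Given the factorization, apply SID for block $j$ to the cell $A_{\mathcal J_j}$ to get
\[
\Delta_{\max}(A)\ge \Bigl(\prod_{i\ne j}P_{\mathcal J_i}(A_{\mathcal J_i})\Bigr)\lambda_j\,P_{\mathcal J_j}(A_{\mathcal J_j})\,v_j=\lambda_j\,\P(A)\,v_j .
\]
Taking the maximum over $j$ and using $\max_j v_j\ge M^{-1}\sum_j v_j$ gives
\[
\Delta_{\max}(A)\ge \frac{\min_j\lambda_j}{M}\,\P(A)\Var[f(\bX)\mid\bX\in A].
\]
So SID holds for $f$ with $\lambda=\min_j\lambda_j/s$. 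Both constants depend only on $s$ and the blockwise constants.

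A minor edge case is a block $j$ with $P_{\mathcal J_j}(A_{\mathcal J_j})=0$. Then $\P(A)=0$, and both sides of each inequality vanish.
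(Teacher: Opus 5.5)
Your proof is correct and follows essentially the same route as the paper: lift the blockwise SID split to the product cell, use independence to factor the masses and add the conditional variances, keep the best block (losing a factor $M\le s$), and for LNE combine the blockwise oscillation bounds with Cauchy--Schwarz and variance additivity. The differences are cosmetic. You obtain the impurity factorization from the form $V(A)-V(A_L)-V(A_R)$ rather than from the child-mean-difference formula, and you use the exact additivity $\Osc_f(A)=\sum_j\Osc_{g_j}(A_{\mathcal J_j})$ instead of centering at $\mu_0$, which avoids the paper's extra factor of $4$.
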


The proofs of Propositions~\ref{prop:sid-suffi} and \ref{prop:additive} are given in Appendix~\ref{app:sufficient-condition-proofs}.




\begin{remark}[Global versus piecewise LRP]
    \citet[Proposition~3.2]{Mazumder2024} showed that additive
    models with piecewise-LRP components satisfy SID, even when
    the components are discontinuous at the joining points.
    Piecewise LRP does not, however, imply local norm
    equivalence.  For example,
    $g(x)=\1\{x>1/2\}$ is LRP on each of the two pieces and its
    split at $1/2$ removes all residual variance.  But on
    $I_\varepsilon=[1/2-\varepsilon,1/2+\varepsilon^2]$ under
    the uniform law,
    $\operatorname{Osc}_{I_\varepsilon}(g)^2=1$ while
    $\Var[g(X)\mid X\in I_\varepsilon]=\varepsilon/(1+\varepsilon)^2$.
    Thus, in the additive setting, piecewise LRP is enough for
    SID but not for the sup-norm-to-$L^2$ control supplied by
    LNE.
\end{remark}


\section{Pointwise lower bound for CART-MLS}
\label{sec::motivation-lower-bound}

The positive results in the previous sections show that CART-MID can achieve pointwise adaptation.
As argued at the start of Section~\ref{sec:one-dimensional-upper}, this is partly because the optimal stopping value for the split signal is independent of the local smoothness, so a single global threshold can simultaneously select the correct local bandwidth at all points.
On the other hand, the optimal stopping value for the leaf size depends on the local smoothness, and a single global leaf size cannot simultaneously achieve the optimal bias-variance tradeoff at two points with different smoothness.
We formalize this intuition in this section by proving a pointwise lower bound for CART-MLS for a one-dimensional example.

\begin{theorem}[Lower bound for CART with minimum leaf size]
\label{thm::lowerright}
Assume the regression model in Section~\ref{sec:regression-model} with
$d=1$ and bounded density (Assumption~\ref{assum:bounded-density}),
$f^*(x)=x^{1/2}$, $\xi_1, \ldots, \xi_n \stackrel{\mathrm{iid}}{\sim}
\mathcal N(0, \sigma^2)$ independently of $X_1,\ldots,X_n$, and $\sigma^2>0$.
Let $\widehat f_N$ denote either the ordinary or honest CART-MLS estimator.
There exist constants $C,c,p_0>0$, depending only on
$p_{\min}$ and $p_{\max}$, and an integer
$n_0=n_0(\sigma,p_{\min},p_{\max})\in\mathbb{N}$ such that, for all
$n\ge n_0$ and every fixed $N\in[n]$, the following statements hold.

If $N\ge C\log n$, then, for either estimator,
\begin{equation}\label{eq:mls-lower-main-regime}
\P\left\{
\begin{aligned}
|\widehat f_N(0)-f^*(0)|
&\ge c\left(\sqrt{\frac{N}{n}}+\frac{\sigma}{\sqrt N}\right),\\
|\widehat f_N(1)-f^*(1)|
&\ge c\left(\frac{N}{n}+\frac{\sigma}{\sqrt N}\right)
\end{aligned}
\right\}
\ge p_0.
\end{equation}
If $N<C\log n$, then the same joint probability is at least $p_0$ when
both lower bounds in \eqref{eq:mls-lower-main-regime} are replaced by
\begin{equation}\label{eq:mls-lower-small-regime}
\frac{c\sigma}{\sqrt{\log n}}
\quad\text{for honest CART-MLS},
\qquad
\frac{c\sigma}{\sqrt N\,\log n\,\log\log n}
\quad\text{for ordinary CART-MLS}.
\end{equation}
\end{theorem}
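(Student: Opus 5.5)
The plan is to treat the two query points separately and then combine them through a common high-probability structural event. The key structural claim is that, under MLS$(N)$, the leaf containing $0$ and the leaf containing $1$ each contain between $N$ and a constant multiple of $N$ observations. The endpoint leaves can be located explicitly.

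\textbf{Step 1 (endpoint leaves hold about $N$ points).}
Because $f^*(x)=x^{1/2}$ is strictly increasing, the impurity-decrease criterion is driven by signal on large cells. On any interval $[0,a]$ containing many points, the best split is interior. However, the leaf adjacent to an endpoint must contain at least $N$ points, and the split that created it was made on a parent. I would argue that splitting continues until no admissible split exists, so the leaf $A(0)$ has fewer than $2N$ points: otherwise a split at the $N$th order statistic would be admissible, and MLS never stops for any other reason. The same holds for $A(1)$. So $N\le N(A(0)),N(A(1))<2N$ deterministically. The only subtlety is that CART stops only when no admissible split exists, which is exactly the MLS rule, so no probabilistic argument is needed for this step.

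\textbf{Step 2 (bias).}
With $N\le N(A(0))<2N$ and bounded density, standard order-statistic concentration gives that with high probability the leaf containing $0$ is $[0,h_0]$ with $h_0\asymp N/n$, for $N\ge C\log n$. The mean of $f^*$ over $[0,h_0]$ is then $\asymp h_0^{1/2}\asymp\sqrt{N/n}$, since $f^*(0)=0$. At $1$, $f^*$ is Lipschitz with derivative $1/2$, so the bias is $\asymp h_1\asymp N/n$, with the sign fixed (the mean is below $1$).

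\textbf{Step 3 (noise).}
For honest CART the noise average in each leaf is, conditionally on the partition, Gaussian with variance $\sigma^2/N(A)\asymp\sigma^2/N$, and the two leaves are independent of each other for $n$ large since they are disjoint. With constant probability each noise term is at least $c\sigma/\sqrt N$ in magnitude with the same sign as the bias, so that the two errors add. For ordinary CART the noise is not independent of the partition. Here I would condition on the design points and the noise outside the endpoint leaves, and show that the boundary of $A(0)$ is essentially fixed at the $N$th to $2N$th order statistic. Then an anti-concentration argument shows the noise sum still exceeds $c\sigma/\sqrt N$ with positive probability. Combining with Step 2, and using that bias and noise are of the same sign with probability at least $1/4$, yields \eqref{eq:mls-lower-main-regime} with $p_0$ a constant.

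\textbf{Step 4 (small $N$, the main obstacle).}
When $N<C\log n$, leaf lengths no longer concentrate, but the noise floor remains. For honest CART, I would use the conditional Gaussianity, together with the fact that leaf sizes are below $2N<2C\log n$, to get a noise term of at least $c\sigma/\sqrt{\log n}$ with constant probability. The ordinary case is hardest: the data-dependent split may select cells in which the noise averages happen to cancel. I would try a conditioning argument on the last split defining $A(0)$, bounding the number of candidate thresholds by $\log n$. This loses at most logarithmic factors in the anti-concentration bound, giving the weaker rate $\sigma/(\sqrt N\log n\log\log n)$. I expect this dependence issue to be the main technical difficulty.
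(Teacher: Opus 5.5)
The overall structure of your proposal matches the paper's. That structure consists of the deterministic sandwich $N\le m_z<2N$ from the MLS rule, order-statistic control of the endpoint lengths and biases when $N\ge C\log n$, Gaussian noise averages whose signs reinforce the biases, and a separate, lossy treatment of small $N$.

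\textbf{The ordinary-CART noise step fails as written.} Conditioning ``on the noise outside the endpoint leaves'' is circular, because which observations lie in $A_0$ depends on the noise. Anti-concentration of a partial sum at a fixed index also says nothing about $S_L(m_0)$ at the data-selected index $m_0\in[N,2N)$: the tree is free to stop where the running sum happens to be small. You need an event that is uniform over the whole window. Let $S_L(k)$ be the sum of the noises of the $k$ leftmost observations; conditional on the design these noises are i.i.d.\ $\mathcal N(0,\sigma^2)$. Consider the two events $S_L(N)\ge 2a\sigma\sqrt N$ and $\max_{0\le j\le N}|S_L(N+j)-S_L(N)|\le a\sigma\sqrt N$. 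They are independent, and each has probability bounded below uniformly in $N$ by Brownian scaling. Together they give $S_L(k)\ge a\sigma\sqrt N$ for every $k\in[N,2N)$, hence $\bar\xi_{A_0}=S_L(m_0)/m_0\ge a\sigma/(2\sqrt N)$ whatever $m_0$ the tree selects. The mirror event for the right walk gives $\bar\xi_{A_1}\le-a\sigma/(2\sqrt N)$. The two walks use disjoint noises when $4N\le n$, so the joint probability is a positive constant.

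Because $f^*\ge0=f^*(0)$ and $f^*\le1=f^*(1)$, this same event already gives errors of order $\sigma/\sqrt N$ at both endpoints for ordinary CART for every $N\le n/4$. So the dependence issue you single out in Step 4 is resolved by the same device. The paper instead handles small $N$ by pointwise anti-concentration at every $k\le M\asymp\log n$ plus a union bound, which is the source of its $\log n\log\log n$ loss.

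\textbf{Two cases are missing.} First, your claim that the endpoint leaves are disjoint ``for $n$ large'' is false whenever $2N>n$. Then the root is terminal and $A_0=A_1=[0,1]$, so a single noise average cannot reinforce biases of opposite signs at $0$ and $1$. For ordinary CART the two noise windows already overlap once $N>n/4$. The paper treats $N>n/4$ separately. There the endpoint biases are of constant order and $\sigma/\sqrt N\lesssim\sigma/\sqrt n$. The noise averages are $O(\sigma/\sqrt n)$, uniformly over admissible leaf sizes via Doob's maximal inequality in the ordinary case. Hence the bias alone gives the bound once $n\ge n_0(\sigma)$; this is why $n_0$ depends on $\sigma$, which your plan never uses.

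Second, for honest CART with small $N$, the relevant counts are the averaging-sample counts $N^{(2)}(A_z)$, not the tree-building counts bounded by $2N$. When $N$ is bounded, $N^{(2)}(A_0)=0$ has constant probability, and then the empty-leaf convention gives $\widehat f_N(0)=0=f^*(0)$, i.e.\ zero error at $0$. You need an occupancy argument. Let $F$ be the design CDF. Using $F(X^{(1)}_{(N)})$, $F(X^{(1)}_{(2N)})$ and their mirror images, show that with constant probability both first-sample endpoint leaves have mass between $cN/n$ and $C\log n/n$. Then each receives between $1$ and $C\log n$ second-sample points with probability bounded below, after which reinforcing Gaussian signs give $c\sigma/\sqrt{\log n}$.

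\textbf{A minor point in Step 2.} The bias entering $\widehat f_N$ is the empirical average of $f^*$ over the leaf's sample points, not the population mean over $[0,h_0]$. Lower-bound it by showing that at least half of the leaf's points lie at distance $\gtrsim N/n$ from the endpoint.
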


The proof of Theorem~\ref{thm::lowerright} is given in
Appendix~\ref{app:cart-mls-lower-bound}.
To interpret the lower bound, consider $N_n=\lfloor n^\gamma\rfloor$ for a fixed
$\gamma\in(0,1]$. The two lower bounds in \eqref{eq:mls-lower-main-regime} are polynomial in $n$ with exponents
\[
    \beta_0(\gamma)=\min\{\gamma/2,(1-\gamma)/2\},
    \qquad
    \beta_1(\gamma)=\min\{\gamma/2,1-\gamma\}.
\]
These exponents are plotted in
Figure~\ref{fig:mls-bandwidth-mismatch}.
We see that $\beta_0$, corresponding to the rough endpoint $x=0$, is optimized at $\gamma=1/2$, while $\beta_1$, corresponding to the smoother endpoint $x=1$, is optimized at $\gamma=2/3$. Thus a single $\gamma$ cannot optimize both exponents simultaneously, and a single global minimum leaf size cannot simultaneously achieve optimal pointwise rates at both endpoints. This illustrates the bandwidth-selection failure of CART-MLS.

\begin{figure}[H]
\centering
\begin{tikzpicture}[x=7.2cm,y=7.2cm]
    \draw[->] (0,0) -- (1.06,0) node[right] {$\gamma$};
    \draw[->] (0,0) -- (0,0.39);

    \draw[-, cblue, very thick]
        (0,0) -- (0.5,0.25) -- (1,0);
    \draw[-, cgreen!70!black, very thick, dashed]
        (0,0) -- (0.666,0.333) -- (1,0);

    \draw[-, densely dotted] (0.5,0) -- (0.5,0.25);
    \draw[-, densely dotted] (0.666,0) -- (0.666,0.333);
    \draw[-, densely dotted] (0,0.25) -- (0.5,0.25);
    \draw[-, densely dotted] (0,0.333) -- (0.666,0.333);

    \fill[cblue] (0.5,0.25) circle (1.4pt);
    \fill[cgreen!70!black] (0.666,0.333) circle (1.4pt);

    \node[below] at (0,0) {$0$};
    \node[below] at (0.5,0) {$1/2$};
    \node[below] at (0.666,0) {$2/3$};
    \node[below] at (1,0) {$1$};
    \node[left] at (0,0.25) {$1/4$};
    \node[left] at (0,0.333) {$1/3$};

    \node[cblue, anchor=west] at (0.73,0.15) {$\beta_0(\gamma)$};
    \node[cgreen!70!black, anchor=west] at (0.73,0.25) {$\beta_1(\gamma)$};
\end{tikzpicture}
\caption{The exponent functions $\beta_0(\gamma)$ and $\beta_1(\gamma)$.}
\label{fig:mls-bandwidth-mismatch}
\end{figure}

\begin{remark}[Isolating the bandwidth-selection obstruction]
    The proof uses only the endpoint geometry and terminal cell sizes forced by the MLS rule, together with the bias and noise of averaging over those cells. Thus the obstruction would remain even if all splits along the relevant paths were reliable: the failure comes from using one global leaf-size parameter to choose local bandwidths.
\end{remark}

\begin{remark}[Role of honesty]
    Honesty changes only the stochastic part of the proof. Conditional on the two designs, an honest leaf average has a fixed Gaussian noise term. For ordinary CART, the selected leaf depends on the same responses that are averaged, so the proof instead uses a noise-persistence event that holds uniformly over every admissible endpoint leaf size.
\end{remark}

\begin{remark}[Performance gap]
    Proposition~\ref{prop:sqrt-lrp} verifies that $f^*(x)=x^{1/2}$ satisfies LRP.
    Proposition~\ref{prop:sid-suffi}(i) then implies SID under bounded density.
    Moreover, $f^*$ satisfies Definition~\ref{def::localholder} with $L=2$ and
    \[
        \alpha(x)=
        \begin{cases}
            1/2, & 0\le x\le 1/4,\\
            1, & 1/4<x\le1.
        \end{cases}
    \]
    Indeed, $|\sqrt{x}-\sqrt{x'}|\le |x-x'|^{1/2}$ in the first case,
    while $|\sqrt{x}-\sqrt{x'}|\le 2|x-x'|$ when $x>1/4$.
    Contrasting Theorems~\ref{thm::lowerright} and~\ref{thm:cart-mid-one-dimensional} thus establishes a rigorous performance gap between CART-MID and CART-MLS.
\end{remark}

\section{Noise-driven splits and end-cut preference}
\label{sec:split-reliability-lower-bound}

Having seen the importance of stopping rules for bandwidth selection, we now turn to their role in preventing noise-driven splits, that is, splits whose empirical impurity decrease is dominated by noise rather than signal.
The question we address in this section is whether preventing noise-driven splits is necessary for optimal pointwise rates, or good prediction more generally.
In a multidimensional setting, noise-driven splits can select irrelevant features, thereby leading to poor generalization in sparse high-dimensional settings \citep{tan2024statistical}.
In one dimension, the issue is more subtle as it revolves around the location of the split threshold.

It has long been known that the CART splitting rule tends to favor split thresholds close to the boundaries of cells~\citep{morgan1973thaid}.
This phenomenon was called ``end-cut preference'' by \citet{breiman1984classification}, who provided a theoretical explanation.
To describe the idea, consider the noise-only problem (i.e. $f^* \equiv 0$) on the
interval $[0,1]$ containing $n$ observations.
Order the observations by their covariate values and write $\hat\iota$ for the CART split index, so that the two children
contain $\hat\iota$ and $n-\hat\iota$ observations. 
The classical result
of \citet[Chapter~11.8]{breiman1984classification} says that, for each fixed
$\epsilon\in(0,1/2)$, the probability that $\hat\iota$ lies in the endpoint neighborhood
$\{\hat\iota\le \epsilon n\}\cup\{\hat\iota\ge (1-\epsilon)n\}$ tends to one as $n \to\infty$.

End-cut preference, however, need not necessarily preclude optimal pointwise rates.
Our proof of Theorem~\ref{thm:cart-mid-one-dimensional} does not require balanced splits, only the much weaker competitive split-scale regularity property.
Indeed, a concrete example of optimal pointwise rates despite unbalanced splits is given in the following example.

\begin{example}[Split-scale regular end cuts for ReLU]
\label{ex:relu}
    Let $X\sim\mathrm{Unif}[0,1]$ and consider the shifted ReLU
    \[
        f^*(x)=(2x-1)_+,
        \qquad x\in[0,1],
    \]
    To isolate the split geometry, consider the population CART recursion along the cells containing the endpoint $0$.
    Such a cell has the form $A'=[0,a]$.
    Appendix~\ref{app:relu-end-cut-calculation} shows that, for $a>1/2$ with $a\downarrow1/2$, the population-optimal split point $b$ satisfies
    \[
        a-b\asymp a-1/2.
    \]
    Thus the right child contains a vanishing fraction of the parent cell, even though its length remains commensurate with the residual signal scale.
    This is a statement about the population split geometry; a finite-sample empirical CART path need not follow the population recursion exactly.
    Nonetheless, this function satisfies the SID condition by a direct one-dimensional calculation, so Theorem~\ref{thm:cart-mid-one-dimensional} guarantees that CART-MID achieves the optimal pointwise rate at $0$.
    The supporting calculations are given in Appendix~\ref{app:relu-end-cut-calculation}.
\end{example}


More recently, \citet[Theorem~SA-1]{cattaneo2025honest} gave a more precise quantification of end-cut preference that implies more severe consequences for prediction.
They showed that, in the same noise-only setting considered by \citet{breiman1984classification}, the CART split index has the following behavior for any $0 < a < b < 1$:
\begin{equation}\label{eq:cattaneo}
  \liminf_{n\to\infty}\P\bigl\{n^a<\hat\iota<n^b\bigr\}
  \;\ge\;
  \frac{b-a}{2e},
  \qquad
  \liminf_{n\to\infty}\P\bigl\{n-n^b<\hat\iota<n-n^a\bigr\}
  \;\ge\;
  \frac{b-a}{2e}.
\end{equation}
In other words, the selected split has nonvanishing
probability of lying at distance $n^{-\epsilon}$ from either endpoint for any $0 < \epsilon < 1$.
\citet{cattaneo2025honest} used this result to obtain a pointwise lower bound for the estimation error of CART at the endpoints of the interval, assuming that the algorithm makes at least one split.


We extend \citet{cattaneo2025honest}'s result to show that an analogous quantitative form of end-cut preference persists even when the regression function is nonconstant, provided that the local residual variation is beneath the noise level (cf. \eqref{eq:noise-floor}).
Together with Remark~\ref{rem:pointwise-lower-bound}, this argues more strongly that preventing noise-driven splits is indeed necessary for optimal pointwise rates.

\begin{theorem}[End-cut preference under weak split signal]
\label{thm:noise-driven-split}
Assume the regression model in Section~\ref{sec:regression-model} with $d=1$,
bounded density (Assumption~\ref{assum:bounded-density}),
$\xi_1,\ldots,\xi_n\stackrel{\mathrm{iid}}{\sim}\mathcal N(0,\sigma^2)$ independently of $X_1,\ldots,X_n$, and
$\sigma^2>0$. Suppose $f^*$ satisfies the locally reverse Poincar\'e condition (Definition~\ref{def:LRP})
with some constant $R>0$.
Let $A_n\subset[0,1]$ be a deterministic sequence of intervals such that
\begin{equation}\label{eq:weak-signal-variation}
    n\log\log (n)V(A_n)\to 0,
    \qquad
    \frac{n\P(A_n)}{\log n}\to \infty .
\end{equation}
Let
$X_{A_n}^{(1)}\le X_{A_n}^{(2)}\le\cdots\le X_{A_n}^{(m_n)}$
denote the order statistics of the sample points falling in $A_n$, where
$m_n=N(A_n)$. Let
\[
    \hat\iota_n\in
    \argmax_{1\le \iota<m_n}
    \widehat\Delta\bigl(A_n,1,X_{A_n}^{(\iota)}\bigr)
\]
be the empirical CART split index on $A_n$. Then, for every fixed
$0<a<b<1$,
\begin{equation}\label{eq:unreliable}
\begin{aligned}
  \liminf_{n\to\infty}\P\bigl\{m_n^a<\hat\iota_n<m_n^b\bigr\}
  &\ge
  \frac{b-a}{2e},\\
  \liminf_{n\to\infty}\P\bigl\{m_n-m_n^b<\hat\iota_n<m_n-m_n^a\bigr\}
  &\ge
  \frac{b-a}{2e}.
\end{aligned}
\end{equation}
\end{theorem}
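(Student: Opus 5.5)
Our plan is to reduce the claim to the noise-only result of \citet[Theorem~SA-1]{cattaneo2025honest} by a perturbation argument. We condition on the design and write $Y_i=f^*(X_i)+\xi_i$ for $X_i\in A_n$. For each candidate split index $\iota$, the empirical impurity decrease has the form
\[
\widehat\Delta\bigl(A_n,1,X^{(\iota)}_{A_n}\bigr)=\frac{1}{n}\,\frac{\iota(m_n-\iota)}{m_n}\bigl(\overline Y_{L,\iota}-\overline Y_{R,\iota}\bigr)^2 ,
\]
and $\overline Y_{L,\iota}-\overline Y_{R,\iota}=(\bar f_{L,\iota}-\bar f_{R,\iota})+(\bar\xi_{L,\iota}-\bar\xi_{R,\iota})$. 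We set $T_\iota:=\sqrt{\iota(m_n-\iota)/m_n}\,(\bar\xi_{L,\iota}-\bar\xi_{R,\iota})/\sigma$ for the standardized noise CUSUM process and $S_\iota:=\sqrt{\iota(m_n-\iota)/m_n}\,(\bar f_{L,\iota}-\bar f_{R,\iota})/\sigma$ for the signal drift. Then $\hat\iota_n$ maximizes $|T_\iota+S_\iota|$. Conditionally on the design, $T$ has exactly the law it has in the noise-only problem with $m_n$ points. Hence the Cattaneo et al.\ result gives the bounds \eqref{eq:unreliable} for $\tilde\iota_n:=\argmax|T_\iota|$, since $m_n\to\infty$ almost surely by the second condition in \eqref{eq:weak-signal-variation}.

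The first step is to bound the drift uniformly, $\max_\iota|S_\iota|=o_P\bigl((\log\log m_n)^{-1/2}\bigr)$. Let $\bar f$ be the empirical mean of $f^*$ on $A_n$. A Cauchy--Schwarz bound gives
\[
\frac{\iota(m_n-\iota)}{m_n}(\bar f_L-\bar f_R)^2\le\sum_{X_i\in A_n}\bigl(f^*(X_i)-\bar f\bigr)^2,
\]
since the left side is the between-group sum of squares. It therefore suffices to show that this empirical residual sum is $O_P(nV(A_n))$. This is the step where LRP enters. LRP together with bounded density yields $\Osc(A_n)^2\lesssim_{R,p_{\min},p_{\max}}\Var[f^*(X)\mid X\in A_n]$; this is the one-dimensional local norm equivalence proved in Proposition~\ref{prop:sid-suffi}(i) with $s=1$. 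Consequently the summands are bounded by $\Osc^2\lesssim V(A_n)/\P(A_n)$. The sum then has expectation at most $nV(A_n)$ and a bounded-summand variance, so Markov's (or Bernstein's) inequality gives $O_P(nV(A_n))$. The first condition in \eqref{eq:weak-signal-variation} then gives $\max_\iota S_\iota^2=o_P(1/\log\log n)$, and $\log\log m_n\asymp\log\log n$ because $m_n\asymp n\P(A_n)\ge\log n$ with high probability.

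The main obstacle is transferring the argmax location from $|T|$ to $|T+S|$, because the argmax is not a continuous functional. The approach is to redo the structure of the Cattaneo et al.\ proof rather than to use it as a black box. Their argument shows, via a Darling--Erd\H{o}s type analysis, that $\max_\iota|T_\iota|$ lies at level $\sqrt{2\log\log m}$ and that the maximizing block $\{\log\iota\in[a\log m,b\log m]\}$ is determined by which of roughly $\log m$ asymptotically independent scale blocks attains the maximum. The fluctuations of the blockwise maxima are of order $(\log\log m)^{-1/2}$ on the $|T|$ scale, and the gap between the top two blockwise maxima is also of this order. A perturbation that is $o_P((\log\log m)^{-1/2})$ uniformly therefore changes the identity of the winning block only with vanishing probability. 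Concretely, for any $\eta>0$, the event that the overall maximum over $\{m^a<\iota<m^b\}$ exceeds the maximum over the complement by more than $\eta(\log\log m)^{-1/2}$ has liminf probability at least $(b-a)/(2e)-\epsilon(\eta)$, where $\epsilon(\eta)\to0$ as $\eta\to0$. This follows from the Gumbel-type limit of the rescaled maxima, whose limiting joint law has no atoms at ties. On that event, together with $\max|S|<\eta(\log\log m)^{-1/2}/2$, $\hat\iota_n$ lies in the same window. Letting $\eta\to0$ gives \eqref{eq:unreliable}, and the right-end statement follows by symmetry.

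If the tie-gap control is not directly available from \citet{cattaneo2025honest}, we would fall back on the Gaussian approximation of the normalized CUSUM process by an Ornstein--Uhlenbeck process in log-time. In that representation, the maxima over disjoint log-time windows are asymptotically independent extremes with continuous Gumbel limits, and the anti-concentration we need follows.
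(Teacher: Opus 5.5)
Your proposal is correct and has the same skeleton as the paper's proof. You write $n\widehat\Delta=\sigma^2(T_\iota+S_\iota)^2$ (the paper's $Z=Z_\xi+Z_f$), show the signal bridge is $o((\log\log n)^{-1/2})$ uniformly in $\iota$, and transfer the noise-only argmax location through a version of the \citet{cattaneo2025honest} result that holds with a margin of that order. The two sub-steps, however, are done differently.

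\emph{Drift bound.} The paper bounds $\max_k Z_f(k)^2\le \frac m4\Osc(A_n)^2$ deterministically. It then needs LRP to turn $\Osc(A_n)^2$ into $\Var[f^*(X)\mid X\in A_n]$, and uses $m\le 2n\P(A_n)$ on a cell-count event. Your ANOVA bound (between-group $\le$ total sum of squares $\le\sum_{X_i\in A_n}(f^*(X_i)-\mu_{A_n})^2$, whose mean is exactly $nV(A_n)$) plus Markov's inequality already gives $\max_\iota S_\iota^2=o_P(1/\log\log n)$. Neither LRP nor bounded summands enters. So your route shows that LRP is superfluous for this theorem, and that the mass condition is only needed to force $m_n\to\infty$.

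\emph{Margin step.} The paper does not use a joint limit. It re-runs the threshold argument in the proof of Theorem~SA-1 of \citet{cattaneo2025honest} and observes that enlarging the margin by a deterministic $q_m=o((\log\log m)^{-1/2})$ moves the normalized Darling--Erd\H{o}s parameter by only $o(1)$. That argument yields only the lower bound $(b-a)/(2e)$, not a joint limit law for the two restricted maxima. So your primary justification, ``tie-gap control'' read off from their proof, is not available. Your fallback is what is actually required: asymptotic independence of Ornstein--Uhlenbeck maxima over disjoint log-time windows, plus control of the coarse discrete sampling near the endpoints. This is standard extreme-value theory, and it would in fact give $(b-a)/2$ in place of $(b-a)/(2e)$. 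But it imports considerably more machinery than the paper's margin enlargement of an existing chain of inequalities.

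One small slip: $\log\log m_n\asymp\log\log n$ fails when $m_n$ is only polylogarithmic in $n$. It is not needed, since you only require $(\log\log n)^{-1/2}\le(\log\log m_n)^{-1/2}$, which follows from $m_n\le n$, as in the paper.
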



The proof of Theorem~\ref{thm:noise-driven-split} is given in
Appendix~\ref{app:noise-driven-splits}.

\begin{remark}[From split locations to pointwise lower bounds] \label{rem:pointwise-lower-bound}
    Theorem~\ref{thm:noise-driven-split} suggests that whenever we set the MID threshold to be $o\paren*{(n\log\log n)^{-1}}$, there is a nonvanishing probability that CART will produce a leaf with only $n^b$ samples for any $0 < b<1$.
    This inflates the variance of the leaf average to $n^{-b/2}$, which, for $b$ small enough, can be worse than the minimax rate.
    Turning this argument into a rigorous pointwise lower bound, however, requires bridging the theoretical gap between the deterministic sequence assumption in Theorem~\ref{thm:noise-driven-split} and the random sequence of cells produced by CART.
    We leave this as an open problem for future work.
\end{remark}

\section{Experiments}

We conduct two experiments illustrating complementary forms of
local adaptation by CART-MID.  The first asks whether a single
stopping threshold can adapt across rough and smooth regions of
a one-dimensional signal.  The second examines whether the
terminal cells adapt to different smoothness levels across
coordinates.

\subsection{Adaptation across rough and smooth regions}

\subsubsection*{Experimental setup}

Let $W$ be a fixed realization of standard Brownian motion on
$[0,1/2]$, and set $B(x)=5W(x)$ for $0\leq x\leq 1/2$.
We join this scaled Brownian path continuously to a linear
segment and define the hybrid Brownian--linear signal
\[
f_B^*(x)=
\begin{cases}
    B(x), & 0\leq x\leq 1/2,\\
    B(1/2)+4(x-1/2), & 1/2<x\leq 1.
\end{cases}
\]
This modifies the Brownian-motion example of
\citet{rovckova2024ideal}.  Almost surely, $B$ is
$\alpha$-H\"older continuous for every $\alpha<1/2$, whereas
the linear segment is Lipschitz.  The multiplicative factor only
changes the signal scale and not the local smoothness exponent.
We observe
\[
Y=f_B^*(X)+\xi,
\qquad
X\sim\operatorname{Unif}[0,1],
\qquad
\xi\sim\mathcal N(0,0.5),
\]
where $0.5$ denotes the noise variance.  The scaled Brownian path
is held fixed while the design points and errors are redrawn
across Monte Carlo repetitions.

For the visualization, we use $n=3000$ and set the CART-MID
threshold to $\delta=0.4\log(n)/n$.  We then compare
CART-MID and CART-MLS at $x=0.25$, in the rough region, and
$x=0.75$, in the smooth region.  For each stopping-parameter
value, we use 400 Monte Carlo repetitions with $n=3000$,
varying the minimum impurity-decrease threshold for CART-MID and
the minimum leaf size for CART-MLS.  We estimate the pointwise
mean squared error, its squared-bias and variance components,
and the mean length of the terminal cell containing each
evaluation point.
The tuning grids are
\[
\delta\in10^{-3}\{0.53,0.80,1.07,1.33,1.60,1.87,2.14,2.40\}
\]
for CART-MID and
\[
N\in\{15,45,75,105,135,165,195,225\}
\]
for CART-MLS.

\subsubsection*{Results}

Figure~\ref{fig:Brownian_visualization} shows the fixed
regression function in black, one training sample in gray, and
the CART-MID fit in red.  CART-MID creates smaller cells in the
rough Brownian region and stops earlier in the smooth linear
region.  Thus, even with one global threshold, the fitted tree
uses different local resolutions across the domain.

\begin{figure}[H]
    \centering
    \includegraphics[width=0.62\textwidth]{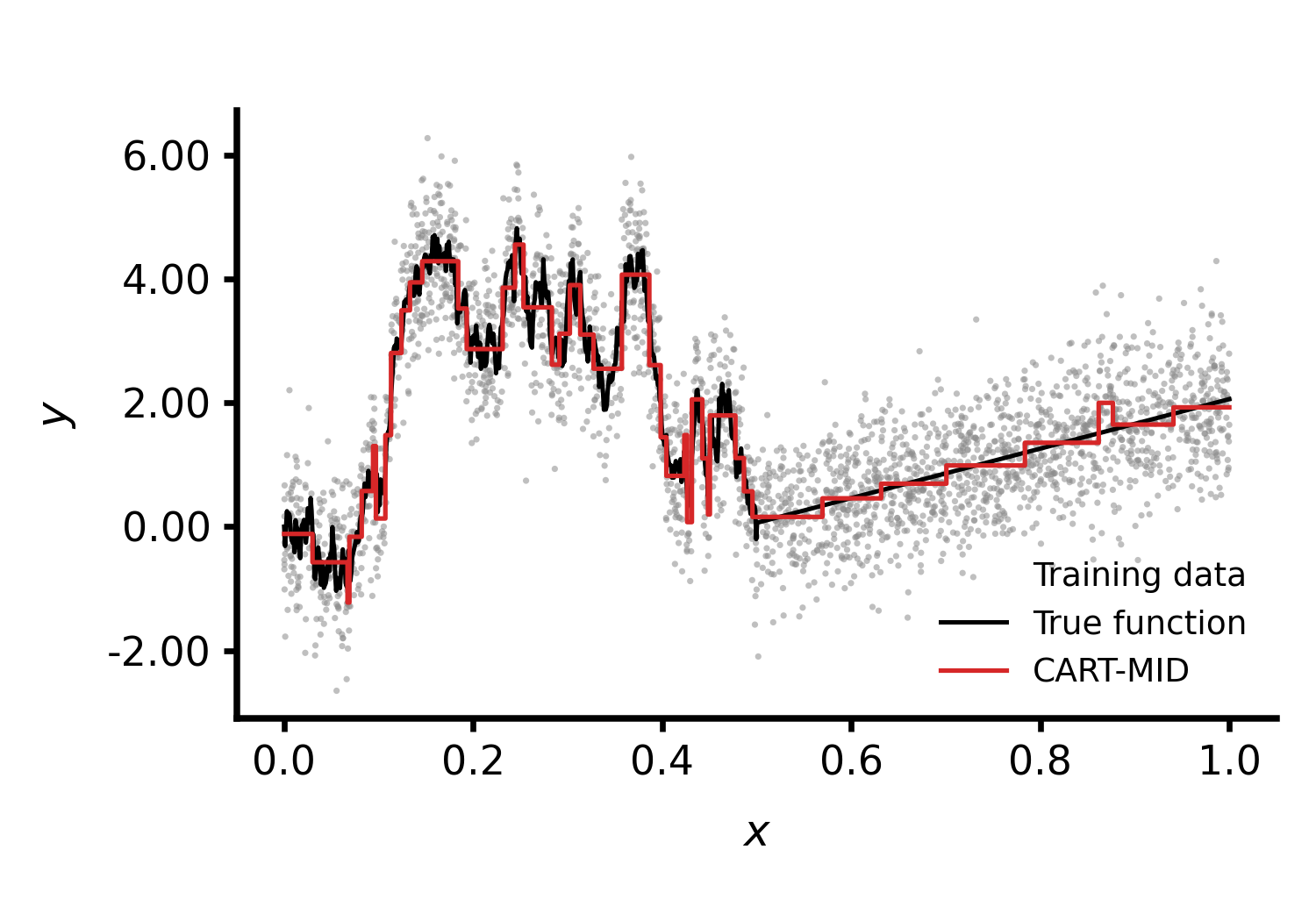}
    \caption{A CART-MID fit to the hybrid Brownian--linear
    signal.  The regression function is shown in black, the
    training sample in gray, and the fitted tree in red.}
    \label{fig:Brownian_visualization}
\end{figure}

Figure~\ref{fig:brownian-mse-decomposition} reports the
pointwise Monte Carlo bias--variance decomposition.  For
CART-MID, the mean squared error is minimized at MID thresholds around
$1.6\times10^{-3}$ at $x=0.25$ and around
$1.07\times10^{-3}$ at $x=0.75$.  Although the minimizers are
not identical, the curves are relatively flat over a common
range: near $1.07\times10^{-3}$, the mean squared error remains
stable at both locations.  A single impurity-decrease threshold
therefore gives a favorable trade-off in both the rough and
smooth regions.

CART-MLS exhibits a sharper tuning conflict.  Its optimal
minimum leaf size is about $45$ at $x=0.25$ and between $105$
and $135$ at $x=0.75$.  Smaller leaves are needed to control
bias in the rough region, whereas larger leaves reduce variance
in the linear region with only a modest increase in bias.  Since
the minimum-leaf-size constraint applies globally, it cannot
select both local scales with one tuning value.

\begin{figure}[H]
    \centering
    \begin{tabular}{cc}
        \textbf{CART-MID} & \textbf{CART-MLS} \\[0.2em]

        \begin{subfigure}[t]{0.47\textwidth}
            \centering
            \includegraphics[width=\linewidth]{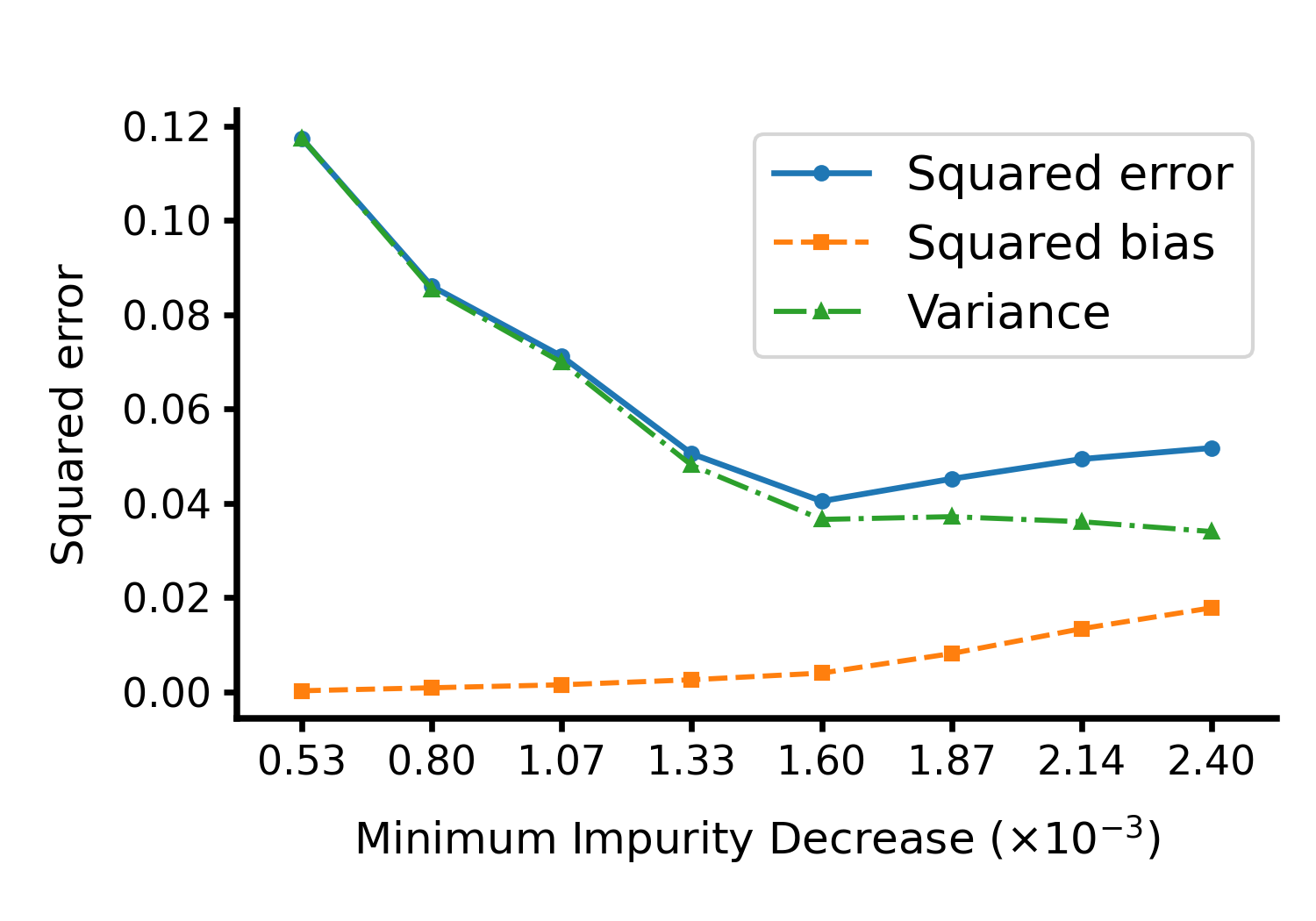}
            \caption{\(x=0.25\)}
            \label{fig:brownian-mid-mse-x025}
        \end{subfigure}
        &
        \begin{subfigure}[t]{0.47\textwidth}
            \centering
            \includegraphics[width=\linewidth]{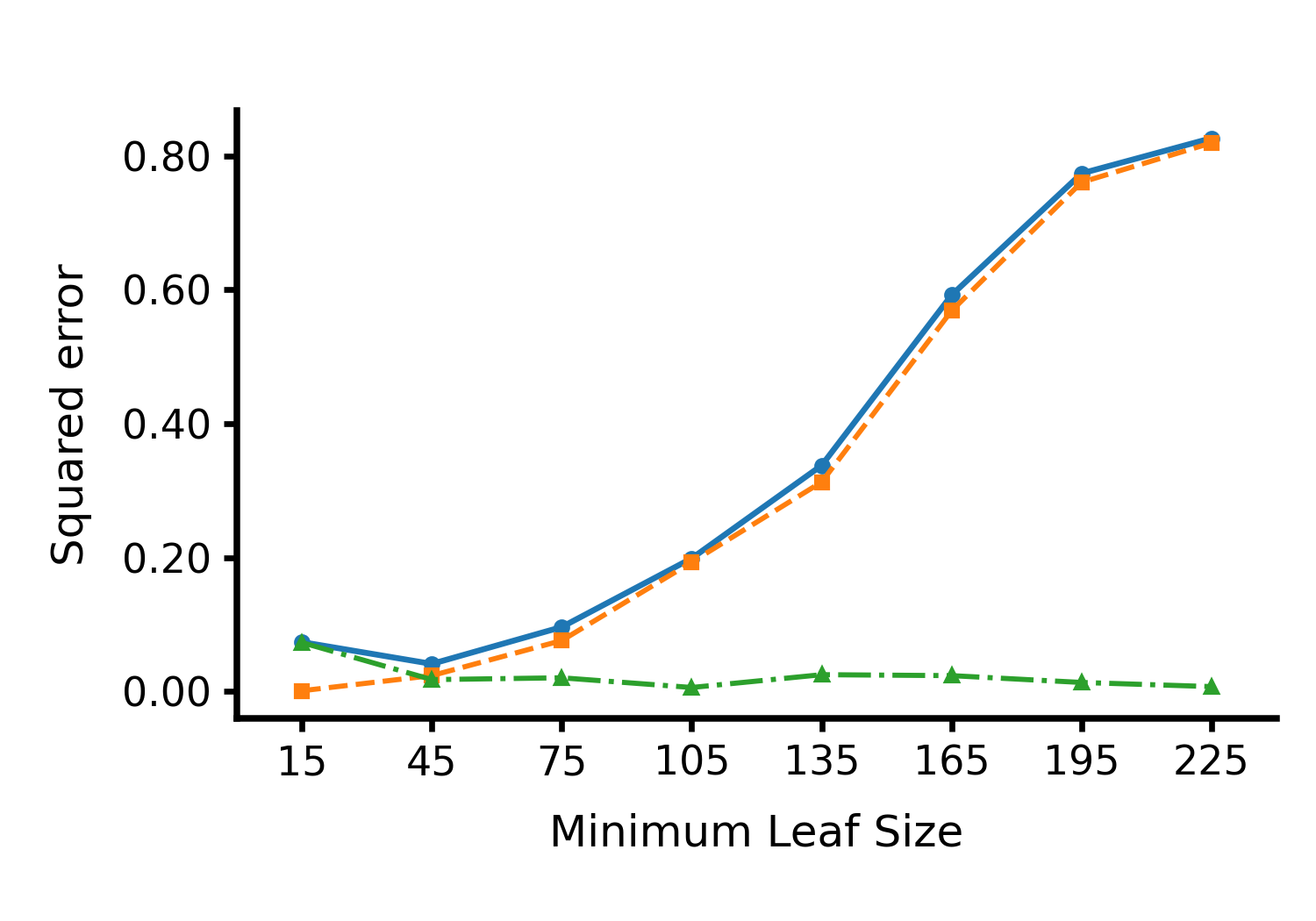}
            \caption{\(x=0.25\)}
            \label{fig:brownian-mls-mse-x025}
        \end{subfigure}
        \\[-0.2em]

        \begin{subfigure}[t]{0.47\textwidth}
            \centering
            \includegraphics[width=\linewidth]{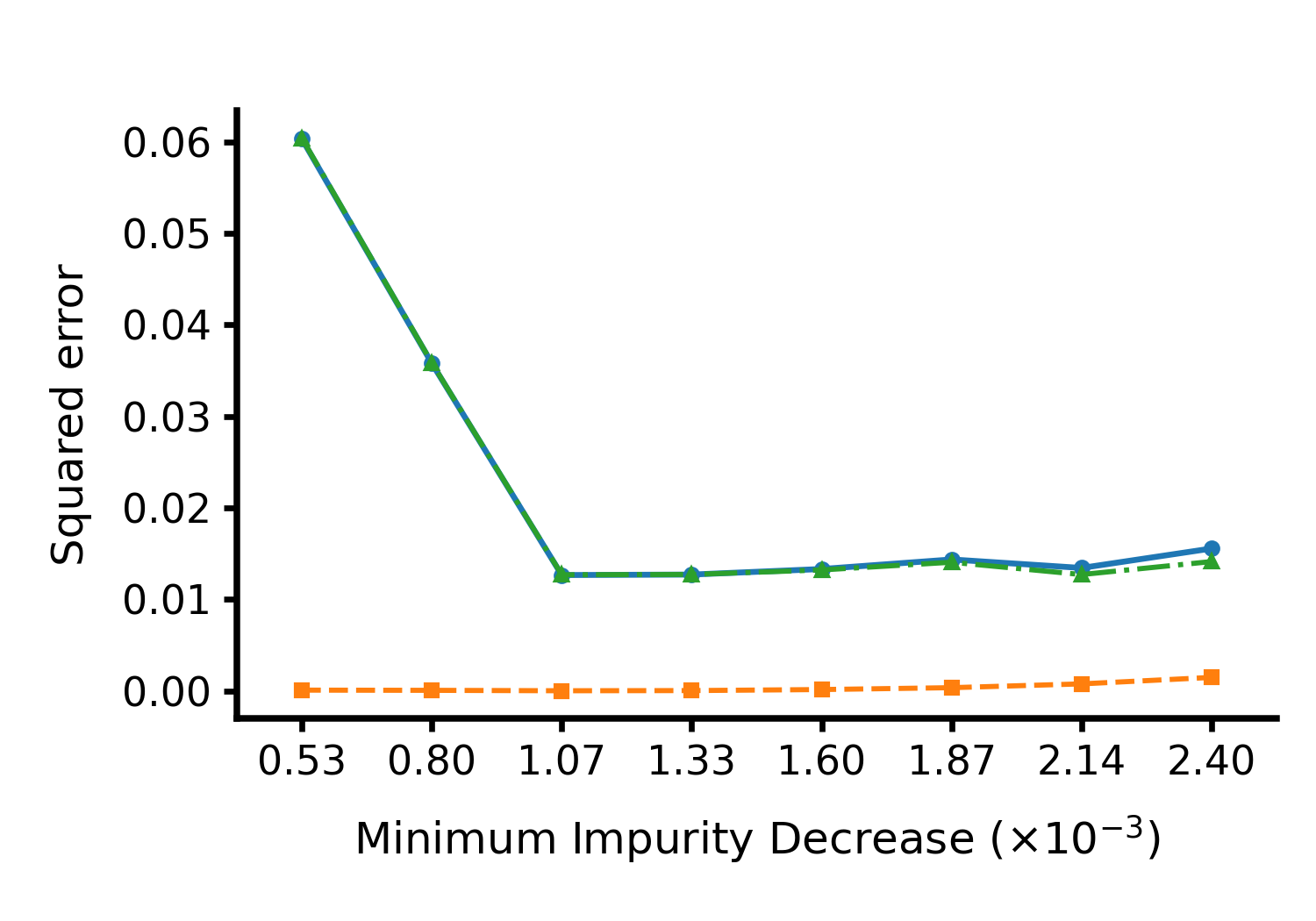}
            \caption{\(x=0.75\)}
            \label{fig:brownian-mid-mse-x075}
        \end{subfigure}
        &
        \begin{subfigure}[t]{0.47\textwidth}
            \centering
            \includegraphics[width=\linewidth]{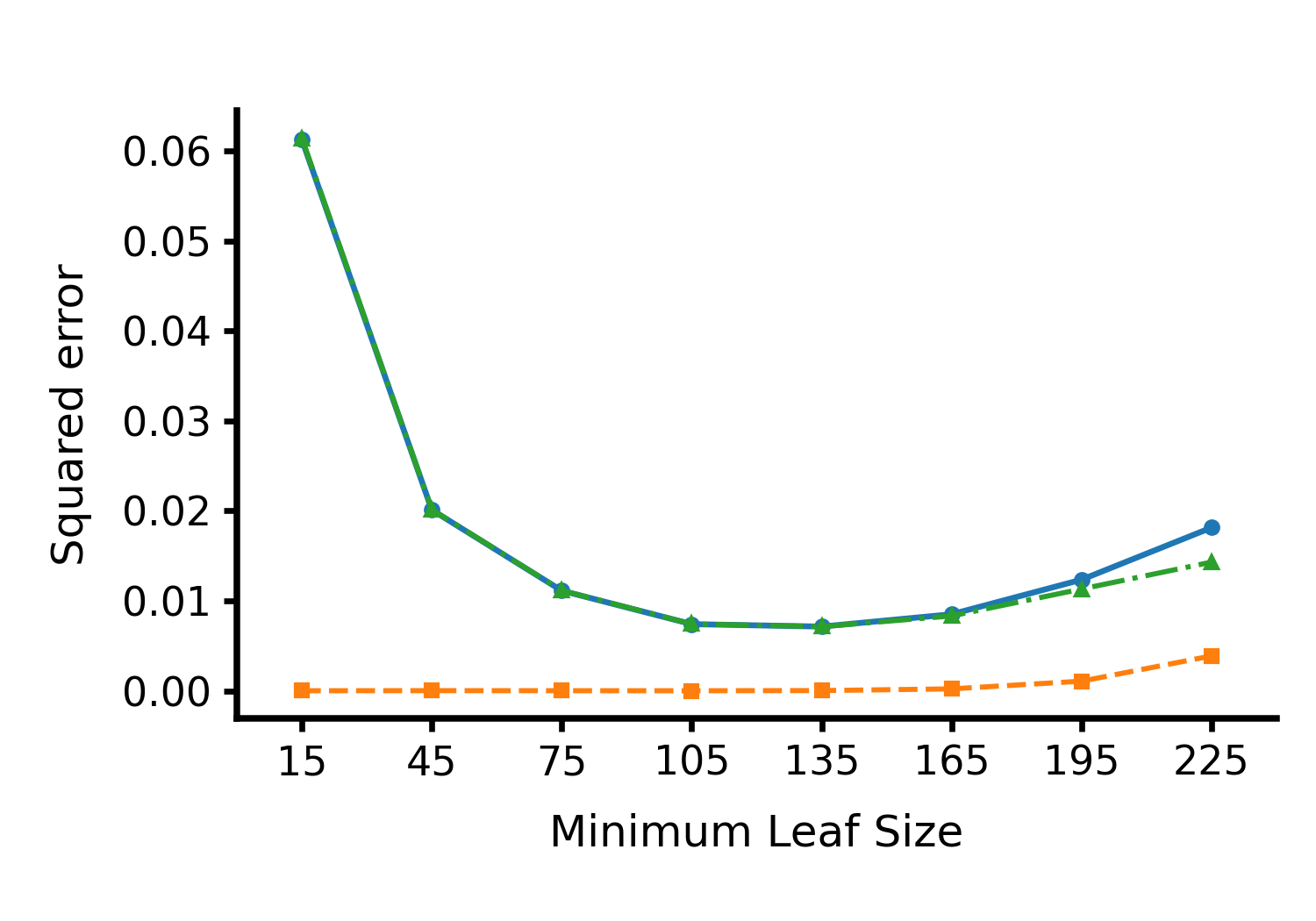}
            \caption{\(x=0.75\)}
            \label{fig:brownian-mls-mse-x075}
        \end{subfigure}
    \end{tabular}

    \caption{Pointwise Monte Carlo bias--variance decomposition
    for the hybrid Brownian--linear signal.  The columns compare
    CART-MID and CART-MLS, and the rows correspond to
    $x=0.25$ and $x=0.75$.  The vertical scale is allowed to
    differ across panels so that all three components remain
    visible.}
    \label{fig:brownian-mse-decomposition}
\end{figure}

Figure~\ref{fig:brownian-cell-size} explains this difference in
terms of the selected local bandwidths.  Under the same
impurity-decrease threshold, CART-MID produces much smaller
terminal cells at $x=0.25$ than at $x=0.75$.  By contrast,
CART-MLS produces nearly equal cell lengths at the two points
for a fixed minimum leaf size.  Under the uniform design, a
common leaf-size constraint naturally enforces comparable cell
lengths throughout the domain, irrespective of local
smoothness.  This difference in terminal-cell geometry explains
the tuning conflict observed in
Figure~\ref{fig:brownian-mse-decomposition}.

\begin{figure}[H]
    \centering
    \begin{tabular}{cc}
        \textbf{CART-MID} & \textbf{CART-MLS} \\[0.3em]

        \begin{subfigure}[t]{0.47\textwidth}
            \centering
            \includegraphics[width=\linewidth]{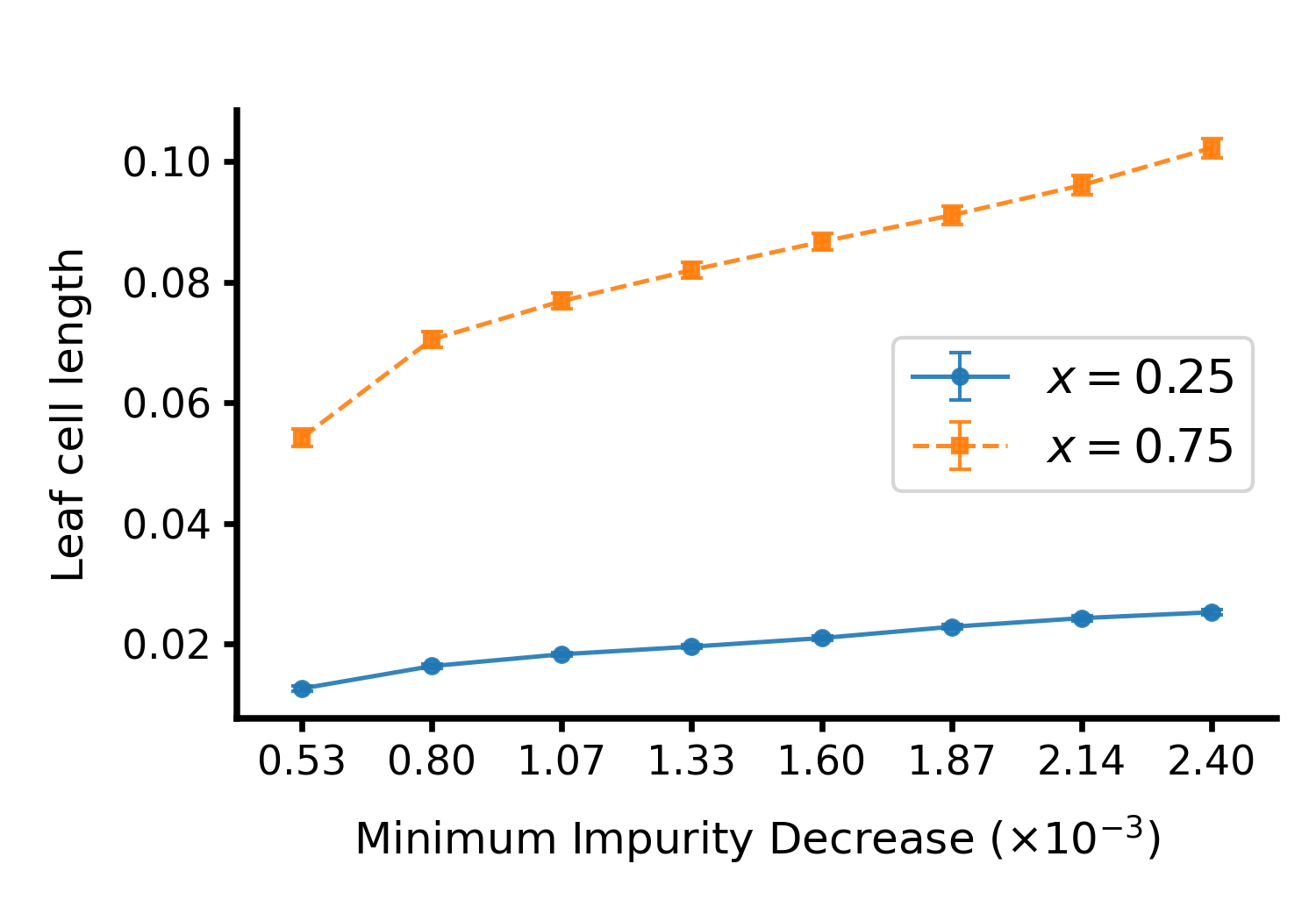}
            \label{fig:brownian-mid-cell-x025}
        \end{subfigure}
        &
        \begin{subfigure}[t]{0.47\textwidth}
            \centering
            \includegraphics[width=\linewidth]{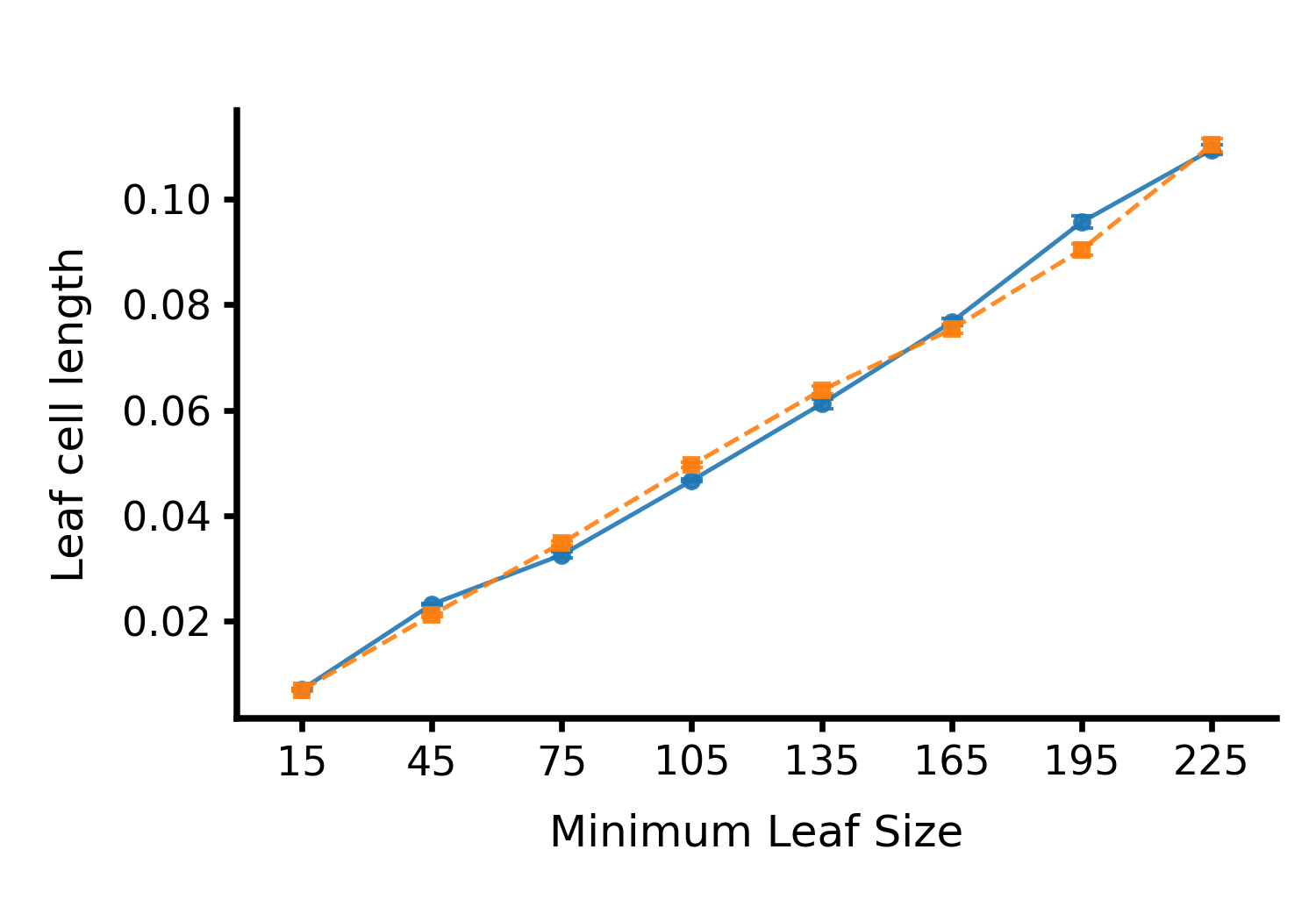}
            \label{fig:brownian-mls-cell-x025}
        \end{subfigure}
    \end{tabular}

    \caption{Mean terminal-cell lengths for the hybrid
    Brownian--linear signal.  Points are Monte Carlo averages
    over 400 repetitions, with error bars indicating
    $\pm1$ standard error.  The panels compare CART-MID and
    CART-MLS; each panel shows both $x=0.25$ and $x=0.75$.}
    \label{fig:brownian-cell-size}
\end{figure}

\FloatBarrier

\subsection{Adaptation across coordinate directions}

\subsubsection*{Experimental setup}

We next consider the two-dimensional additive signal
\[
f^*(\bx)=x_1^{1/2}+x_2^{1/4},
\qquad \bx\in[0,1]^2.
\]
At $\bx=\bzero$, its local H\"older exponents are
$\alpha_1=1/2$ and $\alpha_2=1/4$, so the signal is less
smooth along the second coordinate.  We generate
\[
\bX\sim\operatorname{Unif}([0,1]^2),
\qquad
Y=f^*(\bX)+\xi,
\qquad
\xi\sim\mathcal N(0,0.5),
\]
again taking $0.5$ to be the noise variance.  We use the sample
sizes
\[
n\in\bigl\{\lfloor1000\times1.5^k\rfloor:
k=0,\ldots,11\bigr\}.
\]
For each $n$, we fit CART-MID with
$\delta=0.1\log(n)/n$ over 400 Monte Carlo repetitions.  If
$\ell_j(\bx)$ is the side length, along coordinate $j$, of the
terminal cell containing $\bx$, we regress the Monte Carlo average
of $\log\ell_j(\bzero)$ on $\log n$ separately for $j=1,2$.

\subsubsection*{Results}

Figure~\ref{fig:additive-cell-size-origin} shows that terminal
cells shrink more rapidly along the less smooth coordinate.  The
fitted slopes are $-0.320$ for $j=1$ and $-0.529$ for $j=2$.
Their ordering, and particularly the second-coordinate slope,
are consistent with the theoretical scaling.

\begin{figure}[H]
    \centering
    \includegraphics[width=0.55\textwidth]{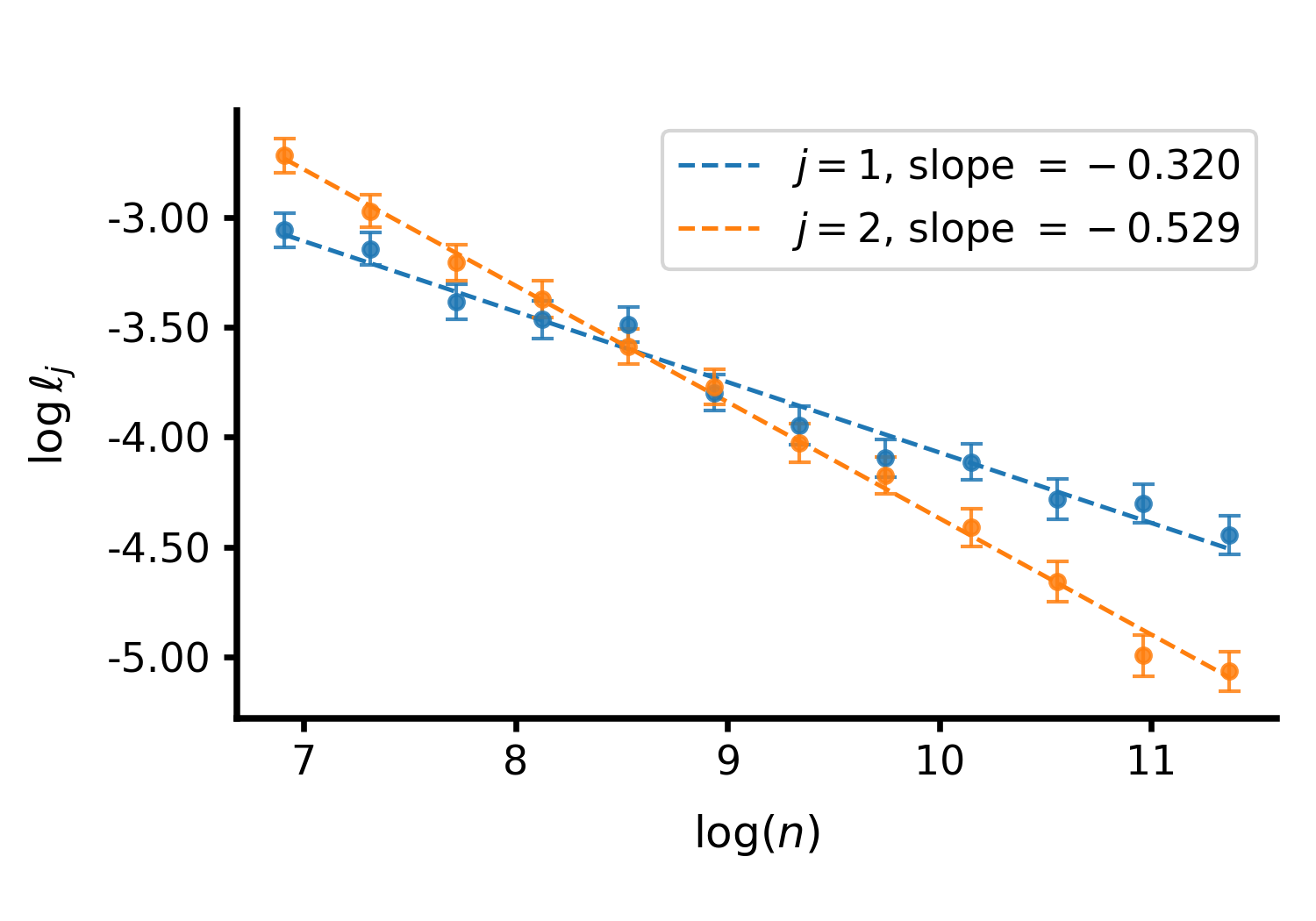}
    \caption{Coordinate-wise terminal-cell side lengths for
    $f^*(\bx)=x_1^{1/2}+x_2^{1/4}$ at $\bx=\bzero$. Points show Monte Carlo averages of $\log \ell_j(\bzero)$ over 400 repetitions,
    error bars show $\pm1$ standard error on the log scale, and
    dashed lines show least-squares fits of $\log \ell_j(\bzero)$ on
    $\log n$.}
    \label{fig:additive-cell-size-origin}
\end{figure}

Ignoring logarithmic factors, the theoretical benchmark follows
by balancing approximation and stochastic error.  If the
terminal side lengths at the origin are $\ell_1$ and $\ell_2$,
then
\[
\ell_1^{1/2}
\asymp
\ell_2^{1/4}
\asymp
(n\ell_1\ell_2)^{-1/2},
\]
which gives
\[
\ell_1\asymp n^{-1/4},
\qquad
\ell_2\asymp n^{-1/2}.
\]
Thus the benchmark slopes are $-1/4$ and $-1/2$.  The observed
finite-sample slopes provide direct evidence that CART-MID
adapts the shape of its terminal cells to anisotropic local
smoothness.

\FloatBarrier

\section{Discussion}
\label{sec:discussion}

In this paper, we have answered affirmatively the question of whether CART can achieve spatial adaptation similar to known results for ERM and Bayesian trees, and we have shown that the MID stopping rule plays a crucial role in enabling this adaptation.

Given the popularity of cost-complexity pruning (CCP) for regularizing CART and randomized ensembling via random forests, it is natural to ask whether our proofs can be extended to these settings.
We recall that our argument is based on establishing two local certificates $V(A)\lesssim\delta$ for each terminal cell and $\Delta_{\mathrm{pa}}\gtrsim\delta$ for its parent split. Standard cost-complexity pruning supplies the first certificate, but not necessarily the second: its optimality condition controls the aggregate gain of a retained branch, so strong descendant splits may subsidize a weak parent split. Random forests present a different obstruction. With feature subsampling, a small best split among the sampled coordinates need not imply small residual variation, because a relevant coordinate may be unavailable. Extending the theory to CCP or random forests therefore requires modifying the present parent--child certificate argument.

Also of interest is whether other sufficient conditions can be found to guarantee spatial adaptation for CART.
In particular, LNE is an extremely strong condition that can probably be weakened.

\subsection*{Acknowledgements}
The authors would like to thank Jason Klusowski and Li Ma for insightful discussions.
This work was supported by NUS Startup Grant A-8000448-00-00 and the Singapore Ministry of Education (MOE) AcRF Tier 1 Grants A-8002498-00-00 and A-8004458-00-00.

\bibliography{ref}

@inproceedings{Mazumder2024,
 author = {Mazumder, Rahul and Wang, Haoyue},
 booktitle = {Advances in Neural Information Processing Systems},
 editor = {A. Oh and T. Naumann and A. Globerson and K. Saenko and M. Hardt and S. Levine},
 pages = {57754--57782},
 publisher = {Curran Associates, Inc.},
 title = {On the Convergence of {CART} under Sufficient Impurity Decrease Condition},
 url = {https://proceedings.neurips.cc/paper_files/paper/2023/file/b418964bafb4fdd9aef9017301323a8a-Paper-Conference.pdf},
 volume = {36},
 year = {2023}
}

@article{rovckova2024ideal,
  title={Ideal {Bayesian} spatial adaptation},
  author={Ro{\v{c}}kov{\'a}, Veronika and Rousseau, Judith},
  journal={Journal of the American Statistical Association},
  volume={119},
  number={547},
  pages={2078--2091},
  year={2024},
  publisher={Taylor \& Francis}
}

@article{donoho1994ideal,
  title={Ideal spatial adaptation by wavelet shrinkage},
  author={Donoho, David L and Johnstone, Iain M},
  journal={{Biometrika}},
  volume={81},
  number={3},
  pages={425--455},
  year={1994},
  publisher={Oxford University Press}
}

@article{donoho1995wavelet,
  title={Wavelet shrinkage: asymptopia?},
  author={Donoho, David L and Johnstone, Iain M and Kerkyacharian, G{\'e}rard and Picard, Dominique},
  journal={Journal of the Royal Statistical Society: Series B (Methodological)},
  volume={57},
  number={2},
  pages={301--337},
  year={1995},
  publisher={Wiley Online Library}
}

@article{chi2022asymptotic,
  title={Asymptotic properties of high-dimensional random forests},
  author={Chi, Chien-Ming and Vossler, Patrick and Fan, Yingying and Lv, Jinchi},
  journal={The Annals of Statistics},
  volume={50},
  number={6},
  pages={3415--3438},
  year={2022},
  publisher={Institute of Mathematical Statistics}
}

@book{breiman1984classification,
  author    = {Leo Breiman and Jerome Friedman and Charles J. Stone and Richard A. Olshen},
  title     = {Classification and Regression Trees},
  year      = {1984},
  publisher = {CRC Press},
  address   = {Belmont, CA}
}

@article{jeong2023art,
  title     = {The Art of BART: Minimax Optimality over Nonhomogeneous Smoothness in High Dimension},
  author    = {Jeong, Seonghyun and Rockova, Veronika},
  journal   = {Journal of Machine Learning Research},
  volume    = {24},
  number    = {337},
  pages     = {1--65},
  year      = {2023},
  url       = {https://www.jmlr.org/papers/volume24/22-0382/22-0382.pdf}
}

@article{hoffman2002random,
  title={Random rates in anisotropic regression (with a discussion and a rejoinder by the authors)},
  author={Hoffman, M and Lepski, Oleg},
  journal={The Annals of Statistics},
  volume={30},
  number={2},
  pages={325--396},
  year={2002},
  publisher={Institute of Mathematical Statistics}
}

@article{lepski2015adaptive,
  title={ADAPTIVE ESTIMATION OVER ANISOTROPIC FUNCTIONAL CLASSES VIA ORACLE APPROACH},
  author={Lepski, Oleg},
  journal={The Annals of Statistics},
  volume={43},
  number={3},
  pages={1178--1242},
  year={2015}
}

@article{stone1982optimal,
  title={Optimal global rates of convergence for nonparametric regression},
  author={Stone, Charles J},
  journal={{The Annals of Statistics}},
  pages={1040--1053},
  year={1982},
  publisher={JSTOR}
}

@article{ibragimov1982bounds,
  title={Bounds for the risks of non-parametric regression estimates},
  author={Ibragimov, IA and Khas’ minskii, RZ},
  journal={Theory of Probability \& Its Applications},
  volume={27},
  number={1},
  pages={84--99},
  year={1982},
  publisher={SIAM}
}

@article{scornet2016random,
  title={Random forests and kernel methods},
  author={Scornet, Erwan},
  journal={IEEE Transactions on Information Theory},
  volume={62},
  number={3},
  pages={1485--1500},
  year={2016},
  publisher={IEEE}
}

@article{tan2024statistical,
  title={Statistical-computational trade-offs for recursive adaptive partitioning estimators},
  author={Tan, Yan Shuo and Klusowski, Jason M and Balasubramanian, Krishnakumar},
  journal={The Annals of Statistics},
  volume={54},
  number={3},
  pages={1262--1288},
  year={2026},
  publisher={Institute of Mathematical Statistics}
}

@inproceedings{tan2022cautionary,
  title={A cautionary tale on fitting decision trees to data from additive models: generalization lower bounds},
  author={Tan, Yan Shuo and Agarwal, Abhineet and Yu, Bin},
  booktitle={International Conference on Artificial Intelligence and Statistics},
  pages={9663--9685},
  year={2022},
  organization={PMLR}
}

@inproceedings{syrgkanis2020estimation,
  title={Estimation and inference with trees and forests in high dimensions},
  author={Syrgkanis, Vasilis and Zampetakis, Manolis},
  booktitle={Conference on {Learning Theory}},
  pages={3453--3454},
  year={2020},
  organization={PMLR}
}

@inproceedings{NIPS2010_refined_rade,
 author = {Srebro, Nathan and Sridharan, Karthik and Tewari, Ambuj},
 booktitle = {Advances in Neural Information Processing Systems},
 editor = {J. Lafferty and C. Williams and J. Shawe-Taylor and R. Zemel and A. Culotta},
 pages = {},
 publisher = {Curran Associates, Inc.},
 title = {Smoothness, Low Noise and Fast Rates},
 url = {https://proceedings.neurips.cc/paper_files/paper/2010/file/76cf99d3614e23eabab16fb27e944bf9-Paper.pdf},
 volume = {23},
 year = {2010}
}

@article{klusowski2024large,
  title={Large scale prediction with decision trees},
  author={Klusowski, Jason M and Tian, Peter M},
  journal={Journal of the American Statistical Association},
  volume={119},
  number={545},
  pages={525--537},
  year={2024},
  publisher={Taylor \& Francis}
}

@article{donoho1997cart,
  title={{CART} and best-ortho-basis: a connection},
  author={Donoho, David L},
  journal={{The Annals of Statistics}},
  volume={25},
  number={5},
  pages={1870--1911},
  year={1997},
  publisher={Institute of Mathematical Statistics}
}

@article{rudin2022interpretable,
author = {Cynthia Rudin and Chaofan Chen and Zhi Chen and Haiyang Huang and Lesia Semenova and Chudi Zhong},
title = {{Interpretable machine learning: Fundamental principles and 10 grand challenges}},
volume = {16},
journal = {Statistics Surveys},
number = {none},
publisher = {Amer. Statist. Assoc., the Bernoulli Soc., the Inst. Math. Statist., and the Statist. Soc. Canada},
pages = {1 -- 85},
year = {2022},
doi = {10.1214/21-SS133},
URL = {https://doi.org/10.1214/21-SS133}
}

@article{klusowski2020sparse,
  title={Sparse learning with {CART}},
  author={Klusowski, Jason},
  journal={Advances in Neural Information Processing Systems},
  volume={33},
  pages={11612--11622},
  year={2020}
}

@article{cattaneo2025honest,
  title={Accuracy Limits of Causal Trees for Individualized Treatment Effects},
  author={Cattaneo, Matias D and Klusowski, Jason M and Yu, Ruiqi Rae},
  journal={arXiv preprint arXiv:2509.11381},
  year={2026}
}

@article{athey2016recursive,
  title={Recursive partitioning for heterogeneous causal effects},
  author={Athey, Susan and Imbens, Guido},
  journal={Proceedings of the National Academy of Sciences},
  volume={113},
  number={27},
  pages={7353--7360},
  year={2016},
  publisher={National Academy of Sciences}
}

@book{hastie2009elements,
  title={The elements of statistical learning: data mining, inference, and prediction},
  author={Hastie, Trevor and Tibshirani, Robert and Friedman, Jerome H and Friedman, Jerome H},
  volume={2},
  year={2009},
  publisher={Springer}
}

@article{rudin2019stop,
  title={Stop explaining black box machine learning models for high stakes decisions and use interpretable models instead},
  author={Rudin, Cynthia},
  journal={Nature Machine Intelligence},
  volume={1},
  number={5},
  pages={206--215},
  year={2019},
  publisher={Nature Publishing Group}
}

@article{laurent1976constructing,
  title={{Constructing optimal binary decision trees is NP-complete}},
  author={Hyafil, Laurent and Rivest, Ronald L},
  journal={Information Processing Letters},
  volume={5},
  number={1},
  pages={15--17},
  year={1976}
}

@article{xu2026statistical,
  title={On the Statistical Optimality of Optimal Decision Trees},
  author={Xu, Zineng and Ghosh, Subhro and Tan, Yan Shuo},
  journal={arXiv preprint arXiv:2603.05340},
  year={2026},
  eprint={2603.05340},
  archivePrefix={arXiv},
  primaryClass={stat.ML},
  doi={10.48550/arXiv.2603.05340},
  url={https://arxiv.org/abs/2603.05340}
}

@book{tsybakov2008introduction,
  title={Introduction to Nonparametric Estimation},
  author={Tsybakov, A.B.},
  isbn={9780387790527},
  lccn={2008939894},
  series={Springer Series in Statistics},
  url={https://books.google.com.sg/books?id=mwB8rUBsbqoC},
  year={2008},
  publisher={Springer New York}
}

@techreport{morgan1973thaid,
  author      = {Morgan, J. N. and Messenger, R. C.},
  title       = {THAID, a sequential analysis program for the analysis of nominal scale dependent variables},
  institution = {Institute for Social Research, University of Michigan},
  year        = {1973},
  address     = {Ann Arbor},
  type        = {Technical report}
}

@article{pedregosa2011scikit,
  title={Scikit-learn: Machine Learning in {P}ython},
  author={Pedregosa, F. and others},
  journal={Journal of Machine Learning Research},
  volume={12},
  pages={2825--2830},
  year={2011}
}
\newpage
\appendix

\section{Algorithmic details}
\begin{algorithm}[h]
	\caption{Early-stopped CART with Minimum Impurity Decrease for Regression}
	\label{alg:early-stopping-cart}
	\KwIn{Data $\mathcal D_n=\{(\bX_i,Y_i):i\in[n]\}$; threshold $\delta>0$.}
	
	Initialize the root node $A_0=[0,1]^d$ and set the active leaf set $\mathcal{A}\leftarrow\{A_0\}$\;
	
	\While{there exists $A\in\mathcal{A}$ with an admissible split and $\widehat{\Delta}_{\max}(A)\ge \delta$}{
		Set $\mathcal{A}_{\mathrm{new}}\leftarrow \mathcal{A}$\;
		\ForEach{$A\in\mathcal{A}$ with an admissible split and $\widehat{\Delta}_{\max}(A)\ge \delta$}{
			Select $(\hat{j},\hat{b})$ by \eqref{equ::jstar}\;
			Split $A$ into $A_L$ and $A_R$ at $(\hat{j},\hat{b})$\;
			Update $\mathcal{A}_{\mathrm{new}}\leftarrow (\mathcal{A}_{\mathrm{new}}\setminus\{A\})\cup\{A_L,A_R\}$\;
		}
		Set $\mathcal{A}\leftarrow \mathcal{A}_{\mathrm{new}}$\;
	}
	
	\ForEach{leaf cell $A\in\mathcal{A}$}{
		Set $\bar Y_A \leftarrow \frac{1}{N(A)}\sum_{i\in\mathcal{I}_A} Y_i$,
		where $\mathcal{I}_A:=\{i\in[n]:\bX_i\in A\}$ and $N(A):=|\mathcal{I}_A|$\;
	}
	\KwOut{Estimator $\hat f_\delta(\bx)=\sum_{A\in\mathcal{A}}\bar Y_A\,\1\{\bx\in A\}$.}
\end{algorithm}

\section{Concentration tools}
\label{app:concentration-tools}

\subsection{Proofs for uniform concentration with empirical Rademacher complexity}
In this section, we prove the concentration results stated in
Section~\ref{section:unif-conc}. Our strategy adapts the arguments of
Section~7 in \citet{xu2026statistical}. Throughout, let
$\partfunc_r \coloneqq \{\1_A(\bx) : A \in \mathcal{A}_{d,r}\}$ denote
the class of indicators of cells in $\mathcal{A}_{d,r}$, and, for any
function class $\funcclass$ and $a > 0$, define the localized class
$\funcclass_a \coloneqq \{f \in \funcclass : \norm{f}_n \leq a\}$.

The key step is to establish analogues of the self-normalized deviation
bounds of Lemma~7.2 in \citet{xu2026statistical}, now holding uniformly
over $\partfunc_r$. As in their analysis, the starting point is a bound
on the empirical Rademacher complexity of the localized class
$\partfunc_{r,a}$, in the spirit of equation~(A.15) in
\citet{xu2026statistical}. Whereas their bound is taken over
decision-tree functions, with the number of leaves controlling the
complexity, here the number of active coordinates $r$ plays that role,
and the resulting complexity parameter is $r\log(nd)$, as shown in
Lemma~\ref{lem:subG_multiplier_sup_exp}.
The sample-trace argument in Lemma~\ref{lem:subG_multiplier_sup_cond_exp}
and the bound in Lemma~\ref{lem:subG_multiplier_sup_exp} provide the
analogues of their Lemmas~A.1 and A.2, respectively;
Lemma~\ref{lem:final_conc} adapts their high-probability and peeling steps.

\begin{lemma}
\label{lem:subG_multiplier_sup_cond_exp}
    For any fixed $\sample$, suppose $\boldsymbol{\epsilon} \coloneqq \braces{\epsilon_1,\epsilon_2,\ldots,\epsilon_n}$ are independent Rademacher random variables.
    For any $r\in[d]$ and $0 < a \leq 1$, conditioned on $\sample$,
    \begin{equation}
        \E_{\boldsymbol{\epsilon}}\bracks*{\sup_{\substack{f \in \partfunc_r \\
        \norm{f}_n \leq a}} \inprod{f,\epsilon}_n} \leq Ca\paren*{\frac{r\log(nd)}{n}}^{1/2}
    \end{equation}
    where $C > 0$ is a universal constant.
\end{lemma}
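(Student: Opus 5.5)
Conditional on $\sample=\{\bX_1,\ldots,\bX_n\}$, the supremum involves only finitely many distinct vectors, so the plan is a sample-trace count combined with Massart's finite class lemma, applied with the localization radius $a$.

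\textbf{Counting the sample traces.} For $f=\1_A$ with $A\in\mathcal A_{d,r}$, the relevant object is the vector $(\1_A(\bX_1),\ldots,\1_A(\bX_n))$. Such an $A$ is determined by three choices:
\begin{itemize}
\item a set $T\subseteq[d]$ of at most $r$ constrained coordinates, giving at most $\sum_{k\le r}\binom{d}{k}\le (ed)^r$ options (or $d^r+1$);
\item for each $j\in T$, an interval $[\ell_j,u_j]$ whose intersection with the sample is determined by $\{i:\ell_j\le X_{ij}\le u_j\}$;
\item for each coordinate, endpoints only matter through their position relative to the $n$ sorted values $X_{1j},\ldots,X_{nj}$, giving at most $(n+1)^2$ distinct index sets per coordinate.
\end{itemize}
Hence the number of distinct trace vectors satisfies $|\partfunc_r(\sample)|\le (ed)^r (n+1)^{2r}$, and $\log|\partfunc_r(\sample)|\lesssim r\log(nd)$, using $n\ge2$ so that $\log(n+1)\lesssim\log n$.

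\textbf{Applying Massart's lemma.} Restrict to the traces with $\norm{f}_n\le a$; this is a sub-collection $\mathcal T_a$ of the same finite set, each vector with empirical norm at most $a$. The quantity $\inprod{f,\epsilon}_n=\frac1n\sum_i f(\bX_i)\epsilon_i$ is sub-Gaussian with variance proxy $\norm{f}_n^2/n\le a^2/n$. Massart's finite class lemma then gives
\[
\E_{\boldsymbol\epsilon}\sup_{f\in\mathcal T_a}\inprod{f,\epsilon}_n\le a\sqrt{\frac{2\log|\mathcal T_a|}{n}}\le Ca\paren*{\frac{r\log(nd)}{n}}^{1/2}.
\]
If $\mathcal T_a$ is empty, or contains only the zero vector, the bound holds trivially. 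The condition $a\le1$ is not actually needed here.

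\textbf{Main obstacle.} The only step needing care is the combinatorial count: one must verify that restricting to at most $r$ nontrivial coordinates yields the exponent $r$ rather than $d$ on the $n$-factor. The empty-cell and degenerate-interval cases also need checking, but they only add a constant number of traces, which is absorbed into $C$.
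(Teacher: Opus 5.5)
Your proof is correct and follows the paper's: both rely on the same sample-trace count, which shows that the number of distinct vectors $(\1_A(\bX_i))_{i=1}^n$ with $A\in\mathcal A_{d,r}$ is at most $\exp\{Cr\log(nd)\}$. The only difference is the last step. The paper plugs this scale-independent count into a Dudley entropy-integral bound (Lemma A.3 of Srebro, Sridharan and Tewari), whereas you apply Massart's finite-class lemma with the localization radius $a$ directly; this is slightly more elementary and loses nothing, because the covering bound does not depend on the scale $t$.
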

\begin{proof}

Let $\partfunc_{r,a}\coloneqq\{\1_A(\bx):A\in\mathcal{A}_{d,r},\norm{\1_A}_n\leq a\}$ and
write $\operatorname{tr}_{\sample}(A):=(\1_A(\bX_i))_{i=1}^n$ for the
trace of a cell on the sample. In any fixed coordinate, an interval
selects a contiguous block after the sample coordinates are sorted, so
there are at most $\binom{n+1}{2}+1$ distinct one-dimensional traces,
including the empty trace. A cell with exactly $k$ active coordinates
therefore has at most
$\binom{d}{k}(\binom{n+1}{2}+1)^k$ distinct traces. Since the empirical
metric depends only on these traces, it follows that
 \begin{equation*}
     \mathcal{N}(t,\partfunc_{r,a},\norm{\cdot}_n)
     \leq \sum_{k=0}^r\binom{d}{k}
     \paren*{\binom{n+1}{2}+1}^k
     \leq 2d^rn^{2r}
     \leq \exp\{3r\log(nd)\},
 \end{equation*}
where the last two inequalities use $n\ge2$ and $r\ge1$.
 Using Lemma A.3 in \citet{NIPS2010_refined_rade}, we complete the proof:
 \begin{equation}
    \begin{split}
         \E_{\boldsymbol{\epsilon}}\bracks*{\sup_{f \in \partfunc_{r,a} } \inprod{f,\epsilon}_n}
        &\leq \frac{12}{\sqrt{n}}\int_{0}^{a/2}
        \sqrt{\log\mathcal{N}(t,\partfunc_{r,a},\norm{\cdot}_n )}dt\\
        &\leq Ca\paren*{\frac{r\log(nd)}{n}}^{1/2}.
    \end{split}
\end{equation}
\end{proof}

\begin{lemma}
\label{lem:subG_multiplier_sup_exp}
    Let $\braces*{\bX_1,\bX_2,\ldots,\bX_n}$ be IID random variables and suppose $\boldsymbol{\epsilon} \coloneqq \braces{\epsilon_1,\epsilon_2,\ldots,\epsilon_n}$ are independent Rademacher random variables.
    For any $r\in[d]$ and $0 < a \leq 1$,
    we have the bound
    \begin{equation} \label{eq:subG_multiplier_sup_exp}
        \E\bracks*{\sup_{\substack{f \in \partfunc_r \\
        \norm{f}_2 \leq a,\norm{f}_\infty \leq 1}} \inprod{f,\epsilon}_n} \leq Ca\paren*{\frac{r\log(nd)}{n}}^{1/2} + \frac{Cr\log(nd)}{n}.
    \end{equation}
    where $C > 0$ is a universal constant.
\end{lemma}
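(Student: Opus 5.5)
We pass from the population-localized class to an empirically localized one and then apply Lemma~\ref{lem:subG_multiplier_sup_cond_exp} conditionally on the sample. This is the standard route, e.g. Lemma~A.2 in \citet{xu2026statistical} or the usual localization argument for star-shaped classes.

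Write $\mathcal G_a:=\{f\in\partfunc_r:\norm{f}_2\le a\}$; its elements are indicators, so $\norm{f}_\infty\le1$ automatically. Let $\hat a:=\sup_{f\in\mathcal G_a}\norm{f}_n$ be the random empirical radius. Conditionally on $\sample$, $\mathcal G_a\subseteq\partfunc_{r,\min\{\hat a,1\}}$. Applying Lemma~\ref{lem:subG_multiplier_sup_cond_exp} with radius $\min\{\hat a,1\}$ (which is positive unless the class is trivial there, in which case the supremum is $0$) gives
\[
\E_{\boldsymbol\epsilon}\Bigl[\sup_{f\in\mathcal G_a}\inprod{f,\epsilon}_n\Bigr]\le C\hat a\Bigl(\frac{r\log(nd)}{n}\Bigr)^{1/2}.
\]
Taking expectations and using Jensen, it suffices to bound $\E[\hat a^2]^{1/2}$.

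For that, note $\norm{f}_n^2=\norm{f}_2^2+(P_n-P)f^2$ and $f^2=f$ for indicators. Hence $\E[\hat a^2]\le a^2+\E\sup_{f\in\mathcal G_a}(P_n-P)f^2$. Symmetrization and the contraction principle ($t\mapsto t^2$ is $2$-Lipschitz on $[0,1]$) bound the last term by $4\,\E\sup_{f\in\mathcal G_a}\inprod{f,\epsilon}_n=:4R$. Combining the two displays yields the fixed-point inequality
\[
R\le C\sqrt{a^2+4R}\,\Bigl(\frac{r\log(nd)}{n}\Bigr)^{1/2}.
\]
Write $\tau=r\log(nd)/n$, so this reads $R\le C(a+2\sqrt R)\sqrt\tau$. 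If $R\le Ca\sqrt\tau$ we are done. Otherwise $R\le 2C\sqrt R\sqrt\tau$ up to a factor $2$, which gives $R\le 4C^2\tau$. Therefore $R\le Ca\sqrt\tau+C\tau$, which is \eqref{eq:subG_multiplier_sup_exp}.

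The main obstacle is the case $\hat a>1$, where Lemma~\ref{lem:subG_multiplier_sup_cond_exp} is not stated. This case is harmless: indicators have $\norm{f}_n\le1$, so $\hat a\le1$ always. One also has to make sure that the measurability of the supremum and the contraction step are justified. Both are fine because, conditionally on the sample, the class has finitely many traces, so the supremum is a maximum over a finite set.
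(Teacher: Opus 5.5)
Your proposal is correct and follows the paper's proof essentially step for step: you condition on the sample to apply Lemma~\ref{lem:subG_multiplier_sup_cond_exp} at the random empirical radius $\sup_{f}\norm{f}_n\le 1$, bound that radius in mean square by $a^2$ plus a symmetrized and contracted term dominated by the same Rademacher quantity $R$, and solve the resulting quadratic inequality in $R^{1/2}$. One small fix is needed in the last step: split cases at $R\le 2Ca\sqrt{\tau}$, since the threshold $Ca\sqrt{\tau}$ as written gives no bound on $R$ in the complementary case, whereas with $2Ca\sqrt{\tau}$ the complementary case gives $Ca\sqrt{\tau}<R/2$, hence $R\le 4C\sqrt{R\tau}$ and $R\le 16C^2\tau$.
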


\begin{proof}   
    For convenience, let us use $\mathcal{G}$ to denote the set over which the supremum is taken on the left-hand side of \eqref{eq:subG_multiplier_sup_exp} and denote the whole quantity as $\Rad{\mathcal{G}}$.
    Notice that for each fixed $\sample$, 
    we have the inclusion
    \begin{equation}
        \mathcal{G} \subset \braces*{f \in \partfunc_r \colon \norm{f}_n \leq \sup_{f \in \mathcal{G}} \norm{f}_n}.
    \end{equation}
    Using Lemma~\ref{lem:subG_multiplier_sup_cond_exp}, the conditional expectation satisfies
    \begin{equation}
    \label{eq:subG_multiplier_sup_exp_helper}
        \E\bracks*{\sup_{f \in \mathcal{G}} \inprod{f,\epsilon}_n~\vline~ \sample} \leq C\paren*{\frac{r\log(nd)}{n}}^{1/2} \sup_{f \in \mathcal{G}} \norm{f}_n
    \end{equation}
    for some universal constant $C > 0$.
    Next, it is easy to compute
    \begin{equation}
        \E\bracks*{\sup_{f \in \mathcal{G}} \norm{f}_n} \leq \E\bracks*{\sup_{f \in \mathcal{G}} \abs*{\norm{f}_n^2 - \norm{f}_2^2}}^{1/2} + \sup_{f \in \mathcal{G}} \norm{f}_2.
    \end{equation}
    The second term on the right-hand side is at most $a$ by the definition of $\mathcal{G}$, while the first term can be bounded as
    \begin{equation}
    \label{eq:subG_multiplier_sup_exp_helper2}
        \E\bracks*{\sup_{f \in \mathcal{G}} \abs*{\norm{f}_n^2 - \norm{f}_2^2}} \leq C\Rad{\braces*{f^2 \colon f \in \mathcal{G}}} \leq C\Rad{\mathcal{G}}.
    \end{equation}
    Here, the first inequality is by symmetrization, while the second inequality uses the Ledoux-Talagrand contraction inequality and the assumption that all functions in $\mathcal{G}$ have supremum norm bounded by $1$.
    Taking a further expectation over $\sample$ in \eqref{eq:subG_multiplier_sup_exp_helper} and plugging these bounds back into the resulting inequality, we get
    \begin{equation}
        \Rad{\mathcal{G}} \leq C\paren*{\frac{r\log(nd)}{n}}^{1/2}\paren*{\Rad{\mathcal{G}}^{1/2} + a}.
    \end{equation}
    This is a quadratic inequality in $\Rad{\mathcal{G}}^{1/2}$, which can be solved (and squared) to get \eqref{eq:subG_multiplier_sup_exp}
    \begin{equation}
    \label{eq:subG_multiplier_sup_exp_helper3}
        \Rad{\mathcal{G}} \leq 2Ca\paren*{\frac{r\log(nd)}{n}}^{1/2} + 4C^2\paren*{\frac{r\log(nd)}{n}}.
    \end{equation}   
\end{proof}

\begin{lemma}\label{lem:final_conc}
    Assume sub-Gaussian response
    (Assumption~\ref{assum:subgaussian-response}), and let
    $\braces*{(\bX_1,g_1),(\bX_2,g_2),\ldots,(\bX_n,g_n)}$ comprise
    IID pairs from a joint distribution on $[0,1]^d\times\R$. Assume
    $|g_i|\le \norm{g}_\infty$ almost surely.
    For any $u>0$, with probability at least $1-e^{-u}$, the following holds for every $r\in [d]$ and every $f \in \partfunc_r$:
    \begin{equation}
    \label{eq:function_multiplier_final_conc}
        \abs*{\inprod{f,g}_n - \inprod{f,g}_{P}} \leq C\norm{g}_\infty\left[\norm{f}_2\paren*{\frac{r\log(nd) + u}{n}}^{1/2} + \paren*{\frac{r\log(nd) + u}{n}}\right].
	\end{equation}
    \begin{equation}
    \label{eq:function_multiplier_final_conc_noise}
        \abs*{\inprod{f,\xi}_n} \leq CK\left[\norm{f}_2\paren*{\frac{r\log(nd) + u}{n}}^{1/2} + \paren*{\frac{r\log(nd) + u}{n}}\right].
	\end{equation}
    where $C>0$ is a universal constant.
\end{lemma}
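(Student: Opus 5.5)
The plan has three stages: prove each bound for a single fixed $r$ with confidence level $u+\log d$, peel over the localization radius $\norm{f}_2$, and finish with a union bound over $r\in[d]$. The extra $\log d$ is absorbed into $r\log(nd)$, so the final bound holds simultaneously for all $r$ with probability $1-e^{-u}$.

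For fixed $r$ and a fixed radius $a\in(0,1]$, consider the localized class $\mathcal{G}_a=\{f\in\partfunc_r:\norm{f}_2\le a\}$. Indicators satisfy $\norm{f}_\infty\le1$, so Lemma~\ref{lem:subG_multiplier_sup_exp} applies. For \eqref{eq:function_multiplier_final_conc}, define the empirical process $Z_a=\sup_{f\in\mathcal{G}_a}|\inprod{f,g}_n-\inprod{f,g}_P|$. By symmetrization, $\E Z_a\le 2\E\sup_{\mathcal{G}_a}|\langle fg,\epsilon\rangle_n|$. Conditionally on the data, the multipliers $g_i$ are bounded by $\norm{g}_\infty$, so Ledoux--Talagrand contraction gives $\E Z_a\lesssim \norm{g}_\infty\Rad{\mathcal{G}_a}$. (The absolute value is harmless because the empty cell lies in the class.) Lemma~\ref{lem:subG_multiplier_sup_exp} then bounds this by $\norm{g}_\infty\bigl(a\sqrt{r\log(nd)/n}+r\log(nd)/n\bigr)$. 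Next, Talagrand's inequality in Bousquet's form applies to the class $\{fg\}$, which has envelope $\norm{g}_\infty$ and variance at most $\norm{g}_\infty^2a^2$. It gives
$Z_a\lesssim \E Z_a+\norm{g}_\infty\bigl(a\sqrt{t/n}+t/n\bigr)$ with probability $1-e^{-t}$.

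For the noise bound \eqref{eq:function_multiplier_final_conc_noise}, the multipliers $\xi_i$ are unbounded but conditionally sub-Gaussian with $\sgnorm{\xi_i\mid\bX_i}\lesssim K$. I would condition on $\sample$, so that $f\mapsto\inprod{f,\xi}_n$ is a sub-Gaussian process in the metric $K\norm{\cdot}_n/\sqrt n$. The covering bound $\log\mathcal N\le 3r\log(nd)$ from the proof of Lemma~\ref{lem:subG_multiplier_sup_cond_exp} is combined with generic chaining in its tail form. This yields, conditionally, $\sup_{\norm{f}_n\le b}|\inprod{f,\xi}_n|\lesssim Kb\sqrt{(r\log(nd)+t)/n}$ with probability $1-e^{-t}$. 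The empirical radius $b$ must then be converted to the population radius: I would apply the $g\equiv1$ case of the first bound, with $f^2=f$, to get $\norm{f}_n^2\le\norm{f}_2^2+C(\norm{f}_2\sqrt{\cdot}+\cdot)$. That in turn gives $\norm{f}_n\lesssim\norm{f}_2+\sqrt{(r\log(nd)+u)/n}$. Multiplying out produces exactly the two terms in \eqref{eq:function_multiplier_final_conc_noise}.

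Uniformity over $\norm{f}_2$ comes from peeling over dyadic shells $a_k=2^{-k}$, $k=0,\dots,\lceil\log_2 n\rceil$. Apply each shell at level $t=u+\log d+\log(k+2)+C$; the additional $\log\log n$ cost is absorbed into $r\log(nd)$. Functions with $\norm{f}_2<1/n$ are covered by the innermost shell, where the additive $t/n$ term already dominates. On the shell containing $f$, $a_k\le 2\norm{f}_2$, which gives the stated forms. The main obstacle is the noise term: one has to combine conditional sub-Gaussian chaining, with heteroskedasticity handled through the uniform bound $K$, and the empirical-to-population norm comparison on a single event while keeping the constants universal. I would carry this out by intersecting the $g\equiv1$ event with the conditional chaining event and union bounding.
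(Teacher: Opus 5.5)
Your proposal is correct and follows essentially the paper's route: localized bounds via symmetrization, contraction, Lemma~\ref{lem:subG_multiplier_sup_exp} and Bousquet's inequality for the bounded multiplier; for the noise, an empirical-to-population norm comparison combined with a conditional sub-Gaussian bound over sample traces; then peeling and a union bound over $r$. The paper differs only cosmetically, using shells $e^{-k+1}$ with a uniform tail level $u+C\log(nd)$ and a plain union bound over the at most $e^{3r\log(nd)}$ traces instead of chaining. One small fix is needed: with tail levels $u+\log d+\log(k+2)+C$, the shell failure probabilities sum to order $e^{-u}d^{-1}\log\log n$, which is not bounded uniformly in $n$. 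Use $2\log(k+2)$, or a uniform $\log(\lceil\log_2 n\rceil+2)$, instead; the extra cost is still absorbed into $r\log(nd)$.
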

\begin{proof}
Let $\partfunc_r^\pm:=\partfunc_r\cup(-\partfunc_r)$. Passing to this
signed class changes the covering numbers by at most a factor of two
and lets us treat absolute suprema as ordinary suprema. Fix $r\in[d]$,
$0<a\leq 1$, and $u>0$, and write $\varepsilon_{n,r,u}\coloneqq (r\log(nd)+u)/n$.

The argument used to prove Lemma~7.1 of \citet{xu2026statistical}, with
Lemmas~\ref{lem:subG_multiplier_sup_cond_exp} and
\ref{lem:subG_multiplier_sup_exp} in place of their Lemmas~A.1 and A.2,
gives the localized bounds
\begin{align}
\sup_{\substack{f\in\partfunc_r\\ \norm{f}_2\leq a}}
\abs*{\inprod{f,g}_n-\inprod{f,g}_P}
&\leq C\norm{g}_\infty\bigl(a\varepsilon_{n,r,u}^{1/2}+\varepsilon_{n,r,u}\bigr),
\label{eq:function-multiplier-localized}\\
\sup_{\substack{f\in\partfunc_r\\ \norm{f}_2\leq a}}
\abs*{\inprod{f,\xi}_n}
&\leq CK\bigl(a\varepsilon_{n,r,u}^{1/2}+\varepsilon_{n,r,u}\bigr),
\label{eq:noise-multiplier-localized}
\end{align}
each with probability at least $1-Ce^{-u}$.

For completeness, the first bound follows by applying symmetrization
and contraction to the class $(\bx,z)\mapsto f(\bx)z$, followed by
Bousquet's inequality; its envelope is $\norm{g}_\infty$ and its
variance is at most $\norm{g}_\infty^2a^2$. This argument remains valid
when $g_i$ is a bounded random multiplier jointly distributed with
$\bX_i$. For the second bound, the norm-process part of the same
argument first gives
$\sup_{f\in\partfunc_r:\,\norm{f}_2\leq a}\norm{f}_n
\leq C(a+\varepsilon_{n,r,u}^{1/2})$.
Conditional on the covariates, the sample-trace count from the proof of
Lemma~\ref{lem:subG_multiplier_sup_cond_exp} and the conditional
sub-Gaussianity of $\xi_i$ then give
\eqref{eq:noise-multiplier-localized}. The use of
$\partfunc_r^\pm$ accounts for the symmetry invoked in the proof of
Lemma~7.1 of \citet{xu2026statistical}.

It remains to remove the localization radius. Follow the peeling
argument of Lemma~7.2 in \citet{xu2026statistical}, applying
\eqref{eq:function-multiplier-localized} and
\eqref{eq:noise-multiplier-localized} at radii $a_k=e^{-k+1}$ for
$k\in[n]$ and $r\in[d]$, with the tail parameter in those bounds set to
$v=u+C\log(nd)$ for a sufficiently large universal $C$. A union bound makes the event simultaneous in $k$ and
$r$, and the additional logarithm is absorbed by $r\log(nd)$. Choosing
the shell containing $\norm{f}_2$ (and using the linear $\varepsilon_{n,r,u}$ term
when $\norm{f}_2\leq e^{-n+1}$) now gives
\eqref{eq:function_multiplier_final_conc} and
\eqref{eq:function_multiplier_final_conc_noise}.
\end{proof}

\begin{lemma}[Sub-Gaussian response multiplier over cells]\label{lem:subg_response_multiplier_cells}
Assume sub-Gaussian response (Assumption~\ref{assum:subgaussian-response}).
For any $u>0$, with probability at least $1-e^{-u}$, the following holds for every $r\in[d]$ and every $f\in\partfunc_r$:
\begin{equation}\label{eq:subg-response-multiplier-cells}
\abs*{\inprod{f,Y}_n-\E[f(\bX)f^*(\bX)]}
\leq
C K
\left[
\norm{f}_2\left(\frac{r\log(nd)+u}{n}\right)^{1/2}
+
\frac{r\log(nd)+u}{n}
\right].
\end{equation}
\end{lemma}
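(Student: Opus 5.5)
We decompose $Y_i=f^*(\bX_i)+\xi_i$, so that for every $f\in\partfunc_r$
\[
\inprod{f,Y}_n-\E[f(\bX)f^*(\bX)]
=
\bigl(\inprod{f,f^*}_n-\inprod{f,f^*}_P\bigr)+\inprod{f,\xi}_n .
\]
Each term is then handled by one of the two conclusions of Lemma~\ref{lem:final_conc}, run at confidence level $u+\log 2$ so that a union bound yields total probability at least $1-e^{-u}$. The additive $\log 2$ is absorbed into the universal constant, since $r\log(nd)\ge\log 2$ when $n\ge2$.

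\textbf{Signal term.} We apply \eqref{eq:function_multiplier_final_conc} with the bounded multiplier $g_i=f^*(\bX_i)$, which is a deterministic function of $\bX_i$. This gives a bound of the required form with prefactor $\norm{f^*}_\infty$. The remaining task is to bound this prefactor by $K$.

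\textbf{Bounding $\norm{f^*}_\infty$.} Fix $\bx$. Since $\xi$ has conditional mean zero, $f^*(\bx)=\E[Y\mid\bX=\bx]$. For the sub-Gaussian norm used here, $\abs{\E Z}\le C\sgnorm{Z}$; for example, $\E\abs{Z}\le C\sgnorm{Z}$ follows directly from the defining exponential moment bound. Hence
\[
\abs{f^*(\bx)}\le C\sgnorm{Y\mid\bX=\bx}\le CK .
\]
Taking the supremum gives $\norm{f^*}_\infty\lesssim K$ up to null sets. Because $P$ has a density, modifying $f^*$ on a null set does not affect either side of the inequality.

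\textbf{Noise term.} Since $\xi=Y-f^*(\bX)$, we have
\[
\sgnorm{\xi\mid\bX=\bx}\le \sgnorm{Y\mid\bX=\bx}+\abs{f^*(\bx)}\lesssim K .
\]
Thus the conditional sub-Gaussian assumption on $\xi$ invoked inside \eqref{eq:function_multiplier_final_conc_noise} holds with constant of order $K$. That conclusion then bounds $\abs{\inprod{f,\xi}_n}$ by the same right-hand side with prefactor of order $K$.

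\textbf{Conclusion.} Adding the two bounds on the common event, simultaneously over all $r\in[d]$ and $f\in\partfunc_r$, gives \eqref{eq:subg-response-multiplier-cells}.

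\textbf{Main point to check.} The argument is routine apart from one consistency issue. We must confirm that Lemma~\ref{lem:final_conc} is stated, and valid, for a noise whose conditional sub-Gaussian constant is a multiple of $K$ rather than exactly $K$. We must also confirm it allows heteroskedastic $\xi$. Its proof conditions on the covariates and uses only conditional sub-Gaussianity of $\xi_i$, so rescaling the constant only changes universal constants and heteroskedasticity causes no difficulty.
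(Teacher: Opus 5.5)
Your proposal is correct and follows the paper's proof: decompose $Y=f^*(\bX)+\xi$, bound the signal part with the first conclusion of Lemma~\ref{lem:final_conc} at $g=f^*$ using $\norm{f^*}_\infty\lesssim K$, and bound $\inprod{f,\xi}_n$ with its noise conclusion. Your extra justifications for $\norm{f^*}_\infty\lesssim K$ and $\sup_{\bx}\sgnorm{\xi\mid\bX=\bx}\lesssim K$ are fine. The union bound at level $u+\log 2$ is harmless but unnecessary, since for a fixed $g$ both conclusions of Lemma~\ref{lem:final_conc} already hold on a single event.
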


\begin{proof}
Write $Y=f^*(\bX)+\xi$.
Assumption~\ref{assum:subgaussian-response} implies
$\norm{f^*}_\infty\lesssim K$ and
$\sup_{\bx}\sgnorm{\xi\mid \bX=\bx}\lesssim K$.
Work on the event guaranteed by Lemma~\ref{lem:final_conc}.
For the deterministic part, Lemma~\ref{lem:final_conc} with $g=f^*$
gives
\[
\abs*{\inprod{f,f^*}_n-\E[f(\bX)f^*(\bX)]}
\lesssim
K\left[
\norm{f}_2\left(\frac{r\log(nd)+u}{n}\right)^{1/2}
+
\frac{r\log(nd)+u}{n}
\right].
\]
Combining this bound with
\eqref{eq:function_multiplier_final_conc_noise} proves the lemma.
\end{proof}

\begin{lemma}[Uniform concentration of cell counts]
\label{lemma:uniform_cell_count}
There exists a universal constant $C>0$ such that, for any $u>0$, with probability at least $1-e^{-u}$, for every $r\in [d]$, the following statements hold uniformly for all $A\in\mathcal A_{d,r}$:
\begin{enumerate}
\item
\begin{equation}
\label{eq:cell-conc-1}
\left|
\left(\frac{N(A)}{n}\right)^{1/2}
-\P(A)^{1/2}
\right|
\leq
C
\left(\frac{r\log(nd)+u}{n}\right)^{1/2}.
\end{equation}
\item There exists a universal constant $C_1>0$ such that, if $\P(A)\vee (N(A)/n)\geq C_1(r\log(nd)+u)/n$, then
\begin{equation}
\label{eq:cell-conc-2}
\frac{1}{2}\leq \frac{\P(A)n}{N(A)}\leq 2.
\end{equation}
\item For any constant $C_2>0$, if $\P(A)\wedge (N(A)/n)\leq C_2(r\log(nd)+u)/n$, then
\begin{equation}
\label{eq:cell-conc-3}
\P(A)\vee\frac{N(A)}{n}
\leq
(2C_2\vee C_1)\frac{r\log(nd)+u}{n}.
\end{equation}
\end{enumerate}
\end{lemma}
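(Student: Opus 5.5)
The idea is to apply the uniform bound of Lemma~\ref{lem:final_conc} with the trivial multiplier $g\equiv 1$ and then translate it to the three statements by elementary algebra. Write $\varepsilon:=(r\log(nd)+u)/n$. On the event of Lemma~\ref{lem:final_conc}, for every $r\in[d]$ and $A\in\mathcal A_{d,r}$, taking $f=\1_A$ gives $\inprod{f,1}_n=N(A)/n$, $\inprod{f,1}_P=\P(A)$ and $\norm{f}_2=\P(A)^{1/2}$, so
\[
\Bigl|\frac{N(A)}{n}-\P(A)\Bigr|\le C\bigl(\P(A)^{1/2}\varepsilon^{1/2}+\varepsilon\bigr).
\]
The boundedness requirement $|g_i|\le\norm{g}_\infty=1$ is trivially met, and sub-Gaussianity of $Y$ is not needed for this half of the lemma (alternatively one can rerun the Bousquet-plus-peeling argument directly for the class $\partfunc_r$, whose trace count was computed in Lemma~\ref{lem:subG_multiplier_sup_cond_exp}).

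For item 1, set $a=\P(A)^{1/2}$ and $b=(N(A)/n)^{1/2}$. If $|a-b|\le\varepsilon^{1/2}$ we are done. Otherwise $|a^2-b^2|=|a-b|(a+b)\ge|a-b|\max(a,b)$, and the display gives $|a-b|(a+b)\le C(a\varepsilon^{1/2}+\varepsilon)$. If $a\ge\varepsilon^{1/2}$, divide by $a+b\ge a$ to get $|a-b|\le 2C\varepsilon^{1/2}$. If $a<\varepsilon^{1/2}$, then $b^2\le a^2+C(a\varepsilon^{1/2}+\varepsilon)\le(1+2C)\varepsilon$, so $|a-b|\le a+b\lesssim\varepsilon^{1/2}$. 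This is the only mildly fiddly step; it is a case split on whether the mass is above or below the noise scale.

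For item 2, suppose $\max(a^2,b^2)\ge C_1\varepsilon$, with $C_1$ to be chosen. Item 1 gives $|a-b|\le C\varepsilon^{1/2}\le (C/\sqrt{C_1})\max(a,b)$. Taking $C_1$ large enough that $C/\sqrt{C_1}\le 1-2^{-1/2}$ forces $\min(a,b)\ge 2^{-1/2}\max(a,b)$, and squaring gives $1/2\le a^2/b^2\le 2$, which is $1/2\le \P(A)n/N(A)\le 2$. The same bound shows $N(A)>0$, so the ratio is well defined.

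For item 3, suppose $\min(a^2,b^2)\le C_2\varepsilon$. If $\max(a^2,b^2)\ge C_1\varepsilon$, item 2 gives $\max\le 2\min\le 2C_2\varepsilon$. Otherwise $\max<C_1\varepsilon$. In both cases $\max(a^2,b^2)\le(2C_2\vee C_1)\varepsilon$.

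All three items hold on the single event of probability at least $1-e^{-u}$, simultaneously in $r$ and $A$; the constants $C$ (of Lemma~\ref{lem:final_conc}) and $C_1$ are universal. I expect no real obstacle here. The only care needed is the small-mass case in item 1, where one cannot divide by $a+b$, together with checking that the union over $r$ is already built into Lemma~\ref{lem:final_conc}.
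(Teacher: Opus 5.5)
Your proposal is correct and follows essentially the same route as the paper: apply Lemma~\ref{lem:final_conc} with $g\equiv 1$ and solve the resulting quadratic inequality for item 1 (your case split on $\P(A)^{1/2}$ versus $\varepsilon^{1/2}$ just makes that step explicit). Items 2 and 3 then follow from item 1 by choosing $C_1$ so that $C/\sqrt{C_1}\le 1-2^{-1/2}$, the same as the paper's $c_1=2C^2/(\sqrt2-1)^2$, and by the same two-case argument.
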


\begin{lemma}[Uniform concentration of bounded cell averages]
\label{lemma:unif_conc_bounded_cell_ave}
Suppose $Z$ is a bounded random variable defined on the same probability
space as $\bX$ and $Y$, not necessarily independent of $\bX$ and $Y$.
There exists a constant $C$ such that, for any $u>0$, with probability at
least $1-e^{-u}$, the following inequality holds uniformly for every
$r\in[d]$ and $A\in\mathcal{A}_{d,r}$:
\begin{equation}
\label{eq:conclude-bounded-cell-mean}
\P(A)^{1/2}
\bigl|\overline Z_A-\E[Z\mid \bX\in A]\bigr|
\lesssim
\|Z\|_\infty
\left(\frac{r\log(nd)+u}{n}\right)^{1/2},
\end{equation}
where $\overline Z_A=N(A)^{-1}\sum_{i:\bX_i\in A} Z_i$ whenever
$N(A)>0$, and $\overline Z_A=0$ otherwise.
\end{lemma}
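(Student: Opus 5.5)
The statement to prove is Lemma~\ref{lemma:unif_conc_bounded_cell_ave}. The plan is to reduce it to Lemma~\ref{lem:final_conc}, applied twice: once with multiplier $g_i=Z_i$ and once with $g_i\equiv 1$. In both applications $f=\1_A$, so $\norm{f}_2=\P(A)^{1/2}$. Take the intersection of the two events, with $u$ replaced by $u+\log 2$ (the shift is absorbed into constants). On this event, writing $\varepsilon:=(r\log(nd)+u)/n$ and $M:=\|Z\|_\infty$, we get
\[
\Bigl|\tfrac1n\textstyle\sum_{i:\bX_i\in A}Z_i-\E[Z\1_A(\bX)]\Bigr|\lesssim M\bigl(\P(A)^{1/2}\varepsilon^{1/2}+\varepsilon\bigr),\qquad
\Bigl|\tfrac{N(A)}{n}-\P(A)\Bigr|\lesssim \P(A)^{1/2}\varepsilon^{1/2}+\varepsilon .
\]
Lemma~\ref{lem:final_conc} explicitly allows a bounded multiplier jointly distributed with $\bX_i$, so $Z$ need not be independent of $(\bX,Y)$.

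Next, split into two cases according to whether $\P(A)\ge C_1\varepsilon$ for a large constant $C_1$.

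In the \emph{large-cell case}, $\P(A)\ge C_1\varepsilon$. The count bound gives $N(A)/n\ge \P(A)/2$, so in particular $N(A)>0$. Write $m=\E[Z\mid\bX\in A]$, so that $\E[Z\1_A]=m\P(A)$ and $|m|\le M$. Decompose
\[
\overline Z_A-m=\frac{n}{N(A)}\Bigl[\bigl(\tfrac1n\textstyle\sum_{A}Z_i-m\P(A)\bigr)-m\bigl(\tfrac{N(A)}{n}-\P(A)\bigr)\Bigr].
\]
Both bracketed deviations are at most of order $M(\P(A)^{1/2}\varepsilon^{1/2}+\varepsilon)$, and this is $\lesssim M\P(A)^{1/2}\varepsilon^{1/2}$ because $\varepsilon\le \P(A)^{1/2}\varepsilon^{1/2}/\sqrt{C_1}$. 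Multiplying by $n/N(A)\le 2/\P(A)$ gives $|\overline Z_A-m|\lesssim M\varepsilon^{1/2}\P(A)^{-1/2}$, which is exactly \eqref{eq:conclude-bounded-cell-mean} after multiplying through by $\P(A)^{1/2}$.

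In the \emph{small-cell case}, $\P(A)<C_1\varepsilon$. Here no fine estimate is needed. Both $|\overline Z_A|\le M$ (including the convention $\overline Z_A=0$ when $N(A)=0$) and $|m|\le M$, so $\P(A)^{1/2}|\overline Z_A-m|\le 2M\P(A)^{1/2}\le 2\sqrt{C_1}\,M\varepsilon^{1/2}$. If $\P(A)=0$, the left side vanishes under any convention for $m$.

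I expect no real obstacle. The only points needing care are confirming that Lemma~\ref{lem:final_conc} indeed allows dependent bounded multipliers, and checking that the threshold $C_1$ is universal; it is, being fixed only by the constant in Lemma~\ref{lem:final_conc}. Uniformity over $r\in[d]$ and over all $A\in\mathcal A_{d,r}$ comes directly from the uniformity already built into Lemma~\ref{lem:final_conc}.
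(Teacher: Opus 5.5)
Your proof is correct and follows the paper's argument essentially step for step. It uses the same split into small and large cells at $\P(A)\asymp(r\log(nd)+u)/n$, the same decomposition of $\overline Z_A-\E[Z\mid\bX\in A]$ into a multiplier deviation and a count deviation divided by $N(A)/n$, and the same application of Lemma~\ref{lem:final_conc} with the dependent bounded multiplier $g_i=Z_i$. The only cosmetic difference is that you get the count bound by applying Lemma~\ref{lem:final_conc} with $g\equiv 1$ directly, whereas the paper cites Lemma~\ref{lemma:uniform_cell_count}; that lemma is itself proved in exactly this way.
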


\begin{proof}
Apply Lemmas~\ref{lemma:uniform_cell_count} and \ref{lem:final_conc}
with $u+\log 2$ in place of $u$, and work on the intersection of the
resulting events.  This intersection has probability at least
$1-e^{-u}$ after increasing constants.
Fix $r$ and $A\in\mathcal{A}_{d,r}$, and write
$\varepsilon_{n,r,u}\coloneqq (r\log(nd)+u)/n$.

If $\P(A)\leq c\varepsilon_{n,r,u}$, then boundedness gives
\[
\P(A)^{1/2}
\bigl|\overline Z_A-\E[Z\mid \bX\in A]\bigr|
\leq
2\norm{Z}_\infty \P(A)^{1/2}
\leq
C\norm{Z}_\infty\varepsilon_{n,r,u}^{1/2}.
\]
Now suppose $\P(A)>c\varepsilon_{n,r,u}$, with $c$ large enough that
Lemma~\ref{lemma:uniform_cell_count} gives
$P_n(A)\coloneqq N(A)/n\asymp \P(A)$.
With $f=\1_A$, Lemma~\ref{lem:final_conc} gives
\[
\abs*{P_n(Z\1_A)-\E[Z\1_A(\bX)]}
\leq
C\norm{Z}_\infty
\left[
\P(A)^{1/2}\varepsilon_{n,r,u}^{1/2}
+\varepsilon_{n,r,u}
\right]
\leq
C\norm{Z}_\infty
\P(A)^{1/2}\varepsilon_{n,r,u}^{1/2},
\]
where $P_n(Z\1_A)=n^{-1}\sum_i Z_i\1_A(\bX_i)$ and the last step uses the
lower bound on $\P(A)$.
Since
\[
\overline Z_A-\E[Z\mid \bX\in A]
=
\frac{P_n(Z\1_A)-\E[Z\1_A(\bX)]}{P_n(A)}
+\E[Z\mid \bX\in A]\frac{\P(A)-P_n(A)}{P_n(A)},
\]
and Lemma~\ref{lemma:uniform_cell_count} also gives
$|\P(A)-P_n(A)|\lesssim \P(A)^{1/2}\varepsilon_{n,r,u}^{1/2}$ in this
large-cell regime, the desired bound follows.
\end{proof}

\begin{proof}[Proof of Lemma~\ref{lemma:unif_conc_cell_ave}]
Let $u_1:=u+\log 3$.
Fix $r\in[d]$ and write
$\varepsilon_{n,r,u}\coloneqq(r\log(nd)+u)/n$.
Work on the intersection of the events in
Lemmas~\ref{lemma:uniform_cell_count} and
\ref{lem:subg_response_multiplier_cells}, each applied with $u_1$ in
place of $u$, together with the standard root-cell concentration bound
$|\overline Y_{[0,1]^d}-\E[Y]|\lesssim K(u_1/n)^{1/2}$.
A union bound shows that this event has probability at least $1-e^{-u}$,
and the additive constant in $u_1$ can be absorbed into the universal
constants below.
For cells with
$\P(A)\ge c\varepsilon_{n,r,u}$, where $c$ is a sufficiently large
universal constant, Lemma~\ref{lemma:uniform_cell_count} gives
$P_n(A)=N(A)/n\asymp \P(A)$ uniformly.
Lemma~\ref{lem:subg_response_multiplier_cells}, applied with
$f=\1_A$, gives
\[
\left|
\frac1n\sum_{i=1}^n Y_i\1_A(\bX_i)
-
\E[f^*(\bX)\1_A(\bX)]
\right|
\lesssim
K(\P(A)\varepsilon_{n,r,u})^{1/2},
\]
where the lower-order $\varepsilon_{n,r,u}$ term is absorbed by the
assumed lower bound on $\P(A)$.
Also,
\[
\left|\frac{N(A)}{n}-\P(A)\right|
\lesssim
(\P(A)\varepsilon_{n,r,u})^{1/2}.
\]
Using
\begin{align*}
\bar Y_A-\E[f^*(\bX)\mid \bX\in A]
&=
\frac{n^{-1}\sum_{i=1}^n Y_i\1_A(\bX_i)-\E[f^*(\bX)\1_A(\bX)]}{P_n(A)}\\
&\quad+
\E[f^*(\bX)\mid \bX\in A]\,\frac{\P(A)-P_n(A)}{P_n(A)},
\end{align*}
and $\norm{f^*}_\infty\lesssim K$, we obtain
\[
\P(A)^{1/2}\bigl|\bar Y_A-\E[f^*(\bX)\mid \bX\in A]\bigr|
\lesssim
K \varepsilon_{n,r,u}^{1/2}.
\]
Taking the union over $r\in[d]$ is already included in the stated complexity, and the claim follows.
For the root cell, the claimed inequality follows directly from the
root-cell concentration bound and $\E[Y]=\E[f^*(\bX)]$.
\end{proof}

\begin{proof}[Proof of Lemma~\ref{lemma:uniform_cell_count}]
    Fix $r\in[d]$ and $A\in\mathcal A_{d,r}$.
    To prove \eqref{eq:cell-conc-1}, by Lemma~\ref{lem:final_conc} with $g\equiv 1$, we have
    \begin{equation}\label{eq:proof-cell-conc}
        \abs*{\frac{N(A)}{n}-\P(A)}\leq C\paren*{\paren*{\P(A)\frac{r\log(nd) + u}{n}}^{1/2} + \paren*{\frac{r\log(nd) + u}{n}}}.
    \end{equation}
     This is a quadratic inequality in $\P(A)$, which can be solved (and squared) to get \eqref{eq:cell-conc-1}.

    To prove \eqref{eq:cell-conc-2}, set $c_1=2C^2/(\sqrt{2}-1)^2$, where $C$ is the constant in \eqref{eq:cell-conc-1}. We consider only the case $\P(A)>c(r\log(nd)+u)/n$, since the case $N(A)/n>c(r\log(nd)+u)/n$ follows by a similar argument. By \eqref{eq:cell-conc-1}, one direction of \eqref{eq:cell-conc-2} follows
     \begin{equation*}
        \begin{split}
            \paren*{\frac{N(A)}{n}}^{1/2}&\leq \paren*{\P(A)}^{1/2}+C\paren*{\frac{r\log(nd) + u}{n}}^{1/2}\\
            &\leq \paren*{\P(A)}^{1/2}+(\sqrt{2}-1)\paren*{\P(A)}^{1/2}\\
            &\leq \sqrt{2}\paren*{\P(A)}^{1/2}.
        \end{split}
     \end{equation*}
     Similarly from \eqref{eq:cell-conc-1}, we finish the proof of \eqref{eq:cell-conc-2}:
     \begin{equation*}
         \begin{split}
             \paren*{\frac{N(A)}{n}}^{1/2}&\geq \paren*{\P(A)}^{1/2}-C\paren*{\frac{r\log(nd) + u}{n}}^{1/2}\\
            &\geq \paren*{\P(A)}^{1/2}-\paren*{1-\frac{1}{\sqrt{2}}}\paren*{\P(A)}^{1/2}\\
            &\geq \frac{1}{\sqrt{2}}\paren*{\P(A)}^{1/2}.
         \end{split}
     \end{equation*}

    Equation \eqref{eq:cell-conc-3} follows directly from \eqref{eq:cell-conc-2}. We consider the case $\P(A)\leq c_2(r\log(nd)+u)/n$, as the complementary case is analogous. If $N(A)\leq c_1(r\log(nd)+u)$, then \eqref{eq:cell-conc-3} holds trivially; otherwise, \eqref{eq:cell-conc-3} follows immediately from \eqref{eq:cell-conc-2}. 
\end{proof}

\subsection{Proof of Lemma~\ref{lemma:unif_conc_id}}

\begin{lemma}[Uniform weighted cell means]
\label{lem:uniform-weighted-cell-means}
Assume sub-Gaussian response (Assumption~\ref{assum:subgaussian-response}).
For $B\in\mathcal R_d$, write
\[
p_B:=\P(B),\qquad \hat p_B:=\frac{N(B)}n,
\qquad
\mu_B:=\E[f^*(\bX)\mid \bX\in B],
\]
with $\mu_B=0$ when $p_B=0$, and set $\overline Y_B=0$ when
$N(B)=0$. For any $u>0$, with probability at least $1-e^{-u}$,
uniformly for every $r\in[d]$ and $B\in\mathcal A_{d,r}$,
\begin{equation}\label{eq:uniform-weighted-cell-means}
\left|\hat p_B^{1/2}\overline Y_B-p_B^{1/2}\mu_B\right|
\lesssim
K\left(\frac{r\log(nd)+u}{n}\right)^{1/2}.
\end{equation}
On the same event, uniformly for every
$c\in[-\norm{f^*}_\infty,\norm{f^*}_\infty]$,
\begin{equation}\label{eq:translated-weighted-cell-means}
\left|
\hat p_B^{1/2}(\overline Y_B-c)
-p_B^{1/2}(\mu_B-c)
\right|
\lesssim
K\left(\frac{r\log(nd)+u}{n}\right)^{1/2}.
\end{equation}
\end{lemma}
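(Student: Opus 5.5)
The plan is to split $Y_i=f^*(\bX_i)+\xi_i$ and treat the signal and noise parts of $\hat p_B^{1/2}\overline Y_B$ separately. Write $\varepsilon:=(r\log(nd)+u)/n$ and recall $\norm{f^*}_\infty\lesssim K$ under Assumption~\ref{assum:subgaussian-response}. When $N(B)=0$ both $\hat p_B^{1/2}\overline Y_B$ and $\hat p_B^{1/2}(\overline Y_B-c)$ vanish, so it suffices to bound $p_B^{1/2}|\mu_B|$ and $p_B^{1/2}|\mu_B-c|$, which is covered by the small-cell case below. Otherwise
\[
\hat p_B^{1/2}\overline Y_B=\hat p_B^{1/2}\,\overline{f^*}_B+n^{-1/2}\,\frac{\sum_{i:\bX_i\in B}\xi_i}{N(B)^{1/2}},
\]
where $\overline{f^*}_B$ denotes the sample average of $f^*(\bX_i)$ over $B$.

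\emph{Noise term.} Conditional on the covariates, $N(B)^{-1/2}\sum_{i:\bX_i\in B}\xi_i$ is sub-Gaussian with parameter $\lesssim K$, since the $\xi_i$ are conditionally independent, mean zero and sub-Gaussian. It depends on $B$ only through its sample trace. By the trace count in the proof of Lemma~\ref{lem:subG_multiplier_sup_cond_exp}, there are at most $\exp\{3r\log(nd)\}$ such traces for each $r$. A union bound over traces and over $r\in[d]$, with tail parameter $u+C\log(nd)$, then bounds the noise term by $K\varepsilon^{1/2}$ uniformly, up to constants. I will check this step first, because it is what avoids the extra logarithm a Cauchy--Schwarz or maximal-$\xi_i^2$ argument would cost on small cells.

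\emph{Signal term.} I will intersect with the events of Lemma~\ref{lemma:uniform_cell_count} and Lemma~\ref{lemma:unif_conc_bounded_cell_ave} (with $Z=f^*(\bX)$), and split into two cases.
\begin{itemize}
\item If $p_B\vee\hat p_B\le C_1\varepsilon$, then $|\overline{f^*}_B|,|\mu_B|\le\norm{f^*}_\infty$ gives each of $\hat p_B^{1/2}|\overline{f^*}_B|$ and $p_B^{1/2}|\mu_B|$ at most $\lesssim K\varepsilon^{1/2}$.
\item Otherwise, write
\[
\hat p_B^{1/2}\overline{f^*}_B-p_B^{1/2}\mu_B=\hat p_B^{1/2}(\overline{f^*}_B-\mu_B)+(\hat p_B^{1/2}-p_B^{1/2})\mu_B .
\]
The first piece is controlled by \eqref{eq:cell-conc-2}, which gives $\hat p_B\le 2p_B$, combined with \eqref{eq:conclude-bounded-cell-mean}. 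The second piece is controlled by \eqref{eq:cell-conc-1} and $|\mu_B|\le\norm{f^*}_\infty$.
\end{itemize}
Combining the signal and noise bounds and adjusting $u$ by a constant in the union bound gives \eqref{eq:uniform-weighted-cell-means}.

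\emph{Translated bound.} For \eqref{eq:translated-weighted-cell-means}, the difference equals the quantity just bounded minus $c(\hat p_B^{1/2}-p_B^{1/2})$. Since $|c|\le\norm{f^*}_\infty\lesssim K$, \eqref{eq:cell-conc-1} bounds the extra term by $K\varepsilon^{1/2}$ uniformly in $c$, on the same event.
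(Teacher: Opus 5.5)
Your proposal is correct and follows essentially the same route as the paper. The signal part uses the same small-cell/large-cell dichotomy via Lemmas~\ref{lemma:uniform_cell_count} and~\ref{lemma:unif_conc_bounded_cell_ave}, the noise part uses conditional sub-Gaussianity of $(nN(B))^{-1/2}\sum_{i:\bX_i\in B}\xi_i$ with a union bound over the $\exp\{3r\log(nd)\}$ sample traces and over $r$, and the translated bound follows from \eqref{eq:cell-conc-1} with $|c|\lesssim K$. The only cosmetic difference is that you case on $p_B\vee\hat p_B$ and invoke \eqref{eq:cell-conc-2} directly, whereas the paper cases on $p_B$ and transfers smallness to $\hat p_B$ via \eqref{eq:cell-conc-1}.
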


\begin{proof}
Let $u_1:=u+\log 3$.
Apply Lemmas~\ref{lemma:unif_conc_bounded_cell_ave} and
\ref{lemma:uniform_cell_count} with confidence parameter $u_1$. Write
\[
\varepsilon_{n,r,u}:=\frac{r\log(nd)+u}{n},
\qquad
\overline f_B:=\frac{1}{N(B)}\sum_{i:\bX_i\in B}f^*(\bX_i)
\]
when $N(B)>0$, and set $\overline f_B=0$ otherwise. Since
$\norm{f^*}_\infty\lesssim K$, the two concentration lemmas imply
\begin{equation}\label{eq:weighted-signal-cell-means}
\left|
\hat p_B^{1/2}\overline f_B-p_B^{1/2}\mu_B
\right|
\lesssim K\varepsilon_{n,r,u}^{1/2}
\end{equation}
uniformly over $r$ and $B$. Indeed, when
$p_B\gtrsim\varepsilon_{n,r,u}$, the empirical and population masses
are comparable and the claim follows from
Lemma~\ref{lemma:unif_conc_bounded_cell_ave} and
\eqref{eq:cell-conc-1}. When
$p_B\lesssim\varepsilon_{n,r,u}$, equation~\eqref{eq:cell-conc-1}
also gives $\hat p_B\lesssim\varepsilon_{n,r,u}$, and the claim follows
directly from boundedness.

It remains to control the noise. Conditional on
$\bX_1,\ldots,\bX_n$, the variables $\xi_1,\ldots,\xi_n$ are
independent and centered, with conditional sub-Gaussian norms bounded
by $CK$. For every fixed nonempty sample trace generated by a cell
$B$, write
\[
\overline\xi_B:=\frac{1}{N(B)}\sum_{i:\bX_i\in B}\xi_i,
\]
and set $\overline\xi_B=0$ when $N(B)=0$. Then
\[
\hat p_B^{1/2}\overline\xi_B
=
\frac{1}{\sqrt{nN(B)}}
\sum_{i:\bX_i\in B}\xi_i
\]
is sub-Gaussian with norm at most $CK/\sqrt n$. For each $r$, the
sample-trace counting argument in the proof of
Lemma~\ref{lem:subG_multiplier_sup_cond_exp} gives at most
$\exp\{3r\log(nd)\}$ distinct traces. The conditional tail bound and a
union bound therefore show that, for a sufficiently large universal
constant $D$,
\[
\P\left\{
\sup_{B\in\mathcal A_{d,r}}
\hat p_B^{1/2}|\overline\xi_B|
>
DK\left(\frac{r\log(nd)+u_1}{n}\right)^{1/2}
\,\middle|\,
\bX_1,\ldots,\bX_n
\right\}
\leq 2e^{-u_1}(nd)^{-2r}.
\]
Because $n\ge2$, summing this bound over $r\in[d]$ gives a total
conditional failure probability of at most $e^{-u_1}$. Hence,
after absorbing the additive constant in $u_1$,
\begin{equation}\label{eq:weighted-noise-cell-means}
\hat p_B^{1/2}|\overline\xi_B|
\lesssim K\varepsilon_{n,r,u}^{1/2}
\end{equation}
simultaneously for all $r$ and $B$, with failure probability at most
$e^{-u_1}$. A union bound over this event and the two events invoked
above gives probability at least $1-e^{-u}$. Here and below, the
additive constant in $u_1$ is absorbed into the universal constants.
Combining
\eqref{eq:weighted-signal-cell-means} and
\eqref{eq:weighted-noise-cell-means} proves
\eqref{eq:uniform-weighted-cell-means}.

Finally, \eqref{eq:cell-conc-1},
$|c|\leq\norm{f^*}_\infty$, and
$\norm{f^*}_\infty\lesssim K$ give
\[
|c|\,|\hat p_B^{1/2}-p_B^{1/2}|
\lesssim K\varepsilon_{n,r,u}^{1/2}.
\]
Combining this with \eqref{eq:uniform-weighted-cell-means} proves
\eqref{eq:translated-weighted-cell-means}.
\end{proof}

\begin{proof}[Proof of Lemma~\ref{lemma:unif_conc_id}]
Apply Lemma~\ref{lem:uniform-weighted-cell-means} and work on its event.
Fix $r$, $A\in\mathcal A_{d,r}$, $j$, and $b$, and let $A_L,A_R$ be
the two children. The children belong to
$\mathcal A_{d,\min(d,r+1)}$; since $r\ge1$, this changes the
concentration scale by at most a universal factor. Write
\[
p_B:=\P(B),\qquad \hat p_B:=\frac{N(B)}n,
\qquad \mu_B:=\E[f^*(\bX)\mid\bX\in B]
\]
for $B\in\{A,A_L,A_R\}$, using the zero-mass conventions in
Lemma~\ref{lem:uniform-weighted-cell-means}, and set
\[
\rho:=CK\left(\frac{r\log(nd)+u}{n}\right)^{1/2}.
\]

Take $c=\mu_A$ in \eqref{eq:translated-weighted-cell-means} and define
\[
a_B:=p_B^{1/2}(\mu_B-\mu_A),
\qquad
\hat a_B:=\hat p_B^{1/2}(\overline Y_B-\mu_A).
\]
Since $|\mu_A|\leq\norm{f^*}_\infty\lesssim K$, the weighted cell-mean
bound gives
\begin{equation}\label{eq:weighted-coefficient-comparison}
|a_B-\hat a_B|\leq\rho,
\qquad B\in\{A,A_L,A_R\}.
\end{equation}

The population and empirical variance-decomposition identities are
\begin{equation}\label{eq:three-cell-impurity-identities}
\Delta(A,j,b)=a_{A_L}^2+a_{A_R}^2,
\qquad
\widehat\Delta(A,j,b)
=\hat a_{A_L}^2+\hat a_{A_R}^2-\hat a_A^2.
\end{equation}
These identities also hold when the parent or a child has zero empirical
or population mass under the stated conventions.

Let $R\coloneqq\bigl(\hat a_{A_L}^2+\hat a_{A_R}^2\bigr)^{1/2}.$
By the reverse triangle inequality in $\R^2$ and
\eqref{eq:weighted-coefficient-comparison},
\begin{equation}\label{eq:impurity-vector-comparison}
\left|R-\Delta(A,j,b)^{1/2}\right|
\leq
\left((\hat a_{A_L}-a_{A_L})^2
+(\hat a_{A_R}-a_{A_R})^2\right)^{1/2}
\leq \sqrt{2}\rho.
\end{equation}
Moreover, $a_A=0$, so
$|\hat a_A|\leq\rho$ by
\eqref{eq:weighted-coefficient-comparison}. Since
$R^2=\widehat\Delta(A,j,b)+\hat a_A^2$, we have
\[
0\leq R-\widehat\Delta(A,j,b)^{1/2}
\leq |\hat a_A|
\leq\rho.
\]
Combining this bound with \eqref{eq:impurity-vector-comparison} proves
\[
\left|
\widehat\Delta(A,j,b)^{1/2}-\Delta(A,j,b)^{1/2}
\right|
\lesssim
K\left(\frac{r\log(nd)+u}{n}\right)^{1/2}.
\]
The event and all of its bounds are uniform over $r,A,j$, and $b$, so
the proof is complete.
\end{proof}

\begin{proof}[Proof of Lemma~\ref{lem:mid-child-mass}]
Apply Lemma~\ref{lemma:unif_conc_id} and work on its event. By
\eqref{eq:mid-child-mass-threshold} and the square-root concentration
bound, choosing $C=C(c)$ sufficiently large makes the population
impurity decrease larger than any prescribed universal multiple of $(r\log(nd)+u)/n$.

Suppose without loss of generality that $\P(A_L)\leq\P(A_R)$. Since
$\norm{f^*}_\infty\lesssim K$,
\[
\Delta(A,j,b)
=\frac{\P(A_L)\P(A_R)}{\P(A)}
\bigl(\E[f^*\mid\bX\in A_L]-\E[f^*\mid\bX\in A_R]\bigr)^2
\lesssim \P(A_L)K^2.
\]
Choosing the prescribed multiple to absorb the universal constant in
this upper bound gives
$\P(A_L)\ge c(r\log(nd)+u)/n$. The same conclusion holds for $A_R$
because $\P(A_R)\ge\P(A_L)$.
\end{proof}

\section{Proofs for the one-dimensional upper bound}
\label{app:one-dimensional-upper-proofs}

\subsection{Technical lemmas for Theorem~\ref{thm:cart-mid-one-dimensional}}
\label{sec:one-dimensional-technical-lemmas}

\begin{proof}[Proof of Lemma~\ref{lemma:LH-upper-1d}]
Fix $x\in A$ and write
$\kappa\coloneqq p_{\min}/p_{\max}\in(0,1]$,
$\alpha_0\coloneqq\alpha(x)$,
\[
g(t)\coloneqq f^*(t)-\E[f^*(X)\mid X\in A],
\qquad U_0\coloneqq|g(x)|.
\]
The bound is trivial if $U_0=0$, so assume $U_0>0$. The local
H\"older condition gives, for every $t\in A$,
\begin{equation}
\label{eq:LH-holder}
|f^*(t)-f^*(x)|\le L|t-x|^{\alpha_0}.
\end{equation}

\emph{Step 1: Local lower bound on $|g|$.}
The reverse triangle inequality and \eqref{eq:LH-holder} give, for $t\in A$,
\begin{equation}
\label{eq:LH-rev-tri}
|g(t)|\ge U_0-|g(t)-g(x)|
\ge U_0-L|t-x|^{\alpha_0}.
\end{equation}
Setting $r\coloneqq(U_0/(2L))^{1/\alpha_0}$,
\eqref{eq:LH-rev-tri} yields $|g(t)|\ge U_0/2$ on
$A_1\coloneqq A\cap(x-r,x+r)$.

\emph{Step 2: Lower bound on $\Vol(A_1)$ and variance.}
Writing $A=[a,b]$ and assuming without loss of generality that
$x-a\le b-x$,
\begin{equation*}
\Vol(A_1)
=\min(x-a,r)+\min(b-x,r)
\ge\min\bigl(r,\tfrac{\Vol(A)}2\bigr).
\end{equation*}
Since $\P(A)\le p_{\max}\Vol(A)$ and $p_X\ge p_{\min}$ on $A_1$,
\begin{equation}
\label{eq:LH-var-lb}
\Var[f^*(X)\mid X\in A]
\ge\frac{1}{\P(A)}\int_{A_1}g(t)^2p_X(t)\,dt
\ge\kappa\frac{U_0^2}{8}
\min\bigl(\tfrac{2r}{\Vol(A)},1\bigr).
\end{equation}

\emph{Step 3: Case analysis.}
\emph{Case (i): $2r\le\Vol(A)$.} Equation~\eqref{eq:LH-var-lb}
and the definition of $r$ give
\[
\Vol(A)\Var[f^*(X)\mid X\in A]
\ge
\frac{\kappa U_0^{(1+2\alpha_0)/\alpha_0}}
{4(2L)^{1/\alpha_0}}.
\]
Hence
\begin{align*}
U_0
&\leq
\Bigl(\frac{4}{\kappa}\Bigr)^{\!\alpha_0/(1+2\alpha_0)}\!\!
\cdot 2\,L^{1/(1+2\alpha_0)}
\bigl(\Vol(A)\Var[f^*(X)\mid X\in A]\bigr)^{\!\alpha_0/(1+2\alpha_0)} \\
&\leq
\frac{4}{\sqrt\kappa}\,L^{1/(1+2\alpha_0)}
\bigl(\Vol(A)\Var[f^*(X)\mid X\in A]\bigr)^{\!\alpha_0/(1+2\alpha_0)},
\end{align*}
where the last step uses $\alpha_0/(1+2\alpha_0)\leq 1/2$ and $\kappa\leq 1$.

\emph{Case (ii): $2r>\Vol(A)$.} Equation~\eqref{eq:LH-var-lb} gives
$U_0^2\le 8\Var[f^*(X)\mid X\in A]/\kappa$. Also,
\eqref{eq:LH-holder} gives
\begin{equation*}
\Var[f^*(X)\mid X\in A]
\le\E[(f^*(X)-f^*(x))^2\mid X\in A]
\le L^2\Vol(A)^{2\alpha_0}.
\end{equation*}
Factoring the variance into powers $1/(1+2\alpha_0)$ and
$2\alpha_0/(1+2\alpha_0)$ and using the last display yields
\begin{equation*}
U_0^2
\le\frac{8}{\kappa}\,L^{2/(1+2\alpha_0)}
\bigl(\Vol(A)\Var[f^*(X)\mid X\in A]\bigr)^{2\alpha_0/(1+2\alpha_0)},
\end{equation*}
which gives the same claimed order.

Both cases are dominated by $4\kappa^{-1/2}$. Combining this with
$\Vol(A)\le p_{\min}^{-1}\P(A)$ gives
\[
U_0
\lesssim_{p_{\min},p_{\max}}
L^{1/(1+2\alpha_0)}V(A)^{\alpha_0/(1+2\alpha_0)}.
\]
This is exactly \eqref{eq:LH-upper-1d}.
\end{proof}

\begin{proof}[Proof of Lemma~\ref{lemma:population-good-split-balance-1d}]
Let $A'$ be the parent interval and $(A',1,b)$ the competitive split.
Write $A_{\mathrm{sm}}$ and $A_{\mathrm{lg}}$ for its smaller and larger
children by length, and let $A^x$ be the child containing $x$. Set
$\alpha_0=\alpha(x)$, $\Delta=\Delta(A',1,b)$, and
\[
\theta_0\coloneqq\frac{\alpha_0}{1+2\alpha_0},
\qquad
r_0\coloneqq\frac{1}{1+2\alpha_0}.
\]
Since $\Delta_{\max}(A')\le V(A')$, we may take $\lambda\le1$.
By sufficient impurity decrease and the competitiveness hypothesis,
\[
V(A')\le \frac{2}{\lambda}\Delta.
\]

First suppose $A^x=A_{\mathrm{sm}}$. Write
$p=\P(A_{\mathrm{sm}})/\P(A')$ and
$\rho=\Vol(A_{\mathrm{sm}})/\Vol(A')$.
If $p\ge 1/3$, then density regularity gives
$\Vol(A_{\mathrm{sm}})\gtrsim_{p_{\min},p_{\max}}\Vol(A')$. Since
\[
\Delta\le V(A')\lesssim_{p_{\max}}L^2\Vol(A')^{1+2\alpha_0},
\]
we immediately obtain
\[
\Vol(A_{\mathrm{sm}})
\gtrsim
\lambda^{2\theta_0}
\bigl(L^{-2}\Delta\bigr)^{r_0}.
\]
It remains to consider $p<1/3$. The impurity identity and the tower identity give
\[
\Delta
=
\P(A')p(1-p)
\bigl(\mu_{\mathrm{sm}}-\mu_{\mathrm{lg}}\bigr)^2
=
\P(A')\frac{p}{1-p}
\bigl(\mu'-\mu_{\mathrm{sm}}\bigr)^2,
\]
where 
\[
\begin{aligned}
\mu'&=\E[f^*(X)\mid X\in A'],\\
\mu_{\mathrm{sm}}&=\E[f^*(X)\mid X\in A_{\mathrm{sm}}],\qquad
\mu_{\mathrm{lg}}=\E[f^*(X)\mid X\in A_{\mathrm{lg}}].
\end{aligned}
\]
Using $p/(1-p)\lesssim \rho$ and $(a+b)^2\le2(a^2+b^2)$ gives
\[
\Delta
\lesssim
\P(A')\rho
\Bigl[
\bigl(\E[f^*(X)\mid X\in A']-f^*(x)\bigr)^2
+
\bigl(f^*(x)-\E[f^*(X)\mid X\in A_{\mathrm{sm}}]\bigr)^2
\Bigr].
\]
Lemma~\ref{lemma:LH-upper-1d}, applied to $A'$ and
$A_{\mathrm{sm}}$, together with the inequality $V(A_{\mathrm{sm}})\le V(A')$
implies
\[
\bigl(\E[f^*(X)\mid X\in A']-f^*(x)\bigr)^2
+
\bigl(f^*(x)-\E[f^*(X)\mid X\in A_{\mathrm{sm}}]\bigr)^2
\lesssim
L^{2r_0}V(A')^{2\theta_0}
\lesssim
L^{2r_0}
\left(\frac{\Delta}{\lambda}\right)^{2\theta_0}.
\]
Combining the preceding displays and using
$\P(A')\rho\lesssim_{p_{\max}}\Vol(A_{\mathrm{sm}})$ gives
\[
\Vol(A^x)
=\Vol(A_{\mathrm{sm}})
\gtrsim
\lambda^{2\theta_0}
\bigl(L^{-2}\Delta\bigr)^{r_0}.
\]

It remains to consider $A^x=A_{\mathrm{lg}}$. Then
$\Vol(A_{\mathrm{lg}})\ge\Vol(A')/2$, and
\[
\Delta\le V(A').
\]
The local H\"older condition at $x$ on $A'$ gives
\[
\Var[f^*(X)\mid X\in A']
\le
\E[(f^*(X)-f^*(x))^2\mid X\in A']
\le
L^2\Vol(A')^{2\alpha_0}.
\]
Thus $\Delta\lesssim_{p_{\max}}L^2\Vol(A')^{1+2\alpha_0}$, and hence
\[
\Vol(A^x)
=\Vol(A_{\mathrm{lg}})
\ge \Vol(A')/2
\gtrsim
\lambda^{2\theta_0}
\bigl(L^{-2}\Delta\bigr)^{r_0}.
\]
\end{proof}

\section{Proof of the CART-MID upper bound}
\label{app:cart-mid-upper-proof}

\subsection{Intermediate results for CART-MID}
\label{sec::erroranalysismid}

For each fixed $\bx\in[0,1]^d$, let $A(\bx)$ denote the terminal cell of the CART-MID partition that contains $\bx$. Define the oracle leaf average on the realized terminal cell by
\begin{align}\label{equ::fpdeltax}
f_{\delta}(\bx)
:=
\E[f^*(\bX)\mid \bX \in A(\bx)].
\end{align}
The empirical estimator $\hat f_\delta(\bx)$ replaces this population average by the sample average over the same terminal cell. Thus
\begin{align}\label{equ::errordecomposition}
\bigl|\hat{f}_{\delta}(\bx)-f^*(\bx)\bigr|
\le
\bigl|f_{\delta}(\bx)-f^*(\bx)\bigr|
+
\bigl|\hat{f}_{\delta}(\bx)-f_{\delta}(\bx)\bigr|.
\end{align}
The two terms correspond exactly to the second stage of CART: the first is the bias from averaging over $A(\bx)$, and the second is the sampling fluctuation of the leaf average.

The proof has three ingredients. First, above the impurity noise floor, empirical split maximizers behave like population split maximizers. Second, these competitive splits preserve the sparse structure and keep the terminal side lengths compatible with the signal present at the terminal parent. Third, because CART-MID stops only when the empirical split signal is below $\delta$, sufficient impurity decrease, bias visibility, and pathwise competitive split-scale regularity convert the stopping rule into the terminal bias and volume bounds needed for local averaging. This is where the two stopping requirements meet: the impurity threshold that prevents noise-driven splits also identifies the bandwidth scale needed for the leaf average.

For any $\bx \in [0,1]^d$, let
$\mathcal{A}(\bx)=\{A^i(\bx)\}_{i=0}^{D(\bx)}$
be the ancestor cells on the path to $A(\bx)$, ordered from the root
$A^0(\bx)=[0,1]^d$ to the terminal cell $A^{D(\bx)}(\bx)=A(\bx)$.
The following proposition collects the regularity consequences used throughout the upper-bound proof.

\begin{proposition}[High-probability regularity of the CART-MID path]\label{pro::regularity}
Assume the conditions of Theorem~\ref{thm:main-thm-point-id-cart}. For any $u>0$, choose $\delta$ above the impurity concentration scale as in \eqref{equ::deltathm}. Then, with probability at least $1-e^{-u}$, the following statements hold simultaneously for all $\bx\in[0,1]^d$:
\begin{enumerate}
\item \textbf{Sparsity preservation.} For every $0\le i\le D(\bx)$, the cell $A^i(\bx)$ belongs to $\mathcal{A}_{d,s}$, and for every irrelevant coordinate $j\notin S$, $A_j^i(\bx)=[0,1]$.

\item \textbf{Empirical-population comparison.} For every nonterminal ancestor $A^i(\bx)$,
\[
C_1\,\widehat{\Delta}_{\max}\!\bigl(A^i(\bx)\bigr)
\le
\Delta_{\max}\!\bigl(A^i(\bx)\bigr)
\le
C_2\,\widehat{\Delta}_{\max}\!\bigl(A^i(\bx)\bigr),
\]
for universal constants $0<C_1<C_2<\infty$.

\item \textbf{Competitive split.} If $(\hat j,\hat b)$ is the split selected at a nonterminal ancestor $A^i(\bx)$, then for a constant $C_{\mathrm{rel}}\in[1/2,1)$,
\[
\Delta\bigl(A^i(\bx),\hat j,\hat b\bigr)
\ge
C_{\mathrm{rel}}\,\Delta_{\max}\bigl(A^i(\bx)\bigr).
\]
In particular,
\[
\Delta_{\max}\bigl(A^i(\bx),\hat j\bigr)
\ge
C_{\mathrm{rel}}\,\Delta_{\max}\bigl(A^i(\bx)\bigr).
\]

\item \textbf{Path competitiveness.} Every nonterminal split on the path is competitive, so the pathwise competitive split-scale condition can be applied to the realized root-to-leaf path.
\end{enumerate}
\end{proposition}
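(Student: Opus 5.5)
We work throughout on the event of Lemma~\ref{lemma:unif_conc_id}, applied with $r=s$ (more precisely with $r=\min(d,s+1)$ to cover the children) and confidence parameter $u+\log 2$, intersected with the event of Lemma~\ref{lem:mid-child-mass}. On this event, every $A\in\mathcal A_{d,s}$ and every split $(A,j,b)$ satisfy
\[
\bigl|\Delta(A,j,b)^{1/2}-\widehat\Delta(A,j,b)^{1/2}\bigr|\le C_0\varepsilon_n,
\qquad
\varepsilon_n:=K\paren*{\frac{s\log(nd)+u}{n}}^{1/2}.
\]
Taking suprema gives the same bound for $\Delta_{\max}(A,j)$ and $\Delta_{\max}(A)$, and for $\Delta_{\max}(A,S)$ and $\Delta_{\max}(A,S^c)$. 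We choose $\delta=C\vartheta^{-2}\varepsilon_n^2$ with $C$ large compared with $C_0^2$. The event does not depend on $\bx$, so all conclusions are uniform in $\bx$.

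The argument is an induction down the root-to-leaf path, and the key point is that claim~1 (sparsity preservation) must be proved jointly with claims~2--3. The concentration bound at scale $s\log(nd)$ is only available for cells in $\mathcal A_{d,s}$, so membership must be propagated. The root lies in $\mathcal A_S$. Suppose $A=A^i(\bx)\in\mathcal A_S$ is nonterminal. Then $\widehat\Delta_{\max}(A)\ge\delta$, hence $\Delta_{\max}(A)^{1/2}\ge\delta^{1/2}-C_0\varepsilon_n\ge\tfrac12\delta^{1/2}$. Combining this with $\widehat\Delta_{\max}(A)^{1/2}\ge\delta^{1/2}\ge\sqrt C\,\varepsilon_n$ gives
\[
\Delta_{\max}(A)^{1/2}\ \text{ and }\ \widehat\Delta_{\max}(A)^{1/2}\ \text{ differ by at most } (C_0/\sqrt C)\,\widehat\Delta_{\max}(A)^{1/2}.
\]
This yields claim~2 with universal constants once $C$ is large.

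For the routing step, Assumption~\ref{assum:sparse-routing} gives
\[
\Delta_{\max}(A,S^c)^{1/2}\le(1-\vartheta)\Delta_{\max}(A,S)^{1/2}.
\]
Since $\Delta_{\max}(A)=\Delta_{\max}(A,S)$, this means
\[
\Delta_{\max}(A,S)^{1/2}-\Delta_{\max}(A,S^c)^{1/2}\ge\vartheta\,\Delta_{\max}(A)^{1/2}\ge\tfrac{\vartheta}{2}\delta^{1/2}=\tfrac{\sqrt C}{2}\varepsilon_n.
\]
This gap exceeds $2C_0\varepsilon_n$, so the empirical maximum over $S$ strictly beats every empirical split on $S^c$. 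Hence $\hat j\in S$, and both children remain in $\mathcal A_S\subseteq\mathcal A_{d,s}$. This is exactly where the factor $\vartheta^{-2}$ in $\delta$ enters. I expect this to be the main obstacle, because it is circular: concentration at level $s$ needs $A\in\mathcal A_{d,s}$, and membership needs correct routing, so the induction must be ordered carefully.

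For claim~3, we use empirical optimality of $(\hat j,\hat b)$:
\[
\Delta(A,\hat j,\hat b)^{1/2}\ge\widehat\Delta(A,\hat j,\hat b)^{1/2}-C_0\varepsilon_n\ge\widehat\Delta_{\max}(A)^{1/2}-C_0\varepsilon_n\ge\Delta_{\max}(A)^{1/2}-2C_0\varepsilon_n.
\]
Since $\Delta_{\max}(A)^{1/2}\ge\tfrac12\sqrt C\,\varepsilon_n$, the right-hand side is at least $(1-4C_0/\sqrt C)\Delta_{\max}(A)^{1/2}$. Choosing $C$ large makes the squared factor $C_{\mathrm{rel}}\ge1/2$. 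The inequality $\Delta_{\max}(A,\hat j)\ge\Delta(A,\hat j,\hat b)$ is immediate.

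Claim~4 then follows directly: every nonterminal transition on the path is a competitive split along a relevant coordinate between cells of $\mathcal A_S$, which is precisely the hypothesis of Assumption~\ref{assum:reliable-split-balance}. Terminal cells with no admissible split need no treatment.
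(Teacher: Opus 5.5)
Your proof is correct and follows essentially the same route as the paper's: an induction down the root-to-leaf path on the event of Lemma~\ref{lemma:unif_conc_id}, with the square-root concentration passed to maxima, the sparsity-separation gap dominating twice the concentration error because $\delta\propto\vartheta^{-2}$, and empirical optimality giving $\Delta(A,\hat j,\hat b)^{1/2}\ge\Delta_{\max}(A)^{1/2}-2C_0\varepsilon_n$. The differences are cosmetic. You argue the routing step directly via a strict empirical gap rather than by contradiction. Your $C_{\mathrm{rel}}$ is a universal constant, whereas the paper uses $\max(1-\vartheta,1/\sqrt2)^2$. Finally, intersecting with the event of Lemma~\ref{lem:mid-child-mass}, and the remark about choosing $r$, are unnecessary, since Lemma~\ref{lemma:unif_conc_id} is already uniform over $r$ and children.
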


The proof of Proposition~\ref{pro::regularity} is the main technical place where high dimensionality enters. The sparsity-separation condition separates relevant from irrelevant population gains, while the concentration inequalities ensure that this separation is preserved by the empirical maximizer whenever the node is split. In the proof below this argument is written as a stable-maximizer calculation; the displayed constants only serve to keep the empirical error below the separation gap.

Once these pathwise regularity properties hold, the terminal cell has the right local size. The stopping rule gives
$\widehat\Delta_{\max}(A(\bx))\le\delta$, and concentration turns this into
$\Delta_{\max}(A(\bx))\lesssim\delta$. Sufficient impurity decrease then bounds the residual variation inside the terminal cell. Conversely, the terminal-parent split has population signal at least a constant multiple of $\delta$, and all earlier splits on the path are competitive. The pathwise competitive split-scale assumption can then be applied to the realized path. This gives
\[
\len_j(A(\bx))
\gtrsim
L^{-1/\alpha_j(x_j)}
\left(\frac{\delta}{\Vol(A(\bx))}\right)^{1/(2\alpha_j(x_j))}
\]
for every coordinate split along the path. Multiplying over these coordinates gives
\[
\Vol(A(\bx))
\gtrsim
\omega^{2s/(s+2)}
L^{-2s/(2\bar\alpha_S(\bx)+s)}
\delta^{s/(2\bar\alpha_S(\bx)+s)}.
\]
This terminal-volume lower bound is the bandwidth statement behind the rate. In the theorem statement we use the weaker but simpler consequence obtained by bounding $\omega^{-s/(s+2)}$ by $\omega^{-1}$.
Together with bounded density, it verifies the mass condition in the bias-visibility assumption, so the residual-variation bound becomes the desired pointwise bias bound.

\begin{proposition}[Approximation error bound]\label{pro::approximation}
Assume the conditions of Theorem~\ref{thm:main-thm-point-id-cart}. Let $f_{\delta}(\bx)$ be defined as in \eqref{equ::fpdeltax}, with $\delta$ chosen as in \eqref{equ::deltathm}. 
Then, for any $u>0$, there exists a constant $C_5>0$, depending only on $p_{\min}$ and $\alpha_{\min}$, such that, with probability at least $1-e^{-u}$,
\[
\bigl|f_{\delta}(\bx)-f^*(\bx)\bigr|
\le
C_5\,
\zeta\,
L^{s/(2\bar\alpha_S(\bx)+s)}
\left(\frac{\delta}{\lambda}\right)^{
\bar\alpha_S(\bx)/(2\bar\alpha_S(\bx)+s)},
\quad \text{for all } \bx\in[0,1]^d.
\]
\end{proposition}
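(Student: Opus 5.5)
We work on the event of Proposition~\ref{pro::regularity}, intersected with the events of Lemmas~\ref{lemma:unif_conc_id} and~\ref{lem:mid-child-mass} at sparsity level $r=s$. Running each of these with confidence parameter $u+\log 3$ and taking a union bound gives probability at least $1-e^{-u}$. Fix $\bx$ and write $A=A(\bx)$. By sparsity preservation, $A\in\mathcal A_S$, so SID, bias visibility and pathwise split-scale regularity all apply to $A$ and to its ancestors. The plan has three steps: an upper bound on $V(A)$ from the terminal stopping certificate, a lower bound on $\P(A)$ from the parent certificate, and then Assumption~\ref{assum:variance-oscillation} to turn these into the bias bound.

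\emph{Step 1: residual variation.} Since $A$ is terminal, $\widehat\Delta_{\max}(A)<\delta$. Lemma~\ref{lemma:unif_conc_id} applies uniformly over $\mathcal A_{d,s}$, so
\[
\Delta_{\max}(A)^{1/2}\le \delta^{1/2}+CK\bigl((s\log(nd)+u)/n\bigr)^{1/2}\lesssim\delta^{1/2},
\]
where the last step uses the choice \eqref{equ::deltathm} and $\vartheta\le1$. SID then gives $V(A)\le\Delta_{\max}(A)/\lambda\lesssim\delta/\lambda$.

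\emph{Step 2: terminal mass.} If $A$ is the root, then $\P(A)=1$. In that case the mass condition in Assumption~\ref{assum:variance-oscillation} holds trivially because $\chi\le1$ and $L^{-2}V(A)\le1$, since $V\le\|f^*\|_\infty^2\lesssim K^2$ and $L\ge1$ (constants may need adjusting here). Otherwise, the parent $A'$ was split, with empirical signal at least $\delta$. By items~2 and~3 of Proposition~\ref{pro::regularity}, the selected split satisfies $\Delta_{\mathrm{pa}}\ge C_{\mathrm{rel}}\Delta_{\max}(A')\gtrsim\delta$, and every split on the path is a competitive split along a relevant coordinate. Assumption~\ref{assum:reliable-split-balance} therefore applies to the realized path. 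For each split coordinate $j$,
\[
\len_j(A)\ge\omega L^{-1/\alpha_j(x_j)}\bigl(\Delta_{\mathrm{pa}}/\Vol(A)\bigr)^{1/(2\alpha_j(x_j))}.
\]
Coordinates in $S$ that are never split have $\len_j=1$. Multiplying over the split coordinates and solving for $\Vol(A)$ yields the volume bound stated before the proposition:
\[
\Vol(A)\gtrsim\omega^{2s/(s+2)}\bigl(L^{-2}\delta\bigr)^{s/(2\bar\alpha_S(\bx)+s)}.
\]
This is the step I expect to be the main obstacle, for two reasons. Unsplit coordinates enter the product with exponent $0$ rather than the harmonic-mean weight. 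The solved exponent therefore has to be compared with $s/(2\bar\alpha_S+s)$ using $\Delta_{\mathrm{pa}}\lesssim1$ and $\Vol\le1$, so that a smaller exponent only helps. I would also use $\Delta_{\mathrm{pa}}/\Vol(A)\lesssim1$ to handle the direction of the inequalities.

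\emph{Step 3: bias.} Next, convert $\delta$ into $V(A)$ through Step~1. Since $V(A)\lesssim\delta/\lambda$,
\[
\delta\gtrsim\lambda V(A), \qquad\text{so}\qquad \delta^{s/(2\bar\alpha_S+s)}\gtrsim\lambda^{s/(2\bar\alpha_S+s)}V(A)^{s/(2\bar\alpha_S+s)}.
\]
Note that $\lambda^{s/(2\bar\alpha_S+s)}\ge\lambda^{s/(2\alpha_{\min}+s)}$ because $\lambda\le1$. Using bounded density $\P(A)\ge p_{\min}\Vol(A)$, we get
\[
\P(A)\ge c_{p_{\min},\alpha_{\min}}\,\omega^{2s/(s+2)}\lambda^{s/(2\alpha_{\min}+s)}\bigl(L^{-2}V(A)\bigr)^{s/(2\bar\alpha_S(\bx)+s)}.
\]
The compatibility hypothesis on $\chi$ in Theorem~\ref{thm:main-thm-point-id-cart} is exactly what makes the right-hand side at least $\chi(L^{-2}V(A))^{s/(2\bar\alpha_S+s)}$. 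Assumption~\ref{assum:variance-oscillation} therefore applies and gives
\[
|f_\delta(\bx)-f^*(\bx)|\le\zeta L^{s/(2\bar\alpha_S+s)}V(A)^{\bar\alpha_S/(2\bar\alpha_S+s)}.
\]
Inserting $V(A)\lesssim\delta/\lambda$ from Step~1 gives the claimed bound with $C_5$ depending only on $p_{\min}$ and $\alpha_{\min}$. The events used are uniform over cells, so the bound holds simultaneously for all $\bx$.
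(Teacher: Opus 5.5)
Your proposal is correct and follows the paper's proof step for step. The terminal certificate plus SID bounds $V(A)$; your Step~2 is exactly Lemma~\ref{lem::convarlowerbound}, which the paper cites rather than re-derives; and your explicit conversion $\delta\gtrsim\lambda V(A)$ with $\lambda^{q}\ge\lambda^{s/(2\alpha_{\min}+s)}$ spells out what the paper compresses into ``the compatibility condition on $\chi$ implies the mass precondition.''

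Two small cleanups. The exponent comparison is easiest if you substitute $\Delta_{\mathrm{pa}}\gtrsim\delta$ before solving and then use $L^{-2}\delta\le1$ (from $\delta\le1$, $L\ge1$), rather than $\Delta_{\mathrm{pa}}\lesssim1$. The root case should be handled the same way, via $\Vol(A)=1\ge(L^{-2}\delta)^{s/(2\bar\alpha_S(\bx)+s)}$, because your claim $L^{-2}V(A)\le1$ does not follow from $V\lesssim K^2$.
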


\begin{proposition}[Estimation error bound]\label{pro::estimation}
Assume the conditions of Theorem~\ref{thm:main-thm-point-id-cart}. Let $\hat f_\delta$ be the estimator defined in Section~\ref{section:early-CART}, and let $f_\delta$ be defined in \eqref{equ::fpdeltax}, with $\delta$ chosen as in \eqref{equ::deltathm}. 
Then, for any $u>0$, there exists a constant $C_6>0$, depending only on $p_{\min}$ and $\alpha_{\min}$, such that, with probability at least $1-e^{-u}$,
\[
\bigl|\hat{f}_{\delta}(\bx) - f_{\delta}(\bx)\bigr|
\le 
C_6 K
\omega^{-1}
L^{s/(2\bar\alpha_S(\bx)+s)}
\left(
\frac{s\log(nd)+u}
{n\,\delta^{s/(2\bar\alpha_S(\bx)+s)}}
\right)^{1/2},
\quad \text{for all } \bx \in [0,1]^d.
\]
\end{proposition}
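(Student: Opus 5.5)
We work on the common high-probability event of Proposition~\ref{pro::regularity}, Lemma~\ref{lemma:unif_conc_cell_ave}, and Lemma~\ref{lem:mid-child-mass}, each applied with confidence parameter $u+\log 3$ and complexity $r=s$. The union bound costs only a constant in $u$. Sparsity preservation puts every ancestor and the terminal cell $A=A(\bx)$ in $\mathcal A_S\subseteq\mathcal A_{d,s}$, so all concentration bounds hold at scale $\varepsilon_n^2:=K^2(s\log(nd)+u)/n$, uniformly in $\bx$. The conclusion will then follow from two facts: the leaf-average deviation $\P(A)^{1/2}|\hat f_\delta(\bx)-f_\delta(\bx)|\lesssim\varepsilon_n$, and a lower bound on $\Vol(A)$, combined through $\P(A)\ge p_{\min}\Vol(A)$.

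\textbf{Step 1: mass condition.} If $A$ is the root, then Lemma~\ref{lemma:unif_conc_cell_ave} applies directly and $\Vol(A)=1$, so the bound is trivial because $\delta\le1$. Otherwise the parent $A'$ was split with $\widehat\Delta(A',\hat j,\hat b)\ge\delta\ge C\varepsilon_n^2$. We choose the universal constant in \eqref{equ::deltathm} large enough that Lemma~\ref{lem:mid-child-mass} gives $\P(A)\ge c(s\log(nd)+u)/n$, with $c$ the constant required by Lemma~\ref{lemma:unif_conc_cell_ave}; the factor $\vartheta^{-2}\ge1$ only helps here. Hence
\[
|\hat f_\delta(\bx)-f_\delta(\bx)|\lesssim \varepsilon_n\,\P(A)^{-1/2}\le \varepsilon_n\,(p_{\min}\Vol(A))^{-1/2}.
\]

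\textbf{Step 2: terminal volume.} By Proposition~\ref{pro::regularity}(iii)--(iv), every split on the realized path is competitive and lies along a relevant coordinate. By Lemma~\ref{lemma:unif_conc_id}, $\Delta_{\mathrm{pa}}^{1/2}\ge\delta^{1/2}-O(\varepsilon_n)$, so $\Delta_{\mathrm{pa}}\gtrsim\delta$. Assumption~\ref{assum:reliable-split-balance} then gives, for each coordinate $j\in S$ split along the path,
\[
\len_j(A)\ge\omega L^{-1/\alpha_j(x_j)}\bigl(\delta/\Vol(A)\bigr)^{1/(2\alpha_j(x_j))},
\]
up to constants. For $j\in S$ never split, $\len_j(A)=1$. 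We write $V:=\Vol(A)$ and multiply over the split coordinates.

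\textbf{Step 3: unsplit coordinates (main obstacle).} The bound $1\ge(\ldots)$ for an unsplit coordinate is not automatic. We will try to show that $L^{-1/\alpha_j}(\delta/V)^{1/(2\alpha_j)}\lesssim1$ whenever it is needed, or alternatively reduce to the split set $T\subseteq S$ and check that the exponent only improves. The route we will try first uses $\delta\lesssim\Delta_{\mathrm{pa}}\le V(A')$ together with Hölder smoothness, which gives $\Delta_{\mathrm{pa}}/\P(A')\lesssim L^2\sum_j\len_j(A')^{2\alpha_j}$. This is delicate because the exponents $\alpha_j(x_j)$ vary.

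\textbf{Step 4: solve for $V$.} Assuming every coordinate in $S$ is controlled, the product gives
\[
V\ge\omega^{s}L^{-s/\bar\alpha}(\delta/V)^{s/(2\bar\alpha)}.
\]
Solving yields $V^{(2\bar\alpha+s)/(2\bar\alpha)}\ge\omega^sL^{-s/\bar\alpha}\delta^{s/(2\bar\alpha)}$, that is,
\[
V\ge\omega^{2s\bar\alpha/(2\bar\alpha+s)}L^{-2s/(2\bar\alpha+s)}\delta^{s/(2\bar\alpha+s)}.
\]
Since $2\bar\alpha/(2\bar\alpha+s)\le2/(s+2)$ and $\omega\le1$, we have $\omega^{s\bar\alpha/(2\bar\alpha+s)}\ge\omega$, so $V^{-1/2}\le\omega^{-1}L^{s/(2\bar\alpha+s)}\delta^{-s/(2(2\bar\alpha+s))}$. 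Here $\bar\alpha=\bar\alpha_S(\bx)$. Substituting into Step 1 gives
\[
|\hat f_\delta(\bx)-f_\delta(\bx)|\lesssim_{p_{\min}}K\,\omega^{-1}L^{s/(2\bar\alpha+s)}\left(\frac{s\log(nd)+u}{n\,\delta^{s/(2\bar\alpha+s)}}\right)^{1/2},
\]
which is the claim, uniformly in $\bx$ on the event.
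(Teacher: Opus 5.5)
Your architecture matches the paper's. The MID certificate and Lemma~\ref{lem:mid-child-mass} put $A(\bx)$ in the regime of Lemma~\ref{lemma:unif_conc_cell_ave}, bounded density converts mass to volume, and a terminal-volume lower bound from Assumption~\ref{assum:reliable-split-balance} is substituted in. Steps 1 and 2 and the algebra of Step 4 are fine.

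The gap is Step 3. You leave it open, yet Step 4 is written "assuming every coordinate in $S$ is controlled". Assumption~\ref{assum:reliable-split-balance} only covers coordinates actually split on the path, so taking the product over all of $S$ is not justified.

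The route you say you will try first also fails as stated. H\"older smoothness bounds $\Delta_{\mathrm{pa}}/\P(A')$. For an unsplit coordinate you would instead need $\delta/\Vol(A)\lesssim L^2$ (up to $\omega$), and since competitive splits need not be balanced, $\P(A')/\Vol(A)$ is unbounded. You could patch this via $\Delta_{\mathrm{pa}}\le\P(A)\Osc(A')^2\le\P(A)L^2s^2$. The constant would then pick up factors of $s$ and $p_{\max}$, losing the claimed dependence on $p_{\min},\alpha_{\min}$ only.

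The fix is the alternative you mention in passing; it is exactly the paper's Lemma~\ref{lem::convarlowerbound}, and it needs no smoothness of $f^*$.
\begin{itemize}
\item Let $\mathcal I\subseteq S$ be the set of split coordinates. Then $\Vol(A)=\prod_{j\in\mathcal I}\len_j(A)$. Multiply the per-coordinate bounds over $\mathcal I$ only, and solve for $\Vol(A)$:
\[
\Vol(A)\ge (c\,\omega)^{2|\mathcal I|/(2+H_{\mathcal I})}\,L^{-2H_{\mathcal I}/(2+H_{\mathcal I})}\,\delta^{H_{\mathcal I}/(2+H_{\mathcal I})},\qquad H_{\mathcal I}:=\sum_{j\in\mathcal I}\frac{1}{\alpha_j(x_j)}.
\]
\item The map $H\mapsto H/(2+H)$ is increasing, and $H_{\mathcal I}\le H_S=s/\bar\alpha_S(\bx)$. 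So the exponents of $\delta$ and $L$ are at most the target values, e.g.\ $s/(2\bar\alpha_S(\bx)+s)$ for $\delta$. Using $\delta\le1$ and $L\ge1$, those factors can only increase.
\item Since $\alpha_j\le1$, we have $H_{\mathcal I}\ge|\mathcal I|$. Hence the $\omega$-exponent is at most $2s/(s+2)$, and $\omega\le1$ gives the factor $\omega^{2s/(s+2)}$. After taking $\Vol(A)^{-1/2}$ you use $\omega^{-s/(s+2)}\le\omega^{-1}$.
\item The constant $c$ comes from $\Delta_{\mathrm{pa}}\gtrsim\delta$ raised to $1/(2\alpha_j)$. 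Its contribution is at least $c^2$, which is where the $\alpha_{\min}$-dependence of $C_6$ enters.
\item The case $\mathcal I=\varnothing$ is the root, which you already handled.
\end{itemize}
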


\subsection{Proof of the CART-MID upper bound}

\begin{proof}[Proof of Theorem~\ref{thm:main-thm-point-id-cart}]
Let $u_0:=u+\log 3$.
Choose the fixed universal constant in \eqref{equ::deltathm} sufficiently large so that the lower-bound requirements on $\delta$ used below also hold with $u_0$ in place of $u$.
Let $\mathcal{E}_1$ denote the event in Lemma~\ref{lemma:unif_conc_id},
let $\mathcal{E}_2$ be the event in
Lemma~\ref{lemma:unif_conc_cell_ave},
and let $\mathcal E_3$ be
the event in Lemma~\ref{lem:mid-child-mass}, each applied with $u_0$.
The constant in \eqref{equ::deltathm} is also chosen large enough for
Lemma~\ref{lem:mid-child-mass}, with $c$ equal to the constant required
by Lemma~\ref{lemma:unif_conc_cell_ave}. By a union bound,
\[
\P\{\mathcal{E}_1\cap\mathcal{E}_2\cap\mathcal E_3\}
\ge 1-e^{-u}.
\]
We work on this intersection.

The proofs below establish the deterministic approximation bound in
Proposition~\ref{pro::approximation} on $\mathcal E_1$, and the
deterministic estimation bound in Proposition~\ref{pro::estimation} on
$\mathcal E_1\cap\mathcal E_2\cap\mathcal E_3$.
Hence both bounds hold simultaneously on
the present event.
Using the decomposition \eqref{equ::errordecomposition},
\begin{align*}
\bigl|\hat{f}_{\delta}(\bx)-f^*(\bx)\bigr|
&\le 
\bigl|f_{\delta}(\bx)-f^*(\bx)\bigr|
+
\bigl|\hat{f}_{\delta}(\bx)-f_{\delta}(\bx)\bigr| \\
&\le
C_5\,
\zeta\,
L^{s/(2\bar\alpha_S(\bx)+s)}
\left(\frac{\delta}{\lambda}\right)^{
\bar\alpha_S(\bx)/(2\bar\alpha_S(\bx)+s)}
\\
&\quad+
C_6 K
\omega^{-1}
L^{s/(2\bar\alpha_S(\bx)+s)}
\left(
\frac{s\log(nd)+u_0}{n}
\right)^{1/2}
\delta^{-s/(2(2\bar\alpha_S(\bx)+s))},
\end{align*}
where $C_5$ and $C_6$ are the constants specified in Propositions~\ref{pro::approximation} and~\ref{pro::estimation}.
Since $\Delta_{\max}(A)\le V(A)$, we may take $\lambda\le1$; also $\omega\le1$, $\zeta\ge1$, and $\vartheta\le1$. Thus the estimation term is bounded by the same rate as the approximation term after substituting the chosen $\delta$.
Absorbing the additive $\log 2$ into the universal constant yields \eqref{eq:main-thm-point-eid-cart}.
\end{proof}

\subsection{Proofs of the CART-MID intermediate results}

\begin{proof}[Proof of Proposition~\ref{pro::regularity}]
Let $\mathcal E_1$ be the event in Lemma~\ref{lemma:unif_conc_id}
with confidence parameter $u$. Then $\P\{\mathcal E_1\}\ge 1-e^{-u}$.
Throughout the proof, we work on this event. Set
\[
r_n:=
C K
\sqrt{\frac{s\log(nd)+u}{n}},
\qquad
q:=\max(1-\vartheta,1/\sqrt{2}),
\qquad
K_{\mathrm{sep}}:=1+\frac{2}{1-q}.
\]
Set $C_{\mathrm{rel}}:=q^2$, so that $C_{\mathrm{rel}}\in[1/2,1)$.
Since $K_{\mathrm{sep}}\lesssim\vartheta^{-1}$, the choice of the universal
constant in \eqref{equ::deltathm} gives
$\sqrt{\delta}\ge K_{\mathrm{sep}} r_n$. 

We first record a simple consequence of Lemma~\ref{lemma:unif_conc_id}.
For any coordinate set $S\subseteq[d]$, recall that $
\Delta_{\max}(A,S)=\max_{j\in S}\sup_{b\in[0,1]}\Delta(A,j,b)$ and $
\widehat\Delta_{\max}(A,S)=
\max_{j\in S}\sup_{b\in[0,1]}\widehat\Delta(A,j,b)$ as in Section~\ref{sec:impurity-decrease}.
Since the square-root concentration holds uniformly over all $j$ and
$b$, it also holds after taking the maximum:
\begin{align}\label{equ::root_conc_max}
\left|
\Delta_{\max}(A,S)^{1/2}
-
\widehat\Delta_{\max}(A,S)^{1/2}
\right|
\le r_n .
\end{align}
Indeed, this follows by applying the uniform bound at a maximizer of
one side and then reversing the roles of the two quantities.

\smallskip
\noindent
\textit{Proof of Claim $(i)$:}
Fix $\bx\in[0,1]^d$. We prove the claim by induction on $i$.
For $i=0$, $A^0(\bx)=[0,1]^d\in\mathcal A_{d,s}$, and
$A^0_j(\bx)=[0,1]$ for all $j\notin S$.

Assume the claim holds for some $i\le D(\bx)-1$, and write
$A=A^i(\bx)$. Suppose, for contradiction, that the split at $A$ is made
along an irrelevant coordinate $\hat j\notin S$. Then $
\widehat\Delta_{\max}(A,S^c)=\widehat\Delta_{\max}(A)\ge \delta$,
and by empirical optimality,
$\widehat\Delta_{\max}(A,S)\le \widehat\Delta_{\max}(A,S^c)$.

By the root concentration and the separation condition,
\[
\widehat\Delta_{\max}(A,S)^{1/2}
\ge
\Delta_{\max}(A,S)^{1/2}-r_n,
\]
whereas
\[
\widehat\Delta_{\max}(A,S^c)^{1/2}
\le
\Delta_{\max}(A,S^c)^{1/2}+r_n
\le
q\,\Delta_{\max}(A,S)^{1/2}+r_n.
\]
Combining these two displays with
$\widehat\Delta_{\max}(A,S)\le\widehat\Delta_{\max}(A,S^c)$ gives
\[
(1-q)\Delta_{\max}(A,S)^{1/2}\le 2r_n.
\]
Therefore
\[
\widehat\Delta_{\max}(A,S^c)^{1/2}
\le
\frac{2r_nq}{1-q}+r_n
=
\frac{1+q}{1-q}r_n
<
K_{\mathrm{sep}} r_n
\le
\sqrt{\delta},
\]
which contradicts $\widehat\Delta_{\max}(A,S^c)\ge\delta$.
Hence the selected coordinate must lie in $S$.

Consequently, $A^{i+1}(\bx)\in\mathcal A_{d,s}$. Moreover, by the
induction hypothesis, $A_j^i(\bx)=[0,1]$ for all $j\notin S$, and since
the split occurs along a coordinate in $S$, we also have
$A_j^{i+1}(\bx)=[0,1]$ for all $j\notin S$. This completes the induction.

\smallskip
\noindent
\textit{Proof of Claim $(ii)$:}
Fix a nonterminal ancestor $A=A^i(\bx)$, $0\le i\le D(\bx)-1$.
Since $A$ is split, $\widehat\Delta_{\max}(A)\ge\delta$. The root
concentration for the maximum gives
\[
\Delta_{\max}(A)^{1/2}
\ge
\widehat\Delta_{\max}(A)^{1/2}-r_n
\ge
\left(1-\frac1{K_{\mathrm{sep}}}\right)
\widehat\Delta_{\max}(A)^{1/2},
\]
where we used $\widehat\Delta_{\max}(A)^{1/2}\ge\sqrt\delta\ge K_{\mathrm{sep}} r_n$.
Similarly,
\[
\Delta_{\max}(A)^{1/2}
\le
\widehat\Delta_{\max}(A)^{1/2}+r_n
\le
\left(1+\frac1{K_{\mathrm{sep}}}\right)
\widehat\Delta_{\max}(A)^{1/2}.
\]
Squaring the two inequalities yields
\[
\left(1-\frac1{K_{\mathrm{sep}}}\right)^2\widehat\Delta_{\max}(A)
\le
\Delta_{\max}(A)
\le
\left(1+\frac1{K_{\mathrm{sep}}}\right)^2\widehat\Delta_{\max}(A).
\]
Thus Claim $(ii)$ holds with
\[
C_1:=\left(1-\frac1{K_{\mathrm{sep}}}\right)^2,
\qquad
C_2:=\left(1+\frac1{K_{\mathrm{sep}}}\right)^2.
\]

\smallskip
\noindent
\textit{Proof of Claim $(iii)$:}
Let $(\hat j,\hat b)$ be the empirical split selected at a nonterminal
ancestor $A=A^i(\bx)$.

By the root concentration at the selected split,
\[
\Delta(A,\hat j,\hat b)^{1/2}
\ge
\widehat\Delta(A,\hat j,\hat b)^{1/2}-r_n.
\]
On the other hand, applying the same concentration at a population
maximizer of $\Delta_{\max}(A)$ gives
\[
\Delta_{\max}(A)^{1/2}\le \widehat\Delta(A,\hat j,\hat b)^{1/2}+r_n,
\]
and hence $\widehat\Delta(A,\hat j,\hat b)^{1/2}\ge \Delta_{\max}(A)^{1/2}-r_n$. Therefore
\[
\Delta(A,\hat j,\hat b)^{1/2}
\ge
\Delta_{\max}(A)^{1/2}-2r_n.
\]
Since $\widehat\Delta(A,\hat j,\hat b)\ge\delta$ and $\Delta_{\max}(A)^{1/2}\ge \widehat\Delta(A,\hat j,\hat b)^{1/2}-r_n$, we have
\[
\Delta_{\max}(A)^{1/2}
\ge
\sqrt{\delta}-r_n
\ge
(K_{\mathrm{sep}}-1)r_n
=
\frac{2r_n}{1-q}.
\]
Thus $2r_n\le (1-q)\Delta_{\max}(A)^{1/2}$, and consequently $
\Delta(A,\hat j,\hat b)^{1/2}
\ge
q\Delta_{\max}(A)^{1/2}$.
Squaring gives the stronger statement $
\Delta(A,\hat j,\hat b)
\ge
C_{\mathrm{rel}}\Delta_{\max}(A)$.
Since $\Delta_{\max}(A,\hat j)\ge \Delta(A,\hat j,\hat b)$, this implies
\[
\Delta_{\max}(A,\hat j)
\ge
C_{\mathrm{rel}}\Delta_{\max}(A),
\]
which proves Claim $(iii)$.

\smallskip
\noindent
\textit{Proof of Claim $(iv)$:}
Fix a nonterminal ancestor $A=A^i(\bx)$, and let $(\hat j,\hat b)$ be the selected split. Claim $(iii)$ gives
\[
\Delta(A,\hat j,\hat b)\ge C_{\mathrm{rel}}\Delta_{\max}(A).
\]
Since $C_{\mathrm{rel}}\ge1/2$, every selected split on the path is competitive. This proves Claim $(iv)$ and completes the proof.
\end{proof}

\begin{lemma}\label{lem::convarlowerbound}
Assume the conditions of Theorem~\ref{thm:main-thm-point-id-cart}.
Let $\delta$ be chosen as in \eqref{equ::deltathm}.
Then for any $u>0$, there exists a constant $C_4'>0$, depending only on $\alpha_{\min}$, such that, with probability at least $1-e^{-u}$, the following holds
simultaneously for all $\bx\in[0,1]^d$:
\[
\Vol(A(\bx))
\;\ge\;
C_4'\,
\omega^{2s/(s+2)}
L^{-2s/(2\bar\alpha_S(\bx)+s)}\,
\delta^{s/(2\bar\alpha_S(\bx)+s)},
\]
where $\bar\alpha_S(\bx)\coloneqq \paren*{(1/s)\sum_{j\in S}(1/\alpha_j(x_j))}^{-1}$ for $\bx=(x_1,\ldots,x_d)$.
\end{lemma}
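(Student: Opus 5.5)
We work on the event of Proposition~\ref{pro::regularity}, intersected with the event of Lemma~\ref{lemma:unif_conc_id}, both applied with confidence parameter $u$ (absorbing the union-bound constant into $u$ as elsewhere). Fix $\bx$ and write $A=A(\bx)$. If $A$ is the root, then $\Vol(A)=1$, and the claim holds provided the right-hand side is at most one. This follows from $\omega\le1$, $L\ge1$ and $\delta\le1$, after taking $C_4'\le1$.

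Otherwise, let $A'$ be the terminal parent and $(\hat j,\hat b)$ the selected split. Since $A'$ was split, $\widehat\Delta(A',\hat j,\hat b)\ge\delta$. The root concentration $\sqrt\delta\ge K_{\mathrm{sep}}r_n$ used in the proof of Proposition~\ref{pro::regularity} then gives
\[
\Delta_{\mathrm{pa}}=\Delta(A',\hat j,\hat b)\ge\bigl(1-K_{\mathrm{sep}}^{-1}\bigr)^2\delta\ge c_0\delta
\]
for a universal $c_0>0$. By claims (i) and (iv) of that proposition, the realized root-to-leaf path consists of relevant cells, and every transition is a competitive split along a relevant coordinate. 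Hence Assumption~\ref{assum:reliable-split-balance} applies to this path.

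Let $J\subseteq S$ be the set of coordinates split somewhere along the path. For $j\notin J$ we have $\len_j(A)=1$, so $\Vol(A)=\prod_{j\in J}\len_j(A)$. Multiplying \eqref{eq:reliable-split-scale} over $j\in J$ and writing $a:=\sum_{j\in J}1/(2\alpha_j(x_j))$ gives
\[
\Vol(A)\ge\omega^{|J|}L^{-2a}\bigl(\Vol(A)^{-1}\Delta_{\mathrm{pa}}\bigr)^{a}.
\]
Solving for $\Vol(A)$ yields
\[
\Vol(A)\ge\omega^{|J|/(1+a)}\bigl(L^{-2}\Delta_{\mathrm{pa}}\bigr)^{a/(1+a)}.
\]

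It remains to replace $J$ by $S$ in both exponents. This monotonicity step is the only real work, and I expect it to be the main obstacle.

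For $\omega$: since $\alpha_j\le1$ we have $a\ge|J|/2$. Therefore
\[
\frac{|J|}{1+a}\le\frac{2|J|}{2+|J|}\le\frac{2s}{s+2},
\]
and since $\omega\le1$, this gives $\omega^{|J|/(1+a)}\ge\omega^{2s/(s+2)}$.

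For the signal factor: $a\le a_S:=s/(2\bar\alpha_S(\bx))$, and $t\mapsto t/(1+t)$ is increasing. So if $L^{-2}\Delta_{\mathrm{pa}}\le1$, then
\[
\bigl(L^{-2}\Delta_{\mathrm{pa}}\bigr)^{a/(1+a)}\ge\bigl(L^{-2}\Delta_{\mathrm{pa}}\bigr)^{a_S/(1+a_S)}=\bigl(L^{-2}\Delta_{\mathrm{pa}}\bigr)^{s/(2\bar\alpha_S(\bx)+s)}.
\]
To secure $L^{-2}\Delta_{\mathrm{pa}}\le1$, I would use $\Delta_{\mathrm{pa}}\le V(A')\le\E[(f^*(\bX)-f^*(\bx))^2\mid\bX\in A']$. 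Local H\"older continuity bounds this by $L^2s^2$, and in any case by $L^2$ times a dimension-free constant if we work coordinatewise. If a constant $\kappa>1$ remains, I would apply the monotonicity to $L^{-2}\Delta_{\mathrm{pa}}/\kappa\le1$, losing only a factor $\kappa^{-1}$. If that constant turns out to depend on $s$, I would instead rescale $\Delta_{\mathrm{pa}}$ down to $\min\{\Delta_{\mathrm{pa}},c_0\delta\}$ with $c_0\delta\le\delta\le1\le L^2$. Since the lower bound from the assumption is monotone in $\Delta_{\mathrm{pa}}$, this replacement is legitimate and avoids any $s$-dependence.

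Finally, substitute $\Delta_{\mathrm{pa}}\ge c_0\delta$. The constant $c_0^{s/(2\bar\alpha_S+s)}\ge\min(c_0,1)$ is then absorbed into $C_4'$; any residual dependence comes through exponents bounded using $\alpha_{\min}$. Because every step holds on a single event uniform over all paths, the bound holds simultaneously for all $\bx$.
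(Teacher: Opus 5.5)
Your proposal is correct and matches the paper's proof: you certify $\Delta_{\mathrm{pa}}\gtrsim\delta$ via concentration, apply Assumption~\ref{assum:reliable-split-balance} along the realized path (justified by Proposition~\ref{pro::regularity}), multiply over the split coordinates, solve for $\Vol(A(\bx))$, and compare exponents using $\omega\le1$, $L\ge1$, $\delta\le1$. Your fallback of substituting $c_0\delta$ for $\Delta_{\mathrm{pa}}$ before comparing exponents is essentially what the paper does (it inserts $C_{\mathrm{rel}}C_1\delta$ into each coordinate bound), so you can drop the detour through an upper bound $\Delta_{\mathrm{pa}}\le\kappa L^2$; you do not justify its dimension-free version, and it is not needed.
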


\begin{proof}[Proof of Lemma~\ref{lem::convarlowerbound}]
Let $\mathcal{E}_1$ denote the event on which 
\eqref{eq:id-conc-conclude-1-1} in 
Lemma~\ref{lemma:unif_conc_id} holds.
Then $\P\{\mathcal{E}_1\} \ge 1 - e^{-u}$, and we work on this event.
On this event, the conclusions of Proposition~\ref{pro::regularity} hold.

Define
\begin{equation}
\widetilde C_5
:=
\min\{1,C_{\mathrm{rel}}C_1\}^{1/(2\alpha_{\min})},
\end{equation}
where $C_1$ and $C_{\mathrm{rel}}$ are the constants from Proposition~\ref{pro::regularity}.
Since $C_{\mathrm{rel}}C_1$ is bounded below by a universal constant, $\widetilde C_5\le1$ and $\widetilde C_5\ge c_\alpha$ for a constant $c_\alpha>0$ depending only on $\alpha_{\min}$.
Then for every $j\in[d]$,
\begin{equation}\label{eq:C6choice}
\widetilde C_5
\le
\bigl(C_{\mathrm{rel}}C_1\bigr)^{1/(2\alpha_j(x_j))} .
\end{equation}

Fix $\bx\in[0,1]^d$, and let $\mathcal I=\mathcal I(\bx)$ denote the set of coordinates
that early-stopped CART has split along the path leading to the terminal cell $A(\bx)$.
By Proposition~\ref{pro::regularity}, on $\mathcal E_1$ the tree never splits
on coordinates outside $S$, hence
\begin{align}\label{equ::isubsets}
\mathcal I\subseteq S.
\end{align}

We first consider the nontrivial case $\mathcal I\neq\varnothing$.
Let $A^{\mathrm{Pa}}(\bx)=A^{D(\bx)-1}(\bx)$ be the terminal parent. Since $A^{\mathrm{Pa}}(\bx)$ is split,
\[
\widehat\Delta_{\max}(A^{\mathrm{Pa}}(\bx))\ge\delta.
\]
Let $(j_{\mathrm{Pa}},b_{\mathrm{Pa}})$ be the split selected at $A^{\mathrm{Pa}}(\bx)$ and write
\[
\Delta_{\mathrm{pa}}(\bx)
:=
\Delta(A^{\mathrm{Pa}}(\bx),j_{\mathrm{Pa}},b_{\mathrm{Pa}}).
\]
Proposition~\ref{pro::regularity} gives
\[
\Delta_{\mathrm{pa}}(\bx)
\ge
C_{\mathrm{rel}}\Delta_{\max}(A^{\mathrm{Pa}}(\bx))
\ge
C_{\mathrm{rel}}C_1\widehat\Delta_{\max}(A^{\mathrm{Pa}}(\bx))
\ge C_{\mathrm{rel}}C_1\delta.
\]
Moreover, every split along the path to $A(\bx)$ is competitive by Proposition~\ref{pro::regularity}. Applying Assumption~\ref{assum:reliable-split-balance} to the realized path gives, for every $j\in\mathcal I$,
\begin{align}
\label{eq:AjLower}
\len_j(A(\bx))
&\ge
\omega
L^{-1/\alpha_j(x_j)}
\left(
\frac{\Delta_{\mathrm{pa}}(\bx)}{\Vol(A(\bx))}
\right)^{1/(2\alpha_j(x_j))} \notag\\
	&\ge
	\omega\widetilde C_5L^{-1/\alpha_j(x_j)}
	\Bigl(\frac{\delta}{\Vol(A(\bx))}\Bigr)^{1/(2\alpha_j(x_j))} ,
	\end{align}
	where the last step uses \eqref{eq:C6choice}.

Now define
\[
\alpha_{\mathcal I}(\bx)
:=
\Biggl(
\frac{1}{|\mathcal I|}
\sum_{j\in\mathcal I}\frac{1}{\alpha_j(x_j)}
\Biggr)^{-1}.
\]
Taking the product of \eqref{eq:AjLower} over all $j\in\mathcal I$, we obtain
\begin{align*}
\prod_{j\in\mathcal I}\len_j(A(\bx))
&\ge
(\omega\widetilde C_5)^{|\mathcal I|}
L^{-\sum_{j\in\mathcal I}1/\alpha_j(x_j)}
\biggl(\frac{\delta}{\Vol(A(\bx))}\biggr)^{
(1/2)\sum_{j\in\mathcal I}1/\alpha_j(x_j)
} \\
&=
(\omega\widetilde C_5)^{|\mathcal I|}
L^{-|\mathcal I|/\alpha_{\mathcal I}(\bx)}
\biggl(\frac{\delta}{\Vol(A(\bx))}\biggr)^{
|\mathcal I|/(2\alpha_{\mathcal I}(\bx))
}.
\end{align*}
Since $\len_j(A(\bx))=1$ for $j\notin\mathcal I$, we have
\[
\Vol(A(\bx))
=
\prod_{j=1}^d \len_j(A(\bx))
=
\prod_{j\in\mathcal I}\len_j(A(\bx)).
\]
Therefore,
\[
	\Vol(A(\bx))
	\ge
	(\omega\widetilde C_5)^{|\mathcal I|}
	L^{-|\mathcal I|/\alpha_{\mathcal I}(\bx)}
	\Bigl(\frac{\delta}{\Vol(A(\bx))}\Bigr)^{
	|\mathcal I|/(2\alpha_{\mathcal I}(\bx))
},
\]
that is,
\[
	\Vol(A(\bx))^{1+|\mathcal I|/(2\alpha_{\mathcal I}(\bx))}
	\ge
	(\omega\widetilde C_5)^{|\mathcal I|}
	L^{-|\mathcal I|/\alpha_{\mathcal I}(\bx)}
	\delta^{|\mathcal I|/(2\alpha_{\mathcal I}(\bx))}.
\]
Hence
\begin{equation}\label{eq:AlowerI}
	\Vol(A(\bx))
	\ge
	\bigl((\omega\widetilde C_5)^{|\mathcal I|}\bigr)^{
	2\alpha_{\mathcal I}(\bx)/(2\alpha_{\mathcal I}(\bx)+|\mathcal I|)
	}
L^{-2|\mathcal I|/(2\alpha_{\mathcal I}(\bx)+|\mathcal I|)}
\delta^{|\mathcal I|/(2\alpha_{\mathcal I}(\bx)+|\mathcal I|)}.
\end{equation}

Next we compare the exponent with the target exponent.
Since $\mathcal I\subseteq S$ by \eqref{equ::isubsets},
\[
\sum_{j\in\mathcal I}\frac{1}{\alpha_j(x_j)}
\le
\sum_{j\in S}\frac{1}{\alpha_j(x_j)},
\]
and therefore
\[
\frac{\alpha_{\mathcal I}(\bx)}{|\mathcal I|}
=
\Biggl(\sum_{j\in\mathcal I}\frac{1}{\alpha_j(x_j)}\Biggr)^{-1}
\ge
\Biggl(\sum_{j\in S}\frac{1}{\alpha_j(x_j)}\Biggr)^{-1}
=
\frac{\bar\alpha_S(\bx)}{s}.
\]
It follows that
\[
\frac{|\mathcal I|}{2\alpha_{\mathcal I}(\bx)+|\mathcal I|}
=
\frac{1}{1+2\alpha_{\mathcal I}(\bx)/|\mathcal I|}
\le
\frac{1}{1+2\bar\alpha_S(\bx)/s}
=
\frac{s}{2\bar\alpha_S(\bx)+s}.
\]
By the theorem condition, $\delta\le1$. Hence
\begin{equation}\label{eq:deltacompare}
\delta^{|\mathcal I|/(2\alpha_{\mathcal I}(\bx)+|\mathcal I|)}
\ge
\delta^{s/(2\bar\alpha_S(\bx)+s)}.
\end{equation}
Because $L\ge1$, the preceding comparison of exponents also implies
\begin{equation}\label{eq:Lcompare}
L^{-2|\mathcal I|/(2\alpha_{\mathcal I}(\bx)+|\mathcal I|)}
\gtrsim
L^{-2s/(2\bar\alpha_S(\bx)+s)}.
\end{equation}

The $\omega$ factor in \eqref{eq:AlowerI} is
\[
\omega^{
2\alpha_{\mathcal I}(\bx)|\mathcal I|/
(2\alpha_{\mathcal I}(\bx)+|\mathcal I|)
}.
\]
Since $\alpha_{\mathcal I}(\bx)\le1$ and $|\mathcal I|\le s$, the exponent is at most
\[
2s/(s+2).
\]
Because $\omega\le1$, the $\omega$ factor is bounded below by
\[
\omega^{2s/(s+2)}.
\]
For the remaining constant factor,
\[
\bigl(\widetilde C_5^{|\mathcal I|}\bigr)^{
2\alpha_{\mathcal I}(\bx)/(2\alpha_{\mathcal I}(\bx)+|\mathcal I|)
}
=
\widetilde C_5^{
2\alpha_{\mathcal I}(\bx)|\mathcal I|/
(2\alpha_{\mathcal I}(\bx)+|\mathcal I|)
}
\ge c_\alpha^2,
\]
because the exponent is at most $2$ and $\widetilde C_5\le1$.
Set $C_4':=c_\alpha^2$.
Combining this with \eqref{eq:AlowerI}, \eqref{eq:deltacompare}, and \eqref{eq:Lcompare}, we conclude that
for all $\bx$ with $\mathcal I(\bx)\neq\varnothing$,
\[
	\Vol(A(\bx))
	\ge
	C_4'\,
	\omega^{2s/(s+2)}
	L^{-2s/(2\bar\alpha_S(\bx)+s)}\,
	\delta^{s/(2\bar\alpha_S(\bx)+s)}.
\]

Finally, if $\mathcal I(\bx)=\varnothing$, then no split is made along the path to $\bx$,
so $A(\bx)=[0,1]^d$ and thus $\Vol(A(\bx))=1$.
Since $\delta\le 1$, we trivially have
\[
	\Vol(A(\bx))=1\ge C_4'\,\omega^{2s/(s+2)}
	\delta^{s/(2\bar\alpha_S(\bx)+s)}
	L^{-2s/(2\bar\alpha_S(\bx)+s)}
\]
after possibly replacing $C_4'$ by $\min(C_4',1)$.
This completes the proof.
\end{proof}

\begin{proof}[Proof of Proposition~\ref{pro::approximation}]
	Let $\mathcal{E}_1$ denote the event on which 
    \eqref{eq:id-conc-conclude-1-1} in 
    Lemma~\ref{lemma:unif_conc_id} holds.
    Then $\P\{\mathcal{E}_1\} \ge 1 - e^{-u}$, and we work on this event.
    On this event, the conclusions of Proposition~\ref{pro::regularity} and Lemma~\ref{lem::convarlowerbound} hold.
    Set
    \[
    r_n:=
    C K
    \sqrt{\frac{s\log(nd)+u}{n}},
    \qquad
    q:=\max(1-\vartheta,1/\sqrt{2}),
    \qquad
    K_{\mathrm{sep}}:=1+\frac{2}{1-q}.
    \]
    Since $K_{\mathrm{sep}}\lesssim\vartheta^{-1}$, the choice of the universal
    constant in \eqref{equ::deltathm} gives
    $\sqrt{\delta}\ge K_{\mathrm{sep}}r_n$.
	
	Fix $\bx\in[0,1]^d$ and let $A(\bx)$ be the terminal cell containing $\bx$.
    Since $A(\bx)$ is a terminal node, by construction of the early-stopping rule,
    $\widehat{\Delta}_{\max}(A(\bx))\le \delta$.
    On the event $\mathcal E_1$, applying \eqref{equ::root_conc_max} gives
    \begin{align*}
    \Delta_{\max}(A(\bx))^{1/2}
    \le
    \widehat{\Delta}_{\max}(A(\bx))^{1/2}+r_n
    \le
    \sqrt{\delta}+r_n
    \le
    \left(1+\frac1{K_{\mathrm{sep}}}\right)\sqrt{\delta}.
    \end{align*}
    Therefore,
    \[
    \Delta_{\max}(A(\bx))
    \le
    \left(1+\frac1{K_{\mathrm{sep}}}\right)^2\delta .
    \]
    By sufficient impurity decrease,
    \begin{equation}\label{eq:upper-bound-VA}
        V(A(\bx))=\P(A(\bx))\Var[f^*(\bX)\mid \bX\in A(\bx)]
    \le
    \lambda^{-1}\Delta_{\max}(A(\bx))
    \lesssim \lambda^{-1}\delta .
    \end{equation}
    By Lemma~\ref{lem::convarlowerbound} and bounded density (Assumption~\ref{assum:bounded-density}),
    \[
    \P(A(\bx))
    \gtrsim_{p_{\min},\alpha_{\min}}
    \omega^{2s/(s+2)}
    L^{-2s/(2\bar\alpha_S(\bx)+s)}
    \delta^{
    s/(2\bar\alpha_S(\bx)+s)
    }.
    \]
    Then the compatibility condition on $\chi$ in Theorem~\ref{thm:main-thm-point-id-cart} implies
    \[
    \P(A(\bx))
    \ge
    \chi
    \paren*{L^{-2}V(A(\bx))}^{
    s/(2\bar\alpha_S(\bx)+s)
    },
    \]
    after adjusting the universal constant in the theorem.
    Thus the terminal cell satisfies the mass precondition in Assumption~\ref{assum:variance-oscillation}.
    Applying \eqref{eq:bias-visibility} with the upper bound of $V(A(\bx))$ in \eqref{eq:upper-bound-VA} gives
    \begin{align*}
  	|f_{\delta}(\bx)-f^*(\bx)|
	\leq C_5
    \zeta\,
    L^{s/(2\bar\alpha_S(\bx)+s)}
    \left(\frac{\delta}{\lambda}\right)^{
    \bar\alpha_S(\bx)/(2\bar\alpha_S(\bx)+s)}.
    \end{align*}
	Since $\bx$ was arbitrary, the bound holds uniformly over $[0,1]^d$.
\end{proof}

\begin{proof}[Proof of Proposition~\ref{pro::estimation}]
Let $\mathcal{E}_1$ denote the event on which 
\eqref{eq:id-conc-conclude-1-1} in 
Lemma~\ref{lemma:unif_conc_id} holds.
Let $\mathcal{E}_2$ be the high-probability event in
Lemma~\ref{lemma:unif_conc_cell_ave}, and let $\mathcal E_3$ be
the event in Lemma~\ref{lem:mid-child-mass}. Let $u_0:=u+\log 3$.
By a union bound, applying the three lemmas with parameter $u_0$ gives
\[
\P\{\mathcal{E}_1\cap\mathcal{E}_2\cap\mathcal E_3\}
\ge 1-e^{-u}.
\]
We work on this intersection, choosing the universal constant in
\eqref{equ::deltathm} large enough for Lemma~\ref{lem:mid-child-mass}
with $c$ equal to the constant in
Lemma~\ref{lemma:unif_conc_cell_ave}.

On $\mathcal E_1$, Proposition~\ref{pro::regularity} ensures that every
cell and split on a root-to-leaf path uses only coordinates in $S$.
Thus every such cell belongs to $\mathcal A_{d,s}$. If $A(\bx)$ is not
the root, it was created by a split with empirical impurity decrease at
least $\delta$. Since $\vartheta\le1$, the choice
\eqref{equ::deltathm} and $\mathcal E_3$ give
\[
\P(A(\bx))\ge c\frac{s\log(nd)+u_0}{n},
\]
so Lemma~\ref{lemma:unif_conc_cell_ave} applies. If $A(\bx)$ is the
root, the root-cell case of Lemma~\ref{lemma:unif_conc_cell_ave} gives
the same bound. In either case, bounded density yields
\[
\bigl|f_\delta(\bx) - \hat{f}_{\delta}(\bx)\bigr|
\le 
C K\left(\frac{s\log(nd)+u_0}{n\,\Vol(A(\bx))}\right)^{1/2}.
\]
On $\mathcal{E}_1$, Lemma~\ref{lem::convarlowerbound} provides a lower bound on $\Vol(A(\bx))$. 
Substituting this bound into the inequality above gives
\[
	\bigl|f_\delta(\bx) - \hat{f}_{\delta}(\bx)\bigr|
	\le 
		\frac{2C K}{C_4'^{1/2}}
	\omega^{-s/(s+2)}
	L^{s/(2\bar\alpha_S(\bx)+s)}
	\left(\frac{s\log(nd)+u}{n\,\delta^{s/(2\bar\alpha_S(\bx)+s)}}\right)^{1/2},
\]
after enlarging the constant to absorb the additive $\log 3$.
Since $0<\omega\le1$, this is bounded above by the same display with $\omega^{-1}$ in place of $\omega^{-s/(s+2)}$.
The result follows by setting $C_6 := 2C/C_4'^{1/2}$.
\end{proof}

\section{Proofs for the sufficient conditions}
\label{app:sufficient-condition-proofs}

Throughout this appendix, implicit constants in $\lesssim$, $\gtrsim$, and $\asymp$ may depend on the parameters explicitly allowed in the corresponding statement.

\subsection{Auxiliary density-comparison lemmas}

\begin{lemma}\label{lemma:equiv-proj-var}
    Assume there exist constants $0 < c_1 < c_2$ such that $c_1 \leq p_{\bX}(\bx) \leq c_2$ for all $\bx \in [0,1]^s$ for the distribution of $\bX$ on $[0,1]^s$.
    Then, for any cell $A = \prod_{j=1}^s I_j \subseteq [0,1]^s$, any measurable univariate function $g : [0,1] \to \mathbb{R}$, and any $k \in [s]$,
    \begin{equation}
    \begin{aligned}
        \Var\bracks*{g(X_k) \mid X_k \in I_k} \asymp \Var\bracks*{g(X_k) \mid \bX \in A}.
    \end{aligned}
    \end{equation}
\end{lemma}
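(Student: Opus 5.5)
The plan is to compare the two conditional laws of $X_k$, namely conditioning on $\{X_k\in I_k\}$ and conditioning on $\{\bX\in A\}$. Once their densities on $I_k$ are shown to be within constant factors of each other, the variational form of the variance transfers the bound. Write $A_{-k}=\prod_{j\neq k}I_j$. Let $q_1$ be the density of $X_k$ given $X_k\in I_k$, and $q_2$ the density of $X_k$ given $\bX\in A$. For $t\in I_k$,
\[
q_1(t)=\frac{\int_{[0,1]^{s-1}}p_{\bX}(t,\bz)\,d\bz}{\int_{I_k}\int_{[0,1]^{s-1}}p_{\bX}(t',\bz)\,d\bz\,dt'},
\qquad
q_2(t)=\frac{\int_{A_{-k}}p_{\bX}(t,\bz)\,d\bz}{\int_{I_k}\int_{A_{-k}}p_{\bX}(t',\bz)\,d\bz\,dt'}.
\]
The bounds $c_1\le p_{\bX}\le c_2$ give, for each formula, a numerator between $c_1$ and $c_2$ times the Lebesgue measure of the slice and a denominator between $c_1$ and $c_2$ times the full volume. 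For $q_1$ this yields $\frac{c_1}{c_2\len(I_k)}\le q_1(t)\le\frac{c_2}{c_1\len(I_k)}$. For $q_2$ the factor $\Vol(A_{-k})$ cancels, and we get the same two-sided bound. Hence
\[
\frac{c_1^2}{c_2^2}\le\frac{q_1}{q_2}\le\frac{c_2^2}{c_1^2}\quad\text{on }I_k.
\]
(If $\len(I_k)=0$ the claim is degenerate, and we assume all cells have positive volume, as the conditioning requires.)

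Next we use $\Var[g(X_k)\mid E]=\inf_{w\in\R}\E[(g(X_k)-w)^2\mid E]$. For any fixed $w$, the density comparison gives
\[
\E_{q_1}[(g-w)^2]\le (c_2/c_1)^2\,\E_{q_2}[(g-w)^2].
\]
Taking the infimum over $w$ on both sides yields $\Var[g(X_k)\mid X_k\in I_k]\le (c_2/c_1)^2\Var[g(X_k)\mid \bX\in A]$. The symmetric argument gives the reverse inequality with the same constant. If $g$ is not square integrable both sides are infinite simultaneously, again by the density comparison, so the statement holds trivially in that case.

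No step is a real obstacle. The only point needing care is that the slice volume $\Vol(A_{-k})$ appears in both the numerator and the denominator of $q_2$ and cancels. That cancellation is what makes the comparison constant $(c_2/c_1)^2$ independent of the cell.
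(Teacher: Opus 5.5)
Your proposal is correct and matches the paper's proof essentially step for step. Both arguments bound the two conditional densities of $X_k$ on $I_k$ between $c_1/(c_2|I_k|)$ and $c_2/(c_1|I_k|)$, deduce $(c_1/c_2)^2\le q_1/q_2\le (c_2/c_1)^2$, and transfer this to the variances via $\Var=\inf_{w}\E[(g-w)^2]$; your remarks on degenerate cells and non-square-integrable $g$ are harmless additions the paper leaves implicit.
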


\begin{proof}
    Fix a cell $A = \prod_{j=1}^s I_j \subseteq [0,1]^s$ and an index $k \in [s]$.

    Let $P_A$ and $P_k$ denote the conditional distributions of $X_k$ given $\bX \in A$ and $X_k \in I_k$, respectively. Let $q_A$ and $q_k$ be their corresponding densities on $I_k$ with respect to the Lebesgue measure. Noting that the marginal density $p_{X_k}(x_k)$ is bounded within $[c_1, c_2]$ (since the domain volume of $[0,1]^{s-1}$ is 1), applying the uniform bounds of the joint and marginal densities to the conditional formulations yields that for any $x_k \in I_k$,
    \begin{equation}
    \begin{aligned}
        \frac{c_1}{c_2 |I_k|} \leq q_A(x_k) \leq \frac{c_2}{c_1 |I_k|}, \quad \text{and} \quad \frac{c_1}{c_2 |I_k|} \leq q_k(x_k) \leq \frac{c_2}{c_1 |I_k|}.
    \end{aligned}
    \end{equation}
    Therefore, the density ratio is uniformly bounded on $I_k$:
    \begin{equation}\label{eq:RN}
    \begin{aligned}
        (c_1/c_2)^2 \leq \frac{q_k(x_k)}{q_A(x_k)} \leq (c_2/c_1)^2.
    \end{aligned}
    \end{equation}

    By the variational characterization of variance,
    $\Var_P[g]=\inf_{a\in\mathbb R}\E_P[(g-a)^2]$ for every measurable $g$.
    Applying the uniform bound \eqref{eq:RN} to $(g(x_k)-a)^2$ and taking the infimum over $a\in\mathbb R$ gives
    \begin{equation}
    \begin{aligned}
        (c_1/c_2)^2 \Var_{P_A}(g) \leq \Var_{P_k}(g) \leq (c_2/c_1)^2 \Var_{P_A}(g).
    \end{aligned}
    \end{equation}
\end{proof}

\begin{lemma}\label{lemma:equiv-unif-P}
    Let $P$ be the distribution of $\bX$ on $[0,1]^s$ with density $p_{\bX}$, and assume there exist constants $0 < c_1 < c_2$ such that $c_1 \leq p_{\bX}(\bx) \leq c_2$ for all $\bx \in [0,1]^s$. Let $\nu$ denote the uniform distribution on $[0,1]^s$. For any cell $A \subseteq [0,1]^s$, let $\E_\nu\bracks*{\cdot}$ and $\Var_\nu\bracks*{\cdot}$ denote the moments under the conditional measure $\nu$ restricted to $A$, and analogously define $\E_P\bracks*{\cdot}$ and $\Var_P\bracks*{\cdot}$ under $P$ restricted to $A$. Then, for any measurable function $\phi$ on $A$, we have
    \begin{equation*}
         \Var_P\bracks*{\phi} \asymp \Var_\nu\bracks*{\phi}.
    \end{equation*}
    Furthermore, if $\phi(\bx) \geq 0$ for all $\bx \in [0,1]^s$, then
    \begin{equation*}
        \E_P\bracks*{\phi} \asymp \E_\nu\bracks*{\phi}.
    \end{equation*}
\end{lemma}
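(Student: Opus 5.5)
The plan is to mimic the proof of Lemma~\ref{lemma:equiv-proj-var}: first compare the two conditional densities on $A$ pointwise, then pass the comparison through the variational formula for the variance. Let $q_P$ and $q_\nu$ denote the densities of $P$ and $\nu$ restricted to $A$, normalized to be probability densities on $A$. Then
\[
q_\nu(\bx)=\frac{1}{\Vol(A)},
\qquad
q_P(\bx)=\frac{p_{\bX}(\bx)}{\P(A)},
\]
and $c_1\Vol(A)\le\P(A)\le c_2\Vol(A)$. Since $c_1\le p_{\bX}\le c_2$, this gives the uniform ratio bound
\[
\frac{c_1}{c_2}\le\frac{q_P(\bx)}{q_\nu(\bx)}\le\frac{c_2}{c_1}
\qquad\text{for all }\bx\in A.
\]
Cells with $\Vol(A)=0$ are degenerate: both conditional laws are undefined or trivial, so we adopt the convention that both sides vanish and exclude this case.

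For the second claim, take $\phi\ge0$. Writing
\[
\E_P[\phi]=\int_A\phi\,q_P\,d\bx
\]
and applying the pointwise ratio bound to the nonnegative integrand gives
\[
\frac{c_1}{c_2}\E_\nu[\phi]\le\E_P[\phi]\le\frac{c_2}{c_1}\E_\nu[\phi].
\]
This holds also when the expectations are infinite.

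For the variance claim, use $\Var_Q[\phi]=\inf_{a\in\R}\E_Q[(\phi-a)^2]$, valid for any probability $Q$ with the usual convention when $\phi$ is not square integrable. Apply the expectation comparison to the nonnegative function $(\phi-a)^2$ for each fixed $a$, then take the infimum over $a$ on both sides. The ratio constants do not depend on $a$, so
\[
\frac{c_1}{c_2}\Var_\nu[\phi]\le\Var_P[\phi]\le\frac{c_2}{c_1}\Var_\nu[\phi].
\]

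The only real care needed is the infinite and undefined cases, in particular $\phi\notin L^2$ under one measure. The ratio bound makes $L^1$ and $L^2$ integrability equivalent under $P$ and $\nu$ restricted to $A$, so both sides are finite or infinite together and no genuine obstacle arises.
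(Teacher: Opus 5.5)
Your proof is correct. It differs from the paper's only in how you handle the variance step. The paper first establishes the same pointwise bound $\kappa^{-1}\le dP_A/d\nu_A\le\kappa$ with $\kappa=c_2/c_1$. It then writes $\Var_P[\phi]=\frac12\E_P^{\otimes 2}[(\phi(\bX)-\phi(\bX'))^2]$ and compares the product measures $P_A^{\otimes 2}$ and $\nu_A^{\otimes 2}$, whose density ratio lies in $[\kappa^{-2},\kappa^2]$. This turns the variance into the expectation of a single nonnegative integrand, so no centering constant has to be optimized, at the price of squaring the constant.

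You instead use $\Var_Q[\phi]=\inf_{a}\E_Q[(\phi-a)^2]$ together with the one-sample comparison for each fixed $a$. This is the same device the paper uses in Lemma~\ref{lemma:equiv-proj-var}. It is equally elementary and gives the sharper two-sided constant $\kappa$ rather than $\kappa^2$.

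Since the lemma only claims $\asymp$, either route suffices. Your remarks on degenerate cells and on $\phi$ that is not square-integrable are harmless additions that the paper leaves implicit.
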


\begin{proof}
    Let $\kappa= c_2/c_1$. Let $P_A$ denote $P$ conditioned
    on~$A$ and $\nu_A$ denote the uniform (Lebesgue-normalized) product measure
    on~$A$, so that under~$\nu_A$ the coordinates
    $X_1, \ldots, X_s$ are \emph{mutually independent} with
    $X_k \sim \mathrm{Uniform}(I_k)$.
    Since $c_1 \leq p_{\bX}(\bx) \leq c_2$, the conditional density on~$A$
    satisfies
    \begin{equation*}
      \kappa^{-1}
      \leq \frac{dP_A}{d\nu_A}(\bx)
      = \frac{p_{\bX}(\bx)\,\Vol(A)}{\P\{\bX \in A\}}
      \leq \kappa,
      \qquad \bx \in A.
    \end{equation*}

    The variance admits the product-pair representation
    \begin{equation*}
      \Var_P[\phi] = \frac{1}{2}\,
      \E_P^{\otimes 2}\!\left[
        (\phi(\bX) - \phi(\bX'))^2
      \right],
    \end{equation*}
    where $(\bX, \bX')$ are i.i.d.\ under~$P_A$. Since
    $dP_A^{\otimes 2}/d\nu_A^{\otimes 2}$ is bounded between
    $\kappa^{-2}$ and $\kappa^2$, for any non-negative integrand~$\Phi$,
    \begin{equation}\label{eq:sandwich}
      \kappa^{-2}\,\E_\nu^{\otimes 2}[\Phi]
      \leq \E_P^{\otimes 2}[\Phi]
      \leq \kappa^2\,\E_\nu^{\otimes 2}[\Phi].
    \end{equation}

    Applying~\eqref{eq:sandwich} to
    $\Phi=(\phi(\bX)-\phi(\bX'))^2$ gives
    \begin{equation*}
      \kappa^{-2}\, \Var_\nu[\phi]
      \leq \Var_P[\phi]
      \leq \kappa^2\, \Var_\nu[\phi].
    \end{equation*}
    If $\phi\ge0$, applying the one-sample density-ratio bound directly to
    $\phi$ gives the asserted expectation comparison.
\end{proof}

\subsection{Closure under independent blocks}

We first prove Proposition~\ref{prop:additive}. For a cell $A=\prod_{j=1}^sI_j$, write $\E_P[\,\cdot\,] = \E[\,\cdot \mid \bX \in A]$,
$\Var_P[\cdot] = \Var[\cdot \mid \bX \in A]$, and
$\Cov_P[\cdot, \cdot] = \Cov[\cdot, \cdot \mid \bX \in A]$.
Let $P_A$ denote the design law conditioned on~$A$ and $\nu_A$ denote the uniform (Lebesgue-normalized) product measure on~$A$, so that under~$\nu_A$ the coordinates $X_1, \ldots, X_s$ are \emph{mutually independent} with $X_k \sim \mathrm{Uniform}(I_k)$. We write $\E_\nu$ and $\Var_\nu$ for moments under~$\nu_A$.

\begin{proof}[Proof of Proposition~\ref{prop:additive}]
    \textit{Step 1: Proof of sufficient impurity decrease.}
    Let $A = A_{\mathcal{J}_1} \times \cdots \times A_{\mathcal{J}_M}$
    be any cell consistent with the partition $\mathcal{J}_1, \dots, \mathcal{J}_M$. Fix $k \in [M]$. Since the pair $(g_k, P_{\mathcal{J}_k})$ satisfies sufficient impurity decrease, there exists a split of $A_{\mathcal{J}_k}$ into two child cells $A_{\mathcal{J}_k,L}$ and $A_{\mathcal{J}_k,R}$ such that
    \begin{equation*}
        \begin{aligned}
            &\frac{P_{\mathcal{J}_k}(A_{\mathcal{J}_k,L})P_{\mathcal{J}_k}(A_{\mathcal{J}_k,R})}{P_{\mathcal{J}_k}(A_{\mathcal{J}_k})}
            \abs*{ \E\bracks*{g_k(\bX_{\mathcal{J}_k}) \mid \bX_{\mathcal{J}_k} \in A_{\mathcal{J}_k,L}}
            - \E\bracks*{g_k(\bX_{\mathcal{J}_k}) \mid \bX_{\mathcal{J}_k} \in A_{\mathcal{J}_k,R}} }^2 \\
            &\quad \gtrsim P_{\mathcal{J}_k}(A_{\mathcal{J}_k}) \Var\bracks*{g_k(\bX_{\mathcal{J}_k}) \mid \bX_{\mathcal{J}_k} \in A_{\mathcal{J}_k}}.
        \end{aligned}
    \end{equation*}

    Multiplying both sides by $\prod_{j \neq k} P_{\mathcal{J}_j}(A_{\mathcal{J}_j})$ and using the independence of the coordinate blocks, we obtain
    \begin{equation*}
        \begin{split}
            &\frac{\P(A_{k,L})\P(A_{k,R})}{\P(A)}
            \abs*{ \E\bracks*{g_k(\bX_{\mathcal{J}_k}) \mid \bX_{\mathcal{J}_k} \in A_{\mathcal{J}_k,L}}
            - \E\bracks*{g_k(\bX_{\mathcal{J}_k}) \mid \bX_{\mathcal{J}_k} \in A_{\mathcal{J}_k,R}} }^2\\
            &\gtrsim \P(A) \Var\bracks*{g_k(\bX_{\mathcal{J}_k}) \mid \bX_{\mathcal{J}_k} \in A_{\mathcal{J}_k}},
        \end{split}
    \end{equation*}
    where $A_{k,L} = A_{\mathcal{J}_1} \times \cdots \times A_{\mathcal{J}_{k-1}} \times A_{\mathcal{J}_k,L} \times A_{\mathcal{J}_{k+1}} \times \cdots \times A_{\mathcal{J}_M}$
    and $A_{k,R} = A_{\mathcal{J}_1} \times \cdots \times A_{\mathcal{J}_{k-1}} \times A_{\mathcal{J}_k,R} \times A_{\mathcal{J}_{k+1}} \times \cdots \times A_{\mathcal{J}_M}$.

    Moreover, by the additive structure of $f$, we have
    \begin{equation*}
        \E\bracks*{f(\bX) \mid \bX \in A_{k,L}}
        = \E\bracks*{g_k(\bX_{\mathcal{J}_k}) \mid \bX_{\mathcal{J}_k} \in A_{\mathcal{J}_k,L}}
        + \sum_{j \neq k} \E\bracks*{g_j(\bX_{\mathcal{J}_j}) \mid \bX_{\mathcal{J}_j} \in A_{\mathcal{J}_j}},
    \end{equation*}
    and
    \begin{equation*}
        \E\bracks*{f(\bX) \mid \bX \in A_{k,R}}
        = \E\bracks*{g_k(\bX_{\mathcal{J}_k}) \mid \bX_{\mathcal{J}_k} \in A_{\mathcal{J}_k,R}}
        + \sum_{j \neq k} \E\bracks*{g_j(\bX_{\mathcal{J}_j}) \mid \bX_{\mathcal{J}_j} \in A_{\mathcal{J}_j}}.
    \end{equation*}
    Taking the difference of these two equations yields
    \begin{align*}
        &\E\bracks*{f(\bX) \mid \bX \in A_{k,L}}
        - \E\bracks*{f(\bX) \mid \bX \in A_{k,R}} \\
        &\qquad =
        \E\bracks*{g_k(\bX_{\mathcal{J}_k}) \mid \bX_{\mathcal{J}_k} \in A_{\mathcal{J}_k,L}}
        - \E\bracks*{g_k(\bX_{\mathcal{J}_k}) \mid \bX_{\mathcal{J}_k} \in A_{\mathcal{J}_k,R}}.
    \end{align*}
    Substituting this identity into the previous bound gives
    \begin{equation*}
        \frac{\P(A_{k,L})\P(A_{k,R})}{\P(A)}
        \abs*{ \E\bracks*{f(\bX) \mid \bX \in A_{k,L}} - \E\bracks*{f(\bX) \mid \bX \in A_{k,R}} }^2
        \gtrsim \P(A) \Var\bracks*{g_k(\bX_{\mathcal{J}_k}) \mid \bX_{\mathcal{J}_k} \in A_{\mathcal{J}_k}}.
    \end{equation*}

    Finally, again since the subvectors $\bX_{\mathcal{J}_1}, \dots, \bX_{\mathcal{J}_M}$ are mutually independent, which implies that
    \begin{equation*}
        \Var\bracks*{f(\bX) \mid \bX \in A} = \sum_{k=1}^M \Var\bracks*{g_k(\bX_{\mathcal{J}_k}) \mid \bX_{\mathcal{J}_k} \in A_{\mathcal{J}_k}}.
    \end{equation*}
    Consequently,
    \begin{equation*}
        \begin{aligned}
            \sup_{k \in [M]} \frac{\P(A_{k,L})\P(A_{k,R})}{\P(A)} &\abs*{ \E\bracks*{f(\bX) \mid \bX \in A_{k,L}} - \E\bracks*{f(\bX) \mid \bX \in A_{k,R}} }^2 \\
            &\gtrsim \sup_{k \in [M]} \P(A) \Var\bracks*{g_k(\bX_{\mathcal{J}_k}) \mid \bX_{\mathcal{J}_k} \in A_{\mathcal{J}_k}} \\
            &\geq \frac{\P(A)}{M} \sum_{k=1}^M \Var\bracks*{g_k(\bX_{\mathcal{J}_k}) \mid \bX_{\mathcal{J}_k} \in A_{\mathcal{J}_k}} \\
            &= \frac{\P(A)}{M} \Var\bracks*{f(\bX) \mid \bX \in A}.
        \end{aligned}
    \end{equation*}
    This establishes sufficient impurity decrease for $(f, P)$.

    \textit{Step 2: Proof of local norm equivalence.} Let $\mu_0 = \sum_{k=1}^M \E\bracks*{g_k(\bX_{\mathcal{J}_k}) \mid \bX_{\mathcal{J}_k} \in A_{\mathcal{J}_k}}$. By the blockwise local norm-equivalence assumption,
    \begin{equation}
    \begin{aligned}
        \sup_{\bx \in A} \abs*{f(\bx) - \mu_0}
        &\leq \sum_{k=1}^M \sup_{\bx_{\mathcal{J}_k} \in A_{\mathcal{J}_k}}\abs*{g_k(\bx_{\mathcal{J}_k}) - \E\bracks*{g_k(\bX_{\mathcal{J}_k}) \mid \bX_{\mathcal{J}_k} \in A_{\mathcal{J}_k}}} \\
        &\leq \sum_{k=1}^M \operatorname{Osc}(g_k;A_{\mathcal{J}_k})\\
        &\leq C_1 \sum_{k=1}^M \Var\bracks*{g_k(\bX_{\mathcal{J}_k}) \mid \bX_{\mathcal{J}_k} \in A_{\mathcal{J}_k}}^{1/2}.
    \end{aligned}
    \end{equation}
    Then by the Cauchy--Schwarz inequality and the independence between different blocks, we have
    \begin{equation}
        \begin{split}
            \sup_{\bx \in A} \abs*{f(\bx) - \mu_0}^2
            &\leq C_1 M \sum_{k=1}^M \Var\bracks*{g_k(\bX_{\mathcal{J}_k}) \mid \bX_{\mathcal{J}_k} \in A_{\mathcal{J}_k}} \\
            &\leq C_1 s \Var\bracks*{f(\bX) \mid \bX \in A}.
        \end{split}
    \end{equation}
    Since $\E_P\bracks*{f(\bX) \mid \bX \in A} \in [\inf_A f, \sup_A f]$, the triangle inequality gives
    \begin{equation}
        \sup_{\bx \in A} \abs*{f(\bx) - \E_P\bracks*{f(\bX) \mid \bX \in A}}^2 \leq \operatorname{osc}(f, A)^2 \leq 4 \sup_{\bx \in A} \abs*{f(\bx) - \mu_0}^2 \leq 4 C_2 \Var\bracks*{f(\bX) \mid \bX \in A},
    \end{equation}
    where $\operatorname{osc}(f, A) \coloneqq \sup_{\bx, \by \in A} \abs*{f(\bx) - f(\by)}$, $C_2$ depends only on $s$, establishing local norm equivalence.
\end{proof}

\subsection{LRP witness lemmas}

\begin{lemma}[Cellwise LRP witnesses imply sufficient impurity decrease]
\label{lemma:sufficient-SID}
Assume bounded density (Assumption~\ref{assum:bounded-density}).
Suppose there are constants $R,C_s>0$ such that, for every
cell $A=\prod_{j=1}^s I_j$, one can find a coordinate
$k\in[s]$ and an absolutely continuous function $g\colon I_k\to\mathbb R$
satisfying \eqref{eq:assum-TV} on every subinterval of $I_k$ and
\(
\Var(g(X_k)\mid X_k\in I_k)>0
\)
whenever \(\Var(f(\bX)\mid\bX\in A)>0\), and
\begin{equation}\label{eq:conditions-SID-0}
    \Cov\bigl(f(\bX),g(X_k)\mid\bX\in A\bigr)^2
    \ge
    C_s\Var\bigl(f(\bX)\mid\bX\in A\bigr)
    \Var\bigl(g(X_k)\mid X_k\in I_k\bigr).
\end{equation}
Then $(f,P)$ satisfies sufficient impurity decrease (Assumption~\ref{assum:sid}), with a constant depending only
on $(s,p_{\min},p_{\max},R,C_s)$.
\end{lemma}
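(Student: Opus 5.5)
The plan is to reduce the multivariate split problem along coordinate $k$ to the one-dimensional LRP argument of \citet{Mazumder2024}, using the covariance condition \eqref{eq:conditions-SID-0} to transfer variance from $f$ to the univariate witness $g(X_k)$. Fix a cell $A=\prod_j I_j$ with $\Var(f(\bX)\mid\bX\in A)>0$ (otherwise SID is trivial), and let $k,g$ be the witness coordinate and function. For a threshold $b\in I_k$, the split $(A,k,b)$ produces children $A_L,A_R$. Write $p_b=\P(A_L)/\P(A)$, so that
\[
\frac{\Delta(A,k,b)}{\P(A)} = \frac{\Cov\bigl(f(\bX),\1\{X_k\le b\}\mid \bX\in A\bigr)^2}{p_b(1-p_b)},
\]
which is the usual identity for the impurity decrease of a binary split. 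The goal is to find $b$ making this at least a constant times $\Var(f\mid A)$.

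The key step is an integral representation. Since $g$ is absolutely continuous on $I_k=[a_k,c_k]$, we have $g(X_k)=g(a_k)+\int_{I_k} g'(t)\1\{X_k> t\}\,dt$. Hence
\[
\Cov(f,g(X_k)\mid A)=-\int_{I_k} g'(t)\,\Cov\bigl(f,\1\{X_k\le t\}\mid A\bigr)\,dt ,
\]
and therefore
\[
|\Cov(f,g(X_k)\mid A)|\le \Bigl(\int_{I_k}|g'|\Bigr)\sup_t\bigl|\Cov(f,\1\{X_k\le t\}\mid A)\bigr| .
\]
The supremum is at most $\sup_t\bigl(\Delta(A,k,t)/\P(A)\bigr)^{1/2}\cdot\frac12$, using $p_t(1-p_t)\le1/4$. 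Squaring and combining with \eqref{eq:conditions-SID-0} gives
\[
C_s\Var(f\mid A)\Var(g(X_k)\mid X_k\in I_k)\le \tfrac14\Bigl(\int_{I_k}|g'|\Bigr)^2\frac{\Delta_{\max}(A,k)}{\P(A)}.
\]

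Next we invoke LRP to bound the total variation. By \eqref{eq:assum-TV}, $(\int_{I_k}|g'|)^2\le R\,|I_k|^{-1}\inf_w\int_{I_k}(g-w)^2$. The right-hand side equals $R$ times the variance of $g$ under the uniform law on $I_k$. Bounded density makes the marginal density of $X_k$ conditional on $I_k$ comparable to uniform, so this is $\lesssim R\,\Var(g(X_k)\mid X_k\in I_k)$, with constants depending on $p_{\min},p_{\max},s$; this is the one-dimensional analogue of Lemma~\ref{lemma:equiv-unif-P}. Dividing by $\Var(g(X_k)\mid X_k\in I_k)>0$ yields $\Delta_{\max}(A)\ge\Delta_{\max}(A,k)\gtrsim (C_s/R)\,\P(A)\Var(f\mid A)$, which is SID.

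The main obstacle I expect is the marginal comparison. The covariance in \eqref{eq:conditions-SID-0} is taken under $\bX\in A$, while the LRP bound controls $g$ under the law of $X_k$ given $X_k\in I_k$. These two laws differ, since they need not be product measures. I would first apply Lemma~\ref{lemma:equiv-proj-var} to pass between $\Var(g(X_k)\mid \bX\in A)$ and $\Var(g(X_k)\mid X_k\in I_k)$. Note, however, that the argument above only needs the LRP right-hand side to be compared with the variance of $g$ on $I_k$ under some density bounded above and below. So the required bound reduces to density ratios of order $p_{\max}/p_{\min}$, possibly raised to a power involving $s$ because of marginalization, and care is needed that no $|I_k|$-dependent factor survives.
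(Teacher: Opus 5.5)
Your proof is correct and follows essentially the paper's route: your Fubini identity for $\Cov(f(\bX),g(X_k)\mid\bX\in A)$ is the integration-by-parts inequality the paper takes from Lemma~B.1 of \citet{Mazumder2024}. The paper then lower-bounds the maximum split signal by a $\sqrt{q(1-q)}\,|g'|$-weighted average; you bound by the supremum over $t$ instead, and both arguments finish the same way with $q(1-q)\le 1/4$, LRP, and a density comparison. The obstacle you anticipate does not arise, because \eqref{eq:conditions-SID-0} is already stated with $\Var(g(X_k)\mid X_k\in I_k)$. You therefore never need Lemma~\ref{lemma:equiv-proj-var}; you only need to compare the uniform law on $I_k$ with the law of $X_k$ given $X_k\in I_k$, which costs a factor $p_{\max}/p_{\min}$ with no dependence on $s$ or $|I_k|$.
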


Condition~\eqref{eq:conditions-SID-0} says that, within every
cell, the target has a nontrivial correlation with some
univariate LRP direction.  The LRP requirement is essential:
correlation with an arbitrarily oscillatory univariate function
need not be captured by a single threshold split.

\begin{remark}[Intuition and related work]
On every cell with $\Var(f(\bX)\mid\bX\in A)>0$, Lemma~\ref{lemma:equiv-proj-var} shows that condition \eqref{eq:conditions-SID-0} is equivalent, up to constants depending only on the density bounds, to requiring that $\rho(f,g\mid \bX\in A)^2$ be uniformly bounded below by a positive constant,
where $\rho(f,g\mid \bX\in A)$ denotes the population Pearson product--moment correlation coefficient, as used in the CART analysis of \citet{klusowski2020sparse}. Cells with zero conditional variance satisfy SID trivially. Thus, \eqref{eq:conditions-SID-0} shows that if there exists a univariate function $g$ in the LRP function class such that $f$ and $g$ are sufficiently strongly correlated within every nondegenerate cell, then $f$ satisfies the SID condition. We refer to \citet[see, e.g., Lemma A.4]{klusowski2020sparse} for further discussion of the role of Pearson product--moment correlation in the theoretical analysis of tree-based methods.
\end{remark}

\begin{lemma}\label{lemma:help-sufficient-sid}
    Let $q_A^{(k)}(t) \coloneqq \P\{X_k \leq t\mid\bX \in A\}$.
    Under the setting of Lemma~\ref{lemma:sufficient-SID}, for
    the coordinate $k$ and witness $g$ associated with a cell
    $A=\prod_{j=1}^s[\ell_j,u_j]$, we have
    \begin{equation*}
        \max_{ b \in \mathbb{R}} \sqrt{\Delta(A, k, b)} \gtrsim \frac{\sqrt{\P(A)}\,\Var[f(\bX)|\bX \in A]^{1/2}\paren*{\Var[g(X_k) \mid X_k \in [\ell_k,u_k]]}^{1/2}}{\int_{\ell_k}^{u_k} \sqrt{q_A^{(k)}(t)(1 - q_A^{(k)}(t))}\,|g'(t)|\,\mathrm{d}t}.
    \end{equation*}
\end{lemma}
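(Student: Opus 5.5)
The plan is the classical ``averaging over thresholds'' argument of \citet{Mazumder2024} and \citet{klusowski2020sparse}: write the covariance of $f$ with $g(X_k)$ inside $A$ as an integral over thresholds $t$ of split-wise mean differences, then bound the integrand by $\sqrt{\Delta(A,k,t)}$. Fix the cell $A$, the coordinate $k$ and the witness $g$, and write $q(t)=q_A^{(k)}(t)$, $\mu=\E[f(\bX)\mid\bX\in A]$, and $A_L^t,A_R^t$ for the children of the split $(A,k,t)$.

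\emph{Step 1 (covariance as a threshold integral).} Since $g$ is absolutely continuous on $[\ell_k,u_k]$, write $g(X_k)=g(\ell_k)+\int_{\ell_k}^{u_k}g'(t)\1\{X_k>t\}\,\mathrm{d}t$. Fubini then gives
\[
\Cov\bigl(f(\bX),g(X_k)\mid\bX\in A\bigr)=\int_{\ell_k}^{u_k}g'(t)\,\E\bigl[(f(\bX)-\mu)\1\{X_k>t\}\mid\bX\in A\bigr]\,\mathrm{d}t .
\]
Fubini is justified because $f$ is bounded and $g'\in L^1$.

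\emph{Step 2 (integrand versus impurity decrease).} The inner expectation equals $(1-q(t))(\mu_R^t-\mu)=q(t)(1-q(t))(\mu_R^t-\mu_L^t)$, where $\mu_L^t,\mu_R^t$ are the child means. From the impurity identity,
\[
\Delta(A,k,t)=\P(A)\,q(t)(1-q(t))(\mu_L^t-\mu_R^t)^2 .
\]
Hence the absolute value of the integrand is at most $|g'(t)|\sqrt{q(t)(1-q(t))}\sqrt{\Delta(A,k,t)/\P(A)}$. Bounding $\sqrt{\Delta(A,k,t)}$ by $\max_b\sqrt{\Delta(A,k,b)}$ gives
\[
\bigl|\Cov(f,g\mid A)\bigr|\le \frac{\max_b\sqrt{\Delta(A,k,b)}}{\sqrt{\P(A)}}\int_{\ell_k}^{u_k}\sqrt{q(t)(1-q(t))}\,|g'(t)|\,\mathrm{d}t .
\]

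\emph{Step 3 (apply the witness condition).} Use \eqref{eq:conditions-SID-0} to lower bound the left side by $\sqrt{C_s}\,\Var[f\mid A]^{1/2}\Var[g(X_k)\mid X_k\in I_k]^{1/2}$. Rearranging gives the claim with constant $\sqrt{C_s}$. The degenerate cases are handled separately: if $\Var[f\mid A]=0$ the claim is trivial, and otherwise the witness variance is positive, so the denominator is nonzero because $g$ is nonconstant.

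The \textbf{main obstacle} I expect is not the inequality itself but bookkeeping. One issue is the sign and orientation convention, since splits put $\{x_k\le b\}$ on the left. The other is verifying that Fubini applies and that the identity in Step 2 holds at thresholds where $q(t)\in\{0,1\}$; there both sides vanish. This lemma does not use LRP; LRP enters only later, when the denominator is bounded by $\sqrt{R}\,\Var[g\mid I_k]^{1/2}$ via Cauchy--Schwarz.
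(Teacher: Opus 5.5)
Your proof is correct and is essentially the paper's argument. The paper also lower-bounds $\max_b\sqrt{\Delta(A,k,b)}$ by the $\sqrt{q_A^{(k)}(1-q_A^{(k)})}\,|g'|$-weighted average of $\sqrt{\Delta(A,k,t)}$, then bounds that weighted integral below by $\sqrt{\P(A)}\,\bigl|\Cov\bigl(f(\bX),g(X_k)\mid\bX\in A\bigr)\bigr|$ via integration by parts (citing Lemma~B.1 of \citet{Mazumder2024}) — exactly the identity you derive in Steps~1--2 — before invoking \eqref{eq:conditions-SID-0}, with your Fubini computation and degenerate-case handling filling in what the paper leaves to that citation.
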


\begin{proof}
    Center $f$ and $g$ under the conditional law given
    $\bX\in A$.  This does not change either the impurity
    decrease or the total variation of $g$.  By averaging the
    split signal against the variation measure of $g$,
    \begin{equation}\label{eq:proof-help-sid-special-case-1}
    \max_{b \in \mathbb{R}} \sqrt{\Delta(A, k, b)} \geq \frac{\int_{\ell_k}^{u_k} \sqrt{q_A^{(k)}(t)(1 - q_A^{(k)}(t))}\sqrt{\Delta(A, k, t)}\,|g'(t)|\,\mathrm{d}t}{\int_{\ell_k}^{u_k} \sqrt{q_A^{(k)}(t)(1 - q_A^{(k)}(t))}\,|g'(t)|\,\mathrm{d}t}.
    \end{equation}
    Lemma~B.1 of \citet{Mazumder2024} (one may also refer to calculations in \citet{klusowski2020sparse}) and integration by parts
    give
    \begin{equation}
    \begin{aligned}\label{eq:proof-help-sid-special-case-2}
        & \int_{\ell_k}^{u_k} \sqrt{q_A^{(k)}(t)(1 - q_A^{(k)}(t))}\sqrt{\Delta(A, k, t)}\,|g'(t)|\,\mathrm{d}t \\
        &\geq \frac{1}{\sqrt{\P(A)}}
        \left|\E\left[f(\bX)g(X_k)\1_{\{\bX\in A\}}\right]\right| \\
        &=\sqrt{\P(A)}\,
        \left|\Cov\bigl(f(\bX),g(X_k)\mid\bX\in A\bigr)\right|.
    \end{aligned}
    \end{equation}
    Combining \eqref{eq:proof-help-sid-special-case-1}--\eqref{eq:proof-help-sid-special-case-2}
    with \eqref{eq:conditions-SID-0} proves the claim.
\end{proof}

\begin{proof}[Proof of Lemma~\ref{lemma:sufficient-SID}]
    For any cell $A = \prod_{j=1}^s [\ell_j, u_j] \subseteq [0, 1]^s$ and the fixed $k$, it holds that
    \begin{equation}
    \begin{aligned}\label{eq:proof-suff-sid-1}
        \left( \int_{\ell_k}^{u_k} \sqrt{q_A^{(k)}(t)(1 - q_A^{(k)}(t))}\,|g'(t)|\,\mathrm{d}t \right)^2 & \leq \frac{1}{4} \left( \int_{\ell_k}^{u_k} |g'(t)| \, \mathrm{d}t \right)^2 \\
        & \leq \frac{R/4}{u_k - \ell_k} \inf_{w \in \mathbb{R}} \int_{\ell_k}^{u_k} |g(t) - w|^2 \, dt,
    \end{aligned}
    \end{equation}
    where the first inequality uses $q_A^{(k)}(t)(1-q_A^{(k)}(t))\leq 1/4$, and the second follows because $g$ satisfies \eqref{eq:assum-TV} on $I_k$.

    Let $p_{X_k}$ denote the marginal density of $X_k$. By the variational characterization of variance, we have
    \begin{equation}\label{eq:proof-suff-sid-2}
    \begin{aligned}
        \Var[g(X_k) \mid X_k \in [\ell_k,u_k]]
        &= \inf_{w \in \mathbb{R}} \E\left[|g(X_k) - w|^2 \mid X_k \in [\ell_k, u_k]\right] \\
        &= \inf_{w \in \mathbb{R}} \int_{\ell_k}^{u_k} |g(t) - w|^2 \frac{p_{X_k}(t)}{\P\{X_k \in [\ell_k,u_k]\}} \, dt \\
        &\asymp \frac{1}{u_k - \ell_k} \inf_{w \in \mathbb{R}} \int_{\ell_k}^{u_k} |g(t) - w|^2 \, dt,
    \end{aligned}
    \end{equation}
    where the asymptotic equivalence follows from bounded density (Assumption~\ref{assum:bounded-density}). Combining \eqref{eq:proof-suff-sid-1} and \eqref{eq:proof-suff-sid-2} yields
    \begin{equation}\label{eq:proof-suff-sid-3}
    \begin{aligned}
        \left( \int_{\ell_k}^{u_k} \sqrt{q_A^{(k)}(t)(1 - q_A^{(k)}(t))}\,|g'(t)|\,\mathrm{d}t \right)^2
        \lesssim \Var[g(X_k) \mid X_k \in [\ell_k,u_k]].
    \end{aligned}
    \end{equation}
    Finally, squaring the conclusion of
    Lemma~\ref{lemma:help-sufficient-sid} and using
    \eqref{eq:proof-suff-sid-3} yields
    \[
        \max_{b\in\mathbb R}\Delta(A,k,b)
        \gtrsim
        \P(A)\Var[f(\bX)\mid\bX\in A].
    \]
    Taking the maximum over coordinates proves sufficient impurity decrease.
\end{proof}

\subsection{LRP-based multivariate models}

\begin{proof}[Proof of Proposition~\ref{prop:sid-suffi} (i)]
    \textit{Step 1: Proof of sufficient impurity decrease.}
    We verify the cellwise witness condition with one of the
    component functions.  The claim is immediate on cells where
    $\Var_P[f(\bX)]=0$, so assume this variance is positive.
    By Jensen's inequality and the triangle inequality,
    \begin{equation}\label{eq:sid-add-1}
    \begin{aligned}
        &\sum_{k=1}^s \E_P\bracks*{ \left| \left( f(\bX) - \E_P\bracks*{f(\bX)} \right) \left( f_k(X_k) - \E_P\bracks*{f_k(X_k)} \right) \right| } \\
        &\quad \geq \sum_{k=1}^s \abs*{ \E_P\bracks*{ \left( f(\bX) - \E_P\bracks*{f(\bX)} \right) \left( f_k(X_k) - \E_P\bracks*{f_k(X_k)} \right) } } \\
        &\quad \geq \abs*{ \E_P\bracks*{ \left( f(\bX) - \E_P\bracks*{f(\bX)} \right) \sum_{k=1}^s \left( f_k(X_k) - \E_P\bracks*{f_k(X_k)} \right) } } \\
        &\quad = \Var_P[f(\bX)].
    \end{aligned}
    \end{equation}
    Furthermore, by Lemma~\ref{lemma:equiv-unif-P},
    \begin{equation}\label{eq:sid-add-2}
        \Var_P[f(\bX)]^{1/2} \asymp\Var_{\nu}[f(\bX)]^{1/2}
        = \bracks*{\sum_{k=1}^s \Var_{\nu}[f_k(X_k)]}^{1/2}
        \asymp \bracks*{\sum_{k=1}^s \Var_P[f_k(X_k)]}^{1/2}.
    \end{equation}
    By Lemma~\ref{lemma:equiv-proj-var},
    $\Var_P[f_k(X_k)]\asymp
    \Var[f_k(X_k)\mid X_k\in I_k]$.  Write
    $c_k=\Cov_P[f(\bX),f_k(X_k)]$ and
    $v_k=\Var[f_k(X_k)\mid X_k\in I_k]$.  Equations
    \eqref{eq:sid-add-1}--\eqref{eq:sid-add-2} imply
    \[
        \sum_{k=1}^s|c_k|\ge \Var_P[f(\bX)],
        \qquad
        \sum_{k=1}^s v_k\lesssim \Var_P[f(\bX)].
    \]
    Cauchy--Schwarz therefore gives
    $\sum_{k:v_k>0}c_k^2/v_k\gtrsim\Var_P[f(\bX)]$.
    Hence some $k\in[s]$ satisfies
    $c_k^2\gtrsim\Var_P[f(\bX)]v_k$.  Since
    the restriction of $f_k$ to $I_k$ satisfies
    \eqref{eq:assum-TV}, the sufficient-impurity-decrease
    conclusion follows from Lemma~\ref{lemma:sufficient-SID}.

    \textit{Step 2: Proof of local norm equivalence.} Let $\mu_0 = \sum_{k=1}^s \E\bracks*{f_k(X_k) \mid X_k \in I_k}$. By the Cauchy--Schwarz inequality and the LRP condition,
    \begin{equation}\label{eq:eq-vd-add-1}
    \begin{aligned}
        \sup_{\bx \in A} \abs*{f(\bx) - \mu_0}^2
        &\leq s \sum_{k=1}^s \sup_{x_k \in I_k} \abs*{f_k(x_k) - \E\bracks*{f_k(X_k) \mid X_k \in I_k}}^2 \\
        &\leq s \sum_{k=1}^s \left(\int_{I_k}|f_k'(t)|\,\mathrm{d}t\right)^2 \\
        &\leq C_1 s \sum_{k=1}^s \Var_\nu[f_k].
    \end{aligned}
    \end{equation}
    Under the uniform distribution $\nu_A$, the coordinates $X_1, \ldots, X_s$ are independent, so
    \begin{equation}\label{eq:eq-vd-add-2}
        \sum_{k=1}^s \Var_\nu[f_k] = \Var_\nu[f].
    \end{equation}
    By Lemma~\ref{lemma:equiv-unif-P}, $\Var_\nu[f] \asymp \Var_P[f]$. Combining this with \eqref{eq:eq-vd-add-1} and \eqref{eq:eq-vd-add-2} yields
    \begin{equation}\label{eq:eq-vd-add-3}
        \sup_{\bx \in A} \abs*{f(\bx) - \mu_0}^2 \leq C_2 \Var_P[f],
    \end{equation}
    where $C_2 > 0$ depends only on $c_1, c_2, s$, and $C_1$. Since $\E_P\bracks*{f(\bX) \mid \bX \in A} \in [\inf_A f, \sup_A f]$, the triangle inequality gives
    \begin{equation}
        \sup_{\bx \in A} \abs*{f(\bx) - \E_P\bracks*{f(\bX) \mid \bX \in A}}^2 \leq \operatorname{osc}(f, A)^2 \leq 4 \sup_{\bx \in A} \abs*{f(\bx) - \mu_0}^2 \leq 4 C_2 \Var_P[f],
    \end{equation}
    where $\operatorname{osc}(f, A) \coloneqq \sup_{\bx, \by \in A} \abs*{f(\bx) - f(\by)}$, establishing local norm equivalence.
\end{proof}

\begin{proof}[Proof of Proposition~\ref{prop:sid-suffi} (ii)]
    Fix a cell $A=\prod_{k=1}^s I_k$. Because the coordinates are
    mutually independent, they remain mutually independent under $P_A$, and
    the conditional law of $X_k$ is the law of $X_k$ given $X_k\in I_k$.
    Define
    \begin{equation}\label{eq:mean-sigma}
      \mu_k \coloneqq \E_P\bracks*{h_k(X_k)}, \qquad
      \sigma_k^2 \coloneqq \Var_P[h_k(X_k)], \qquad k\in[s].
    \end{equation}
    Thus $h_{\min}\leq\mu_k\leq h_{\max}$ and
    $0\leq\sigma_k^2\leq(h_{\max}-h_{\min})^2$.
    Set $\underline h\coloneqq\min\{1,h_{\min}\}$ and
    $\overline h\coloneqq\max\{1,h_{\max}\}$.

    \textit{Sufficient impurity decrease.}
    Set $f_j(\bx)=\prod_{\ell\in\mathcal J_j}h_\ell(x_\ell)$ and
    $c_k=\Cov_P[f(\bX),h_k(X_k)]$. Conditional independence gives
    \begin{equation}\label{eq:cov-products-P}
      \Cov_P[f_j(\bX),h_k(X_k)]
      =
      \begin{cases}
        \displaystyle \sigma_k^2
        \prod_{\ell\in\mathcal J_j\setminus\{k\}}\mu_\ell,
        & k\in\mathcal J_j,\\[6pt]
        0, & k\notin\mathcal J_j.
      \end{cases}
    \end{equation}
    Since every $k$ belongs to at least one $\mathcal J_j$,
    \begin{equation}\label{eq:cov-products-lower}
      c_k
      =\sigma_k^2\sum_{j:k\in\mathcal J_j}
        \prod_{\ell\in\mathcal J_j\setminus\{k\}}\mu_\ell
      \geq \underline h^{s-1}\sigma_k^2.
    \end{equation}

    We next bound the variance of $f$. For every $j$, independence yields
    \begin{equation*}
      \Var_P[f_j]
      =\prod_{k\in\mathcal J_j}(\sigma_k^2+\mu_k^2)
       -\prod_{k\in\mathcal J_j}\mu_k^2
      \leq C_1\sum_{k\in\mathcal J_j}\sigma_k^2,
    \end{equation*}
    where $C_1$ depends only on $s,h_{\min},h_{\max}$. Indeed, this
    follows by factoring out $\prod_k\mu_k^2$ and using
    $\prod_k(1+r_k)-1\leq e^{\sum_k r_k}\sum_k r_k$ with
    $r_k=\sigma_k^2/\mu_k^2$. Consequently,
    \begin{equation}\label{eq:var-products-upper}
      \Var_P[f]
      \leq M\sum_{j=1}^M\Var_P[f_j]
      \leq C_2\sum_{k=1}^s\sigma_k^2,
    \end{equation}
    where $C_2$ depends only on $s,M,h_{\min},h_{\max}$.
    If $\sum_k\sigma_k^2=0$, then \eqref{eq:var-products-upper} gives
    $\Var_P[f]=0$, so SID is immediate. Otherwise,
    \begin{equation*}
      \sum_{k:\sigma_k>0}\frac{c_k^2}{\sigma_k^2}
      \geq
      \frac{(\sum_k c_k)^2}{\sum_k\sigma_k^2}
      \geq \underline h^{2s-2}\sum_k\sigma_k^2
      \geq \frac{\underline h^{2s-2}}{C_2}\Var_P[f].
    \end{equation*}
    Hence some $k$ satisfies
    $c_k^2\gtrsim \Var_P[f]\sigma_k^2$. The restriction of $h_k$ to
    $I_k$ is an LRP witness, so Lemma~\ref{lemma:sufficient-SID} proves SID.

    \textit{Local norm equivalence.}
    Since $\E_P\bracks*{f} \in [\inf_A f,\, \sup_A f]$,
    \begin{equation}\label{eq:sup-by-osc}
      \sup_{\bx \in A}\abs{f(\bx) - \E_P\bracks*{f}}
      \leq \operatorname{osc}(f, A)
      \coloneqq \sup_{\bx, \by \in A}\abs{f(\bx) - f(\by)}.
    \end{equation}

    Fix $\bx, \by \in A$. Define $\bz^{(0)} = \by$ and $\bz^{(s)} = \bx$, where $\bz^{(k)}$ agrees with~$\bx$ on coordinates $1, \ldots, k$ and with~$\by$ on coordinates $k+1, \ldots, s$. Then
    \begin{equation}
      f(\bx) - f(\by)
      = \sum_{k=1}^{s} \bigl(f(\bz^{(k)}) - f(\bz^{(k-1)})\bigr).
    \end{equation}
    The $k$-th summand alters only coordinate~$k$ from~$y_k$ to~$x_k$. For each $j \in [M]$:
    \begin{equation*}
      f_j(\bz^{(k)}) - f_j(\bz^{(k-1)})
      = \begin{cases}
        \bigl(h_k(x_k) - h_k(y_k)\bigr)\displaystyle\prod_{\ell \in \mathcal{J}_j
        \setminus \{k\}} h_\ell(z_\ell^{(k)}),
        & k \in \mathcal{J}_j, \\[4pt]
        0, & k \notin \mathcal{J}_j,
      \end{cases}
    \end{equation*}
    where each factor $h_\ell(z_\ell^{(k)}) \in [h_{\min}, h_{\max}]$.
    Hence
    \begin{equation*}
      \abs{f(\bz^{(k)}) - f(\bz^{(k-1)})}
      \leq M\, \overline h^{s-1}\, \abs{h_k(x_k) - h_k(y_k)}.
    \end{equation*}
    Taking the supremum over $x_k, y_k \in I_k$ and summing over $k$ yields
    \begin{equation}\label{eq:osc-f}
      \operatorname{osc}(f, A) \leq M\, \overline h^{s-1}
      \sum_{k=1}^{s} \operatorname{osc}(h_k, I_k).
    \end{equation}

    For each $k \in [s]$,
    \begin{equation}\label{eq:osc-sigma}
      \operatorname{osc}(h_k, I_k)
      \leq \int_{I_k}|h_k'(t)|\,\mathrm dt
      \lesssim_{p_{\min},p_{\max},R} \sigma_k.
    \end{equation}
    Here we used absolute continuity, LRP on $I_k$, the variational
    characterization of variance, and the upper and lower density bounds for
    $X_k$ conditional on $X_k\in I_k$.

    By Cauchy--Schwarz and \eqref{eq:cov-products-lower}, for $\sigma_k>0$,
    \begin{equation}\label{eq:Sigma-upper}
      \sigma_k \leq \underline h^{1-s}\Var_P[f]^{1/2}.
    \end{equation}
    This bound holds trivially when $\sigma_k = 0$. Substituting into \eqref{eq:osc-sigma} and summing over $k$:
    \begin{equation}\label{eq:sid-2-1}
        \sum_{k=1}^{s} \operatorname{osc}(h_k, I_k)
        \lesssim_{s,p_{\min},p_{\max},R,h_{\min}}
        \Var_P[f]^{1/2}.
    \end{equation}

    Combining \eqref{eq:sup-by-osc}, \eqref{eq:osc-f}, and
    \eqref{eq:sid-2-1}, and squaring both sides, gives
    \begin{equation}
      \sup_{\bx \in A}\abs{f(\bx) - \E_P\bracks*{f}}^2 \leq C_4\, \Var_P[f],
    \end{equation}
    where $C_4>0$ depends only on
    $s,M,p_{\min},p_{\max},R,h_{\min},h_{\max}$. This proves local
    norm equivalence.
\end{proof}

\subsection{Bias visibility from local norm equivalence}

\begin{proof}[Proof of Proposition~\ref{prop:lne-implies-bias-visibility}]
Fix a relevant cell $A\in\mathcal A_S$ and $\bx\in A$.
For brevity, write
\[
B(\bx;A):=\bigl|f^*(\bx)-\E[f^*(\bX)\mid \bX\in A]\bigr|.
\]
If $V(A)=0$, then local norm equivalence gives $\Osc(A)=0$, and hence $B(\bx;A)=0$.
Assume $V(A)>0$ and write
\[
q(\bx):=\frac{s}{2\bar\alpha_S(\bx)+s}.
\]
Since
\[
B(\bx;A)^2
\le
\Osc(A)^2
\le
\rho\,\Var[f^*(\bX)\mid\bX\in A]
=
\rho\frac{V(A)}{\P(A)},
\]
whenever
\[
\P(A)
\ge
\chi
\paren*{\frac{V(A)}{L^2}}^{
s/(2\bar\alpha_S(\bx)+s)},
\]
we have
\[
B(\bx;A)^2
\le
\rho\chi^{-1}
L^{2q(\bx)}
V(A)^{1-q(\bx)}.
\]
If $\zeta^2\ge\rho\chi^{-1}$, then
\[
B(\bx;A)
\le
\zeta\,
L^{s/(2\bar\alpha_S(\bx)+s)}
V(A)^{
\bar\alpha_S(\bx)/(2\bar\alpha_S(\bx)+s)},
\]
which is Assumption~\ref{assum:variance-oscillation}.
\end{proof}

\subsection{Pathwise competitive split-scale regularity}

\begin{proof}[Proof of Proposition~\ref{prop:pathwise-split-scale-from-primitives}]
All constants in this proof are independent of $\lambda$ unless the dependence is displayed.
Since $\Delta_{\max}(B)\le V(B)$ for every cell $B$, we may replace $\lambda$ by $\lambda\wedge1$ and assume $0<\lambda\le1$.
Fix a relevant path
\[
A^0\supset A^1\supset\cdots\supset A^D=A
\]
whose nonterminal splits are competitive.

Write
\[
\Delta_{\mathrm{pa}}
:=
\Delta(A^{D-1},j_{D-1},b_{D-1})
\]
for the terminal-parent split signal.
If $\Delta_{\mathrm{pa}}=0$, then \eqref{eq:reliable-split-scale} is immediate. Otherwise, let $A^{\mathrm{sib}}$ be the sibling of $A$ in the split of $A^{D-1}$ and write
\[
\mu_A:=\E[f^*(\bX)\mid \bX\in A],
\qquad
\mu_{\mathrm{sib}}:=\E[f^*(\bX)\mid \bX\in A^{\mathrm{sib}}].
\]
The impurity identity gives
\[
\frac{\Delta_{\mathrm{pa}}}{\P(A)}
\le
(\mu_A-\mu_{\mathrm{sib}})^2.
\]
By local norm equivalence, the child-mean contrast is bounded by the oscillation on $A^{D-1}$, and hence
\[
(\mu_A-\mu_{\mathrm{sib}})^2
\le
\Osc(A^{D-1})^2
\le
\rho\,\Var[f^*(\bX)\mid\bX\in A^{D-1}].
\]
Using bounded density (Assumption~\ref{assum:bounded-density}) to compare $\P(A)$ and $\Vol(A)$, we obtain
\[
\Var[f^*(\bX)\mid\bX\in A^{D-1}]
\gtrsim_{\rho,p_{\max}}
\frac{\Delta_{\mathrm{pa}}}{\Vol(A)}.
\]

Now fix a coordinate $j$ split somewhere along the path, and let $t_j$ be its last split time. Then $A^{D-1}\subseteq A^{t_j}$. By Lemma~\ref{lem:pathwise-variance-comparison-from-lne},
\[
\Var[f^*(\bX)\mid\bX\in A^{t_j}]
\ge
\rho^{-1}
\Var[f^*(\bX)\mid\bX\in A^{D-1}]
\gtrsim_{\rho,p_{\max}}
\frac{\Delta_{\mathrm{pa}}}{\Vol(A)}.
\]
The split at $A^{t_j}$ is competitive and occurs along $j$, so $\Delta_{\max}(A^{t_j},j)\ge \Delta_{\max}(A^{t_j})/2$. Combining sufficient impurity decrease with Lemma~\ref{lem:smooth-split-signal-upper} gives
\[
\Var[f^*(\bX)\mid\bX\in A^{t_j}]
\lesssim_{p_{\min},p_{\max},\|f^*\|_\infty}
\lambda^{-1}
L^2\len_j(A^{t_j})^{2\alpha_j(x_j)}.
\]
Therefore
\[
\len_j(A^{t_j})
\gtrsim_{p_{\min},p_{\max},\rho,\|f^*\|_\infty}
\lambda^{1/(2\alpha_j(x_j))}
L^{-1/\alpha_j(x_j)}
\left(
\frac{\Delta_{\mathrm{pa}}}{\Vol(A)}
\right)^{1/(2\alpha_j(x_j))}.
\]
The preceding variance lower bound and sufficient impurity decrease imply $\Delta_{\max}(A^{t_j})>0$, so Lemma~\ref{lem:reliable-split-balance-from-lne} applies.
By bounded density, and because no later split occurs along coordinate $j$, this gives
\[
\len_j(A)
\gtrsim_{p_{\min},p_{\max},\rho}
\lambda
\len_j(A^{t_j}).
\]
Combining the last two displays yields
\[
\len_j(A)
\gtrsim_{p_{\min},p_{\max},\rho,\|f^*\|_\infty}
\lambda^{1+1/(2\alpha_j(x_j))}
L^{-1/\alpha_j(x_j)}
\left(
\frac{\Delta_{\mathrm{pa}}}{\Vol(A)}
\right)^{1/(2\alpha_j(x_j))}.
\]
Since $\alpha_j(x_j)\ge\alpha_{\min}$ and $\lambda\le1$, the displayed lower bound is at least
\[
C_{p_{\min},p_{\max},\rho,\alpha_{\min},\|f^*\|_\infty}
\lambda^{1+1/(2\alpha_{\min})}
L^{-1/\alpha_j(x_j)}
\left(
\frac{\Delta_{\mathrm{pa}}}{\Vol(A)}
\right)^{1/(2\alpha_j(x_j))}
\]
with $C_{p_{\min},p_{\max},\rho,\alpha_{\min},\|f^*\|_\infty}$ as defined in the proposition statement.
Thus Assumption~\ref{assum:reliable-split-balance} holds with
\[
\omega
=
\min\left\{1,
C_{p_{\min},p_{\max},\rho,\alpha_{\min},\|f^*\|_\infty}
\lambda^{1+1/(2\alpha_{\min})}
\right\}.
\]
\end{proof}

\begin{proof}[Proof of Lemma~\ref{lem:reliable-split-balance-from-lne}]
Let $(A,j,b)$ be a competitive split, with children $A_L$ and $A_R$.
By sufficient impurity decrease and competitiveness,
\[
\Delta(A,j,b)
\ge
\frac{\lambda}{2}\P(A)\Var[f^*(\bX)\mid\bX\in A].
\]
On the other hand, the child-mean contrast is bounded by $\Osc(A)$, so local norm equivalence gives
\[
\Delta(A,j,b)
\le
\rho
\frac{\P(A_L)\P(A_R)}{\P(A)}
\Var[f^*(\bX)\mid\bX\in A].
\]
The positive-signal assumption and sufficient impurity decrease imply $\Var[f^*(\bX)\mid\bX\in A]>0$, so cancellation gives
\[
\frac{\P(A_L)\P(A_R)}{\P(A)^2}
\ge
\frac{\lambda}{2\rho}.
\]
Writing $x=\P(A_L)/\P(A)$, the left hand side is $x(1-x)$, which is bounded above by $x\wedge(1-x)$.
Thus $\P(A_L)\wedge\P(A_R)\ge \lambda\P(A)/(2\rho)$.
\end{proof}

\begin{proof}[Proof of Lemma~\ref{lem:pathwise-variance-comparison-from-lne}]
Since $A'\subseteq A$,
\[
\Var[f^*(\bX)\mid\bX\in A']
\le
\Osc(A')^2
\le
\Osc(A)^2
\le
\rho\,\Var[f^*(\bX)\mid\bX\in A].
\]
\end{proof}

\begin{proof}[Proof of Lemma~\ref{lem:smooth-split-signal-upper}]
Fix $A\in\mathcal A_S$, $j\in S$, and $b\in A_j$. Write $A=A_j\times A_{-j}$ with $A_j=[\ell_j,u_j]$. Define
\[
N(t):=\int_{A_{-j}} f^*(t,\bx_{-j})\,p(t,\bx_{-j})\,d\bx_{-j},
\qquad
D(t):=\int_{A_{-j}} p(t,\bx_{-j})\,d\bx_{-j},
\]
and let $m(t):=N(t)/D(t)$ for $t\in A_j$. Then $m(t)$ is the conditional mean of $f^*(\bX)$ given $\bX_{-j}\in A_{-j}$ and $X_j=t$.

By bounded density (Assumption~\ref{assum:bounded-density}), there exist constants $0<p_{\min}\le p_{\max}<\infty$ such that $p_{\min}\le p(\bx)\le p_{\max}$ for all $\bx\in[0,1]^d$. Hence for every $t\in A_j$,
\[
p_{\min}\Vol(A_{-j})
\le
D(t)
\le
p_{\max}\Vol(A_{-j}),
\]
so in particular the denominator is uniformly bounded away from zero.

We first bound the oscillation of $N(\cdot)$ on $A_j$. Set
$\alpha_j^+(A_j):=\sup_{v\in A_j}\alpha_j(v)$. Fix
$0<\epsilon<\alpha_j^+(A_j)$ and
choose $t_\epsilon\in A_j$ such that
$\alpha_j(t_\epsilon)\ge\alpha_j^+(A_j)-\epsilon$. For $t,t'\in A_j$,
the triangle inequality gives
\[
|N(t)-N(t')|
\le
|N(t)-N(t_\epsilon)|+|N(t')-N(t_\epsilon)|.
\]
Thus it suffices to bound $|N(t)-N(t_\epsilon)|$. Using
$ab-a'b'=(a-a')b+a'(b-b')$, we have
\begin{align*}
&|f^*(t,\bx_{-j})p(t,\bx_{-j})-f^*(t_\epsilon,\bx_{-j})p(t_\epsilon,\bx_{-j})|\\
&\le
|f^*(t,\bx_{-j})-f^*(t_\epsilon,\bx_{-j})|\,|p(t,\bx_{-j})|
+
|f^*(t_\epsilon,\bx_{-j})|\,|p(t,\bx_{-j})-p(t_\epsilon,\bx_{-j})|.
\end{align*}
Since $f^*$ belongs to the local H\"{o}lder class, applied with base
point $(t_\epsilon,\bx_{-j})$, we get
\[
|f^*(t,\bx_{-j})-f^*(t_\epsilon,\bx_{-j})|
\le
L\,|t-t_\epsilon|^{\alpha_j(t_\epsilon)}
\le
L\,|t-t_\epsilon|^{\alpha_j^+(A_j)-\epsilon}.
\]
Likewise, since $p$ satisfies the same local H\"{o}lder condition,
\[
|p(t,\bx_{-j})-p(t_\epsilon,\bx_{-j})|
\le
L\,|t-t_\epsilon|^{\alpha_j(t_\epsilon)}
\le
L\,|t-t_\epsilon|^{\alpha_j^+(A_j)-\epsilon}.
\]
Combining the last three displays with the bounds $|p|\le p_{\max}$ and $|f^*|\le \|f^*\|_\infty$, we obtain
\[
|f^*(t,\bx_{-j})p(t,\bx_{-j})-f^*(t_\epsilon,\bx_{-j})p(t_\epsilon,\bx_{-j})|
\le
C_1\,|t-t_\epsilon|^{\alpha_j^+(A_j)-\epsilon},
\]
where $C_1:=L(p_{\max}+\|f^*\|_{\infty})$. Integrating over $A_{-j}$ yields
\[
|N(t)-N(t_\epsilon)|
\le
C_1\,\Vol(A_{-j})\,|t-t_\epsilon|^{\alpha_j^+(A_j)-\epsilon}
\le
C_1\,\Vol(A_{-j})\,\len_j(A)^{\alpha_j^+(A_j)-\epsilon}.
\]
Hence
\[
|N(t)-N(t')|
\le
C_1\,\Vol(A_{-j})\,\len_j(A)^{\alpha_j^+(A_j)-\epsilon}
\qquad \forall\, t,t'\in A_j.
\]

Exactly the same argument applied to $D(t)=\int_{A_{-j}}p(t,\bx_{-j})\,d\bx_{-j}$ gives
\[
|D(t)-D(t')|
\le
C_2L\,\Vol(A_{-j})\,\len_j(A)^{\alpha_j^+(A_j)-\epsilon}
\qquad \forall\, t,t'\in A_j
\]
for some constant $C_2>0$.

We now control the oscillation of $m(\cdot)$. For any $t,t'\in A_j$,
\[
|m(t)-m(t')|
=
\left|\frac{N(t)}{D(t)}-\frac{N(t')}{D(t')}\right|
\le
\frac{|N(t)-N(t')|}{D(t)}
+
|N(t')|\frac{|D(t)-D(t')|}{D(t)D(t')}.
\]
Using $D(t),D(t')\ge p_{\min}\Vol(A_{-j})$ and
\[
|N(t')|
\le
\int_{A_{-j}} |f^*(t',\bx_{-j})|\,p(t',\bx_{-j})\,d\bx_{-j}
\le
\|f^*\|_\infty p_{\max}\Vol(A_{-j}),
\]
we obtain
\[
|m(t)-m(t')|
\le
\frac{C_1}{p_{\min}}\,\len_j(A)^{\alpha_j^+(A_j)-\epsilon}
+
\frac{\|f^*\|_\infty p_{\max} C_2}{p_{\min}^2}\,
\len_j(A)^{\alpha_j^+(A_j)-\epsilon}
\le
C_3L\,\len_j(A)^{\alpha_j^+(A_j)-\epsilon}
\]
for some constant $C_3>0$ depending only on $p_{\min},p_{\max}$, and $\|f^*\|_\infty$.

Finally, by Fubini's theorem,
\[
\E[f^*(\bX)\mid \bX\in A,\ X_j\le b]
=
\frac{\int_{A_j\cap(-\infty,b]} m(t)D(t)\,dt}
{\int_{A_j\cap(-\infty,b]} D(t)\,dt},
\]
and similarly
\[
\E[f^*(\bX)\mid \bX\in A,\ X_j>b]
=
\frac{\int_{A_j\cap(b,\infty)} m(t)D(t)\,dt}
{\int_{A_j\cap(b,\infty)} D(t)\,dt}.
\]
Therefore both quantities are weighted averages of $m(t)$ over subsets of $A_j$, with nonnegative weights. Hence their difference is bounded by the oscillation of $m$ on $A_j$, namely
\begin{align*}
&\big|
\E[f^*(\bX)\mid \bX\in A,\ X_j\le b]
-
\E[f^*(\bX)\mid \bX\in A,\ X_j>b]
\big| \\
&\qquad\le
\sup_{t,t'\in A_j}|m(t)-m(t')|
\le
C_3L\,\len_j(A)^{\alpha_j^+(A_j)-\epsilon}.
\end{align*}
Set $\mu_L=\E[f^*(\bX)\mid\bX\in A_L]$ and
$\mu_R=\E[f^*(\bX)\mid\bX\in A_R]$. Since
\[
\Delta(A,j,b)
=
\frac{\P(A_L)\P(A_R)}{\P(A)}
\bigl(\mu_L-\mu_R\bigr)^2
\le
\P(A)C_3^2L^2\len_j(A)^{2(\alpha_j^+(A_j)-\epsilon)}.
\]
Letting $\epsilon\downarrow0$ gives the same bound with exponent
$2\alpha_j^+(A_j)$. Since
$\alpha_j^+(A_j)\ge\alpha_j(x_j)$ for $\bx\in A$, the claim follows
because $\len_j(A)\le1$.
\end{proof}

\section{Proofs for the CART-MLS lower bound}
\label{app:cart-mls-lower-bound}

This appendix proves Theorem~\ref{thm::lowerright}. The geometric part of
the argument is common to honest and ordinary CART-MLS: in one dimension,
the MLS rule fixes the number of observations in each endpoint leaf up to
a constant factor. The stochastic arguments are different. For an honest
tree, the endpoint leaves are fixed before their noises are observed. For an
ordinary tree, the leaves depend on those noises, so we use a persistence
event that holds for every admissible endpoint leaf size.

For the honest estimator, write $\mathcal D^{(1)}$ and $\mathcal D^{(2)}$
for the independent tree-building and averaging samples, each of size $n$.
Let $N^{(\ell)}(A)$ denote the number of covariates from
$\mathcal D^{(\ell)}$ in $A$, and set
\[
 \bar f_A^{(\ell)}
 :=
 \frac{1}{N^{(\ell)}(A)}
 \sum_{i:X_i^{(\ell)}\in A}f^*(X_i^{(\ell)}),
 \qquad
 \bar\xi_A^{(\ell)}
 :=
 \frac{1}{N^{(\ell)}(A)}
 \sum_{i:X_i^{(\ell)}\in A}\xi_i^{(\ell)}
\]
whenever $N^{(\ell)}(A)>0$. For the ordinary estimator we suppress the
superscript.

\subsection{LRP property of the square-root example}

The following proposition places the square-root example within the LRP class
of Definition~\ref{def:LRP}. Together with
Proposition~\ref{prop:sid-suffi}(i), it shows that the lower bound in
Theorem~\ref{thm::lowerright} is not caused by a failure of sufficient impurity
decrease, but by the non-adaptivity of the minimum-leaf-size stopping rule.

\begin{proposition}[Square-root function satisfies LRP]
\label{prop:sqrt-lrp}
The function $g(x)=\sqrt{x}$ satisfies the locally reverse Poincar\'e condition
(Definition~\ref{def:LRP}) with constant $R=18$.
\end{proposition}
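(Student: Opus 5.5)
We need to show that for all $0\le a<b\le1$,
\[
\Bigl(\int_a^b \tfrac{1}{2\sqrt t}\,dt\Bigr)^2=(\sqrt b-\sqrt a)^2\le \frac{18}{b-a}\inf_w\int_a^b(\sqrt t-w)^2\,dt .
\]
The infimum is attained at the mean, so the right-hand side equals $18\,\Var[\sqrt U]$ for $U\sim\mathrm{Unif}[a,b]$. Both sides scale by the same factor under $t\mapsto ct$, so I normalize $b=1$ and set $a=\theta\in[0,1)$. The claim becomes
\[
(1-\sqrt\theta)^2\le 18\,\Var[\sqrt U],\qquad U\sim\mathrm{Unif}[\theta,1].
\]

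The plan is a lower bound on the variance by restricting to the two halves of the interval. Let $m=(1+\theta)/2$. By the law of total variance, $\Var[\sqrt U]$ is at least the between-halves part,
\[
\Var[\sqrt U]\ \ge\ \tfrac14\bigl(\E[\sqrt U\mid U>m]-\E[\sqrt U\mid U\le m]\bigr)^2 .
\]
By monotonicity of the square root, this mean gap is at least $\sqrt{(m+1)/2}-\sqrt{(\theta+m)/2}$, the difference of the square roots at the midpoints of the halves. That bound is not immediate, since conditional means are not values at midpoints.

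A cleaner route uses quartiles. Let $q_1=\theta+(1-\theta)/4$ and $q_3=\theta+3(1-\theta)/4$. With probability $1/4$ we have $U\le q_1$, and with probability $1/4$ we have $U\ge q_3$. On these events $\sqrt U\le \sqrt{q_1}$ and $\sqrt U\ge\sqrt{q_3}$ respectively. For any constant $w$, at least one of these events has $|\sqrt U-w|\ge(\sqrt{q_3}-\sqrt{q_1})/2$. Hence
\[
\Var[\sqrt U]\ge \tfrac14\cdot\tfrac14\bigl(\sqrt{q_3}-\sqrt{q_1}\bigr)^2 .
\]

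It remains to compare $\sqrt{q_3}-\sqrt{q_1}$ with $1-\sqrt\theta$. Write
\[
\sqrt{q_3}-\sqrt{q_1}=\frac{(1-\theta)/2}{\sqrt{q_3}+\sqrt{q_1}}\ \ge\ \frac{1-\theta}{4}
\]
using $q_i\le1$. Also $1-\sqrt\theta=(1-\theta)/(1+\sqrt\theta)\le 1-\theta$. Together these give
\[
(\sqrt{q_3}-\sqrt{q_1})^2\ge(1-\sqrt\theta)^2/16,
\]
so
\[
(1-\sqrt\theta)^2\le 256\,\Var[\sqrt U].
\]
This gives LRP with constant $256$ but not $18$.

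The main obstacle is therefore the sharp constant. To reach $R=18$, I would compute $\Var[\sqrt U]$ exactly:
\[
\E\sqrt U=\frac{2(1-\theta^{3/2})}{3(1-\theta)},\qquad \E U=\frac{1+\theta}{2}.
\]
Then I would substitute $s=\sqrt\theta\in[0,1)$ and reduce the inequality to a polynomial inequality in $s$ after clearing the common factor $(1-s)^2$. The resulting ratio $(1-s)^2/\Var$ should be maximized at the endpoint $s=0$ or $s\to1$. At $s\to1$ the variance is about $(1-\theta)^2/48$ and $(1-s)^2\approx(1-\theta)^2/4$, giving ratio $12$. At $s=0$ the variance is $1/2-4/9=1/18$, giving ratio exactly $18$. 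So the proof would finish by checking monotonicity of the ratio in $s$, either by sign-checking a low-degree polynomial or by an elementary derivative argument.
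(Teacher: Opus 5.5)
Your final route is the paper's: compute $\Var[\sqrt U]$ exactly, substitute $s=\sqrt\theta$, and cancel the factor $(1-s)^2$. The paper's $r=\sqrt{a/b}$ is your $s$ after the normalization $b=1$, which is valid because both sides scale by the same factor under $t\mapsto ct$ and $b>0$.

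As written, though, the proposal stops before the one step that carries the content. You check the values $18$ at $s=0$ and $12$ as $s\to1$, then assert that the ratio \emph{should} be maximized at an endpoint. That does not bound the ratio on $(0,1)$. The quartile argument, which gives the constant $256$, plays no role in reaching $18$ and can be dropped.

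The missing computation is short. With $\E\sqrt U=2(1+s+s^2)/(3(1+s))$ and $\E U=(1+s^2)/2$,
\[
\Var[\sqrt U]=\frac{9(1+s^2)(1+s)^2-8(1+s+s^2)^2}{18(1+s)^2}=\frac{(1-s)^2(1+4s+s^2)}{18(1+s)^2},
\]
so that
\[
\frac{(1-s)^2}{\Var[\sqrt U]}=\frac{18(1+s)^2}{1+4s+s^2}\le 18,
\]
since $(1+s)^2=1+2s+s^2\le 1+4s+s^2$ for $s\ge 0$. No monotonicity or derivative argument is needed. Once these displays are inserted, together with the fact that the infimum over $w$ is attained at the mean, your proof is complete and coincides with the paper's.
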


\begin{proof}[Proof of Proposition~\ref{prop:sqrt-lrp}]
Fix $0\le a<b\le1$, and let $U\sim\mathrm{Unif}[a,b]$. Write
$u=\sqrt a$, $v=\sqrt b$, and $r=u/v\in[0,1)$. Direct calculation gives
\[
\Var(\sqrt U)
=
v^2\frac{(1-r)^2(r^2+4r+1)}{18(r+1)^2}.
\]
Since $g$ is absolutely continuous and increasing,
\[
\int_a^b|g'(t)|\,\mathrm dt=v-u=v(1-r).
\]
Also,
\[
\frac{1}{b-a}\inf_{w\in\mathbb R}
\int_a^b|g(t)-w|^2\,\mathrm dt
=\Var(\sqrt U).
\]
Consequently,
\[
\frac{
\left(\int_a^b|g'(t)|\,\mathrm dt\right)^2
}{
(b-a)^{-1}\inf_{w\in\mathbb R}
\int_a^b|g(t)-w|^2\,\mathrm dt
}
=
\frac{18(r+1)^2}{r^2+4r+1}
\le18.
\]
This is precisely the LRP inequality with $R=18$.
\end{proof}

\subsection{Endpoint geometry and design}

\begin{lemma}[Endpoint leaf sizes]
\label{lem:mls-endpoint-counts}
Let $A_0$ and $A_1$ be the terminal cells containing $0$ and $1$ in a
one-dimensional CART-MLS tree. If $m_z$ is the number of tree-building
observations in $A_z$, then
\[
 N\le m_z<2N,\qquad z\in\{0,1\}.
\]
This includes the case in which the root is terminal.
\end{lemma}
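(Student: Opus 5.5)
The lower bound $m_z\ge N$ and the upper bound $m_z<2N$ need separate arguments. Throughout, recall the CART-MLS rule: a cell may be split only at a threshold for which both children contain at least $N$ tree-building observations. A cell with no such split is terminal.

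\emph{Lower bound.} We argue by cases on whether $A_z$ is the root. If $A_z$ is the root, then $m_z=n\ge N$ because $N\in[n]$. Otherwise $A_z$ is a child produced by some split, and the MLS rule requires that child to contain at least $N$ observations. Hence $m_z\ge N$ in either case.

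\emph{Upper bound.} We use the one-dimensional endpoint geometry. Order the tree-building covariates in $A_z$ and argue by contradiction. Suppose $m_z\ge 2N$.
\begin{itemize}
\item In one dimension, a split of an interval $A$ corresponds to choosing an index $\iota$, with the left child receiving the $\iota$ smallest covariates and the right child the remaining $m-\iota$, where $m=N(A)$.
\item Choose $\iota=N$, taking the threshold $b$ strictly between the $N$th and $(N+1)$th order statistics.
\item The left child then has $N$ points and the right child has $m_z-N\ge N$ points.
\item This is an MLS-admissible split, so CART-MLS would not declare $A_z$ terminal.
\end{itemize}
This contradicts terminality, so $m_z<2N$. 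The same argument applies whether or not $A_z$ is the root. For $z=0$ and $z=1$ alike, the contradiction uses only that $A_z$ is a terminal interval, not its position.

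\emph{Main obstacle.} The delicate point is tied covariate values. These occur with probability zero under bounded density (Assumption~\ref{assum:bounded-density}), so the order statistics are almost surely distinct and the threshold $b$ above exists. We will state the lemma almost surely, or equivalently note that ties form a null event. We also need to check that the endpoint geometry is not used beyond the fact that cells are intervals. Indeed the lemma holds for every leaf, and the endpoints matter only later in the proof of Theorem~\ref{thm::lowerright}.
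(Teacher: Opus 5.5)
Your proposal is correct and follows essentially the same argument as the paper: a non-root leaf inherits at least $N$ observations from the admissible split that created it (and the root has $n\ge N$), while any interval with at least $2N$ observations admits the split at the $N$th order statistic, so it cannot be terminal. Your remark that ties form a null event under bounded density is a small point of rigor that the paper leaves implicit.
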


\begin{proof}
Every non-root leaf was created by an admissible split and therefore contains
at least $N$ observations. A one-dimensional cell containing at least $2N$
observations admits a split with at least $N$ observations in each child, so
it cannot be terminal. If the root is terminal, then $n<2N$; since
$N\le n$, the same inequalities hold.
\end{proof}

\begin{lemma}[Endpoint design]
\label{lem:mls-endpoint-design}
Assume bounded density (Assumption~\ref{assum:bounded-density}) and
$f^*(x)=\sqrt{x}$. Let $A_0$ and $A_1$ be
any endpoint intervals, possibly selected using the first-sample responses,
that satisfy
\[
 N\le N^{(1)}(A_z)<2N,\qquad z\in\{0,1\}.
\]
There are constants $C,c_1,c_2>0$, depending only on
$p_{\min}$ and $p_{\max}$, with the following properties.

If $N\ge C\log n$, then there is an event depending only on two independent
design samples, with probability $1-O(n^{-2})$, on which the following
statements hold simultaneously for every such pair of intervals and every
$z\in\{0,1\}$:
\begin{align}
 c_1\frac{N}{n}
 \le \P(A_z)
 \le c_2\frac{N}{n},
 \qquad
 c_1\frac{N}{n}
 \le \Vol(A_z)
 \le c_2\frac{N}{n},
 \qquad
 c_1N
 \le N^{(2)}(A_z)
 \le c_2N,
 \label{eq:endpoint-design-counts}
\end{align}
and
\begin{align}
 \bar f_{A_0}^{(\ell)}
 &\ge c_1\sqrt{\frac{N}{n}},
 &
 1-\bar f_{A_1}^{(\ell)}
 &\ge c_1\frac{N}{n},
 \qquad \ell\in\{1,2\}.
 \label{eq:endpoint-design-biases}
\end{align}
Here the assertions involving the second sample are omitted when only one
sample is present.

If $N<C\log n$, then, with probability $1-O(n^{-2})$ for a single design
sample, every endpoint interval $A$ containing at least one and fewer than
$2N$ observations
satisfies
\begin{align}
 \frac{1}{N(A)}\sum_{X_i\in A}\sqrt{X_i}
 &\lesssim \sqrt{\frac{\log n}{n}},
 && A=[0,a],
 \label{eq:small-left-signal}\\
 1-\frac{1}{N(A)}\sum_{X_i\in A}\sqrt{X_i}
 &\lesssim \frac{\log n}{n},
 && A=[1-a,1].
 \label{eq:small-right-signal}
\end{align}
Finally, for two independent design samples and every fixed
$N<C\log n$, with probability at least a universal constant $p_D>0$,
the two endpoint leaves selected from the first sample each contain between
$1$ and $C\log n$ observations from the second sample.
\end{lemma}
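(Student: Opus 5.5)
The plan is to reduce everything to uniform concentration of empirical counts over intervals, using only the count constraint $N\le N^{(1)}(A_z)<2N$ and not how the intervals were chosen. In one dimension the endpoint intervals have the form $[0,a]$ or $[1-a,1]$ (half-open variants make no difference), so they form a one-parameter family. This lets us work on a design-only event that is uniform over all endpoint intervals. Such an event automatically covers intervals selected using first-sample responses.

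For the regime $N\ge C\log n$, apply Lemma~\ref{lemma:uniform_cell_count} with $d=r=1$ and $u=2\log n$ to each design sample separately, and intersect the two events; the failure probability is $O(n^{-2})$. Since $N(A_z)\ge N\ge C\log n$ with $C\ge C_1$ large, \eqref{eq:cell-conc-2} gives $\P(A_z)\asymp N^{(1)}(A_z)/n\asymp N/n$. Bounded density then gives $\Vol(A_z)\asymp N/n$. Applying \eqref{eq:cell-conc-2} again, now to the second sample with $\P(A_z)\gtrsim C\log n/n$, gives $N^{(2)}(A_z)\asymp N$. The constants depend only on $p_{\min},p_{\max}$.

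For the biases, let $A_0=[0,a]$ with $a\asymp N/n$. Apply the count concentration to the subinterval $[0,a/2]$. Its population mass is at most $p_{\max}a/2$, which is at most a constant fraction less than $1$ of $\P(A_0)$. If $c_1$ is chosen small relative to $p_{\min}/p_{\max}$, this shows that a constant fraction of the points of $A_0$ lie in $[a/2,a]$. On those points $\sqrt{X_i}\ge\sqrt{a/2}$, so $\bar f^{(\ell)}_{A_0}\gtrsim\sqrt{N/n}$. The same argument for $A_1=[1-a,1]$ with the subinterval $[1-a,1-a/2]$ works, using $1-\sqrt{x}\ge(1-x)/2$. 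At least a constant fraction of the points then satisfy $1-X_i\ge a/2$, which gives $1-\bar f^{(\ell)}_{A_1}\gtrsim N/n$.

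The one delicate point is that the comparison needs \emph{lower} bounds on subinterval counts, not just on $A_z$. When $N\gtrsim\log n$, however, the subinterval masses are of order $N/n\gtrsim\log n/n$. The relative form \eqref{eq:cell-conc-2} therefore applies directly, or one can use \eqref{eq:cell-conc-1} with $C$ large.

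For the regime $N<C\log n$, use \eqref{eq:cell-conc-3} with $u=2\log n$ and $r=1$. Any endpoint interval with $N(A)<2N\lesssim\log n$ has $\P(A)\lesssim\log n/n$, hence length $a\lesssim\log n/n$. For $A=[0,a]$ every point satisfies $\sqrt{X_i}\le\sqrt a$, which gives \eqref{eq:small-left-signal}. For $A=[1-a,1]$ every point satisfies $1-\sqrt{X_i}\le 1-X_i\le a$, which gives \eqref{eq:small-right-signal}.

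For the final claim, condition on the first sample. The selected endpoint leaves $A_0,A_1$ are fixed and contain fewer than $2N$ first-sample points. On the event above they have mass $\lesssim\log n/n$. They also have mass at least that of the interval up to the first order statistic, which is of order $1/n$ with constant probability (for example, the first spacing exceeds $1/n$ with probability bounded below). Given masses $p_z\in[c/n,C'\log n/n]$, the second-sample counts $N^{(2)}(A_z)$ are multinomial (jointly). We need each count to be at least $1$ and at most $C\log n$. The upper bound holds with high probability by Chernoff, after enlarging $C$. The event that both are nonempty has probability $\ge(1-e^{-c})^2-o(1)$, using negative association or a direct multinomial computation.

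The main obstacle I expect is this last step, namely securing a lower bound $\gtrsim 1/n$ on the endpoint leaf masses with constant probability. When $N=1$ the leaf may be as small as the gap up to the first point, so the argument must use the spacing distribution rather than concentration. I would handle it by lower bounding $P(A_z)$ through the extreme order statistics $X_{(1)}$ and $1-X_{(n)}$, which each exceed $c/n$ with constant probability jointly. Combining this with the conditional multinomial bound gives the universal constant $p_D$.
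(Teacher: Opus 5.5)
Your proposal is correct and follows essentially the paper's route: uniform interval-count concentration (Lemma~\ref{lemma:uniform_cell_count}) on both design samples for the count and small-$N$ claims, a constant-fraction-of-points argument for the endpoint biases (you cut at $a/2$, the paper at $a/K$ with $K$ large), and an order-statistic lower bound on leaf mass plus the inclusion--exclusion formula for joint second-sample occupancy. Two small slips need fixing.
First, $\P([0,a/2])$ is a constant fraction below $\P(A_0)$ because $\P((a/2,a])\ge(p_{\min}/p_{\max})\P([0,a/2])$, not because $\P([0,a/2])\le p_{\max}a/2$ alone; and with only factor-$2$ count comparisons you must lower-bound the count in $(a/2,a]$, as you note. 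Second, negative association bounds $\P\{N^{(2)}(A_0)\ge1,\,N^{(2)}(A_1)\ge1\}$ from above rather than below, so only the direct multinomial computation gives your $(1-e^{-c})^2-o(1)$.
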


\begin{proof}
Apply Lemma~\ref{lemma:uniform_cell_count} with $d=r=1$ and
$u=3\log n$ to the first sample and, when present, to the second sample.
If $N\ge C\log n$, Lemma~\ref{lem:mls-endpoint-counts} and the uniform
count comparison give
\[
 \P(A_z)\asymp \frac{N^{(1)}(A_z)}{n}\asymp\frac{N}{n},
 \qquad
 N^{(2)}(A_z)\asymp n\P(A_z)\asymp N.
\]
Bounded density also gives $\Vol(A_z)\asymp\P(A_z)$.
This proves \eqref{eq:endpoint-design-counts}.

We next prove the empirical bias bounds. Write $A_0=[0,a]$ and fix a
sufficiently large integer $K$, depending only on the density bounds. The
count inequalities, first for $A_0$ and then for $[0,a/K]$, give
\[
 N^{(\ell)}([0,a/K])
 \le C\frac{p_{\max}}{Kp_{\min}}N^{(\ell)}(A_0)+C\log n
 \le \frac12 N^{(\ell)}(A_0),
\]
after first taking $K$ sufficiently large and then increasing the constant
in the condition $N\ge C\log n$. Hence at least half of the observations satisfy
$X_i\ge a/K$, and therefore
\[
 \bar f_{A_0}^{(\ell)}\gtrsim\sqrt a
 \asymp\sqrt{\frac{N}{n}}.
\]
The argument applies to both samples because the second sample is independent
of $A_0$ and its cell counts satisfy the same uniform inequalities. For
$A_1=[1-a,1]$, at most half of the observations lie in
$[1-a/K,1]$. For every remaining observation,
\[
 1-\sqrt{X_i}=\frac{1-X_i}{1+\sqrt{X_i}}\ge\frac{a}{2K}.
\]
Thus $1-\bar f_{A_1}^{(\ell)}\gtrsim a\asymp N/n$, proving
\eqref{eq:endpoint-design-biases}.

If $N<C\log n$, the small-cell part of
Lemma~\ref{lemma:uniform_cell_count} gives $\P(A)\lesssim\log n/n$
for every endpoint interval containing fewer than $2N$ observations.
Bounded density then gives $\Vol(A)\lesssim\log n/n$.
Monotonicity of $\sqrt{\cdot}$ then gives
\eqref{eq:small-left-signal} and \eqref{eq:small-right-signal}.

It remains to verify the second-sample occupancy assertion. Let
$X_{(1)}^{(1)}\le\cdots\le X_{(n)}^{(1)}$ be the first-sample order
statistics, and let $F$ be the design distribution function. Standard
binomial tail bounds applied to the uniform order statistics
$F(X_{(1)}^{(1)}),\ldots,F(X_{(n)}^{(1)})$ imply, uniformly over
$1\le N<C\log n$, that with probability bounded below,
\[
 F(X_{(N)}^{(1)})\gtrsim \frac{N}{n},
 \qquad
 1-F(X_{(n-N+1)}^{(1)})\gtrsim\frac{N}{n},
\]
while
\[
 F(X_{(2N)}^{(1)})\lesssim\frac{\log n}{n},
 \qquad
 1-F(X_{(n-2N+1)}^{(1)})\lesssim\frac{\log n}{n}.
\]
On this event, each selected endpoint leaf has probability mass at least
$cN/n$ and at most $C\log n/n$. Conditional on the first design, the two endpoint
counts in the second sample are binomial marginals on disjoint intervals.
The identity
\[
 \P\{N^{(2)}(A_0)>0,\ N^{(2)}(A_1)>0\mid\mathcal D^{(1)}\}
 =
 1-(1-a_0)^n-(1-a_1)^n+(1-a_0-a_1)^n,
\]
where $a_z=\P(A_z)$, is bounded below by a universal positive constant.
A binomial upper-tail bound also gives
$N^{(2)}(A_z)\le C\log n$ for both endpoints with probability tending to
one. Combining these events proves the claim.
\end{proof}

\subsection{Gaussian endpoint-walk lemmas}

Conditional on the ordered covariates, the noises remain i.i.d. Gaussian.
The following two lemmas control the corresponding partial-sum walks near
the left and right endpoints.

\begin{lemma}[Mixed-sign persistence]
\label{lem:mls-mixed-persistence}
Let $\xi_1,\ldots,\xi_n$ be i.i.d.\ $\mathcal N(0,\sigma^2)$. Define
\[
 S_L(k)=\sum_{i=1}^k\xi_i,
 \qquad
 S_R(k)=\sum_{i=n-k+1}^n\xi_i.
\]
There are universal constants $a,p_{\mathrm{per}}>0$ such that, whenever
$4N\le n$,
\[
 \P\left\{
 \inf_{N\le k<2N}S_L(k)\ge a\sigma\sqrt N,\quad
 \sup_{N\le k<2N}S_R(k)\le-a\sigma\sqrt N
 \right\}
 \ge p_{\mathrm{per}}.
\]
\end{lemma}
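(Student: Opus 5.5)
Since $4N\le n$, the index blocks $\{1,\ldots,2N-1\}$ and $\{n-2N+2,\ldots,n\}$ are disjoint. Hence $S_L(k)$ for $k<2N$ and $S_R(k)$ for $k<2N$ are built from disjoint sets of independent variables, so the two events in the display are independent. Moreover, by the symmetry $\xi\mapsto-\xi$ and relabelling indices, the event for $S_R$ has the same probability as the event for $S_L$. It therefore suffices to find universal $a,p>0$ with
\[
\P\Bigl\{\inf_{N\le k<2N}S_L(k)\ge a\sigma\sqrt N\Bigr\}\ge p,
\]
and then take $p_{\mathrm{per}}=p^2$. By scaling we may take $\sigma=1$.

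For the one-sided estimate, we decompose $S_L(k)=S_L(N)+W(k)$ with $W(k)=\sum_{i=N+1}^{k}\xi_i$ for $N\le k<2N$. Here $S_L(N)\sim\mathcal N(0,N)$ is independent of the walk $W$. The event then contains the intersection of two independent events:
\begin{itemize}
\item $\{S_L(N)\ge 2\sqrt N\}$, whose probability is $\bar\Phi(2)>0$;
\item $\{\inf_{0\le m<N}W(N+m)\ge-\sqrt N\}$.
\end{itemize}
On this intersection, $S_L(k)\ge\sqrt N$ for all $N\le k<2N$, so $a=1$ works.

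The second event is the only place requiring a uniform-in-$N$ estimate. Apply Lévy/Ottaviani or, more simply, Kolmogorov's maximal inequality (equivalently, Doob's $L^2$ inequality) to the martingale $W$ over at most $N-1$ steps:
\[
\P\Bigl\{\max_{m<N}|W(N+m)|\ge 2\sqrt N\Bigr\}\le \frac{N}{4N}=\frac14 .
\]
Using threshold $2\sqrt N$ instead of $\sqrt N$, we keep $S_L(N)\ge 3\sqrt N$, whose probability $\bar\Phi(3)$ is still a universal positive constant. This gives $p=\tfrac34\bar\Phi(3)$ with $a=1$, which is independent of $N$ and $n$. The case $N=1$ is trivial, since the infimum then contains only $k=1$.

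The main point to handle carefully is exactly this uniformity in $N$: the maximal inequality ensures the bound does not degrade as the range of $k$ grows. The independence between the two endpoints uses $4N\le n$ (in fact $4N-2\le n$ suffices). Everything else is elementary Gaussian computation.
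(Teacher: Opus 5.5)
Your proof is correct and follows the paper's argument. Both write $S_L(k)=S_L(N)+\bigl(S_L(k)-S_L(N)\bigr)$, use the independence of the level at $k=N$ from the subsequent fluctuation, use the disjoint index blocks to make the two endpoints independent, and handle $S_R$ by symmetry.

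The only difference is how the fluctuation is controlled uniformly in $N$. You use Kolmogorov's maximal inequality, with the higher starting level $3\sqrt N$ so that a Chebyshev-type bound suffices. The paper instead embeds the walk in Brownian motion and invokes the small-ball probability $\P\{\sup_{0\le t\le1}|B(t)|\le a\}>0$. Your route gives explicit constants, namely $a=1$ and $p_{\mathrm{per}}=(\tfrac34\bar\Phi(3))^2$.
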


\begin{proof}
The increments used by the two endpoint walks are disjoint when
$4N\le n$, so the walks are independent. For the left walk,
consider
\[
 S_L(N)\ge2a\sigma\sqrt N,
 \qquad
 \max_{0\le j\le N}|S_L(N+j)-S_L(N)|\le a\sigma\sqrt N.
\]
The two events in this display are independent. By Brownian scaling, their
intersection has probability at least
\[
 \{1-\Phi(2a)\}
 \P\left\{\sup_{0\le t\le1}|B(t)|\le a\right\}>0,
\]
uniformly in $N$, where $B$ is standard Brownian motion. On this
intersection, $S_L(k)\ge a\sigma\sqrt N$ for every $N\le k<2N$.
The same argument applied to $-S_R$ gives the right event. Independence of
the endpoint walks yields the result.
\end{proof}

\begin{lemma}[Small-window anti-concentration]
\label{lem:mls-small-window}
With the notation of Lemma~\ref{lem:mls-mixed-persistence}, fix a universal
constant $C_0>0$. For all sufficiently large $n$ and every
$M\le C_0\log n$,
\[
 \P\left\{
 \begin{array}{l}
 |S_L(k)|\ge
 \dfrac{a\sigma\sqrt{k}}{\log n\,\log\log n},\\[5pt]
 |S_R(k)|\ge
 \dfrac{a\sigma\sqrt{k}}{\log n\,\log\log n}
 \end{array}
 \quad\text{for every }1\le k\le M
 \right\}
 \ge1-\frac{CC_0}{\log\log n}
\]
for a sufficiently small universal constant $a>0$.
\end{lemma}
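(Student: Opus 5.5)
We only need to understand the left endpoint walk; the right walk uses disjoint increments whenever $2M\le n$, so it is independent of the left one and has the same law. A union bound over the two walks therefore doubles the failure probability at most. Scaling by $\sigma$ lets us take $\sigma=1$. Write $\tau_n:=a/(\log n\,\log\log n)$. The bad event is that $|S_L(k)|<\tau_n\sqrt k$ for some $1\le k\le M$. We bound its probability by a union bound over $k$, using Gaussian anti-concentration at each fixed $k$.

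The first step is the fixed-$k$ estimate. Since $S_L(k)\sim\mathcal N(0,k)$, the standardized variable $S_L(k)/\sqrt k$ is standard normal. Its density is at most $(2\pi)^{-1/2}$, so
\[
\P\bigl\{|S_L(k)|<\tau_n\sqrt k\bigr\}\le \frac{2\tau_n}{\sqrt{2\pi}}\le \tau_n .
\]

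The second step is the union bound over $1\le k\le M\le C_0\log n$:
\[
\P\Bigl\{\exists\,k\le M:\ |S_L(k)|<\tau_n\sqrt k\Bigr\}
\le M\tau_n\le C_0\log n\cdot\frac{a}{\log n\,\log\log n}
=\frac{aC_0}{\log\log n}.
\]
Adding the identical bound for $S_R$ gives a total failure probability of at most $2aC_0/\log\log n$. This is at most $CC_0/\log\log n$ once we take $C=2a$, or any larger universal constant. The statement as written is not automatically consistent with the $a$ of Lemma~\ref{lem:mls-mixed-persistence}, because it needs $a$ small enough for later use. Any choice works here, since the bound only improves as $a$ decreases. "Sufficiently large $n$" is used only so that $\log\log n>0$ and $2M\le n$, which makes the two walks disjoint.

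There is no real obstacle. The only point to check is that the $\log n$ coming from the union over $k\le M$ is cancelled exactly by the $\log n$ in the threshold. That is why the threshold carries the factor $1/(\log n\,\log\log n)$, and why the residual probability decays only like $1/\log\log n$. If a uniform-in-$k$ chaining argument were attempted instead, it would be unnecessary: the crude union bound already suffices because $M$ is only logarithmic.
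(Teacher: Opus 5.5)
Your proposal is correct and matches the paper's proof: Gaussian anti-concentration $\P\{|Z|\le t\}\lesssim t$ at each fixed $k$, then a union bound over the $2M$ events, with the factor $M\le C_0\log n$ cancelling the $\log n$ in the threshold. The independence of the two endpoint walks that you mention is not needed, since the union bound already covers both walks.
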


\begin{proof}
For every fixed $k$, both $S_L(k)/(\sigma\sqrt{k})$ and
$S_R(k)/(\sigma\sqrt{k})$ are standard normal. Since
$\P\{|Z|\le t\}\le Ct$ for $Z\sim\mathcal N(0,1)$, a union bound over the
$2M$ inequalities shows that the complement of the displayed event has
probability at most
\[
 \frac{C M}{\log n\,\log\log n}
 \le\frac{CC_0}{\log\log n}.
\]
\end{proof}

\subsection{Proof of Theorem~\ref{thm::lowerright}}

\begin{proof}
Fix $N\in[n]$, and let $A_0,A_1$ be the endpoint leaves. We prove the two
endpoint inequalities jointly.
For the ordinary estimator, index the observations in increasing order of
their covariates. Conditional on the design, the correspondingly relabeled
noises $\xi_1,\ldots,\xi_n$ remain i.i.d.\ $\mathcal N(0,\sigma^2)$, and
$S_L,S_R$ denote the endpoint walks in
Lemma~\ref{lem:mls-mixed-persistence}.

\medskip\noindent
\textit{Honest CART-MLS, $C\log n\le N\le n/4$.}
Work on the event in Lemma~\ref{lem:mls-endpoint-design}. Conditional on
the two designs, the leaves and their second-sample counts
$m_z=N^{(2)}(A_z)$ are fixed, with $m_z\asymp N$. The tree has at least
one split in this range, so $A_0$ and $A_1$ are disjoint. Consequently,
$\bar\xi_{A_0}^{(2)}$ and $\bar\xi_{A_1}^{(2)}$ are independent centered
Gaussian variables. With probability bounded below by a universal positive
constant, jointly,
\[
 \bar\xi_{A_0}^{(2)}\ge\frac{c\sigma}{\sqrt N},
 \qquad
 \bar\xi_{A_1}^{(2)}\le-\frac{c\sigma}{\sqrt N}.
\]
These signs reinforce the two deterministic biases in
\eqref{eq:endpoint-design-biases}. Therefore
\[
 |\widehat f_N(0)-f^*(0)|
 \ge c\left(\sqrt{\frac{N}{n}}+\frac{\sigma}{\sqrt N}\right),
 \qquad
 |\widehat f_N(1)-f^*(1)|
 \ge c\left(\frac{N}{n}+\frac{\sigma}{\sqrt N}\right).
\]

\medskip\noindent
\textit{Ordinary CART-MLS, $C\log n\le N\le n/4$.}
The observations in $A_0$ are the first $m_0$ order statistics and those
in $A_1$ are the last $m_1$, where
Lemma~\ref{lem:mls-endpoint-counts} gives $N\le m_z<2N$ even though
$m_z$ depends on the responses. On the mixed-sign persistence event,
\[
 \frac{S_L(m_0)}{m_0}\ge\frac{c\sigma}{\sqrt N},
 \qquad
 \frac{S_R(m_1)}{m_1}\le-\frac{c\sigma}{\sqrt N}.
\]
The event has constant probability conditionally on the design, and the
design event in Lemma~\ref{lem:mls-endpoint-design} has probability tending
to one. Again the noise signs reinforce the deterministic biases, proving
\eqref{eq:mls-lower-main-regime}.

\medskip\noindent
\textit{The range $N>n/4$.}
Here the design lemma gives endpoint biases bounded below by positive
constants. For ordinary CART, Doob's maximal inequality gives, for a
sufficiently large universal $K$, a constant-probability event on which
\[
 \max_{1\le k\le n}|S_L(k)|
 +\max_{1\le k\le n}|S_R(k)|
 \le K\sigma\sqrt n.
\]
Since every selected endpoint leaf contains at least $N>n/4$ observations,
both noise averages are $O(\sigma/\sqrt n)$. For honest CART, the same
conclusion follows conditionally from $m_z\asymp N$ and ordinary Gaussian
tail bounds. After increasing $n_0$, these noise terms are smaller
than half of the corresponding biases. Since
$\sigma/\sqrt N=O(\sigma/\sqrt n)$ in this range, the two bounds in
\eqref{eq:mls-lower-main-regime} follow for either estimator.

\medskip\noindent
\textit{Honest CART-MLS, small $N$.}
Work on the positive-probability occupancy event in
Lemma~\ref{lem:mls-endpoint-design}. Conditional on the two designs, the
endpoint leaves are disjoint and their averaging counts satisfy
$1\le m_z\le C\log n$. With constant conditional probability,
\[
 \bar\xi_{A_0}^{(2)}\ge\frac{c\sigma}{\sqrt{\log n}},
 \qquad
 \bar\xi_{A_1}^{(2)}\le-\frac{c\sigma}{\sqrt{\log n}}.
\]
Since $f^*\ge0=f^*(0)$ and $f^*\le1=f^*(1)$, these signs reinforce the
signal errors at the respective endpoints. This proves the honest bound in
\eqref{eq:mls-lower-small-regime}.

\medskip\noindent
\textit{Ordinary CART-MLS, small $N$.}
Set $M=\lceil2C\log n\rceil$ and intersect the design event in
Lemma~\ref{lem:mls-endpoint-design} with the event in
Lemma~\ref{lem:mls-small-window}. Since $N\le m_z<2N\le M$, the selected
noise averages satisfy
\[
 \left|\frac{S_L(m_0)}{m_0}\right|
 \wedge
 \left|\frac{S_R(m_1)}{m_1}\right|
 \ge
 \frac{c\sigma}{\sqrt N\,\log n\,\log\log n}.
\]
By \eqref{eq:small-left-signal} and \eqref{eq:small-right-signal}, the
absolute signal errors are at most
$C\sqrt{\log n/n}$ and $C\log n/n$, respectively. Uniformly over
$N<C\log n$, both are negligible compared with the preceding noise bound
once $n\ge n_0$. The reverse triangle inequality therefore proves
the ordinary bound in \eqref{eq:mls-lower-small-regime}. Taking the minimum
of the positive constants obtained in the five cases completes the proof.
\end{proof}

\section{Proofs for noise-driven splits}
\label{app:noise-driven-splits}

This appendix contains the calculations and proof for
Section~\ref{sec:split-reliability-lower-bound}. The stochastic argument
starts from the same ordered Gaussian noise walk used in
Appendix~\ref{app:cart-mls-lower-bound}. Here, however, the CART criterion
depends on its centered and variance-standardized bridge, rather than
directly on its endpoint averages.

\subsection{Calculation for Example~\ref{ex:relu}}
\label{app:relu-end-cut-calculation}

We verify the scaling claimed in Example~\ref{ex:relu}. Let
$X\sim\mathrm{Unif}[0,1]$ and $f(x)=(2x-1)_+$. Consider a cell
$A_a=[0,a]$ with $a>1/2$, and write $t=a-1/2$. Then the nonconstant part of
$f$ on $A_a$ is the interval $[1/2,a]$, whose length is $t$.
Using the definition $V(A)=\P(A)\Var[f(X)\mid X\in A]$, we have
\begin{align*}
V(A_a)
&=
\int_{1/2}^a (2x-1)^2\,dx
-
\frac{1}{a}\left(\int_{1/2}^a (2x-1)\,dx\right)^2  \\
&=
\frac{4}{3}t^3-\frac{t^4}{a}.
\end{align*}
Thus, as $t\downarrow0$,
\begin{equation}\label{eq:relu-V-tail}
    V(A_a)\asymp t^3,
    \qquad\text{or equivalently}\qquad
    t\asymp V(A_a)^{1/3}.
\end{equation}

We next locate the population-optimal split on $A_a$. Since
\[
\Delta(A_a,1,b)
=V(A_a)-V([0,b])-V([b,a]),
\]
maximizing the population impurity decrease is equivalent to minimizing the
post-split residual variation $V([0,b])+V([b,a])$. Splits with
$b\le 1/2$ leave the entire sloped tail in the right child. For such splits,
\[
V([0,b])+V([b,a])
=V([b,a])
=\frac{4}{3}t^3-\frac{t^4}{a-b}
\ge \frac{1}{3}t^3.
\]
Now consider splits inside the sloped part, $b=1/2+s$ with $0\le s\le t$.
The left child contains a flat part and a sloped tail of length $s$, while the
right child is fully linear on an interval of length $t-s$. Hence
\begin{align*}
V([0,b])+V([b,a])
&=
\left(\frac{4}{3}s^3-\frac{s^4}{1/2+s}\right)
+\frac{(t-s)^3}{3}.
\end{align*}
Writing $s=ut$ gives
\[
V([0,b])+V([b,a])
=t^3\left\{\frac{4}{3}u^3+\frac{1}{3}(1-u)^3\right\}
 +O(t^4),
\]
uniformly over $u\in[0,1]$. The leading term is strictly minimized at
$u=1/3$. Therefore any population-optimal split satisfies
$s/t\to1/3$ as $t\downarrow0$, and in particular
\[
    a-b=t-s\asymp t\asymp V(A_a)^{1/3}.
\]
Because $\Vol(A_a)=a\ge1/2$, this also gives
\[
    \frac{\Vol([b,a])}{\Vol(A_a)}
    \asymp V(A_a)^{1/3}.
\]
Thus the split can have a vanishing child--parent length ratio even though the
smaller child still has length of order $V(A_a)^{1/3}$.

\subsection{Proof of Theorem~\ref{thm:noise-driven-split}}

The proof follows the same logic as the constant-signal construction of \citet{cattaneo2025honest}, but with the deterministic component controlled by a weak-signal certificate. Conditional on the design points in an interval $A_n$, the empirical impurity decrease along a candidate split can be decomposed into a deterministic population contribution and a stochastic term coming from the noise. Under \eqref{eq:weak-signal-variation}, local reverse Poincar\'e control and the smallness of $V(A_n)=\P(A_n)\Var[f^*(X)\mid X\in A_n]$ make the deterministic contribution uniformly smaller than the extreme stochastic fluctuation over the admissible split locations.

Once the deterministic drift is negligible, the maximizing split behaves like the maximizer of a noise process indexed by the in-cell order statistics. Standard order-statistic and extreme-value arguments then imply that, with probability bounded away from zero, the maximizer falls in either of the near-endpoint ranges
\[
\{m^a<\iota<m^b\}
\qquad\text{or}\qquad
\{m-m^b<\iota<m-m^a\}.
\]
This yields the nonvanishing lower bound in Theorem~\ref{thm:noise-driven-split}. The result is therefore a lower bound on the reliability of the greedy split itself, rather than on the bias-variance performance of the final leaf average.

We use the following margin version of the Gaussian boundary-maximizer
argument. It is stated separately to make explicit the perturbation
allowed in the proof below.

\begin{lemma}[Separated maximizer of a Gaussian CART bridge]
\label{lem:separated-gaussian-cart-bridge}
Let $\eta_1,\eta_2,\ldots$ be i.i.d.\ standard Gaussian random
variables and, for $1\le k<m$, define
\[
    G_m(k)
    :=
    \frac{
        m^{-1/2}\sum_{i=1}^k\eta_i
        -
        (k/m)m^{-1/2}\sum_{i=1}^m\eta_i
    }{
        \sqrt{(k/m)(1-k/m)}
    }.
\]
Fix $0<a<b<1$, and let
\[
    I_m^L
    :=
    \{k\in[m-1]:m^a<k<m^b\},
    \qquad
    I_m^R
    :=
    \{k\in[m-1]:m-m^b<k<m-m^a\}.
\]
Let $(q_m)_{m\ge3}$ be any deterministic nonnegative sequence
satisfying
\[
    q_m
    =
    o\bigl((\log\log m)^{-1/2}\bigr).
\]
Then, for each $D\in\{L,R\}$,
\begin{equation}
\label{eq:separated-gaussian-cart-bridge}
    \liminf_{m\to\infty}
    \P
    \left\{
        \max_{k\in I_m^D}|G_m(k)|
        >
        \max_{k\in[m-1]\setminus I_m^D}|G_m(k)|
        +
        q_m
    \right\}
    \ge
    \frac{b-a}{2e}.
\end{equation}
\end{lemma}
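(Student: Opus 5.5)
By the symmetry $\eta_i\mapsto\eta_{m+1-i}$, which sends $k\mapsto m-k$ and preserves the law of $|G_m|$, it suffices to treat $D=L$. The plan is to reduce the CART bridge to a normalized random walk on a logarithmic time scale and then import the extreme-value description used by \citet{cattaneo2025honest}.

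\textbf{Reduction to a walk.} Write $S_k=\sum_{i\le k}\eta_i$ and $\tau=k/m$. Then
\[
G_m(k)=\frac{S_k/\sqrt{m}-\tau S_m/\sqrt{m}}{\sqrt{\tau(1-\tau)}}.
\]
On the range $k\le m^b$, the correction $\tau S_m/\sqrt{m}\cdot(\tau(1-\tau))^{-1/2}$ is $O_P(m^{(b-1)/2})$, which vanishes. So on $I_m^L\cup\{k\le m^a\}$ we have
\[
G_m(k)=S_k/\sqrt{k}\,(1+o(1))+o_P(1)
\]
uniformly. The key input is the Darling--Erd\H{o}s limit theorem for $\max_{k\le n}|S_k|/\sqrt{k}$: its maximum sits at level $\sqrt{2\log\log n}$, with Gumbel fluctuations of order $(\log\log n)^{-1/2}$. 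By the Ornstein--Uhlenbeck time change $s=\log k$, the normalized walk over disjoint blocks of $\log k$ behaves like nearly independent stationary pieces. The maxima over the block families $\{\log k\le a\log m\}$, $\{a\log m<\log k<b\log m\}$, $\{b\log m\le\log k\le(1-b)\log m\}$, the symmetric right blocks, and the middle region are therefore asymptotically independent, each being a Gumbel maximum with location shifted by the log of its block length measured in $\log m$. The middle $\tau\in(\epsilon,1-\epsilon)$ contributes only $O_P(1)$, which is negligible. Between $m^b$ and $m/\log m$, the full bridge normalization matters but is again asymptotically the left-end OU process.

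\textbf{Limiting probability.} Under this description, the limiting probability that the overall argmax lies in the $L$ block equals its share of total "intensity." The total log-time length is essentially $2\log m$ (two ends), and the $L$ block has length $(b-a)\log m$. The probability is thus $(b-a)/2$ up to correction terms. The constant $1/e$ gives a conservative lower bound, which I will obtain exactly as in Theorem~SA-1 of \citet{cattaneo2025honest}. That is, I will take their result, $\liminf\P\{\arg\max\in I_m^L\}\ge(b-a)/(2e)$, as given for the $q_m=0$ version, and concentrate on the margin.

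\textbf{The margin $q_m$ (main obstacle).} The requirement is that the gap between the top maximum and the competing maximum exceeds $q_m=o((\log\log m)^{-1/2})$ with asymptotically negligible loss. In the Darling--Erd\H{o}s normalization $a_n(M-b_n)$, with $a_n=\sqrt{2\log\log n}$, the two competing block maxima $M_1$ and $M_2$ converge jointly after rescaling to independent Gumbel variables with continuous joint law. Hence
\[
\P\{0<M_1-M_2\le q_m\}=\P\{0<a_m(M_1-M_2)\le a_mq_m\}\to0,
\]
because $a_mq_m\to0$ and the difference of independent Gumbels has no atom at $0$. The hard step is justifying that joint convergence with the approximation errors also controlled at scale $o((\log\log m)^{-1/2})$. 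These errors are the bridge correction ($O(m^{(b-1)/2})$), the factor $1+o(1)$ (relative size $m^{b-1}$ times level $\sqrt{\log\log m}$), and the KMT coupling of $S_k$ to Brownian motion if needed (the $\eta_i$ are already Gaussian, so embedding is exact). All of these are polynomially small, so they fit within the margin. Combining, $\liminf$ of the margin event is at least $\liminf\P\{\arg\max\in I_m^L\}-o(1)\ge(b-a)/(2e)$.
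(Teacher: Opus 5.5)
Your proposal is correct in outline, but it takes a different route from the paper.

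\textbf{The paper's route.} The paper does not split off the margin. It re-enters the proof of Theorem~SA-1 of \citet{cattaneo2025honest}, where a margin $2r_m$ is already carried to absorb the strong-approximation error, and enlarges it to $2r_m+q_m$. Since $(2r_m+q_m)\sqrt{2\log\log m}=o(1)$, the same threshold argument (Darling--Erd\H{o}s limits combined with the Gaussian correlation inequality) returns $\frac{b-a}{2}\bigl(1-\frac{b-a}{2}\bigr)^{2/(b-a)-1}\ge\frac{b-a}{2e}$.

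\textbf{Your route.} You take the $q_m=0$ conclusion as a black box. You then show that the near-tie event $\{0<M_1-M_2\le q_m\}$ is asymptotically negligible, using a joint limit of the two normalized maxima. This is valid. However, the input it needs is heavier than what the paper's route uses: asymptotic independence of the maxima over $I_m^L$ and over its complement, which are adjacent in Ornstein--Uhlenbeck time. Once you grant that input, Theorem~SA-1 becomes redundant. With the shifted Gumbel limits you describe, the portmanteau theorem gives $\P\{M_1>M_2+q_m\}\to\frac{b-a}{2}$ directly, which is sharper than $\frac{b-a}{2e}$.

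\textbf{Simplification.} Your walk reduction and error bookkeeping can be dropped. As written they only cover $k\le m^b$, whereas $M_2$ also ranges over $k>m^b$. Since the $\eta_i$ are Gaussian, $(G_m(k))_{k<m}$ has exactly the law of $B(t)/\sqrt{t(1-t)}$ at $t=k/m$ for a Brownian bridge $B$. Equivalently, it is a stationary OU process with covariance $e^{-|s-s'|}$ sampled at $s_k=\frac12\log\{k/(m-k)\}$. The only discrepancy with the continuous extreme-value theory is then discretization. This is harmless wherever the grid spacing, of order $1/\min\{k,m-k\}$, is $o(1/\log\log m)$. That holds everywhere except end stretches of $s$-length $o(\log m)$, whose maxima are negligible at the Darling--Erd\H{o}s scale.
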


\begin{proof}
It suffices to prove the result for $D=L$, since the case $D=R$
follows by reversing the Gaussian sequence. We follow the univariate
argument in the proof of Theorem~SA-1 of \citet{cattaneo2025honest}.

The strong approximation used therein couples the Gaussian CART bridge
and the corresponding standardized Brownian bridge, uniformly over the
target index set and its complement, with error
\[
    \varepsilon_m
    =
    o_{\P}\bigl((\log\log m)^{-1/2}\bigr).
\]
Choose a deterministic sequence
$r_m=o((\log\log m)^{-1/2})$ such that
$\P(\varepsilon_m\le r_m)\to1$. Under the Brownian-bridge-to-O-U
transformation, the threshold argument in equations
(SA-11)--(SA-13) of that proof remains valid with the original margin
$2r_m$ replaced by
\[
    2r_m+q_m
    =
    o\bigl((\log\log m)^{-1/2}\bigr).
\]
Indeed, this perturbation changes the normalized Darling--Erdos
parameter by
\[
    (2r_m+q_m)\sqrt{2\log\log m}=o(1).
\]
Consequently, the Gaussian correlation inequality and the corrected
Darling--Erdos limit yield exactly the same lower bound as in
equation (SA-15):
\[
    \frac{b-a}{2}
    \left(1-\frac{b-a}{2}\right)^{2/(b-a)-1}
    \ge
    \frac{b-a}{2e}.
\]
Transferring the conclusion back through the coupling and using
$\P(\varepsilon_m\le r_m)\to1$ proves
\eqref{eq:separated-gaussian-cart-bridge}.
\end{proof}

\begin{proof}[Proof of Theorem~\ref{thm:noise-driven-split}]
Write $A=A_n$ and $m=m_n=N(A_n)$. Let $p_{\min},p_{\max}$ satisfy
$0<p_{\min}\le p_X\le p_{\max}<\infty$, so
$p_{\min}\Vol(A)\le\P(A)\le p_{\max}\Vol(A)$.
The proof has three steps: a bridge representation of the empirical
CART split, uniform negligibility of the signal bridge at the
Darling--Erd\H{o}s scale, and a perturbation transfer to the noise-only
argmax result of \citet{cattaneo2025honest}.

\medskip\noindent\textit{Step 1: Bridge representation.}
Let $X_A^{(1)}\le\cdots\le X_A^{(m)}$ be the order statistics of the
sample points in $A$, and let $Y_A^{(i)}$, $\xi_A^{(i)}$,
$f_A^{(i)}=f^*(X_A^{(i)})$ be the corresponding responses, noise, and
signal values. Set
\[
    S(k)=\sum_{i=1}^k Y_A^{(i)},\qquad
    S_\xi(k)=\sum_{i=1}^k \xi_A^{(i)},\qquad
    S_f(k)=\sum_{i=1}^k f_A^{(i)}.
\]
Thus $S_\xi$ is the ordered Gaussian random walk, and the corresponding
centered and standardized bridge is obtained by applying the transformation
below to $S_\xi$. A direct computation gives
\[
    n\,\widehat\Delta(A,1,X_A^{(k)})
    =
    Z(k)^2,
    \qquad
    Z(k):=
    \frac{m^{-1/2}S(k)-(k/m)\,m^{-1/2}S(m)}{\sqrt{(k/m)(1-k/m)}},
\]
so the CART split index satisfies
$\widehat\iota\in\arg\max_{1\le k<m}|Z(k)|$.
Defining $Z_\xi$ and $Z_f$ by replacing $S$ with $S_\xi$ and $S_f$,
respectively, in the same formula yields the additive decomposition
$Z=Z_\xi+Z_f$.

\medskip\noindent\textit{Step 2: The signal bridge is negligible.}
We first prove that the signal bridge is uniformly negligible on a high-probability event. Let $\mathcal E_n^{\mathrm{cnt}}$ be the event in Lemma~\ref{lemma:uniform_cell_count} with $r=d=1$ and $u=\log n$. Then $\P\{\mathcal E_n^{\mathrm{cnt}}\}\ge 1-\frac{1}{n}.$
Since $n\P(A_n)/\log n\to\infty$, the threshold condition in Lemma~\ref{lemma:uniform_cell_count} holds for all sufficiently large $n$. Hence, on $\mathcal E_n^{\mathrm{cnt}}$,
\[
    \frac{1}{2}
    \le\frac{n\P(A)}{m}\le2,
    \qquad
    \frac{m}{\log n}\to\infty .
\]

Using $k(m-k)/m\le m/4$ and the elementary bound
$|\frac{S_f(k)}{k}-\frac{S_f(m)-S_f(k)}{m-k}|\le\Osc(A)$,
\begin{equation}\label{eq:Zf-osc-bound}
    \max_{1\le k<m} Z_f(k)^2
    \;\le\;
    \frac{m}{4}\,\Osc(A)^2.
\end{equation}

By the LRP inequality applied to $A$ with $w=\E[f^*(X)\mid X\in A]$
and $p_X\ge p_{\min}$,
\begin{align*}
    \Osc(A)^2
    &\le
    \frac{R}{\Vol(A)}
    \int_A|f^*(x)-w|^2\,dx \\
    &\le
    \frac{R\P(A)}{p_{\min}\Vol(A)}
    \Var[f^*(X)\mid X\in A] \\
    &\le
    \frac{R\,p_{\max}}{p_{\min}}
    \Var[f^*(X)\mid X\in A].
\end{align*}
Combining this bound with \eqref{eq:Zf-osc-bound}, and using $m\le 2n\P(A)$ on $\mathcal E_n^{\mathrm{cnt}}$, we obtain
\begin{equation}\label{eq:rn-LRP}
    \max_{1\le k<m}Z_f(k)^2
    \le
    \frac{Rp_{\max}}{2p_{\min}}\,
    nV(A_n)\coloneqq r_{n}^2.
\end{equation}
By the weak-signal condition \eqref{eq:weak-signal-variation}, $r_{n}^2=o((\log\log n)^{-1})$.
We then obtain, on $\mathcal E_n^{\mathrm{cnt}}$,
\begin{equation}\label{eq:Zf-negligible}
    \max_{1\le k<m}|Z_f(k)|\le r_n,
    \qquad
    r_n=o\bigl((\log\log n)^{-1/2}\bigr).
\end{equation}

\medskip\noindent\textit{Step 3: Perturbation transfer.}
We now transfer the noise-only split-location result to the full response
process. The deterministic principle is simple: if the maximum of the noise
bridge on an index set exceeds its maximum on the complement by more than
$2\|Z_f\|_\infty$, then the full bridge is maximized on that index set.
For any $I\subset\{1,\ldots,m-1\}$, define
\[
    M(I)=\max_{k\in I}|Z(k)|,\qquad
    M_\xi(I)=\max_{k\in I}|Z_\xi(k)|.
\]
By \eqref{eq:Zf-negligible}, on $\mathcal E_n^{\mathrm{cnt}}$,
\begin{equation}\label{eq:restricted-max-perturb-thm35}
    |M(I)-M_\xi(I)|\le r_n
    \qquad
    \text{for every }I\subset\{1,\ldots,m-1\}.
\end{equation}

Fix $0<a<b<1$ and define the left near-boundary index set $I_m^L=\{k\in\{1,\ldots,m-1\}:m^a<k<m^b\}.$
Let
\[
    \mathcal G_n^L
    =
    \{M_\xi(I_m^L)>M_\xi((I_m^L)^c)+2r_n\}.
\]
Conditional on the design points, $Z_\xi$ is the same standardized Gaussian
noise bridge as in Lemma~\ref{lem:separated-gaussian-cart-bridge}, multiplied
by $\sigma$, and its conditional law depends on the design only through
$m=m_n$. Set $q_n=2r_n/\sigma$. On $\mathcal E_n^{\mathrm{cnt}}$,
$m_n\to\infty$ and, since $m_n\le n$,
\[
q_n=o\bigl((\log\log m_n)^{-1/2}\bigr).
\]
Lemma~\ref{lem:separated-gaussian-cart-bridge} therefore gives
\begin{equation}\label{eq:noise-left-thm35}
    \liminf_{n\to\infty}\P\{\mathcal G_n^L\}
    \ge
    \frac{b-a}{2e}.
\end{equation}
To justify the random index explicitly, the sequential formulation of the
lemma implies the following uniform version: for every $\epsilon>0$, there
are $\eta>0$ and $M<\infty$ such that, for every $m\ge M$ and every
$0\le q\le\eta(\log\log m)^{-1/2}$, the probability in
\eqref{eq:separated-gaussian-cart-bridge} is at least
$(b-a)/(2e)-\epsilon$. Otherwise, a violating sequence would contradict
the lemma. Conditional on the design, apply this bound with
$m=m_n$ and $q=q_n$, then use
$\P\{\mathcal E_n^{\mathrm{cnt}}\}\to1$. This proves
\eqref{eq:noise-left-thm35}. The same conditioning argument applies to
the right boundary range.

On $\mathcal E_n^{\mathrm{cnt}}\cap\mathcal G_n^L$, \eqref{eq:restricted-max-perturb-thm35} gives
\[
    M(I_m^L)
    \ge
    M_\xi(I_m^L)-r_n
    >
    M_\xi((I_m^L)^c)+r_n
    \ge
    M((I_m^L)^c).
\]
Therefore $\widehat\iota\in I_m^L$ on $\mathcal E_n^{\mathrm{cnt}}\cap\mathcal G_n^L$. Hence
\begin{equation}
    \P\{m^a<\widehat\iota<m^b\}
    \ge
    \P\{\mathcal E_n^{\mathrm{cnt}}\cap\mathcal G_n^L\}
    \ge
    \P\{\mathcal G_n^L\}-\P\{(\mathcal E_n^{\mathrm{cnt}})^c\}.
\end{equation}
Since $\P\{(\mathcal E_n^{\mathrm{cnt}})^c\}\le 1/n$ and \eqref{eq:noise-left-thm35} holds, we obtain
\begin{equation}
    \liminf_{n\to\infty}
    \P\{m^a<\widehat\iota<m^b\}
    \ge
    \frac{b-a}{2e}.
\end{equation}
Repeating the perturbation argument on $\mathcal E_n^{\mathrm{cnt}}\cap\mathcal G_n^R$ yields
\begin{equation}
    \liminf_{n\to\infty}
    \P\{m-m^b<\widehat\iota<m-m^a\}
    \ge
    \frac{b-a}{2e}.
\end{equation}
This completes the proof.
\end{proof}

\section{General local H\"older function class}
\label{appendix:proof-of-holder}


\begin{definition}[Bounded local anisotropic H\"older regularity]
\label{def:local-holder-general}
Let $\alpha_j,\gamma_j:[0,1]\to(0,1]$, $j\in[d]$, be measurable
functions satisfying $0<\alpha_{\min}\le\alpha_j(t)\le\alpha_{\max}\le1$
for all $j$ and $t$. Let $L\ge1$ be a H\"older radius. We say that a
bounded regression function $f^*$ is \emph{locally anisotropic
H\"older with exponents $\{\alpha_j\}_{j=1}^d$, locality radii
$\{\gamma_j\}_{j=1}^d$, and radius $L$} if, for all
$\bx,\bx'\in[0,1]^d$ such that $|x_j-x_j'|<\gamma_j(x_j)$ for all
$j\in[d]$,
\begin{equation}\label{eq:general-local-holder}
|f^*(\bx)-f^*(\bx')|
\;\le\;
L\sum_{j=1}^d |x_j-x_j'|^{\alpha_j(x_j)} .
\end{equation}
\end{definition}

\begin{lemma}[The locality radii can be taken to be one]
\label{lem:radius-one}
Let $f^*$ satisfy Definition~\ref{def:local-holder-general} with
exponents $\{\alpha_j\}_{j=1}^d$, locality radii
$\{\gamma_j\}_{j=1}^d$, and radius $L$. Assume in addition that
\begin{enumerate}
\item[(i)] each $\alpha_j$ is H\"older continuous: there exist
  $\nu\in(0,1]$ and $L_\alpha\ge0$ such that
  $\lvert\alpha_j(t)-\alpha_j(t')\rvert\le
  L_\alpha\lvert t-t'\rvert^{\nu}$
  for all $t,t'\in[0,1]$ and $j\in[d]$;
\item[(ii)] the locality radii are uniformly bounded away from zero:
  there exists $c\in(0,1]$ such that $\gamma_j(t)\ge c$ for all
  $t\in[0,1]$ and $j\in[d]$.
\end{enumerate}
Then, for \emph{all} $\bx,\bx'\in[0,1]^d$,
\[
|f^*(\bx)-f^*(\bx')|
\;\le\;
L'\sum_{j=1}^d |x_j-x_j'|^{\alpha_j(x_j)},
\qquad
L' \;:=\; \Bigl(\frac{2}{c}\Bigr)^{1+L_\alpha} L .
\]
That is, $f^*$ is locally anisotropic H\"older with the same
exponents, radius $L'$, and locality radii $\gamma_j\equiv 1$; in
particular, it satisfies the unrestricted form of the local H\"older
condition used in the main analysis, with $L$ replaced by $L'$.
\end{lemma}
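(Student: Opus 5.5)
The plan is a case split on whether each coordinate displacement $|x_j-x_j'|$ lies inside the locality radius of $x_j$. For small displacements the local condition applies directly. For large ones the displacement is bounded below by $c$, so a trivial bound suffices, provided we handle the exponent mismatch. The cleanest route is a telescoping (chaining) argument in the coordinates. Fix $\bx,\bx'$ and write $\bz^{(0)}=\bx$, with $\bz^{(k)}$ obtained by replacing coordinates $1,\dots,k$ of $\bx$ by those of $\bx'$. Then
\[
|f^*(\bx)-f^*(\bx')|\le\sum_k|f^*(\bz^{(k-1)})-f^*(\bz^{(k)})|,
\]
and each step changes only coordinate $k$, from $x_k$ to $x_k'$. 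Coordinate $k$ of $\bz^{(k-1)}$ is still $x_k$, so the relevant local exponent is $\alpha_k(x_k)$, which matches the target.

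For a single step with $t=|x_k-x_k'|$, I first try to apply Definition~\ref{def:local-holder-general} directly at base point $\bz^{(k-1)}$. This needs $t<\gamma_k(x_k)$, and the other coordinates agree, so their constraints hold trivially. If $t<c\le\gamma_k(x_k)$, the step is bounded by $Lt^{\alpha_k(x_k)}$.

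If $t\ge c$, I subdivide the segment from $x_k$ to $x_k'$ into $m=\lceil 2/c\rceil\le 2/c+1$ equal pieces. Each piece has length $t/m<c$, so each piece satisfies the locality constraint at its own base point $y_i$. The local bound at $y_i$ gives $L(t/m)^{\alpha_k(y_i)}$, but the exponent is $\alpha_k(y_i)$ rather than $\alpha_k(x_k)$. This mismatch is the main obstacle, and it is where assumption (i) enters.

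To handle it, note that since $t/m\le1$, a smaller exponent gives a larger value. So the task is to compare $(t/m)^{\alpha_k(y_i)}$ with $t^{\alpha_k(x_k)}$. I would bound
\[
(t/m)^{\alpha_k(y_i)}\le (t/m)^{\alpha_k(x_k)}\,(t/m)^{-|\alpha_k(y_i)-\alpha_k(x_k)|},
\]
and then control $(t/m)^{-L_\alpha|y_i-x_k|^\nu}\le (m/t)^{L_\alpha}\le (m/c)^{L_\alpha}$, using $t\ge c$. A cruder alternative avoids chaining altogether: since $t\ge c$, use $|f^*(\bx)-f^*(\bx')|\le 2\|f^*\|_\infty$ and $t^{\alpha}\ge c$. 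However, that introduces $\|f^*\|_\infty$ into $L'$, which the stated constant does not allow, so I would pursue the subdivision.

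Summing the $m$ pieces gives a per-step bound of order
\[
L\,m\,(m/c)^{L_\alpha}\,m^{-\alpha_k(x_k)}\,t^{\alpha_k(x_k)}.
\]
With $m\lesssim 2/c$ this is at most $(2/c)^{1+L_\alpha}L\,t^{\alpha_k(x_k)}$, up to the exact bookkeeping of whether the constant comes out as $(2/c)$ or $(3/c)$. That bookkeeping (choosing $m$ and using $m^{-\alpha}\le1$) is the fiddly part I expect to adjust to match $L'$ exactly. Summing over $k$ then gives the claim with radius $L'$ and $\gamma_j\equiv1$.
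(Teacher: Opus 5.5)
Your architecture matches the paper's. You use the same coordinate path $\bz^{(k)}$ and reduce to a one-coordinate move with target exponent $\alpha_k(x_k)$. You apply the local condition directly for short moves and use a chain for long moves, with (i) controlling the exponent mismatch and (ii) making each link admissible. The problem is the final bookkeeping, and it is not a $2/c$ versus $3/c$ issue.

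You use a fixed number $m=\lceil 2/c\rceil$ of pieces regardless of $t$. So a move with $t$ close to $c$ is cut into pieces of length $t/m\approx c^2/2$. The mismatch penalty $(t/m)^{-L_\alpha}$ is then of order $(2/c^2)^{L_\alpha}$, which is exactly what your own bound $(m/c)^{L_\alpha}$ records.

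Your per-step constant is therefore $\lceil 2/c\rceil^{1-\alpha_k(x_k)}(\lceil 2/c\rceil/c)^{L_\alpha}L$. This scales like $c^{-1-2L_\alpha+\alpha_k(x_k)}$ and is not bounded by $(2/c)^{1+L_\alpha}L$. For example, with $c=1/2$, $L_\alpha=1$ and $\alpha_k(x_k)=1/4$, it equals $4^{3/4}\cdot 8\,L\approx 22.6\,L$, which exceeds $16\,L$. So your closing claim, that with $m\lesssim 2/c$ the bound is at most $(2/c)^{1+L_\alpha}L\,t^{\alpha_k(x_k)}$, is false, and no $t$-independent choice of $m$ repairs it. As written, your argument proves the reduction with some constant depending only on $(c,L_\alpha)$. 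That suffices for the qualitative remark in the main text, but not for the stated $L'$.

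The missing idea is to keep every link of length comparable to $c$ and let the number of links grow with $t$. One option is $m=\lfloor 2t/c\rfloor$ equal pieces; for $t\ge c$ this gives $m\ge 2$ and piece length $\ell\in[c/2,c)$. The paper instead takes steps of length $c/2$ from $x_k$, followed by one final step of length in $[c/2,c)$.

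Either way, each link contributes at most $L\,\ell^{\alpha_k(x_k)}\,\ell^{-L_\alpha}\le (2/c)^{L_\alpha}L\,t^{\alpha_k(x_k)}$, using $c/2\le \ell\le c\le t\le 1$. There are at most $2t/c\le 2/c$ links, which gives exactly $(2/c)^{1+L_\alpha}L\,t^{\alpha_k(x_k)}$ per coordinate. Summing over coordinates then yields the stated $L'$.
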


\begin{proof}
The proof consists of a reduction to one dimension followed by a
chaining argument; assumption (i) enters only through the two
exponent-comparison displays in the proof of the Claim below, and
assumption (ii) only through the admissibility of the chain steps and
the bound $m\le 2/c$ on their number.

For the one-dimensional reduction we write, for measurable functions
$\alpha:[0,1]\to(0,1]$ and $\gamma:[0,1]\to(0,1]$ and a constant
$M>0$,
\begin{equation}\label{eq:1d-class}
\begin{aligned}
\mathcal{H}_1(\alpha,\gamma,M)
\coloneqq\bigl\{g:[0,1]\to\mathbb{R}:\;&
\lvert g(t')-g(t)\rvert
\le M\lvert t-t'\rvert^{\alpha(t)}\\
&\text{for all $t,t'\in[0,1]$ with
$\lvert t-t'\rvert<\gamma(t)$}\bigr\}.
\end{aligned}
\end{equation}

\medskip
\noindent\textit{Step 1: Reduction to one dimension.} Fix
$\bx,\bx'\in[0,1]^d$ and define the coordinate path
$\bz^{(0)}:=\bx$ and
\[
  \bz^{(j)} \;:=\; (x_1',\ldots,x_j',\,x_{j+1},\ldots,x_d),
  \qquad j\in[d],
\]
so that $\bz^{(d)}=\bx'$ and consecutive points differ only in one
coordinate. By the triangle inequality,
\begin{equation}\label{eq:coordinate-path}
  |f^*(\bx)-f^*(\bx')|
  \;\le\; \sum_{j=1}^d
  \bigl|f^*(\bz^{(j-1)})-f^*(\bz^{(j)})\bigr| .
\end{equation}
For each $j$, define the univariate section
$g_j(t):=f^*(x_1',\ldots,x_{j-1}',\,t,\,x_{j+1},\ldots,x_d)$,
$t\in[0,1]$, so that the $j$th summand in \eqref{eq:coordinate-path}
equals $|g_j(x_j)-g_j(x_j')|$. We first record that
$g_j\in\mathcal{H}_{1}(\alpha_j,\gamma_j,L)$: if
$t,t'\in[0,1]$ satisfy $|t-t'|<\gamma_j(t)$, the two points of
$[0,1]^d$ at which $g_j(t)$ and $g_j(t')$ evaluate $f^*$ agree in
every coordinate $k\neq j$, so the admissibility condition
$|z_k-z_k'|=0<\gamma_k(z_k)$ holds off coordinate $j$, and
\eqref{eq:general-local-holder} yields
$|g_j(t')-g_j(t)|\le L\,|t-t'|^{\alpha_j(t)}$, the terms with
$k\neq j$ vanishing. It therefore suffices to prove the following
one-dimensional claim; applying it to $g_j$ with base point $x_j$ and
increment $x_j'-x_j$, and summing \eqref{eq:coordinate-path} over $j$,
gives the lemma with the dimension-free constant
$L'=(2/c)^{1+L_\alpha}L$.

\medskip
\noindent\emph{Claim.} Let
$g\in\mathcal{H}_{1}(\alpha,\gamma,L)$ with
$\gamma(\cdot)\ge c$ on $[0,1]$ and
$\lvert\alpha(t)-\alpha(t')\rvert\le L_\alpha\lvert t-t'\rvert^{\nu}$
for all $t,t'\in[0,1]$. Then, for every $x\in[0,1]$ and every $h\neq0$
with $x+h\in[0,1]$,
\[
  \lvert g(x+h)-g(x)\rvert
  \;\le\; \Bigl(\frac{2}{c}\Bigr)^{1+L_\alpha} L\,
  \lvert h\rvert^{\alpha(x)} .
\]

\medskip
\noindent\emph{Proof of the Claim.} If $\lvert h\rvert<\gamma(x)$, the
bound holds with constant $L\le (2/c)^{1+L_\alpha}L$, directly from
\eqref{eq:1d-class}. Assume therefore $\lvert h\rvert\ge\gamma(x)$.
Without loss of generality, we assume $h>0$: the reflected function
$\tilde g(u):=g(1-u)$ belongs to
$\mathcal{H}_1\bigl(\alpha(1-\cdot),\gamma(1-\cdot),L\bigr)$,
whose exponent and radius functions satisfy (i) and (ii) with the same
$(\nu,L_\alpha,c)$, and
$g(x+h)-g(x)=\tilde g\bigl((1-x)+\lvert h\rvert\bigr)-\tilde g(1-x)$
with $\alpha\bigl(1-(1-x)\bigr)=\alpha(x)$, so the case $h<0$ follows
from the case $h>0$ applied to $\tilde g$.

Let $m=\lfloor 2h/c\rfloor$. Since $c\le\gamma(x)\le h\le 1$, we have
$2\le m\le 2/c$. Let $x_i:=x+ic/2$ for $0\le i\le m-1$; from
$m\le 2h/c<m+1$, we have
\[
  c/2 \;\le\; x+h-x_{m-1} \;<\; c .
\]
By the triangle inequality, we have
\begin{align*}
|g(x+h)-g(x)|&\leq \sum_{i=1}^{m-1} |g(x_i)-g(x_{i-1})|
  + |g(x+h)-g(x_{m-1})|.
\end{align*}
The local
H\"older condition \eqref{eq:1d-class} gives
\begin{align*}
|g(x+h)-g(x)|\leq L\sum_{i=1}^{m-1}|x_i-x_{i-1}|^{\alpha(x_i)}
  + L|x+h-x_{m-1}|^{\alpha(x+h)}.
\end{align*}
Using the H\"older condition of $\alpha$ in (i), we have
\begin{align*}
|x_i-x_{i-1}|^{\alpha(x_i)}\leq
|x_i-x_{i-1}|^{\alpha(x)-L_\alpha|x_i-x|^{\nu}}
\end{align*}
and
\begin{align*}
|x+h-x_{m-1}|^{\alpha(x+h)}\leq
|x+h-x_{m-1}|^{\alpha(x)-L_\alpha h^{\nu}}.
\end{align*}
Since $|x_i-x|^{\nu}\le 1$ and $h^{\nu}\le 1$, while every increment
lies in $[c/2,1]$, we have
\begin{align*}
|x_i-x_{i-1}|^{L_\alpha|x_i-x|^{\nu}}
\;\geq\; |x_i-x_{i-1}|^{L_\alpha}
\;\geq\; (c/2)^{L_\alpha}
\end{align*}
and
\begin{align*}
|x+h-x_{m-1}|^{L_\alpha h^{\nu}}
\;\geq\; |x+h-x_{m-1}|^{L_\alpha}
\;\geq\; (c/2)^{L_\alpha}.
\end{align*}
Therefore, since every increment is at most $c\le h$ and there are
$m\le 2/c$ of them, we get
\begin{align*}
|g(x+h)-g(x)|& \leq
\Bigl(\frac{2}{c}\Bigr)^{L_\alpha}
L\sum_{i=1}^{m-1}|x_i-x_{i-1}|^{\alpha(x)}
  + \Bigl(\frac{2}{c}\Bigr)^{L_\alpha} L\,
  |x+h-x_{m-1}|^{\alpha(x)}\\
& \leq \Bigl(\frac{2}{c}\Bigr)^{L_\alpha} L m\, h^{\alpha(x)}
\;\leq\; \Bigl(\frac{2}{c}\Bigr)^{1+L_\alpha} L\, h^{\alpha(x)}.
\end{align*}
This proves the Claim.

\medskip
Finally, since $|x_j-x_j'|\le 1$ and $\alpha_j(x_j)>0$ imply
$|x_j-x_j'|^{\alpha_j(x_j)}\le1$, the conclusion of the lemma also
gives $\sup_{\bx,\bx'}|f^*(\bx)-f^*(\bx')|\le L'd$, so the boundedness
postulated in Definition~\ref{def:local-holder-general} holds
automatically.
\end{proof}

\end{document}